\definecolor{lightseagreen}{rgb}{0.13, 0.7, 0.67}
\definecolor{highlight}{HTML}{355C7D}
\DeclareMathOperator*{\argmin}{argmin}
\newcommand{\PreserveBackslash}[1]{\let\temp=\\#1\let\\=\temp}
\newcommand{\norm}[1]{\lVert#1\rVert}
\newtheorem{thm}{Theorem}
\newtheorem{prop}{Proposition}
\newtheorem{defi}{Definition}
\newtheorem{proper}{Property}
\newtheorem{lem}{Lemma}
\newtheorem{rem}{Remark}
\newtheorem{exam}{Example}
\newtheorem*{pfthm4}{Proof of Theorem 4}
\newtheorem*{pfthm5}{Proof of Theorem 5}
\newtheorem*{pfthm6}{Proof of Theorem 6}
\newtheorem*{pfthm7}{Proof of Theorem 7}
\newtheorem*{pfthm8}{Proof of Theorem 8}
\newcommand{\laml}[1]{\lambda_{#1}\left(\mathcal{L}_{\mathcal{G}_{BI}}\right)}
\newcommand{\lamc}[1]{\breve{\lambda}_{#1}}
\def \fk#1 {\mathcal{F}(\boldsymbol{W}^{#1},\boldsymbol{U}^{#1})}
\def \fkt#1 {\widetilde{\mathcal{F}}(\boldsymbol{W}^{#1},\boldsymbol{U}^{#1})}
\def \lk#1 {\mathcal{J}(\boldsymbol{W}^{#1})}
\def \Gbi{\mathcal{G}_{BI}}
\def \LGbi{\mathcal{L}_{\Gbi}}
\newcommand{\lam}[1]{\lambda_{#1}(\LGbi)}
\def \Ebi{\mathcal{E}_{BI}}
\def \Vbi{\mathcal{V}_{BI}}
\def \Abi{\bm{A}_{BI}}
\def \inner{\left<\LGbi, \bm{U}\right>}
\def \c{\bm{\varTheta}_c }
\def \g{\bm{\varTheta}_g }
\def \p{\bm{\varTheta}_p }
\def \ct{\bm{\varTheta}_c^t }
\def \gt{\bm{\varTheta}_g^t }
\def \pt{\bm{\varTheta}_p^t }
\def \ctp1{\bm{\varTheta}_c^{t+1} }
\def \gtp1{\bm{\varTheta}_g^{t+1} }
\def \ptp1{\bm{\varTheta}_p^{t+1} }
\def \regw{\frac{\alpha_2}{2} \norm{\bm{W}}_F^2}
\def \regu{\frac{\alpha_3}{2} \norm{\bm{U}}_F^2}
	\providecommand\BibTeX{{%
			\normalfont B\kern-0.5em{\scshape i\kern-0.25em b}\kern-0.8em\TeX}}}
\newcommand{\Fl}[1]{\mathcal{F}(\bm{W}^{#1}, \bm{U}^{#1} )}
\newcommand{\distFl}[1]{dist(\bm{0}, \partial_{\Theta} \Fl{#1})}
\newcommand{\distFlw}[1]{dist(\bm{0}, \partial_{\bm{W}} \Fl{#1})}
\newcommand{\distFlu}[1]{dist(\bm{0}, \partial_{\bm{U}} \Fl{#1})}
\newcommand{\refeq}[1]{Eq.(\ref{#1})}
\begin{document}
%
% paper title
% Titles are generally capitalized except for words such as a, an, and, as,
% at, but, by, for, in, nor, of, on, or, the, to and up, which are usually
% not capitalized unless they are the first or last word of the title.
% Linebreaks \\ can be used within to get better formatting as desired.
% Do not put math or special symbols in the title.
\title{Task-Feature Collaborative Learning with Application to  Personalized  Attribute Prediction}
%
%
% author names and IEEE memberships
% note positions of commas and nonbreaking spaces ( ~ ) LaTeX will not break
% a structure at a ~ so this keeps an author's name from being broken across
% two lines.
% use \thanks{} to gain access to the first footnote area
% a separate \thanks must be used for each paragraph as LaTeX2e's \thanks
% was not built to handle multiple paragraphs
%
%
%\IEEEcompsocitemizethanks is a special \thanks that produces the bulleted
% lists the Computer Society journals use for "first footnote" author
% affiliations. Use \IEEEcompsocthanksitem which works much like \item
% for each affiliation group. When not in compsoc mode,
% \IEEEcompsocitemizethanks becomes like \thanks and
% \IEEEcompsocthanksitem becomes a line break with idention. This
% facilitates dual compilation, although admittedly the differences in the
% desired content of \author between the different types of papers makes a
% one-size-fits-all approach a daunting prospect. For instance, compsoc 
% journal papers have the author affiliations above the "Manuscript
% received ..."  text while in non-compsoc journals this is reversed. Sigh.

\author{Zhiyong~Yang, Qianqian~Xu*,~\IEEEmembership{Senior Member,~IEEE,}
        Xiaochun~Cao,~\IEEEmembership{Senior Member,~IEEE,}
        and~Qingming~Huang*,~\IEEEmembership{Fellow,~IEEE}% <-this % stops a space
\IEEEcompsocitemizethanks{
\IEEEcompsocthanksitem *: corresponding authors
\IEEEcompsocthanksitem Zhiyong Yang is with the State Key Laboratory of Information Security, Institute of Information Engineering, Chinese Academy of Sciences, Beijing 100093, China, and also with the School of Cyber Security, University of Chinese Academy of Sciences, Beijing 100049, China (email: \texttt{yangzhiyong@iie.ac.cn}).\protect\\
\IEEEcompsocthanksitem Qianqian Xu is with the Key Laboratory of
Intelligent Information Processing, Institute of Computing Technology, Chinese
Academy of Sciences, Beijing 100190, China, (email: \texttt{xuqianqian@ict.ac.cn}).\protect\\
\IEEEcompsocthanksitem Xiaochun Cao is with the State Key Laboratory of Information Security, Institute of Information Engineering, Chinese Academy of Sciences, Beijing 100093, China, also with the Cyberspace Security Research Center, Peng Cheng Laboratory, Shenzhen 518055, China, and also with the School of Cyber Security, University of Chinese Academy of Sciences, Beijing 100049, China (email: \texttt{caoxiaochun@iie.ac.cn}).\protect\\
\IEEEcompsocthanksitem Q. Huang is with the School of Computer Science and Technology,
    University of Chinese Academy of Sciences, Beijing 101408, China, also
    with the Key Laboratory of Big Data Mining and Knowledge Management (BDKM),
    University of Chinese Academy of Sciences, Beijing 101408, China,  also
    with the Key Laboratory of Intelligent Information Processing, Institute of
    Computing Technology, Chinese Academy of Sciences, Beijing 100190, China, and also with Peng Cheng Laboratory, Shenzhen 518055, China
    (e-mail: \texttt{qmhuang@ucas.ac.cn}).\protect\\
   % \IEEEcompsocthanksitem   * corresponding author \protect\\
\IEEEcompsocthanksitem  This work was supported in part by the National Key R\&D Program of China under Grant 2018AAA0102003, in part by National Natural Science Foundation of China: 61620106009, U1636214, U1736219, 61971016, 61931008, 61836002, 61672514 and 61976202, in part by Key Research Program of Frontier Sciences, CAS: QYZDJ-SSW-SYS013, in part by the Strategic Priority Research Program of Chinese Academy of Sciences, Grant No. XDB28000000, in part by Beijing Natural Science Foundation (4182079), and in part by Youth Innovation Promotion Association CAS.
   }
   }
% note the % following the last \IEEEmembership and also \thanks - 
% these prevent an unwanted space from occurring between the last author name
% and the end of the author line. i.e., if you had this:
% 
% \author{....lastname \thanks{...} \thanks{...} }
%                     ^------------^------------^----Do not want these spaces!
%
% a space would be appended to the last name and could cause every name on that
% line to be shifted left slightly. This is one of those "LaTeX things". For
% instance, "\textbf{A} \textbf{B}" will typeset as "A B" not "AB". To get
% "AB" then you have to do: "\textbf{A}\textbf{B}"
% \thanks is no different in this regard, so shield the last } of each \thanks
% that ends a line with a % and do not let a space in before the next \thanks.
% Spaces after \IEEEmembership other than the last one are OK (and needed) as
% you are supposed to have spaces between the names. For what it is worth,
% this is a minor point as most people would not even notice if the said evil
% space somehow managed to creep in.

% The paper headers
\markboth{To Appear in IEEE TRANSACTIONS ON PATTERN ANALYSIS AND MACHINE INTELLIGENCE}%
{Yang \MakeLowercase{\textit{et al.}}: Bare Demo of IEEEtran.cls for Computer Society Journals}
% The only time the second header will appear is for the odd numbered pages
% after the title page when using the twoside option.
% 
% *** Note that you probably will NOT want to include the author's ***
% *** name in the headers of peer review papers.                   ***
% You can use \ifCLASSOPTIONpeerreview for conditional compilation here if
% you desire.

% The publisher's ID mark at the bottom of the page is less important with
% Computer Society journal papers as those publications place the marks
% outside of the main text columns and, therefore, unlike regular IEEE
% journals, the available text space is not reduced by their presence.
% If you want to put a publisher's ID mark on the page you can do it like
% this:
%\IEEEpubid{0000--0000/00\$00.00~\copyright~2015 IEEE}
% or like this to get the Computer Society new two part style.
%\IEEEpubid{\makebox[\columnwidth]{\hfill 0000--0000/00/\$00.00~\copyright~2015 IEEE}%
%\hspace{\columnsep}\makebox[\columnwidth]{Published by the IEEE Computer Society\hfill}}
% Remember, if you use this you must call \IEEEpubidadjcol in the second
% column for its text to clear the IEEEpubid mark (Computer Society jorunal
% papers don't need this extra clearance.)

% use for special paper notices
%\IEEEspecialpapernotice{(Invited Paper)}

% for Computer Society papers, we must declare the abstract and index terms
% PRIOR to the title within the \IEEEtitleabstractindextext IEEEtran
% command as these need to go into the title area created by \maketitle.
% As a general rule, do not put math, special symbols or citations
% in the abstract or keywords.
\IEEEtitleabstractindextext{%
\begin{abstract}
\justifying
% {\color{blue}
As an effective learning paradigm against insufficient training samples, Multi-Task Learning  (MTL) encourages knowledge sharing across multiple related tasks so as to improve the overall performance.  In MTL, a major challenge springs from the phenomenon that sharing the knowledge with dissimilar and hard tasks, known as \emph{negative transfer}, often results in a worsened performance. Though a substantial amount of studies have been carried out against the negative transfer, most of the existing methods only model the transfer relationship as task correlations, with the transfer across features and tasks left unconsidered.  Different from the existing methods, our goal is to alleviate negative transfer collaboratively across features and tasks. To this end, we propose a novel multi-task learning method called Task-Feature Collaborative Learning (TFCL). Specifically,  we first propose a base model with a heterogeneous block-diagonal structure regularizer to leverage the collaborative grouping of features and tasks and suppressing inter-group knowledge sharing. We then propose an optimization method for the model. Extensive theoretical analysis shows that our proposed method has the following benefits: (a) it enjoys the global convergence property and (b) it provides a block-diagonal structure recovery guarantee.  As a practical extension, we extend the base model by allowing overlapping features and differentiating the hard tasks. We further apply it to the personalized attribute prediction problem with fine-grained modeling of user behaviors. Finally, experimental results on both simulated dataset and real-world datasets demonstrate the effectiveness of our proposed method.
% }
\end{abstract}

% Note that keywords are not normally used for peerreview papers.
\begin{IEEEkeywords}
Block Diagonal Structural Learning, Negative Transfer, Multi-task Learning, Global Convergence.
\end{IEEEkeywords}}

% make the title area
\maketitle

% To allow for easy dual compilation without having to reenter the
% abstract/keywords data, the \IEEEtitleabstractindextext text will
% not be used in maketitle, but will appear (i.e., to be "transported")
% here as \IEEEdisplaynontitleabstractindextext when the compsoc 
% or transmag modes are not selected <OR> if conference mode is selected 
% - because all conference papers position the abstract like regular
% papers do.
\IEEEdisplaynontitleabstractindextext
% \IEEEdisplaynontitleabstractindextext has no effect when using
% compsoc or transmag under a non-conference mode.

% For peer review papers, you can put extra information on the cover
% page as needed:
% \ifCLASSOPTIONpeerreview
% \begin{center} \bfseries EDICS Category: 3-BBND \end{center}
% \fi
%
% For peerreview papers, this IEEEtran command inserts a page break and
% creates the second title. It will be ignored for other modes.
\IEEEpeerreviewmaketitle

\IEEEraisesectionheading{\section{Introduction}\label{sec:introduction}}
% Computer Society journal (but not conference!) papers do something unusual
% with the very first section heading (almost always called "Introduction").
% They place it ABOVE the main text! IEEEtran.cls does not automatically do
% this for you, but you can achieve this effect with the provided
% \IEEEraisesectionheading{} command. Note the need to keep any \label that
% is to refer to the section immediately after \section in the above as
% \IEEEraisesectionheading puts \section within a raised box.

% The very first letter is a 2 line initial drop letter followed
% by the rest of the first word in caps (small caps for compsoc).
% 
% form to use if the first word consists of a single letter:
% \IEEEPARstart{A}{demo} file is ....
% 
% form to use if you need the single drop letter followed by
% normal text (unknown if ever used by the IEEE):
% \IEEEPARstart{A}{}demo file is ....
% 
% Some journals put the first two words in caps:
% \IEEEPARstart{T}{his demo} file is ....
% 
% Here we have the typical use of a "T" for an initial drop letter
% and "HIS" in caps to complete the first word.
% You must have at least 2 lines in the paragraph with the drop letter
% (should never be an issue)

\IEEEPARstart{I}{t} is well-known that the success of machine learning methods rests on a large amount of training data.  However, when training samples \emph{are hard to collect}, how to improve model performance with \emph{small datasets} then becomes a great challenge.  When facing multiple related tasks together, Multi-Task Learning (MTL) \cite{mtl} is well-known as a good solution against such a challenge. In a general sense, MTL refers to the learning paradigm where multiple tasks are trained jointly under certain constraints leveraging knowledge transfer across some/all of the tasks. As typical evidence for the wisdom behind MTL, an early study \cite{limit2} shows that training a large number of similar tasks together could significantly improve the generalization performance when the data annotations of each individual task are insufficient. Nowadays, such wisdom has been widely adopted by the machine learning community, which makes MTL a crucial building block for a plethora of applications ranging from scene parsing \cite{cv1}, attribute learning \cite{cv2}, text classification \cite{nlp1}, sequence labeling \cite{nlp2}, to travel time estimation \cite{dm1}, \emph{etc}.

The fundamental belief of MTL lies in that sharing knowledge among multiple tasks often results in an improvement in generalization performance. Based on the belief, a great number of studies have been carried out to explore the problem of how to share valuable knowledge across different tasks. The early studies on this topic argue that knowledge should be shared across all the tasks. For example, in the work of \cite{simp},  knowledge is transferred  by sharing common and sparse features across all the tasks.  However, \cite{non-overlapping1} later points out that if not all the tasks are indeed related,  sharing common features with dissimilar and hard tasks often results in performance degradation, which is termed as \textit{negative transfer}. To address this issue, recent studies in the odyssey against \textit{negative transfer} usually fall into two brunches.\\
\indent  One line of studies casts the tasks grouping problem as a clustering method. At the very beginning, \cite{clus_early} proposes an MTL algorithm which first constructs the task clusters and then learns the model parameters separately. Seeing that this stage-wise method  could not guarantee the optimality of the learned clusters and parameters, a significant number of studies start to explore how to integrate clustering and multi-task learning into a unified framework.  Generally speaking, such work could be classified  into two categories. The first class of methods adopts a Bayesian learning framework, which assumes that the task-specific parameters subject to cluster-leveraging priors such as mixtures of Gaussian prior \cite{MG} and Dirichlet process prior \cite{Dp1,Dp2,Dp3}.   The second class of methods formulates the clustering problem as regularization terms.  More specifically, such terms  are developed to: (a)penalize small between-cluster variance and large within-cluster variance, (b) relax mix-integer programmings  \cite{non-overlapping1, non-overlap2},  (c) encourage structural sparsity \cite{metag,structual,k-support}, or (d) encourage latent task representation \cite{overlap1,overlap2}.\\
\indent The other line of studies realizes that knowledge transfer should not be treated symmetrically. In fact, transferring knowledge from easy to hard  tasks is generally safe, while transferring knowledge along the opposite direction is the major source of negative transfer and should be suppressed.  Motivated by this, \cite{amtl} proposes the first MTL method to leverage asymmetry. In this work, the task parameters are assumed to lie in the column space spanned by themselves, and the asymmetry is then realized by leveraging sparse representation coefficients coming from each task. Since then, some improvements have been made on this framework via (a)  making the penalty adaptive to the correlation between predictors \cite{self2}; (b) latent task representation \cite{self1,trace_lasso}; (c) grouping constraint\cite{asym-group} and d) robust constraints \cite{asym-robust}.\\
\indent For the majority of existing methods, the negative transfer issue is only modeled as task correlation/grouping. With the following motivating examples, it could be seen that, even when the tasks are reasonably grouped, sharing redundant features across different task groups still bears the risk for over-fitting. This suggests that \emph{negative transfer might as well take place across features and tasks}.
\begin{exam}
Consider the personalized learning problem, where the prediction of the decision results coming from a given user is regarded as a task, and the features capture different concepts of the given object that a user might be interested in.  When making decisions toward an object, users often form disentangled groups in terms of their points-of-interests (color/shape/texture, etc.). Consequently, blindly sharing irrelevant features across groups is dangerous. 
\end{exam}

\begin{exam}
In bioinformatics, seeking out connections between genetic markers and phenotypes is often regarded as a crucial problem. MTL could be applied to this problem, for example, when we expect to simultaneously predict a set of different diseases from genetic clues. Here every specific disease prediction is regarded as a task, and features come from the expressions of different markers. Under this scenario, since  different groups of diseases involve different biological functions, it is natural to observe them correlated with different sets of genes. Sharing common genes across disease groups is then not reasonable, and might lead to negative transfer even when we have a good grouping of the diseases.   
\end{exam}
\begin{exam}
Another example comes from the natural language processing tasks. For instance, consider the topic classification problem where identifying a specific topic is regarded as a task and the embeddings of words in a document are regarded as features. It is often observed that different groups of topics are relevant to different subsets of words. The Sport-related topics often involve keywords such as \emph{athletics}, \emph{soccer}, \emph{gymnastics}, while the Politics related keywords often include \emph{governance}, \emph{election}, \emph{parliaments}. Then sharing common words across these two topic groups then bears a high risk of over-fitting.
% }
\end{exam}
% \end{mdframed}

% {\color{blue}
 Motivated by these examples, our goal in this paper is to seek out solutions for \textit{negative transfer} in a more \emph{general} manner such that the features could come into play for the grouping process. To realize our goal, we need to \emph{include} the co-partition of tasks and features as an important component of MTL. To this end, we formulate a new learning framework named Task-Feature Collaborative Learning (TFCL). Specifically, we construct our framework with three steps.\\
\noindent\textbf{ {Step 1}}. In the first step,  we propose a base model that achieves the co-grouping target with a novel regularizer based on   block-diagonal structure learning of a bipartite graph with features and tasks being its nodes.\\
\noindent \textbf{{Step 2}}. Since the resulting optimization problem of the base model, denoted in short as $(\bm{P})$, is non-smooth and non-convex, we propose a surrogate problem  $(\bm{P}^\star)$ as an approximation.  Through developing an optimization algorithm for $(\bm{P}^\star)$, we prove that it can simultaneously achieve the global convergence for $(\bm{P})$ and $(\bm{P}^\star)$ under certain assumptions.  Besides the convergence analysis, it is also noteworthy that the intermediate solution produced by the optimization method implicitly provides an embedding for each feature and task. With these embeddings, we further show that the optimization
algorithm could leverage the expected block-diagonal structure if the parameters are carefully chosen. This naturally leverages a grouping effect across task/feature, where transferring across groups is suppressed. \\
\noindent \textbf{{Step 3}}. With the base model elaborated, we then turn our focus to a more comprehensive model and target at an application problem called \emph{ personalized attribute prediction}, where the personalized attribute annotation prediction for a given user is regarded as a task. To obtain a flexible model, we simultaneously consider (a) the consensus factor that captures the popular interests shared by the users, which allows group overlapping (b) the co-grouping factor in our base model (c) the abnormal factor that excludes outlier users (tasks) from grouping. We also prove that this method inherits all the theoretical properties of TFCL.\\
% }
In a nutshell, the main contributions of this paper are three-fold:

\begin{enumerate}
\item [(C1)] In the core of the base model of TFCL framework lies  a novel block-diagonal regularizer  leveraging the task-feature co-partition. This allows us to explore a more general negative transfer effect simultaneously at the task- and feature-level. 

\item[(C2)]  An optimization algorithm is designed for the base model with a theoretical guarantee for the global convergence property. Moreover, we provide a theoretical guarantee for leveraging the expected block-diagonal structure.

\item[(C3)]  Finally, we propose a more practical extension for the personalized attribute prediction problem based on TFCL with enhanced flexibility.  
\end{enumerate}

\begin{figure*}[t]
 % \begin{mdframed}[hidealllines=true,backgroundcolor=blue!10	,innerleftmargin=3pt,innerrightmargin=3pt,leftmargin=-3pt,rightmargin=-3pt] 
  \centering
  \includegraphics[width=0.95\textwidth]{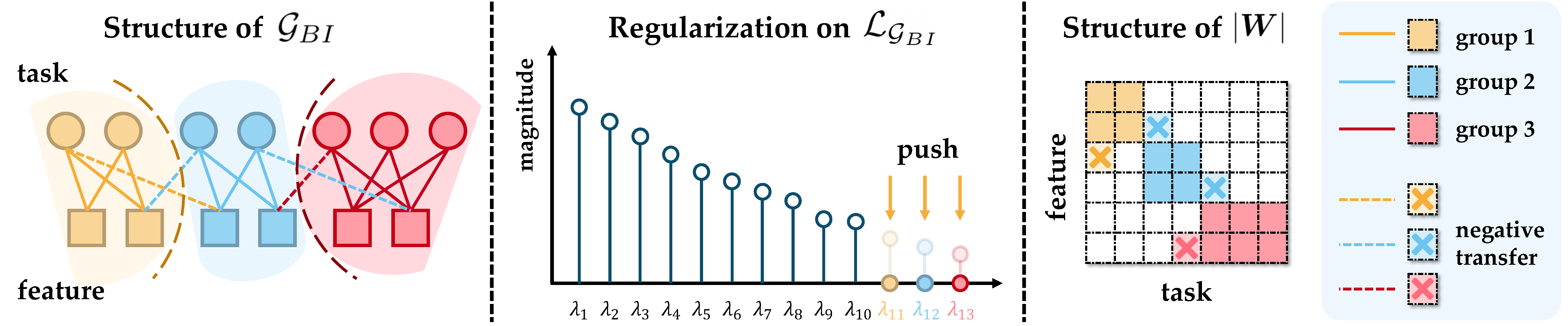}
  
    \caption{  
{\textbf{Illustration of the Base Model of the  Task-Feature Collaborative Learning Framework}}. \textbf{Left}: We form the task-feature relations as an auxiliary bipartite graph $\Gbi$ with tasks and features being the nodes, and $\LGbi$ being the Graph Laplacian. To separate all the tasks and features into $k$ groups, we  expect to cut $\Gbi$ into $k$ connected components.  \textbf{Middle}: If we reconsider this requirement from the Graph Laplacian, as is shown in Thm. \ref{thm:graph}, it is equivalent to force the smallest $k$ eigenvalues of $\LGbi$ to be zero. Since directly doing this is intractable, we turn to minimize their sum as a relaxation, which gives birth to a novel regularizer based on Thm. \ref{thm:eig}. \textbf{Right}: Now we shift our attention to the model parameters $\bm{W}$. The proposed regularizer facilitates a generalized block-diagonal structure (up to permutations) toward $\bm{W}$, with each block containing a specific group of nodes in $\Gbi$. In the next section, based on Prop. \ref{prop:sol}, Thm. \ref{eigsol}-\ref{thm:global}, we will construct an optimization method for TFCL with global convergence guarantee.  Moreover, we will also show in Thm. \ref{thm:group} that, negative transfer across blocks, marked as crosses here, could be effectively suppressed based on the algorithm.
}
  \label{fig:illu}
% \end{mdframed}
\end{figure*}
\section{Related Work}
In this section, we briefly review the recent advances in block-diagonal structural learning,  multi-task learning and personalized attribute prediction that are closely related to our study.

\noindent \textbf{Block-Diagonal Structural Learning.} The idea to learn block-diagonal structures could be traced back to the clustering problem. For the clustering problem, a set of data points are required to be grouped into several clusters in an unsupervised manner. As a representative type of method, graph-based clustering methods (\emph{e.g}. spectral clustering \cite{spec1,spec2} and subspace clustering \cite{SSC,LRR}) solve this problem in a two-stage way: (a)  a proper sample-sample affinity matrix is first obtained to capture the correlations among different sample points; (b) Given the affinity matrix, the clustering problem is formulated as a graph partitioning problem via minimizing some spectral relaxations of the normalized cut. Under this framework, if the affinity matrix has a clear block-diagonal structure, then each of the block components naturally forms a cluster. Consequently, leveraging the block-diagonal structure of the affinity matrix could significantly improve the performance of such graph-based clustering methods. Motivated by this intuition, researchers start to find implicit regularization terms to preserve the block-diagonal structural properties of the affinity matrix  \cite{s3c,sscomp,DDSSC,SCTwist,ssqp,lrsc,spsc1}. However, as suggested by \cite{lublock}, the implicit regularizers could not deal with the off-diagonal noises from the null spaces of the feature inputs.  Then  \cite{diag1,diag2} present the first trail to develop explicit block-diagonal regularizers in the graph-based clustering framework as a better solution against this issue. Most recently, this framework has been successfully extended to subspace clustering frameworks to embrace self-expression \cite{lublock,xie2017implicit,yang2019split}. Along this line of research, the most related studies to our work come from the explicit block-diagonal regularizations. However, they differ significantly with our work from two dimensions. First, they target at homogeneous sample grouping, where the block-diagonal property is merely limited to square matrices with the $i$-th column and $i$-th row representing the same sample point. In this paper, the task-feature co-grouping pursuit calls for a more generalized definition of block-diagonality. To this end, we propose a generalized block-diagonal structural learning framework which is available for arbitrary size matrices where the $i$-th column/row refers to heterogeneous type of nodes (task/feature in our case). 
Second, concerning the optimization method,  they only provide a subsequence convergence guarantee, leaving the global convergence property an open problem. By contrast, we will show that our proposed method could guarantee the global convergence property with a specific construction of an auxiliary surrogate problem.
We will have a closer look at the connection between the related work and our method in Sec.\ref{sec:disscus}. \\
\noindent \textbf{Multi-task Learning.} In the introduction, we have provided a brief review of the majority of methods attacking negative transfer issue in MTL. Here, we further discuss MTL methods that are closely related to our work. Firstly, from the structural learning perspective, the asymmetric MTL methods mentioned in the introduction section \cite{amtl,self1,asym-robust,asym-group} also leverage block-diagonal structures. Just as mentioned in the last paragraph, they only consider homogeneous block-diagonal structures at the task-level, which does not meet our requirement of heterogeneous block-diagonal structures across tasks and features. From the task-feature collaborative learning perspective, to the best of our knowledge, there are only two MTL studies that also explicitly learn the task-feature co-grouping structures. As a beginning trial, \cite{cluster} explores how different features play a role in multi-task relationships. Specifically, it designed a novel multi-task model via leveraging feature-specific task clusters. However, the features in \cite{cluster} are considered separately, with the complicated feature correlations left unconsidered. \cite{comtl} turns to learn the feature-task correlations based on a co-clustering guided regularization. However, there is no direct guarantee to ensure that the regularization scheme could explicitly leverage the block-diagonal structure. More importantly, it does not provide an explicit connection between the feature-task co-clustering and the negative transfer across tasks and features.

\noindent \textbf{Personalized Attribute Predictions.} Attribute learning has long been playing a central role in many machine learning and computer vision problems. While most attribute learning methods adopt consensus annotations, recently, with the rise of the crowdsourcing platform, there is an emerging wave to study how to train user-specific models for personalized annotations. An early trial presented in \cite{user1} learns user-specific attributes with an adaption process. More precisely, a general model is first trained based on a large pool of data. Then a small user-specific dataset is employed to adapt the trained model to user-specific predictors. \cite{user2} argues that one attribute might have different interpretations for different groups of persons. Correspondingly, a shade discovery method is proposed therein to leverage group-wise user-specific attributes. The common issue of these methods is that they only adopt a stage-wise training scheme.  Most recently, the work presented in \cite{user3} starts an early trial to jointly learn personalized annotations across multiple attributes.  In this paper, we will have a closer look at the negative transfer issue in this application with a fine-grained modeling of the user-feature correlations based on the proposed task-feature collaborative learning method.
\section{Task-Feature Collaborative Learning: The General Framework}
In this section,  we propose the base model for Task-Feature Collaborative Learning (TFCL), which suppresses negative transfer across tasks and features. 
 In a nutshell, a summary of our method is illustrated in Fig.\ref{fig:illu}. In this section, our main assumption is  that tasks and features could be simultaneously clustered into different groups. \textit{Nonetheless,  our work does not restrict to the co-grouping assumption. In section \ref{sec:per}, we will extend TFCL to include outlier tasks and consensus features. }\\

\noindent\textbf{Notations}. The notations adopted in this paper are enumerated as follows. $\mathbb{S}_m$ denotes the set of all symmetric matrices in $\mathbb{R}^{m \times m}$.  Given $\bm{A} \in \mathbb{S}_N$, we number the $N$ eigenvalues $\bm{A}$ in an ascending order as $\lambda_1(\bm{A}) \le \lambda_2(\bm{A}) \le \cdots \le  \lambda_N(\bm{A})$. $\left<\cdot,\cdot\right>$ denotes the inner product for two matrices or two vectors. Given two matrices $\boldsymbol{A}$ and $\boldsymbol{B}$, $\boldsymbol{A} \oplus \boldsymbol{B}$ denotes the direct sum of two matrices, and we say $\boldsymbol{A} \succeq \boldsymbol{B} $, if $\boldsymbol{A} - \boldsymbol{B}$ is positive semi-definite. For distributions, $\mathbb{U}(a,b)$ denotes the uniform distribution and $\mathbb{N}(\mu, \sigma^2)$ denotes the normal distribution.
For a given matrix $ \bm{A} \in \mathbb{R}^{m \times n}$, the null space is defined as $\mathcal{N}(\bm{A}) =\{\bm{x} \in \mathbb{R}^{n}: \bm{A}\bm{x} = \bm{0}\}$.  Given $\bm{A} \in \mathbb{S}_{m}$, if $\lambda$ is an eigenvalue of $\bm{A}$, $\mathbb{EIG}_{A}(\lambda) = \mathcal{N}(\bm{A} - \lambda\bm{I})$ is the subspace spanned by the eigenvectors associated with $\lambda$. Given a matrix $\bm{A}  = [\bm{a}_1, \cdots, \bm{a}_n]$, we denote $\bm{A}_{m:n} = [\bm{a}_m, \cdots, \bm{a}_n]$. We have $\iota_\mathcal{A}(x)= 0$, if $x \in \mathcal{A}$, and  $\iota_\mathcal{A}(x)= +\infty$, otherwise.

\noindent\textbf{Standard multi-task learning paradigm}. Given $T$ tasks to be learned simultaneously, we denote the training data as: \[\mathcal{S} = \left\{(\boldsymbol{X}^{(1)}, \boldsymbol{y}^{(1)}), \cdots, (\boldsymbol{X}^{(T)}, \boldsymbol{y}^{(T)})\right\}.\] For $\mathcal{S}$, $\boldsymbol{X}^{(i)} \in \mathbb{R}^{n_{i} \times d}$ is the input feature matrix for the $i$-th task, where  $n_{i}$ denotes the number of instances and $d$ represents the feature dimension. Each row of $\boldsymbol{X}^{(i)} $ represents the feature vector for a corresponding instance in the task. $\boldsymbol{y^{(i)}}$ is the corresponding response or output variable for the $i$-th task. We train a linear model $\boldsymbol{g}^{(i)}(x) = \boldsymbol{W}^{(i)^\top}\boldsymbol{x}$ for each involved task. The \textit{parameter matrix} $\boldsymbol{W} \in \mathbb{R}^{d \times T}$ as the concatenation of task coefficients in the form  $\boldsymbol{W} = [\boldsymbol{W}^{(1)}, \cdots, \boldsymbol{W}^{(T)}]$.  Following the standard MTL learning paradigm, $\boldsymbol{W}$ could be solved from a regularized problem
$
\argmin_{\boldsymbol{W}} \mathcal{J}(\boldsymbol{W}) + \alpha \cdot \varOmega(\boldsymbol{W})
$, where $\ell_i$ denotes the empirical risk for the $i$-th task, $\mathcal{J}(\boldsymbol{W}) = \sum_i \ell_i$,  $\varOmega(\boldsymbol{W})$ denotes a proper regularizer. In the rest of this section, we derive a proper regularizer that suppresses the negative transfer collaboratively from both the task- and feature-level.
\\
\\
\noindent Under the linear model, for the $i$-th task, the prediction of the response could be written as $\hat{{y}} =  \boldsymbol{W}^{(i)^\top}\boldsymbol{x} = \sum_{j = 1}^d  W_{ij}x_j$. Accordingly, if $W_{ij} = 0$, $\hat{{y}}$ becomes irrelevant to the $j$-th feature. While for those nonzero $W_{ij}$s, $\hat{{y}}$ tends to have a stronger dependence on the features with a greater value of $|W_{ij}|$. In this way, $|W_{ij}|$ provides a proper expression of the correlation between feature $i$ and task $j$. To alleviate negative transfer from both dissimilar tasks and dissimilar features, our basic setting in TFCL is to automatically separate the tasks and features into a given number of groups, where each group only contains similar tasks and features. In this scenario, negative transfer comes when $W_{ij} \neq 0$ if feature $i$ and task $j$ are not in the same group. Inspired by this, \textit{our goal is  to search for a simultaneous partition of tasks and features into $k$ groups, where we expect $|W_{ij}|$ to vanish when feature $i$ and task $j$ are not in the same group}. At first glance, formulating this constraint as a regularizer is difficult. However, if we turn to define this constraint on an auxiliary bipartite graph, then a simple regularization term   realizing this constraint could be constructed. Specifically, we define a  bipartite graph $\Gbi = (\Vbi,\Ebi,\bm{A}_{BI})$. The vertices of $\Gbi$ include all the tasks and features. Denote $\mathcal{V}_{T}$ as the set of all tasks and $\mathcal{V}_{F}$ as the set of all features, the vertex set $\Vbi$ is defined as $\Vbi=\mathcal{V}_{T}\cup \mathcal{V}_{F}$. The affinity matrix $\Abi$ is in the form 
$
\bm{A}_{{BI}} = \begin{bmatrix}
\boldsymbol{0} & \boldsymbol{|\boldsymbol{W}|}\\
\boldsymbol{|\boldsymbol{W}|^\top} & \boldsymbol{0}\\
\end{bmatrix}
$,
then the edge set naturally becomes $\Ebi=\{(i,j)| \bm{A}_{{BI}_{{i,j}}} > 0 \}$. Besides the graph itself, we also employ the  graph Laplacian matrix defined as $\LGbi = diag(\Abi\boldsymbol{1})-\Abi$.

With $\Gbi$ defined, we could then find insight from spectral graph theory. In fact, to guarantee a simultaneous grouping of the tasks and features, it suffices to guarantee that the induced bipartite graph $\Gbi$ has $k$ connected components. Then the following theorem states that this is equivalent to restrict the dimension of the null space of $\LGbi$.

\begin{thm} \label{thm:graph}\cite{spectral}
	If $\dim(\mathcal{N}(\LGbi))=k$, i.e, the 0 eigenvalue of $\LGbi$ has a multiplicity of $k$ if and only if the bipartite graph $\Gbi$ has k connected components. Moreover, denote $\mathcal{G}(i)$ as the set of tasks and features belonging to the $i$-th component, we then have $\mathbb{EIG}_{\LGbi}(0) = span(\boldsymbol{\iota}_{\mathcal{G}(1)},\boldsymbol{\iota}_{\mathcal{G}(2)}, \cdots, \boldsymbol{\iota}_{\mathcal{G}(N)})$, where $\boldsymbol{\iota}_{\mathcal{G}(i)} \in \mathbb{R}^{(d+T) \times 1}$, $[\boldsymbol{\iota}_{\mathcal{G}(i)}]_j =1 \  \text{if} \ j \in \mathcal{G}(i) $, otherwise $[\boldsymbol{\iota}_{\mathcal{G}(i)}]_j =0 $.
\end{thm}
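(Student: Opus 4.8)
The plan is to follow the classical spectral graph theory route built on the quadratic form of the graph Laplacian, specialized to our affinity matrix. The starting observation is that $\Abi$ is symmetric with nonnegative entries, so for any vector $\boldsymbol{x} \in \mathbb{R}^{d+T}$ a direct expansion of $\LGbi = diag(\Abi\boldsymbol{1}) - \Abi$ yields the identity
\[
\boldsymbol{x}^\top \LGbi \boldsymbol{x} = \frac{1}{2} \sum_{i,j} [\Abi]_{ij} (x_i - x_j)^2.
\]
Because every summand is nonnegative, this shows $\LGbi \succeq \boldsymbol{0}$; hence all eigenvalues are nonnegative and $\boldsymbol{x} \in \mathcal{N}(\LGbi)$ if and only if $\boldsymbol{x}^\top \LGbi \boldsymbol{x} = 0$.

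First I would characterize the null space from this quadratic form. The sum vanishes exactly when $[\Abi]_{ij}(x_i - x_j)^2 = 0$ for every pair $(i,j)$, i.e. when $x_i = x_j$ whenever $[\Abi]_{ij} > 0$. By the definition of $\Ebi$, this says that $\boldsymbol{x}$ takes a common value across any two vertices joined by an edge. Propagating this equality along paths, $\boldsymbol{x}$ must therefore be constant on each connected component of $\Gbi$. Conversely, each component indicator $\boldsymbol{\iota}_{\mathcal{G}(i)}$ is constant on its own component and zero elsewhere, so it clearly satisfies the edge-wise constraints and lies in $\mathcal{N}(\LGbi)$.

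Next I would count the dimension. The indicators $\{\boldsymbol{\iota}_{\mathcal{G}(i)}\}$ have pairwise disjoint supports and are therefore linearly independent; together with the previous step, which shows they also span $\mathcal{N}(\LGbi)$, they form a basis. This gives $\dim(\mathcal{N}(\LGbi))$ equal to the number of connected components and simultaneously identifies $\mathbb{EIG}_{\LGbi}(0) = span(\boldsymbol{\iota}_{\mathcal{G}(1)}, \boldsymbol{\iota}_{\mathcal{G}(2)}, \cdots)$. Both directions of the stated equivalence then follow by reading this dimension identity each way: having $k$ components forces a $k$-dimensional null space, and a $k$-dimensional null space forces exactly $k$ components.

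I expect the only genuinely delicate step to be the ``constant on each connected component'' claim, which is precisely where connectivity enters: one must argue that the single-edge equalities $x_i = x_j$ compose transitively along paths inside a component, collapsing the edge-wise constraint system to a single free value per component. Everything else is routine linear algebra. Note also that the bipartite structure of $\Gbi$ plays no special role here --- only the symmetry and nonnegativity of $\Abi$ together with componentwise connectivity are used --- so the argument is exactly the standard Laplacian null-space characterization applied to our construction.
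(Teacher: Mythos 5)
Your proof is correct. Note that the paper does not supply its own proof of this theorem --- it is imported by citation from the spectral clustering literature --- and your argument is exactly the standard one that the cited source uses: positive semi-definiteness via the quadratic form $\boldsymbol{x}^\top \LGbi \boldsymbol{x} = \tfrac{1}{2}\sum_{i,j}[\Abi]_{ij}(x_i - x_j)^2$, constancy on connected components by propagating edge-wise equalities, and the component indicators as a basis of the null space. Your closing observation that the bipartite structure is irrelevant (only symmetry and nonnegativity of $\Abi$ matter) is also exactly why the paper can apply the generic result to its heterogeneous task-feature graph without modification.
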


The theorem above implies a way to realize our goal: adopting a regularizer to force the smallest $k$ eigenvalues to  be zero. let $N= d + T$ denote the total number of nodes in $\Gbi$.  We have that the regularizer is equivalent to force $rank(\LGbi) =N-k$, which is known to be an NP-hard problem. In this case, we turn to  minimize the sum of the bottom $k$ eigenvalues $\sum_{i=1}^k \lam{i}$  as a tractable relaxation. According to the well-known extremal property of eigenvalues suggested by Fan \cite{fan}, the sum of the $k$  smallest eigenvalues of $\LGbi$ could be formulated as a minimization problem:
\begin{equation*}
\sum_{i=1}^k \lambda_i(\mathcal{L}_{\Gbi}) =  \min_{\boldsymbol{E}} tr(\boldsymbol{E}\mathcal{L}_{\Gbi}\boldsymbol{E}^\top), s.t. \ \boldsymbol{E}^\top\boldsymbol{E} = \boldsymbol{I}_{k}.
\end{equation*}
At first glance, the above problem is not convex due to the non-convex constraint $\boldsymbol{E}^\top\boldsymbol{E} = \boldsymbol{I}_{k}$. Recently a convex formulation of the problem starts drawing attention from machine learning and computer vision community \cite{lublock}. The nature behind this new formulation attributes to the following theorem.
\begin{thm}\label{thm:eig}
	Let  $\Gamma=\{\boldsymbol{U}: \boldsymbol{U} \in \mathbb{S}_{N},\ \boldsymbol{I} \succeq \boldsymbol{U}  \succeq \boldsymbol{0}, tr(\boldsymbol{U})=k \} $, then $\forall \boldsymbol{A} \in \mathbb{S}_N$:
	\begin{equation*}
	\sum_{i=1}^k\lambda_i(\boldsymbol{A}) =  \min_{\boldsymbol{U} \in \Gamma} \left<\boldsymbol{A}, \ \boldsymbol{U}\right>,
	\end{equation*}
	with an optimal value reached at $\boldsymbol{U} = \boldsymbol{V}_k\boldsymbol{V}^\top_k$, where $\boldsymbol{V}_k$ represents the eigenvectors of the smallest $k$ eigenvalues of $\boldsymbol{A}$.
\end{thm}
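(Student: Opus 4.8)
The plan is to prove both inequalities by first removing the dependence on the eigenbasis of $\boldsymbol{A}$ and then reducing the problem to a transparent linear program over a box. Write the eigendecomposition $\boldsymbol{A} = \boldsymbol{Q}\boldsymbol{\Lambda}\boldsymbol{Q}^\top$ with $\boldsymbol{Q}$ orthogonal and $\boldsymbol{\Lambda} = \mathrm{diag}(\lambda_1(\boldsymbol{A}),\dots,\lambda_N(\boldsymbol{A}))$ in ascending order. Since $\langle \boldsymbol{A},\boldsymbol{U}\rangle = \langle \boldsymbol{\Lambda},\, \boldsymbol{Q}^\top\boldsymbol{U}\boldsymbol{Q}\rangle$ and the map $\boldsymbol{U}\mapsto \boldsymbol{Q}^\top\boldsymbol{U}\boldsymbol{Q}$ is a bijection of $\Gamma$ onto itself — it preserves symmetry and the trace, and $(\boldsymbol{Q}\boldsymbol{x})^\top\boldsymbol{U}(\boldsymbol{Q}\boldsymbol{x})\in[0,\boldsymbol{x}^\top\boldsymbol{x}]$ shows it preserves the order relation $\boldsymbol{I}\succeq\cdot\succeq\boldsymbol{0}$ — it suffices to establish the identity for $\boldsymbol{\Lambda}$, i.e. to minimize $\langle \boldsymbol{\Lambda},\boldsymbol{U}\rangle = \sum_{i=1}^N \lambda_i(\boldsymbol{A})\, U_{ii}$ over $\boldsymbol{U}\in\Gamma$.

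Next I would read off the constraints on the diagonal. For any $\boldsymbol{U}\in\Gamma$, testing $\boldsymbol{0}\preceq\boldsymbol{U}\preceq\boldsymbol{I}$ against the standard basis vector $\boldsymbol{e}_i$ gives $0\le U_{ii}=\boldsymbol{e}_i^\top\boldsymbol{U}\boldsymbol{e}_i\le 1$, while $tr(\boldsymbol{U})=k$ gives $\sum_i U_{ii}=k$. Hence the diagonal $\boldsymbol{u}=(U_{11},\dots,U_{NN})$ lies in the polytope $\mathcal{P}=\{\boldsymbol{u}: 0\le u_i\le 1,\ \sum_i u_i = k\}$, and the objective reduces to the linear functional $\sum_i \lambda_i(\boldsymbol{A})\,u_i$. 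The lower bound then follows from a rearrangement argument: setting $s=\sum_{i=1}^k(1-u_i)=\sum_{i>k}u_i\ge 0$ (the second equality uses $\sum_i u_i=k$), the ascending order of the $\lambda_i(\boldsymbol{A})$ yields $\sum_i\lambda_i(\boldsymbol{A})u_i - \sum_{i=1}^k\lambda_i(\boldsymbol{A}) = \sum_{i>k}\lambda_i(\boldsymbol{A})u_i - \sum_{i=1}^k\lambda_i(\boldsymbol{A})(1-u_i)\ge \lambda_{k+1}(\boldsymbol{A})\,s - \lambda_k(\boldsymbol{A})\,s = (\lambda_{k+1}(\boldsymbol{A})-\lambda_k(\boldsymbol{A}))\,s\ge 0$, so $\langle\boldsymbol{A},\boldsymbol{U}\rangle\ge\sum_{i=1}^k\lambda_i(\boldsymbol{A})$ for every $\boldsymbol{U}\in\Gamma$.

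Finally I would exhibit a feasible matrix attaining this bound. Taking $\boldsymbol{U}=\boldsymbol{V}_k\boldsymbol{V}_k^\top$ with $\boldsymbol{V}_k$ the orthonormal eigenvectors of the $k$ smallest eigenvalues, one checks $\boldsymbol{U}^2=\boldsymbol{U}$ (an orthogonal projector), whence $\boldsymbol{0}\preceq\boldsymbol{U}\preceq\boldsymbol{I}$, together with $tr(\boldsymbol{U})=tr(\boldsymbol{V}_k^\top\boldsymbol{V}_k)=k$, so $\boldsymbol{U}\in\Gamma$; moreover $\langle\boldsymbol{A},\boldsymbol{V}_k\boldsymbol{V}_k^\top\rangle=tr(\boldsymbol{V}_k^\top\boldsymbol{A}\boldsymbol{V}_k)=\sum_{i=1}^k\lambda_i(\boldsymbol{A})$. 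Combined with the lower bound this shows the minimum equals $\sum_{i=1}^k\lambda_i(\boldsymbol{A})$ and is attained at $\boldsymbol{V}_k\boldsymbol{V}_k^\top$. I expect the only delicate point to be the reverse (lower-bound) inequality: one must resist collapsing the problem back to the nonconvex Fan formulation (minimization over orthonormal $\boldsymbol{E}$) and instead argue that relaxing the projector constraint to $\Gamma$ does not lower the optimum. Equivalently, $\Gamma$ is the convex hull of the rank-$k$ projectors, so a linear functional attains its minimum at such an extreme point; the explicit rearrangement above makes this concrete and sidesteps any extreme-point machinery. As a shortcut, the upper bound $\min_{\boldsymbol{U}\in\Gamma}\langle\boldsymbol{A},\boldsymbol{U}\rangle\le\sum_{i=1}^k\lambda_i(\boldsymbol{A})$ also follows immediately from the stated Fan characterization \cite{fan}, since every $\boldsymbol{E}^\top\boldsymbol{E}$ with $\boldsymbol{E}\boldsymbol{E}^\top=\boldsymbol{I}_k$ already belongs to $\Gamma$.
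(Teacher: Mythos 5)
Your proposal is correct and follows essentially the same route as the paper's proof: conjugate into the eigenbasis of $\boldsymbol{A}$, observe that the diagonal of the conjugated $\boldsymbol{U}$ lies in $\{ \boldsymbol{u} : 0 \le u_i \le 1,\ \sum_i u_i = k\}$, minimize the resulting linear functional, and verify attainment at $\boldsymbol{V}_k\boldsymbol{V}_k^\top$. The one genuine improvement is your explicit rearrangement estimate $\sum_i \lambda_i(\boldsymbol{A})u_i - \sum_{i=1}^k\lambda_i(\boldsymbol{A}) \ge (\lambda_{k+1}(\boldsymbol{A})-\lambda_k(\boldsymbol{A}))\,s \ge 0$, which rigorously justifies the lower bound that the paper merely asserts when it states the minimum of the diagonal problem is reached at $C_{ii}=\mathbbm{1}[i\le k]$.
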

\begin{proof}
  
  In this proof, we denote the eigenvalue decomposition of $\boldsymbol{A}$ as
  \begin{equation}
  \boldsymbol{A} = \boldsymbol{Q} \Lambda \boldsymbol{Q}^\top,  \  \Lambda = diag(\lambda_1(\boldsymbol{A}), \cdots, \lambda_N(\boldsymbol{A})).
  \end{equation}
  For any element $\boldsymbol{U}$ in the feasible set $\Gamma$, we have:
  $
  \left<\boldsymbol{A}, \boldsymbol{U}\right> = \sum_{i}C_{ii}\lambda_i(\boldsymbol{A}),
  $
  where $\boldsymbol{C} = \boldsymbol{Q}^\top \boldsymbol{U} \boldsymbol{Q}$. Since $\boldsymbol{C}$ has the same eigenvalues as $\boldsymbol{U}$, we have $\boldsymbol{C} \in \Gamma$ if and only if $\boldsymbol{U} \in \Gamma$. 
  Then we have:
  \begin{equation}\label{equi}
  \min_{\boldsymbol{U} \in \Gamma}\left<\boldsymbol{A}, \boldsymbol{U}\right>  \Longleftrightarrow  \min_{\boldsymbol{C} \in \Gamma}\sum_{i}C_{ii}\lambda_i(\boldsymbol{A}).
  \end{equation}
  Define $\boldsymbol{e}^i \in \mathbb{R}^{N \times 1} $, $\boldsymbol{e}^i_i = 1 $ and $\boldsymbol{e}^i_s = 0$ , if $s \neq i$, then we reach the fact that:
  \[C_{ii} = \dfrac{\boldsymbol{e}^{i^\top}\boldsymbol{C}\boldsymbol{e}^{i}}{\boldsymbol{e}^{i^\top}\boldsymbol{e}^{i}}.\] We could then attain the following inequality based on the extremal property of the top/bottom eigenvalue of $\boldsymbol{C}$:
  \begin{equation}
  \begin{split}
  0 &\le \lambda_1(\boldsymbol{C}) = \min\limits_{\boldsymbol{x}} \dfrac{\boldsymbol{x}^{\top}\boldsymbol{C}\boldsymbol{x}}{\boldsymbol{x}^{\top}\boldsymbol{x}} \le C_{ii}\\ &\le  \max\limits_{\boldsymbol{x}} \dfrac{\boldsymbol{x}^{\top}\boldsymbol{C}\boldsymbol{x}}{\boldsymbol{x}^{\top}\boldsymbol{x}} = \lambda_N(\boldsymbol{C})  \le 1.
  \end{split}
  \end{equation}
  The minimum of (\ref{equi}) is reached at $\sum_{i=1}^k\lambda_i(\boldsymbol{A}) $ when $C_{ii} = 0, i > k $, $C_{ii} = 1, i \le k$.
  This directly shows that $\sum_{i=1}^k \lambda_i(\boldsymbol{A}) =  \min_{\boldsymbol{U} \in \Gamma} \left<\boldsymbol{A}, \ \boldsymbol{U}\right>$.
  
  Now it only remains to prove that $\boldsymbol{U} = \boldsymbol{V}_k \boldsymbol{V}^\top_k$ is an optimal solution by showing $\sum_{i=1}^k \lambda_i(\boldsymbol{A}) = \left< \bm{A}, \bm{U}\right> $. Since $\boldsymbol{V}_k$ is the eigenvectors associated with the smallest $k$ eigenvalues of $\boldsymbol{A}$, we have $\boldsymbol{Q} = [\boldsymbol{V}^\bot_k, \boldsymbol{V}_k]$, where $\boldsymbol{V}^\bot_k$ denotes the eigenvectors associated with the largest $N-k$ eigenvalues, and we have $\boldsymbol{V}^\top_k\boldsymbol{V}^\bot_k = \boldsymbol{0}$ and  $\boldsymbol{V}^{\bot^\top}_k\boldsymbol{V}_k = \boldsymbol{0}$. In this sense, we obtain:
  \begin{equation}
  \begin{split}
  \bm{C} &= \boldsymbol{Q}^\top \boldsymbol{U}\boldsymbol{Q} =\begin{bmatrix}
 \boldsymbol{V}^\top_k \\ 
    \boldsymbol{V}^{\bot^\top}_k \\
  \end{bmatrix} \boldsymbol{V}_k \boldsymbol{V}^\top_k [ \boldsymbol{V}_k, \boldsymbol{V}^{\bot}_k] \\ &= \begin{bmatrix}
  \boldsymbol{I}_k  & \boldsymbol{0}\\ 
  \boldsymbol{0} & \boldsymbol{0}
  \end{bmatrix}
  \end{split}.
  \end{equation}   
  Then the proof follows that $ \sum_{i} C_{ii} \lambda_i(\bm{A}) = \sum_{i=1}^k \lambda_i(\bm{A})$
  
\end{proof}

Combining the multi-task empirical loss $\mathcal{J}(\boldsymbol{W})$, the regularization term proposed above, and an $\ell_2$ penalty on $\bm{W}$, we reach  our proposed optimization problem $(\boldsymbol{P})$:  
\begin{equation*}
(\boldsymbol{P}) \ \min_{\boldsymbol{W},\boldsymbol{U}\in \Gamma}  \ \mathcal{J}(\boldsymbol{W}) + 
\alpha_1 \cdot \left<\LGbi,\boldsymbol{U}\right> + \frac{\alpha_2}{2} \cdot \norm{\bm{W}}_F^2. 
\end{equation*}

\section{Optimization}
\indent In this section, instead of solving $(\boldsymbol{P})$ directly, we first propose an optimization method to solve a surrogate problem $(\boldsymbol{P}^\star)$ written as 
\begin{equation*}
(\boldsymbol{P}^\star) \ \min_{\boldsymbol{W},\boldsymbol{U}\in \Gamma}  
\left\{\begin{aligned}
& \mathcal{J}(\boldsymbol{W}) + 
\alpha_1 \cdot \left<\LGbi,\boldsymbol{U}\right> + \frac{\alpha_2}{2} \cdot \norm{\bm{W}}_F^2\\
& +  \frac{\alpha_3}{2} \cdot \norm{\bm{U}}_F^2
\end{aligned}\right\}.
\end{equation*}

We will soon see that, under certain conditions, our algorithm, though originally targeted at $(\boldsymbol{P}^\star)$, surprisingly produces a  sequence that simultaneously convergences to a critical point of  $(\boldsymbol{P})$ and  $(\boldsymbol{P}^\star)$.

Since the term $\left<\LGbi,\boldsymbol{U}\right>$ is non-smooth and non-convex with respect to $\bm{W}$, we adopt a Proximal Gradient Decent (PGD)\cite{fista} framework in our algorithm. As a basic preliminary, we assume that the gradient of the loss function i.e. $\nabla_{\boldsymbol{W}} \mathcal{J}(\boldsymbol{W})$ is $\varrho$-Lipschitz continuous. Following the PGD framework, for each iteration step $t$, given a constant $C > \varrho$ and the historical solution $\boldsymbol{W}^{t-1}$, the parameter $\boldsymbol{W}^t$ and $\boldsymbol{U}^t$ could be updated from the following problem: 
\begin{equation*}
(\boldsymbol{Prox}) \ \min_{\boldsymbol{W},\boldsymbol{U} \in \Gamma}
\left\{
\begin{aligned}
  &\dfrac{1}{2} \left\norm{\boldsymbol{W} -\widetilde{\boldsymbol{W}}^t \right}_F^2 + \frac{\alpha_1}{C} \left<\LGbi,\boldsymbol{U}\right> \label{Ptheta}\\ 
&+   \frac{\alpha_2}{2C} \cdot \norm{\bm{W}}_F^2 + \frac{\alpha_3}{2C}\norm{\bm{U}}_F^2 \\
\end{aligned}
 \right\},
\end{equation*}	
where 
$\widetilde{\boldsymbol{W}}^t = \boldsymbol{W}^{t-1} - \dfrac{1}{C}\nabla_{\boldsymbol{W}}\mathcal{J}(\boldsymbol{W}^{t-1})$.
\subsection{Subroutines}
Solving $\boldsymbol{(Prox)}$ involves two subroutines, one is to optimize $\boldsymbol{U}$ with  $\boldsymbol{W}$  given, and the other is to optimize $\boldsymbol{W}$ with $\boldsymbol{U}$ given.

{\color{white}dsadsa}\\
\noindent \textbf{Update $\boldsymbol{U}$, fix $\boldsymbol{W}$}: This involves the following  subproblem:
\begin{equation}\label{eq:usub}
\min_{\bm{U}} \inner + \frac{\alpha_3}{2C}\norm{\bm{U}}_F^2, \ \ s.t. \  \ \bm{U} \in \Gamma
\end{equation}
 Unfortunately, Thm. \ref{thm:eig} only gives a solution to this problem when $\alpha_3 =  0$. This degenerates to an ordinal truncated eigenvalue minimization problem which has been widely adopted by historical literatures \cite{yang2019split,liu2019robust,xie2017implicit,lublock}. In the forthcoming theorem, inspired by \cite{sdpopt,stopca}, we show that, with a moderate magnitude of $\alpha_3$, the subproblem still has a closed-form solution. More interestingly,  we show that this is also a solution for $\alpha_3 = 0$, which is illustrated as Fig.\ref{fig:eigillu}.
\begin{thm}\label{eigsol}
Let $\lam{0} = 0, \lam{N+1} = +\infty$.  Let $\bm{V} = [\bm{v}_1, \bm{v}_2, \cdots, \bm{v}_N]$ be the associated eigenvectors  for $\lam{1}, \cdots, \lam{N}$. Furthermore, set 
\begin{align*}
p &= \max\{i: \lam{i} < \lam{i+1}, 0 \le i < k\}\\
q &=\min\{i: \lam{i} < \lam{i+1}, i \ge k \}.\\
\Delta{p} &= \lam{p+1} - \lam{p}, \\
\Delta{q} &= \lam{q+1} - \lam{q}.\\
\breve{\delta}(\LGbi)  &= 
\min\{\Delta{p}, \Delta{q}\} \\
\end{align*}
For all $\LGbi \neq \bm{0}$  and  $0 \le \frac{\alpha_3}{2C} < \breve{\delta}(\LGbi)$, the optimal solution of (\ref{eq:usub}) is:\\
\begin{equation}\label{eq:opt}
\begin{array}{lll}
\bm{U}^\star= \bm{V}\tilde{\Lambda}\bm{V}^\top, & \tilde{\Lambda} = diag(\bm{c}), & {c}_i = \begin{cases}
1 & i\le p , \\
\frac{k-p}{q-p} & q \ge i >p, \\
0 & \text{otherwise}.
\end{cases}
\end{array}
\end{equation}
\end{thm}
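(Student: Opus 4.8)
The plan is to diagonalize $\LGbi$, reduce \eqref{eq:usub} to a separable box-constrained quadratic program, and solve that by KKT. Writing $\beta=\tfrac{\alpha_3}{2C}$ and taking the eigendecomposition $\LGbi=\bm{V}\Lambda\bm{V}^\top$ with $\Lambda=diag(\lam{1},\dots,\lam{N})$, I reuse the substitution $\bm{C}=\bm{V}^\top\bm{U}\bm{V}$ from the proof of Thm.~\ref{thm:eig}: orthogonal invariance gives $\norm{\bm{U}}_F^2=\norm{\bm{C}}_F^2$ and $\bm{C}\in\Gamma\Leftrightarrow\bm{U}\in\Gamma$, while $\inner=\sum_i\lam{i}C_{ii}$. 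Because the linear term sees only the diagonal of $\bm{C}$ whereas $\norm{\bm{C}}_F^2=\sum_iC_{ii}^2+\sum_{i\neq j}C_{ij}^2$, replacing $\bm{C}$ by $diag(C_{11},\dots,C_{NN})$ never increases the objective and stays feasible, since $C_{ii}=\bm{e}^{i\top}\bm{C}\bm{e}^i\in[\lambda_1(\bm{C}),\lambda_N(\bm{C})]\subseteq[0,1]$ and $\sum_iC_{ii}=k$. Hence it suffices to solve $\min\sum_i\lam{i}d_i+\beta\sum_id_i^2$ over $0\le d_i\le1,\ \sum_id_i=k$, an optimum that is moreover unique when $\beta>0$ by strict convexity.

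This reduced program is convex, so I only need to exhibit KKT multipliers for the candidate $d=\bm{c}$ of \eqref{eq:opt}. Attaching a free multiplier $\mu$ to $\sum_id_i=k$ and nonnegative multipliers $\eta_i,\xi_i$ to $d_i\ge0,\ d_i\le1$, stationarity reads $\lam{i}+2\beta d_i-\mu-\eta_i+\xi_i=0$ with complementary slackness. The idea is to take $\mu=\lam{p+1}+2\beta\gamma$, $\gamma=\tfrac{k-p}{q-p}$, and check the three blocks: for $i\le p$ (where $c_i=1$) set $\xi_i=\mu-\lam{i}-2\beta$; for $p<i\le q$, where maximality of $p$ and minimality of $q$ force all $\lam{i}$ to coincide, stationarity holds with $\eta_i=\xi_i=0$; and for $i>q$ (where $c_i=0$) set $\eta_i=\lam{i}-\mu$. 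Feasibility of $\bm{c}$ follows from the trace identity $p+(q-p)\gamma=k$.

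The crux is verifying $\eta_i,\xi_i\ge0$, and this is exactly where the hypothesis $0\le\beta<\breve{\delta}(\LGbi)$ is consumed. The binding inequalities collapse to $\mu\ge\lam{p}+2\beta$ and $\mu\le\lam{q+1}$, i.e.\ $\Delta p\ge2\beta(1-\gamma)$ and $\Delta q\ge2\beta\gamma$ after substituting $\lam{p+1}=\lam{q}$. Since $\gamma\in(0,1]$, bounding $\beta$ by the smaller eigengap $\breve{\delta}=\min\{\Delta p,\Delta q\}$ is what pins $\mu$ inside $[\lam{p}+2\beta,\ \lam{q+1}]$, the interval that forces the degenerate block to carry the \emph{uniform} weight $\gamma$ instead of collapsing onto fewer coordinates. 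I expect this gap-versus-penalty comparison to be the main obstacle, alongside the bookkeeping of ties through the convention $\lam{0}=0,\ \lam{N+1}=+\infty$ and the degenerate cases of empty blocks ($p=0$ or $q=N$) and $\gamma=1$.

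For the final assertion that $\bm{c}$ is also optimal at $\alpha_3=0$, I would observe that the reduced objective then becomes the linear form $\sum_i\lam{i}d_i$, whose minimum over the same feasible set equals $\sum_{i=1}^k\lam{i}$; since $\lam{p+1}=\cdots=\lam{k}$, the candidate attains $\sum_{i\le p}\lam{i}+(k-p)\lam{p+1}=\sum_{i=1}^k\lam{i}$ and is thus a minimizer, recovering Thm.~\ref{thm:eig}. Mapping back via $\bm{U}^\star=\bm{V}\tilde{\Lambda}\bm{V}^\top$ then yields the claimed closed form.
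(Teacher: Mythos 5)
Your reduction is sound and genuinely different from the paper's argument: diagonalizing, passing to $\bm{C}=\bm{V}^\top\bm{U}\bm{V}$, and observing that zeroing the off-diagonal of $\bm{C}$ preserves feasibility and the linear term while only shrinking $\norm{\bm{C}}_F^2$ cleanly collapses (\ref{eq:usub}) to the separable program $\min\{\sum_i\lam{i}d_i+\beta\sum_id_i^2:\,0\le d_i\le1,\ \sum_id_i=k\}$. The paper instead works with the matrix KKT system directly (semidefinite multipliers $\Omega_1,\Omega_2\succeq 0$, a simultaneous-diagonalization lemma from the SDP literature, and a case analysis on $p$ and $q$). Your KKT bookkeeping for the scalar program is correct up to and including the two inequalities $\Delta p\ge 2\beta(1-\gamma)$ and $\Delta q\ge 2\beta\gamma$, and your handling of $\alpha_3=0$, of the tie structure $\lam{p+1}=\cdots=\lam{q}$, and of the empty blocks is fine.

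The last step, however --- ``since $\gamma\in(0,1]$, bounding $\beta$ by $\breve{\delta}(\LGbi)$ pins $\mu$ in the right interval'' --- does not go through, and this is a genuine gap rather than a bookkeeping slip. Take the generic case $\lam{k}<\lam{k+1}$, so $q=k$ and $\gamma=1$: your second inequality becomes $2\beta\le\Delta q$, which is strictly stronger than the hypothesis $\beta<\breve{\delta}(\LGbi)\le\Delta q$. And the inequality is really needed, not an artifact of your multipliers: when $\Delta p>\Delta q$ and $\beta\in(\Delta q/2,\,\Delta q)$, perturbing the candidate along $d=(1,\dots,1,1-t,t,0,\dots,0)$ changes the reduced objective by $(\Delta q-2\beta)\,t+2\beta t^2<0$ for small $t>0$, so $\bm{V}\tilde{\Lambda}\bm{V}^\top$ is not the minimizer in that regime. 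The honest conclusion of your argument is that the closed form holds under $\frac{\alpha_3}{2C}<\breve{\delta}(\LGbi)/2$, i.e. $\alpha_3<C\,\breve{\delta}(\LGbi)$. (The paper's own verification avoids this obstacle only because its stationarity equation records the gradient of $\frac{\alpha_3}{2C}\norm{\bm{U}}_F^2$ as $\alpha\bm{U}$ with $\alpha=\frac{\alpha_3}{2C}$ rather than $2\alpha\bm{U}$; your scalar computation makes the correct constant visible.) So either strengthen the hypothesis by the factor of two, or you cannot certify the candidate for all $\beta$ the theorem admits.
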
 

\begin{figure}[h]
\centering
      \includegraphics[width=0.9\columnwidth]{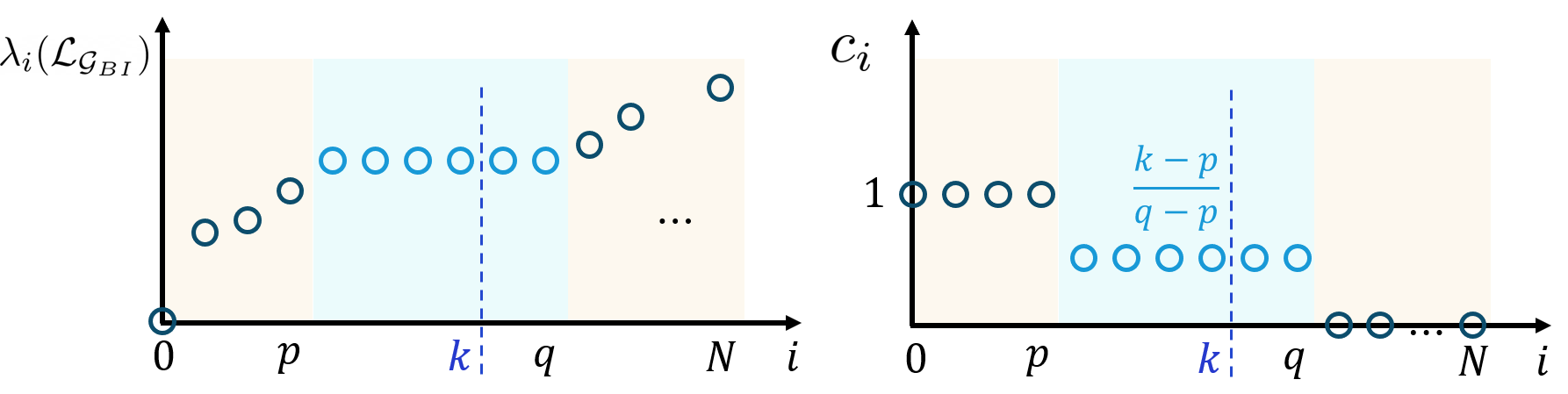}    
  \caption{\label{fig:eigillu} \textbf{Illustration of the Solution in Thm. \ref{eigsol}.} In this  figure, we plot the values of $l_i$ with respect to the  corresponding eigenvalues.  We see that Thm. \ref{eigsol} considers the multiplicity of $\lam{k}$. This makes our algorithm stable even when the eigengap $\lam{k+1} - \lam{k}$ is zero.    }

\end{figure}
\noindent We have three interesting remarks for this theorem.
\begin{rem}[Grouping Effect under an ideal condition] We now provide a remark for the grouping power of $\bm{V}$. Under an ideal case, we assume that the bipartite graph has exactly $k$ connected components. Since $c_i = 0 , \forall i > k$, only $\bm{V}_{1:k}$ is relevant to the computation of $\bm{U}^\star$. We then investigate the grouping power of $\bm{f}_i \in \mathbb{R}^{k}, i =1,2,\cdots, N$, which is denoted as the transpose of the $i$-th row of $\bm{V}_{1:k}$. We define $\mathcal{G}(1) ,\cdots, \mathcal{G}(k)$, and the corresponding nodes in each component as $n_{\mathcal{G}(1)}, \cdots, n_{\mathcal{G}(k)}$, respectively. According to Thm. \ref{thm:graph}, up to some orthogonal transformation, $\boldsymbol{f}_i \in \mathbb{R}^{k \times 1}$ becomes: 
	\begin{equation}
	{f}_{i,j} =\begin{cases}
	\dfrac{1}{\sqrt{n_{\mathcal{G}(j)}}}, &i \in \mathcal{G}(j) \\
	0, &otherwise
	\end{cases}.
	\end{equation}
	In this way, we see that $\boldsymbol{f}_i$ has a strong discriminative power indicating which group the underlying task/feature belongs to. In Sec.\ref{sec:group}, we will revisit this property with a detailed theoretical analysis with more practical considerations.
\end{rem}

\begin{rem} 
Different from the original result in Thm. \ref{thm:eig} that only considers the case when $\alpha_3 = 0$, Thm. \ref{eigsol} allows  $\alpha_3 > 0$ which provides strong convexity to the $\bm{U} $-subproblem.  This  makes global convergence property available  to the algorithm. More interestingly, we could also prove that the final algorithm converges globally to the critical points for both $(\bm{P}^\star)$ itself and the original problem $(\bm{P})$. The readers will soon see this in the next subsection.
\end{rem}
\begin{rem} \label{rem:ident}
Here we define $\bm{V}_{a:b}$ as the eigenvectors associated with $\lam{a}, \cdots, \lam{b}$. As shown in Fig.\ref{fig:eigillu}, our algorithm can work even when the eigengap $\lam{k+1} -\lam{k}$ vanishes. Note that, whenever $\lam{k}  = \lam{k+1}$, the solution of Thm. \ref{thm:eig} is not well-defined. In this case, $ \lam{k} $ must have a multiplicity greater than 1. Without loss of generality,  we assume that  $\LGbi$ has $s$ distinct eigenvalues $ [\lamc{1}, \cdots \lamc{s}] $  in its first $k$ smallest eigenvalues $[\lam{1}, \cdots \lam{k}]$, where $1 \le s < k$.  In fact,  $\bm{V}_{1:k}$ could not span the whole subspace $\oplus_{i=1}^s\mathbb{EIG}_{\LGbi}(\breve{\lambda}_s)$ in this case (it only contains $k$ out of $q$ bases of this subspace).  In this sense, the solution is not identifiable since  $\bm{V}_{1:k}\bm{V}_{1:k}^\top$ is not unique toward changes (either through  rotations or through different ways to select $k$ bases out of the $q$ bases) of the eigenvectors. This means that  we can observe completely different results when the subset of eigenvectors  is chosen differently. As for a practical example, we construct a bipartite graph with an affinity matrix:  
\[
A = \begin{bmatrix}
\bm{0}& \bm{W}\\
\bm{W}^\top& \bm{0}\\
\end{bmatrix}, ~ ~ \text{and} ~ ~
\bm{W} = \begin{bmatrix}
1 & 1 & & & &\\
1& 1 &  & & &\\
 &  &2 &2 & &\\
&  &2 & 2& &\\
&  && & 3&3 \\
\end{bmatrix}.
\]
Obviously, the multiplicity of zero eigenvalue for the corresponding Graph Laplacian matrix is 3. We denote the eigenvectors as $\bm{v}_1,\bm{v}_2, \bm{v}_3$. Let $k =2$, next, we now show that the outer product $\bm{V}\bm{V}^\top$ is not unique. To do this, picking $\bm{V}_{1:2} = [\bm{v}_1,\bm{v}_2]$ and  $\bm{V}_{2:3} = [\bm{v}_2,\bm{v}_3]$, we calculate the corresponding outer products $\bm{V}_{1:2}\bm{V}_{1:2}^\top$ and $\bm{V}_{2:3}\bm{V}_{2:3}^\top$ and visualize them in Fig.\ref{fig:subspace}. We see that the matrices are completely different, making the preceding subproblem ill-defined since it leads to completely different solutions.\\ 
It is interesting to remark here that this issue will not take place if we employ Thm.\ref{eigsol} instead. In this case, we have $\bm{U}^\star = \bm{V}_{1:p-1}\bm{V}_{1:p-1}^\top + \frac{k-p}{q-p}\bm{V}_{p:q}\bm{V}_{p:q}^\top$. Moreover, by the definition of q and p, we know that $\bm{V}_{p:q}$ spans $\mathbb{E}_1= \mathbb{EIG}_{\LGbi}(\lamc{s})$ and obviously $\bm{V}_{1:p-1}$ spans $\mathbb{E}_2 = \bigoplus_{i=1}^{s-1} \mathbb{EIG}_{\LGbi}(\lamc{i})$. This implies that $\bm{V}_{1:p-1}\bm{V}_{1:p-1}^\top$ forms the orthogonal projector onto $\mathbb{E}_1$ and $\bm{V}_{p:q}\bm{V}_{p:q}^\top$ forms the orthogonal projector onto $\mathbb{E}_2$. According to the basic properties of orthogonal projectors, we know $\bm{U}^\star$ is well-defined and invariant. To sum up, the advantage of Thm.\ref{eigsol} against traditional variational formulations of  $\sum_{i=1}^k \lam{i}$ is shown in Tab.\ref{tab:eigform}. Note that all three formulations therein yield the same optimal value. However, they have different optimal solutions with different degrees of stability.
\end{rem}

\begin{figure}[h]
     \centering
     \subfigure[$\bm{V}_{1:2}\bm{V}_{1:2}^\top$]{
      \includegraphics[width=0.45\columnwidth]{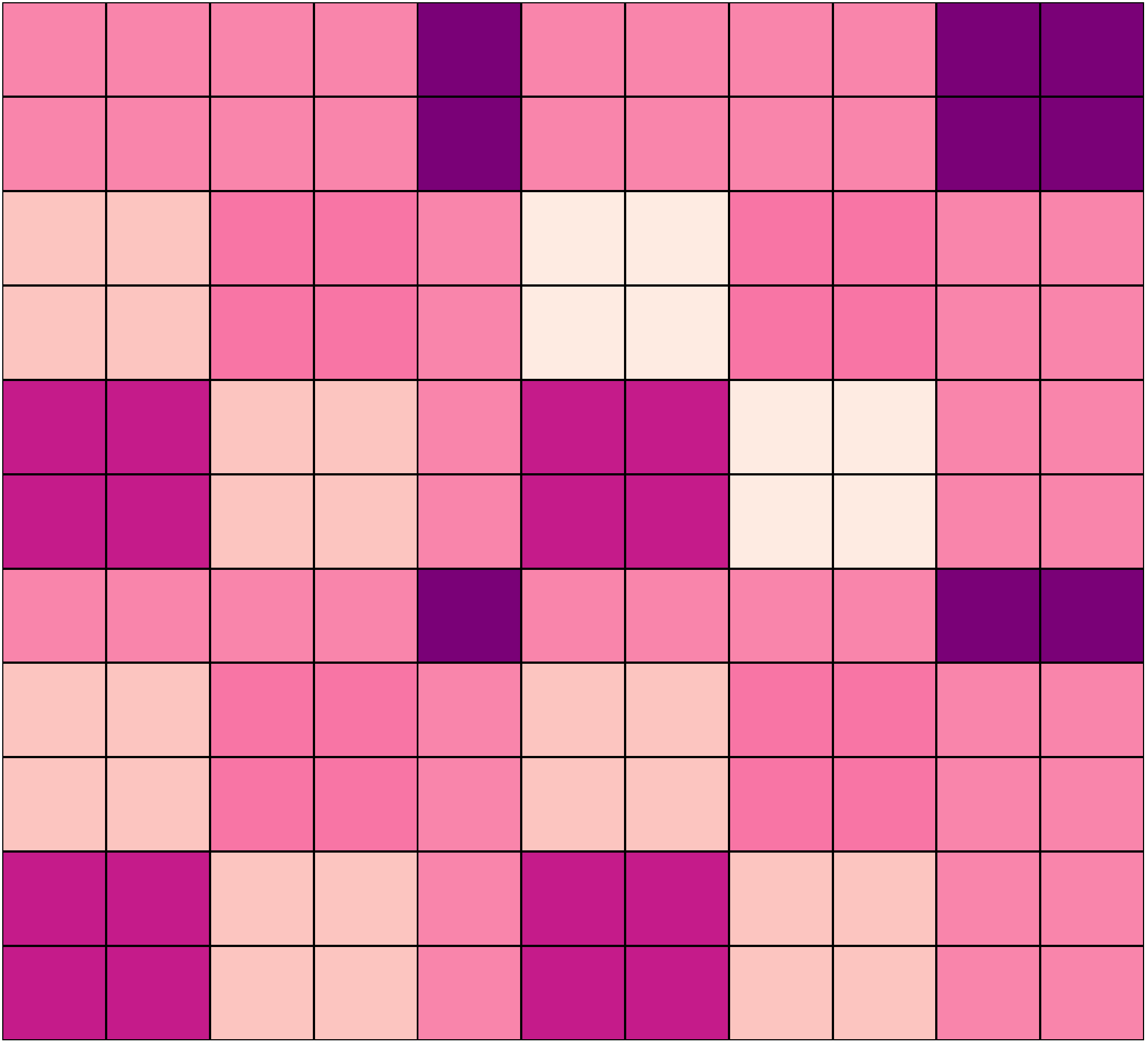}  
     }
     \subfigure[$\bm{V}_{2:3}\bm{V}_{2:3}^\top$]{
      \includegraphics[width=0.45\columnwidth]{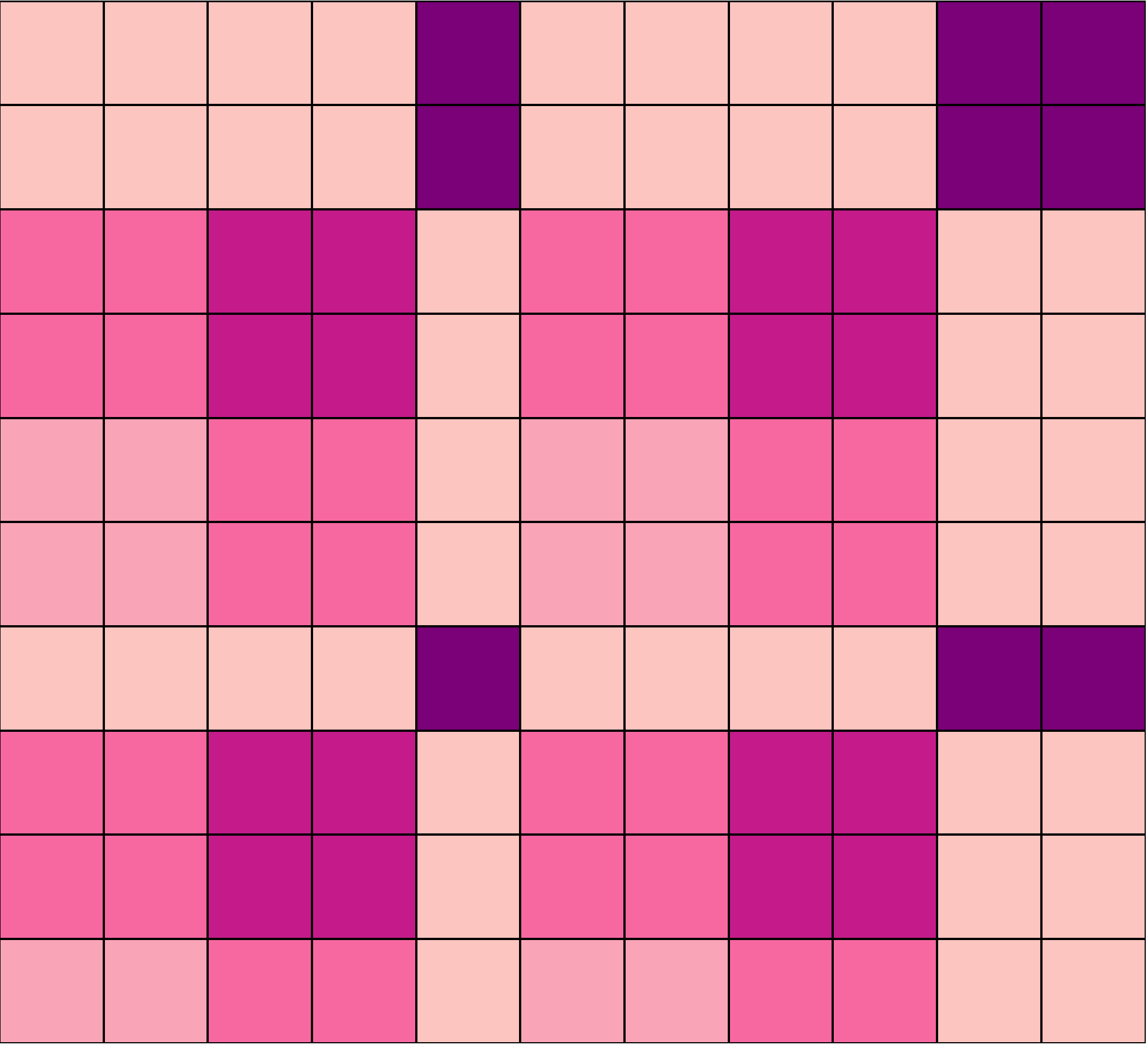}  
     } 
  \caption{Visualizations of the eigenvector outer-products, which shows that $\bm{V}\bm{V}^\top$ is not identifiable when we need to pick 2 out of 3 bases from the eigenspace of zero.}
  \label{fig:subspace}
  % \end{mdframed}
\end{figure}

{

% Table generated by Excel2LaTeX from sheet 'Sheet1'
\begin{table}[htbp]
  \centering
  \caption{\label{tab:eigform} Different formulations of $\sum_{i=1}^k \lam{i}$, where Ident. represents the identifiability of $\bm{U} = \bm{V}\bm{V}^\top$ when the eigengap $\lam{k+1} - \lam{k}$ vanishes. }
  % {\color{blue}
  \scriptsize
    \begin{tabular}{lccc}
    \toprule
   &  Convex & 
    $
    \begin{array}{l}
    \text{Strongly} \\
    \text{Convex}
    \end{array}$

    &  
Ident. \\

\midrule
$\begin{array}{l}
      \min_{\bm{V}} \ \  tr(\bm{V} \LGbi \bm{V}^\top)\\
     s.t.  \  \ \bm{V}\bm{V}^\top = \bm{I}_k
    \end{array}    $

    & \ding{53}  & \ding{53} & \ding{53} \\
\midrule

$\begin{array}{l}
 \min_{\bm{U}}   \ \  \inner  \\
 s.t.  \ \  \bm{U} \in \Gamma 
\end{array}$

 & \ding{52} & \ding{53} & \ding{53} \\
\midrule
$
\begin{array}{l}
 \min_{\bm{U}}  \  \ \inner + \lambda \cdot \norm{\bm{U}}_F^2\\ 
  s.t.  \  \ \bm{U} \in \Gamma \\
 0 \le \lambda \le \breve{\delta}(\LGbi) \\
 (Ours)
\end{array} $  &  

 \ding{52} & \ding{52} & \ding{52} \\
\bottomrule
    \end{tabular}%
 % }
\end{table}%

}
\noindent Now we proceed to solve the next subproblem.\\
\noindent \textbf{Update $\boldsymbol{W}$, fix $\boldsymbol{U}$}: The following proposition shows that when  $\boldsymbol{U}$ is fixed, one could cast the $\boldsymbol{W}$ subproblem as a specific elastic net proximal mapping problem:

\begin{prop}\label{prop:sol}
	The optimal solution of $\boldsymbol{W}$ subproblem of $(\boldsymbol{Prox})$ is: 
	\begin{equation}\label{eq:wsol}
\bm{W}^\star = sgn(\bm{\widetilde{W}})\left(\left|\dfrac{\bm{\widetilde{W}}}{1+\frac{\alpha_2}{C}}\right| - \frac{\alpha_1}{C+\alpha_2} \bm{D}\right)_+,
	\end{equation} where
	${D}_{ij} = \norm{\boldsymbol{f}_i - \boldsymbol{f}_{d+j} }^2 $.
\end{prop}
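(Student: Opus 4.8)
The plan is to fix $\boldsymbol{U}$, discard from $(\boldsymbol{Prox})$ the terms that do not involve $\boldsymbol{W}$, and reduce the remaining problem to a family of independent scalar elastic-net proximal problems whose minimizers are given by a soft-thresholding operator. The whole argument rests on first converting the only $\boldsymbol{W}$-coupled nonsmooth term $\left<\LGbi,\boldsymbol{U}\right>$ into an explicit weighted $\ell_1$ penalty on the entries of $\boldsymbol{W}$.

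First I would exploit that $\LGbi = diag(\Abi\boldsymbol{1}) - \Abi$ is linear in $\Abi$, and $\Abi$ is linear in $|\boldsymbol{W}|$, so $\left<\LGbi,\boldsymbol{U}\right>$ is linear in the entries $|W_{ij}|$. Concretely, using the Laplacian quadratic-form identity $\left<\LGbi,\boldsymbol{U}\right> = \tfrac{1}{2}\sum_{s,t}\Abi_{st}(U_{ss}+U_{tt}-2U_{st})$ together with the bipartite block form of $\Abi$ (whose only nonzero entries are $\Abi_{i,d+j} = \Abi_{d+j,i} = |W_{ij}|$), I obtain
\begin{equation*}
\left<\LGbi,\boldsymbol{U}\right> = \sum_{i=1}^{d}\sum_{j=1}^{T} |W_{ij}|\left(U_{ii} + U_{d+j,d+j} - 2U_{i,d+j}\right).
\end{equation*}
Since $\boldsymbol{U}\succeq\boldsymbol{0}$ for every $\boldsymbol{U}\in\Gamma$, I can factor $\boldsymbol{U} = \boldsymbol{F}\boldsymbol{F}^\top$ with $\boldsymbol{f}_s^\top$ the $s$-th row of $\boldsymbol{F}$; then the bracketed coefficient is exactly $\norm{\boldsymbol{f}_i - \boldsymbol{f}_{d+j}}^2 = D_{ij}\ge 0$, recovering the definition of $\boldsymbol{D}$ in the statement.

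With this substitution the $\boldsymbol{W}$-subproblem becomes
\begin{equation*}
\min_{\boldsymbol{W}} \ \tfrac{1}{2}\norm{\boldsymbol{W} - \widetilde{\boldsymbol{W}}^t}_F^2 + \tfrac{\alpha_1}{C}\sum_{i,j}|W_{ij}|D_{ij} + \tfrac{\alpha_2}{2C}\norm{\boldsymbol{W}}_F^2,
\end{equation*}
which decouples completely across entries $(i,j)$ because each of the three terms is a sum of per-entry contributions. For a single entry I would minimize the convex scalar function $g(w) = \tfrac{1}{2}(w - \widetilde{W}^t_{ij})^2 + \tfrac{\alpha_1 D_{ij}}{C}|w| + \tfrac{\alpha_2}{2C}w^2$; collecting the quadratic part as $\tfrac{1}{2}(1+\tfrac{\alpha_2}{C})w^2$ and setting the subgradient to zero in the three sign regimes ($w>0$, $w<0$, $w=0$) yields
\begin{equation*}
w^\star = sgn(\widetilde{W}^t_{ij})\left(\left|\frac{\widetilde{W}^t_{ij}}{1+\frac{\alpha_2}{C}}\right| - \frac{\alpha_1}{C+\alpha_2}D_{ij}\right)_+,
\end{equation*}
where the algebraic identity $\tfrac{\alpha_1 D_{ij}/C}{1+\alpha_2/C} = \tfrac{\alpha_1}{C+\alpha_2}D_{ij}$ produces the stated threshold. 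Assembling the entrywise minimizers into matrix form gives exactly \refeq{eq:wsol}.

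The scalar soft-thresholding computation is routine. The step I expect to require the most care is the first reduction: correctly rewriting $\left<\LGbi,\boldsymbol{U}\right>$ as the weighted penalty $\sum_{ij}|W_{ij}|D_{ij}$ and verifying that the weights $D_{ij}$ coincide with the squared embedding distances $\norm{\boldsymbol{f}_i-\boldsymbol{f}_{d+j}}^2$. This hinges jointly on the Laplacian quadratic-form identity and on the bipartite block structure of $\Abi$, and it is precisely what ties the closed-form update back to the embeddings $\boldsymbol{f}$ that drive the grouping analysis in the sequel.
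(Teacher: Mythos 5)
Your proposal is correct and follows essentially the same route as the paper's proof: both convert $\left<\LGbi,\boldsymbol{U}\right>$ into the weighted $\ell_1$ penalty $\sum_{i,j}|W_{ij}|\,(U_{ii}+U_{d+j,d+j}-2U_{i,d+j})=\sum_{i,j}|W_{ij}|D_{ij}$ (the paper via the adjoint identity $\left<diag(\Abi\boldsymbol{1})-\Abi,\boldsymbol{U}\right>=\left<diag(\boldsymbol{U})\boldsymbol{1}^\top-\boldsymbol{U},\Abi\right>$ and block extraction, you via the equivalent Laplacian quadratic-form identity) and then solve the resulting separable, strongly convex elastic-net proximal problem by entrywise soft-thresholding. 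Your explicit scalar subgradient computation just spells out the step the paper delegates to the standard $\ell_1$ proximal mapping.
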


\begin{proof}
  With the fact that 
  \begin{equation}
  \begin{split}
  &\inner \\
  &= \left<diag(\begin{bmatrix} 0 &|\boldsymbol{W}| \\ |\boldsymbol{W}|^\top &0
  \end{bmatrix}\boldsymbol{1})-\begin{bmatrix} 0 &|\boldsymbol{W}| \\
  |\boldsymbol{W}|^\top &0
  \end{bmatrix},\boldsymbol{U}\right> \\
  & = \left<diag(\boldsymbol{U})\boldsymbol{1}^\top -\boldsymbol{U},\begin{bmatrix} 0 &|\boldsymbol{W}| \\
  |\boldsymbol{W}|^\top &0
  \end{bmatrix} \right>,
  \end{split}
  \end{equation}
 we could reformulate the problem as:
  \begin{equation*}
  \min_{\boldsymbol{W}}  \left\{\begin{aligned}
& \ \frac{1}{2}||\boldsymbol{W} -\widetilde{\boldsymbol{W}}||_F^2+   \frac{\alpha_1}{C} \cdot \left<\Delta^{(1)} + \Delta^{{(2)}^\top}, |\boldsymbol{W}|\right> \\ &+\frac{\alpha_2}{2C}\cdot \norm{\bm{W}}_F^2\\
  \end{aligned}\right\},
  \end{equation*}
  where
\begin{align}
&\Delta= diag(\boldsymbol{U})\boldsymbol{1}^\top-\boldsymbol{U},\\
&\Delta^{(1)}= \Delta(1:d,(d+1):end),\\
&\Delta^{(2)}= \Delta((d+1):end,1:d).
\end{align} 
\noindent Furthermore, we have
  \[ \Delta^{(1)}_{ij} + \Delta^{(2)}_{ji} =U_{ii} + U_{d+j,d+j} -U_{i, d+j}- U_{d+j, i} = \norm{\boldsymbol{f}_i - \boldsymbol{f}_{d+j}}_2^2. \] 
  
\noindent Since the objective function is $\left(1+ \frac{\alpha_2}{C}\right)-$strongly convex, it is easy to see that the optimal solution is unique. Then the proof follows the proximal mapping of the $\ell_1$ norm \cite{fista}.
\end{proof}
With the embedding vectors fixed, the algorithm turns to learn $\boldsymbol{W}$ with a sparsity-inducing strategy, where ${W}_{ij}$ is activated if the magnitude of $\widetilde{W}_{ij}$ dominates the embedding distance between feature $i$ and task $j$. Moreover, the following remark reveals how transfer takes place across features and tasks.
% \begin{mdframed}[hidealllines=true,backgroundcolor=red!5,innerleftmargin=3pt,innerrightmargin=3pt,leftmargin=-3pt,rightmargin=-3pt]
\begin{rem}
	(\ref{eq:usub}) enjoys an alternative formulation in the following:
	\begin{equation}\label{eq:alter}
	\begin{split}
	\argmin_{\boldsymbol{W}} \left< \bm{D}, |\boldsymbol{W}|\right>
	\ s.t. \boldsymbol{W} \in \mathcal{B}_{c(\alpha)}(\boldsymbol{W},\widetilde{\boldsymbol{W
	}}^t),
	\end{split}
	\end{equation}
	where $\mathcal{B}_{c(\alpha)} = \left\{\boldsymbol{W}: \norm{\boldsymbol{W} - \widetilde{\boldsymbol{W}}^t}_F^2 \le c(\alpha_1), \norm{\bm{W}}_F^2 \le c(\alpha_2
	)  \right\}$. It is noteworthy that (\ref{eq:alter}) shares a striking resemblance with the discrete optimal transport problem seeking the smallest cost transporting information across two sets: features and tasks. Borrowing insights from the optimal transport problem \cite{optimal}, the transfer costs between feature $i$ and task $j$ are measured via the $\ell_2$ distance between their embeddings ${D}_{ij}$. Since tasks/features belonging to the same group tend to share very similar embeddings, the intra-group transfer is encouraged via a small transportation cost ${D}_{ij}$. On the contrary, negative transfer across different groups is penalized with a much larger transportation cost ${D}_{ij}$. Different from existing MTL studies, this shows that our method also models negative transfer issue from the perspective of task-feature transfer.  
\end{rem}
% \end{mdframed}
\subsection{Theoretical Analysis}
\subsubsection{Convergence Analysis}\label{sec:conv}
With the subroutines clarified, we now turn to discuss the optimization method in a global view. To reach a critical point of $(\boldsymbol{Prox})$, we have to alternatively optimize $\boldsymbol{U}$ and $\boldsymbol{W}$ until convergence before changing the reference point $\widetilde{\bm{W}}^{t}$. This induces a bi-level looping: \textit{an outer loop} is responsible for changing the reference point and \textit{an inner loop}  is responsible for solving $\boldsymbol{W}^k$ and $\boldsymbol{U}^k$ given the reference point, which significantly increases the computational burden. However, we find practically that one round of the  inner loop is sufficient to leverage convergence property. We summarize this method in Alg.~\ref{alg:opt}. Now we can prove the global convergence property for Alg.~\ref{alg:opt}. And more importantly, this property holds for both the surrogate loss and the original loss.

\begin{thm}[\textbf{Global Convergence Property for Alg.~\ref{alg:opt} with respect to } $(\bm{P}^\star)$]\label{thm:glob_star}
	\ Let $\{\bm{W}^t, \bm{U}^t\}$ be the sequence generated by Alg. \ref{alg:opt}. Furthermore, assume that $\mathcal{J}(\cdot)$ is a definable function with $\mathcal{J}(\bm{W})$ lower bounded away from $\bm{0}$, and  with $\rho$-Lipschitz continuous gradient. Then pick $C > \rho$, and $0 < \alpha_3 < 2C\min_t \breve{\delta}(\LGbi^t)$, for all finite and feasible  initialization, the following facts hold:
	\begin{enumerate}[itemindent=0pt, leftmargin =15pt,label={(\arabic*)}]

		\item The parameter sequence $\{\boldsymbol{W}^t, \boldsymbol{U}^t\}_t$ converges to a critical point $(\bm{W}^*, \bm{U}^*)$ of the problem $(\bm{P}^\star)$.
		\item The loss sequence converges to the loss of critical point $(\bm{W}^*,\bm{U}^*)$ of the problem $(\bm{P}^\star)$.
		\item For all $t \in \mathbb{N}$, there exists a subgradient  $\bm{g}_t$, such that when $T \rightarrow +\infty$, $\dfrac{1}{T} ({\sum\limits_{t=1}^T\norm{\bm{g}_t}^2}) \rightarrow 0$ with rate $\mathcal{O}(\frac{1}{T})$.
	\end{enumerate}
\end{thm}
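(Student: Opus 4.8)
The plan is to cast Alg.~\ref{alg:opt} as an instance of the abstract descent framework for Kurdyka--\L{}ojasiewicz (KL) functions, so that global convergence reduces to verifying three structural conditions on the joint objective of $(\bm{P}^\star)$, which I denote $\Phi(\bm{W},\bm{U}) = \mathcal{J}(\bm{W}) + \alpha_1\langle\LGbi,\bm{U}\rangle + \frac{\alpha_2}{2}\norm{\bm{W}}_F^2 + \frac{\alpha_3}{2}\norm{\bm{U}}_F^2 + \iota_\Gamma(\bm{U})$. These conditions are: (H1) a sufficient-decrease inequality $\Phi(\bm{W}^{t+1},\bm{U}^{t+1}) + a\norm{(\bm{W}^{t+1},\bm{U}^{t+1}) - (\bm{W}^t,\bm{U}^t)}^2 \le \Phi(\bm{W}^t,\bm{U}^t)$; (H2) a relative-error bound producing $\bm{g}_t \in \partial\Phi(\bm{W}^{t+1},\bm{U}^{t+1})$ with $\norm{\bm{g}_t} \le b\norm{(\bm{W}^{t+1},\bm{U}^{t+1}) - (\bm{W}^t,\bm{U}^t)}$; and (H3) continuity of $\Phi$ along the sequence at its limit points. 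Because $\mathcal{J}$ is assumed definable while the Laplacian term, the Frobenius penalties, and the indicator $\iota_\Gamma$ of the spectrahedron $\Gamma$ are all semi-algebraic hence definable, $\Phi$ is definable and therefore satisfies the KL inequality at every point; feeding (H1)--(H3) into the abstract theorem then yields claims (1) and (2).

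First I would establish (H1). The $\bm{U}$-update solves \eqref{eq:usub} exactly via Thm.~\ref{eigsol} --- this is precisely where the hypothesis $0 < \alpha_3 < 2C\min_t\breve{\delta}(\LGbi^t)$ is used, since it certifies that the closed-form spectral solution is valid at every iterate --- and because that subproblem is $\frac{\alpha_3}{C}$-strongly convex, the $\bm{U}$-sweep decreases $\Phi$ by at least $\frac{\alpha_3}{2C}\norm{\bm{U}^{t+1}-\bm{U}^t}^2$ (here $\LGbi$ is held at $\bm{W}^t$). The subsequent $\bm{W}$-update is a proximal step on the linearized smooth part with step $1/C$; the choice $C > \rho$ together with $\rho$-Lipschitz continuity of $\nabla_{\bm{W}}\mathcal{J}$ gives the standard descent lemma, yielding a further decrease of at least $\frac{C-\rho}{2}\norm{\bm{W}^{t+1}-\bm{W}^t}^2$. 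Chaining the two block decreases gives (H1) with $a = \min\{\frac{C-\rho}{2}, \frac{\alpha_3}{2C}\}$.

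Next I would establish (H2) and (H3). The optimality of the $\bm{W}$-proximal step --- which linearizes only $\mathcal{J}$ at $\bm{W}^t$ while keeping the Laplacian term exact at $\bm{U}^{t+1}$ --- shows that the $\bm{W}$-block residual of $\partial\Phi(\bm{W}^{t+1},\bm{U}^{t+1})$ reduces (up to the $1/C$ scaling) to the gradient mismatch $\nabla_{\bm{W}}\mathcal{J}(\bm{W}^{t+1}) - \nabla_{\bm{W}}\mathcal{J}(\bm{W}^t)$, bounded by $\rho\norm{\bm{W}^{t+1}-\bm{W}^t}$. The exact $\bm{U}$-update gives $\bm{0}\in\partial_{\bm{U}}\Phi(\bm{W}^t,\bm{U}^{t+1})$, so the $\bm{U}$-block residual at the new point is exactly the change $\alpha_1\big(\LGbi(\bm{W}^{t+1}) - \LGbi(\bm{W}^t)\big)$, which is Lipschitz in $\bm{W}$ since $\LGbi$ is linear in $|\bm{W}|$ and $|\cdot|$ is $1$-Lipschitz. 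Assembling both blocks yields (H2). Condition (H3) follows from boundedness of the iterates: the $\frac{\alpha_2}{2}\norm{\bm{W}}_F^2$ term, compactness of $\Gamma$, and $\mathcal{J}$ bounded below make the sublevel sets of $\Phi$ compact for every finite, feasible initialization, and lower semicontinuity of $\Phi$ then secures continuity along convergent subsequences. Invoking the KL-based abstract theorem completes claims (1) and (2).

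Finally, claim (3) follows by telescoping. Summing (H1) over $t=1,\dots,T$ gives $\sum_t\norm{(\bm{W}^{t+1},\bm{U}^{t+1})-(\bm{W}^t,\bm{U}^t)}^2 \le \frac{1}{a}\big(\Phi(\bm{W}^1,\bm{U}^1) - \inf\Phi\big) < \infty$, and combining with (H2) gives $\sum_{t=1}^T\norm{\bm{g}_t}^2 \le b^2\sum_t\norm{(\bm{W}^{t+1},\bm{U}^{t+1})-(\bm{W}^t,\bm{U}^t)}^2$, a constant independent of $T$; dividing by $T$ delivers the $\mathcal{O}(1/T)$ ergodic rate. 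I expect the main obstacle to be the sufficient-decrease step (H1): since the algorithm performs only a \emph{single} pass of the inner alternation rather than solving $(\bm{Prox})$ to convergence, one must verify that the combined $\bm{U}$-then-$\bm{W}$ sweep still decreases the \emph{full} surrogate $\Phi$ monotonically, carefully tracking that the nonsmooth, $\bm{W}$-dependent Laplacian term stays consistent across the two block updates and that the strong convexity supplied by $\alpha_3$ compensates for using the closed-form spectral solution of Thm.~\ref{eigsol} in place of an exact minimizer.
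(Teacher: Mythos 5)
Your proposal follows essentially the same route as the paper: verify sufficient decrease, relative error, and continuity for the block $\bm{U}$-then-$\bm{W}$ sweep, establish that the objective is definable (hence KL), and invoke the abstract Attouch--Bolte--Svaiter-type convergence theorem, with claim (3) obtained by telescoping the decrease against the relative-error bound. The only discrepancy is a harmless scaling of the strong-convexity constant (the paper measures the $\bm{U}$-step decrease in the full objective, giving $\tfrac{\alpha_3}{2}\norm{\Delta\bm{U}^t}^2$ rather than $\tfrac{\alpha_3}{2C}$), which does not affect the argument.
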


\begin{thm}(\textbf{Global Convergence Property for Alg.~\ref{alg:opt} with respect to $\bm{P}$}).\label{thm:global}
	\ Under the same condition as Thm. \ref{thm:glob_star}, the  sequence $\{\bm{W}^t, \bm{U}^t\}_t$ generated by Alg.~\ref{alg:opt} also satisfies  (1)-(3)  with respect to the original problem $(\bm{P})$.
\end{thm}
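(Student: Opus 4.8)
The plan is to exploit the fact that $(\bm{P})$ and $(\bm{P}^\star)$ differ \emph{only} in the strongly convex term $\frac{\alpha_3}{2}\norm{\bm{U}}_F^2$, which involves $\bm{U}$ alone, together with the invariance property established in Thm.~\ref{eigsol}. Concretely, since we pick $0 < \alpha_3 < 2C\min_t \breve{\delta}(\LGbi^t)$, the step-size condition $\frac{\alpha_3}{2C} < \breve{\delta}(\LGbi^t)$ of Thm.~\ref{eigsol} holds at every iterate. Hence the $\bm{U}$-update solving the surrogate subproblem~(\ref{eq:usub}) returns exactly the matrix $\bm{U}^\star$ of~(\ref{eq:opt}), which is \emph{also} the minimizer of the original ($\alpha_3 = 0$) subproblem $\min_{\bm{U}\in\Gamma}\langle\LGbi,\bm{U}\rangle$. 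Consequently the sequence $\{\bm{W}^t,\bm{U}^t\}_t$ generated by Alg.~\ref{alg:opt} is literally the same object whether one regards the algorithm as attacking $(\bm{P})$ or $(\bm{P}^\star)$, and Thm.~\ref{thm:glob_star} already pins down its limiting behaviour toward a critical point of $(\bm{P}^\star)$. It therefore remains to transfer conclusions (1)--(3) from $(\bm{P}^\star)$ to $(\bm{P})$.

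For (1), I would write out the limiting subdifferential conditions. Denoting the objective of $(\bm{P})$ (with the constraint folded into $\iota_\Gamma$) by $\mathcal{F}$ and that of $(\bm{P}^\star)$ by $\mathcal{F}^\star = \mathcal{F} + \frac{\alpha_3}{2}\norm{\bm{U}}_F^2$, the $\bm{W}$-blocks of $\partial_{\bm{W}}\mathcal{F}$ and $\partial_{\bm{W}}\mathcal{F}^\star$ coincide identically, so the $\bm{W}$-stationarity of the limit $(\bm{W}^*,\bm{U}^*)$ for $(\bm{P}^\star)$ transfers verbatim to $(\bm{P})$. For the $\bm{U}$-block, I would argue that at the limit the eigengap condition $\frac{\alpha_3}{2C} < \breve{\delta}(\LGbi^*)$ persists (by continuity of the spectrum along the convergent subsequence), so Thm.~\ref{eigsol} applies at $(\bm{W}^*,\bm{U}^*)$ and shows $\bm{U}^*$ globally minimizes the original $\bm{U}$-subproblem over $\Gamma$. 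This is precisely the statement that $\bm{0}$ lies in the $\bm{U}$-subdifferential of $\mathcal{F}$, so $(\bm{W}^*,\bm{U}^*)$ is critical for $(\bm{P})$ as well.

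For (2), since Thm.~\ref{thm:glob_star} already gives $\{\bm{W}^t,\bm{U}^t\}\to(\bm{W}^*,\bm{U}^*)$ and $\mathcal{F}$ is continuous on its effective domain (the iterates stay in $\Gamma$), the loss sequence $\mathcal{F}(\bm{W}^t,\bm{U}^t)$ converges to the critical value $\mathcal{F}(\bm{W}^*,\bm{U}^*)$ of $(\bm{P})$. For (3), I would build the subgradient certificate for $(\bm{P})$ from the one provided by Thm.~\ref{thm:glob_star}: take the $\bm{W}$-component unchanged (the two $\bm{W}$-subdifferentials agree) and, for the $\bm{U}$-component, use the optimality of the $\bm{U}$-update for the \emph{original} subproblem (again Thm.~\ref{eigsol}) to exhibit a $\bm{U}$-subgradient of $\mathcal{F}$ that differs from the surrogate one exactly by $\alpha_3\bm{U}^t$ and can in fact be taken to vanish. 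The resulting $\bm{g}_t\in\partial\mathcal{F}(\bm{W}^t,\bm{U}^t)$ then inherits $\frac{1}{T}\sum_{t=1}^T\norm{\bm{g}_t}^2\to 0$ at rate $\mathcal{O}(1/T)$.

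The main obstacle I anticipate is the rigorous handling of (3) under the Gauss--Seidel structure of Alg.~\ref{alg:opt}: the subgradient at step $t$ couples $\bm{W}^t$ and $\bm{U}^t$ produced in different sub-steps, so rather than merely quoting the averaged-gradient estimate of Thm.~\ref{thm:glob_star} I expect to re-derive the sufficient-decrease and subgradient-bound inequalities directly for $\mathcal{F}$, verifying that the $\alpha_3\bm{U}^t$ discrepancy between the two $\bm{U}$-subdifferentials is annihilated by the dual optimality from Thm.~\ref{eigsol}. A secondary technical point is confirming that the spectral-gap condition survives the limit, which guarantees that Thm.~\ref{eigsol} may be invoked at the critical point and not only along the sequence.
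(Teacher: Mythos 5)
Your proposal is correct and follows essentially the same route as the paper: the paper likewise transfers criticality from $(\bm{P}^\star)$ to $(\bm{P})$ by noting that the $\bm{W}$-subdifferentials of the two objectives coincide, and that by Thm.~\ref{eigsol} the $\bm{U}^\star$ produced under $0<\alpha_3<2C\min_t\breve{\delta}(\LGbi^t)$ is simultaneously the minimizer of the original ($\alpha_3=0$) $\bm{U}$-subproblem, so $\bm{0}\in\partial_{\bm{U}}\widetilde{\mathcal{F}}(\bm{W}^\star,\bm{U}^\star)$; items (2) and (3) are then transferred by the same reasoning as in Thm.~\ref{thm:glob_star} and Lem.~\ref{lem:cond}. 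Your extra caution about the eigengap condition surviving in the limit and about the Gauss--Seidel coupling in (3) is reasonable but does not change the argument, which the paper handles in the same (somewhat terse) way.
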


\begin{rem}
With the help of Thm. \ref{eigsol}, we can reach the global convergence property for the original problem in Thm. \ref{thm:global}. Unfortunately, this will not always hold if we adopt Thm. \ref{thm:eig} directly. One reason is that it is hard to guarantee the identifiability discussed in Rem.\ref{rem:ident} of the $\bm{U}^t$ sequence even if $\bm{W}^t$ converges to a critical point. Another reason is that, without Thm. \ref{eigsol} it is hard to meet the sufficient descent condition which is required in the global convergence property (see our appendix). By contrast, our algorithm, though developed to solve $(\bm{P}^\star)$ ,  could also convergence to a critical point of $(\bm{P})$. This shows that optimizing the surrogate loss does not affect the quality of the solution. Moreover, since the choice of  $\alpha_3$ is irrelevant to the algorithm as long as it lies in $[0, 2C\min_t\breve{\delta}(\LGbi^t) )$, we do not need to tune this hyperparameter explicitly. 
\end{rem}

\begin{algorithm}[t]
  \caption{TFCL for $(\bm{P})$}
  \label{alg:opt}
  \begin{algorithmic}
  \STATE {\bfseries Input:} Dataset $\mathcal{S}$, $\alpha_1$, $\alpha_2$ $k$, $C (C > \varrho)$.
\STATE {\bfseries Output:} Solution $\boldsymbol{W}$, $\boldsymbol{U}$.
\STATE Initialize $\boldsymbol{W}^0$, $\boldsymbol{U}^0 \in \Gamma$, $t=1$.
\REPEAT 
\STATE \textbf{\texttt{U SUBROUTINE}}:\\

\ \ \STATE \ \  {Calculate} $\LGbi$ with $\bm{W}^{t-1}$.\\
{\color{white}dsadsa}\\
\STATE  \ \ $\bm{V}^{t} \gets $ {the eigenvectors of} $\LGbi$.\\

\ \ \STATE \ \  $\bm{U}^{t} \gets \bm{V}^{t}\widetilde{\Lambda}\bm{V}^{t}{^\top},$ {according to Thm. \ref{eigsol}. }\\ 
{\color{white}dsadsa}\\
\STATE \textbf{\texttt{W SUBROUTINE}}:\\

\ \ \STATE \ \ {Calculate} $\bm{R}^t$ with $\bm{U}^{t}$.\\

\ \ \STATE \ \ $\widetilde{\boldsymbol{W}}^t \gets  \boldsymbol{W}^{t-1} - (1/C)\cdot \nabla_{\boldsymbol{W}}\mathcal{J}(\boldsymbol{W}^{t-1})$. \\

\ \ \STATE \ \  $\bm{W}^{t} \gets sgn(\bm{\widetilde{W}}^{t})\left(\left|\dfrac{\bm{\widetilde{W}}^{t}}{1+\frac{\alpha_2}{C}}\right| - \frac{\alpha_1}{C+\alpha_2} \bm{D}^t\right)_+$.\\

\STATE  $t \gets t + 1$ .\\

\UNTIL{Convergence}

\STATE $\boldsymbol{W} = \boldsymbol{W}^{t-1}$, $\boldsymbol{U} = \boldsymbol{U}^{t-1}$.
  \end{algorithmic}
\end{algorithm}

\subsubsection{Task-Feature Grouping Effect} \label{sec:group}
In this subsection we show how the proposed algorithm differentiates  task-feature groups in the model weights. Specifically, we have the following theorem.\\
% {\color{blue}
\indent Under an ideal case, if $\sum_{i=1}^k \lam{i} = 0$, then according to Thm.\ref{thm:graph}, we know that $\Gbi$ must be k-connected and $\bm{W}_{i,j} \neq 0$ if and only if feature $i$ and task $j$ are in the same component of this bipartite group.\\
\indent More practically, we often observe $\sum_{i=1}^k \lam{i} \neq 0$, which makes the arguments above unavailable. Instead of assuming $\sum_{i=1}^k \lam{i} = 0$, for a well-trained model, it is reasonable to assume that the final objective function is small at the very end of the algorithm. Motivated by this, the following theorem shows that we can still recover the grouping structure when this much weaker assumption holds.
% }

\begin{thm}(\textbf{Grouping Effect})\label{thm:group}
Assume that Alg.~\ref{alg:opt} terminates at the $\mathcal{T}$-th iteration with $\mathcal{F}(\bm{W}^{\mathcal{T}-1}, \bm{U}^{\mathcal{T}-1}) \le \epsilon_{\mathcal{T}-1}$. Denote $\mathsf{Supp}(\bm{A}) = \left\{(i,j): A_{i,j} \neq 0 \right\}$, $\mathcal{H}_K = \left\{\bm{W}: \norm{\bm{W}}_F \le K \right\},~ C_0 = \left(\dfrac{2}{\alpha_2} \cdot \epsilon_{\mathcal{T}-1}  \right)^{1/2}.$  We further assume that for all $\infty> \kappa >0$, $\sup_{\norm{\bm{W}}_F \le \kappa}\norm{\nabla_{\boldsymbol{W}}\mathcal{J}(\boldsymbol{W})}_\infty \le \varpi(\kappa) < \infty$ and that there is a matrix $\bm{W}^\star \in \mathcal{H}_{C_0}$, where the corresponding bipartite graph $\mathcal{G}^\star$ has  $k$ connected components with a Graph Laplacian matrix $\LGbi^\mathcal{T}$ giving the ground-truth grouping. Moreover, denote $n_i$ as the number of nodes in the $i$-th group of the graph, and $n^\uparrow_1 = \max_{i} n_i$, $n^\uparrow_2 = \max_{j, n_j \le n^\uparrow_1} n_i$. With the following notations:
\begin{equation*}
\begin{split}
&\kappa_0 =  C_0 + \dfrac{\varpi(C_0)}{C},~ \delta_1 = \frac{C}{\alpha_1} \kappa_0,~ \delta_2 = \frac{C}{\alpha_1} \delta_0,~\beta =  \dfrac{1}{n^\uparrow_1} +  \dfrac{1}{n^\uparrow_2},\\
& \rho = \frac{C_0}{\lambda_{k+1}\left(\LGbi^\mathcal{T}\right)}, ~ \xi=  \rho \cdot(\sqrt{d+T} + \sqrt{2}),  
\end{split}
\end{equation*}
we have:
	\begin{enumerate}
	\item[(a)] \textbf{(no-false-positive-grouping)} If  $\lambda_{k+1}(\LGbi^\mathcal{T}) > \lambda_{k}(\LGbi^\mathcal{T}) > 0$, $\frac{\sqrt{2}}{32} \cdot \beta > \xi $, and $ 8\sqrt{2}\xi < \delta_1 < \beta- 8\sqrt{2} \xi $, we have:

	\begin{equation*}\label{eq:g1}
   \mathsf{Supp}(\bm{W}^\mathcal{T}) \subseteq \big\{(i,j): \mathcal{G}(i) = \mathcal{G}(j) \big\} =  \mathsf{Supp}(\bm{W}^\star) ,
	\end{equation*}
	where  $\mathcal{G}(i)$ is the corresponding connected component in the bipartite graph $\mathcal{G}^\star$ that i belongs to.
    \item[(b)] \textbf{(correct-grouping) }If we further assume that $\min_{(i,j)}|\widetilde{\boldsymbol{W}}^{\mathcal{T}}_{i,j}| \ge \delta_{0}> 0$, \[ 8\sqrt{2}\xi < \min \left\{\delta_1,\delta_2 \right\} \le   \max \left\{\delta_1,\delta_2 \right\}  < \beta - 8\sqrt{2} \xi, \] we get that:

	\begin{equation*}\label{eq:g2}
   \mathsf{Supp}(\bm{W}^\mathcal{T}) =  \mathsf{Supp}(\bm{W}^\star).
	\end{equation*}

	\end{enumerate}
\end{thm}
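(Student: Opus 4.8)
The plan is to reduce the support-recovery claim to the activation rule of the soft-thresholding update in Prop.~\ref{prop:sol}. Writing the threshold in reduced form, the update activates an entry exactly when
\[
\bm{W}^{\mathcal{T}}_{i,j}\neq 0 \iff \bm{D}^{\mathcal{T}}_{i,j} < \tfrac{C}{\alpha_1}\,\big|\widetilde{\bm{W}}^{\mathcal{T}}_{i,j}\big|,
\]
since $(1+\alpha_2/C)\cdot\frac{\alpha_1}{C+\alpha_2}=\frac{\alpha_1}{C}$. Thus the theorem becomes a competition between the embedding distance $\bm{D}^{\mathcal{T}}_{i,j}=\norm{\bm{f}_i-\bm{f}_{d+j}}^2$ and the rescaled reference magnitude $\frac{C}{\alpha_1}|\widetilde{\bm{W}}^{\mathcal{T}}_{i,j}|$. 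I would prove both directions by sandwiching these two quantities: upper bound the reference magnitude by $\delta_1$ (and, under (b), lower bound it by $\delta_2$), while controlling $\bm{D}^{\mathcal{T}}_{i,j}$ through its ideal value $\bm{D}^{\star}_{i,j}$ computed from the $k$-connected graph of $\bm{W}^\star$.

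Next I would collect the elementary a priori bounds. From $\mathcal{F}(\bm{W}^{\mathcal{T}-1},\bm{U}^{\mathcal{T}-1})\le\epsilon_{\mathcal{T}-1}$ and the nonnegativity of the remaining terms, one gets $\frac{\alpha_2}{2}\norm{\bm{W}^{\mathcal{T}-1}}_F^2\le\epsilon_{\mathcal{T}-1}$, i.e. $\norm{\bm{W}^{\mathcal{T}-1}}_F\le C_0$. Feeding this into $\widetilde{\bm{W}}^{\mathcal{T}}=\bm{W}^{\mathcal{T}-1}-\frac{1}{C}\nabla_{\bm{W}}\mathcal{J}(\bm{W}^{\mathcal{T}-1})$ together with the gradient bound $\varpi(\cdot)$ yields $\norm{\widetilde{\bm{W}}^{\mathcal{T}}}_\infty\le\kappa_0$, hence $\frac{C}{\alpha_1}|\widetilde{\bm{W}}^{\mathcal{T}}_{i,j}|\le\delta_1$ for every $(i,j)$; under (b) the assumption $\min_{(i,j)}|\widetilde{\bm{W}}^{\mathcal{T}}_{i,j}|\ge\delta_0$ gives the matching lower bound $\frac{C}{\alpha_1}|\widetilde{\bm{W}}^{\mathcal{T}}_{i,j}|\ge\delta_2$. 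For the ideal distances I would invoke Thm.~\ref{thm:graph}: the bottom-$k$ eigenvectors of a $k$-connected $\LGbi$ are, up to rotation, the normalized component indicators, so $\bm{f}^\star_i=\bm{f}^\star_{d+j}$ when $i,d+j$ lie in the same component, giving $\bm{D}^{\star}_{i,j}=0$, while $\bm{D}^{\star}_{i,j}=\frac{1}{n_{\mathcal{G}(i)}}+\frac{1}{n_{\mathcal{G}(d+j)}}\ge\beta$ when they lie in different components, where $\beta=\frac{1}{n^{\uparrow}_1}+\frac{1}{n^{\uparrow}_2}$ is exactly the minimal reciprocal sum, attained at the two largest groups.

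The crux, and the step I expect to be hardest, is the perturbation estimate $|\bm{D}^{\mathcal{T}}_{i,j}-\bm{D}^{\star}_{i,j}|\le 8\sqrt{2}\,\xi$. The key trick, justified by Rem.~\ref{rem:ident}, is to express $\bm{D}^{\mathcal{T}}_{i,j}=U_{ii}+U_{d+j,d+j}-2U_{i,d+j}$ through the \emph{rotation-invariant} orthogonal projector $\bm{U}=\bm{V}_{1:k}\bm{V}_{1:k}^\top$, which sidesteps the eigenvector rotation ambiguity entirely and lets me compare $\bm{U}^{\mathcal{T}}$ with the ideal projector $\bm{U}^\star$ directly. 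I would then control $\norm{\bm{U}^{\mathcal{T}}-\bm{U}^\star}_F$ by a Davis--Kahan $\sin\Theta$ argument: the Laplacian perturbation $\norm{\LGbi^{\mathcal{T}}-\LGbi^{\star}}$ is Lipschitz in the affinity entries and hence $\mathcal{O}(C_0)$ since $\bm{W}^{\mathcal{T}-1},\bm{W}^\star\in\mathcal{H}_{C_0}$, while the relevant eigengap is $\lambda_{k+1}(\LGbi^{\mathcal{T}})-\lambda_k(\LGbi^{\star})=\lambda_{k+1}(\LGbi^{\mathcal{T}})$ because $\bm{W}^\star$ is $k$-connected. Dividing the former by the latter produces $\rho=C_0/\lambda_{k+1}(\LGbi^{\mathcal{T}})$, and passing from the subspace bound to the four entrywise differences defining $\bm{D}$ introduces the dimensional factor $\sqrt{d+T}+\sqrt{2}$, giving $\xi=\rho(\sqrt{d+T}+\sqrt{2})$ and the claimed $8\sqrt{2}\xi$ envelope. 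The strict positivity of the eigengap assumed in (a) is precisely what keeps this bound finite.

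Finally I would assemble the two cases. For an across-group pair, $\bm{D}^{\mathcal{T}}_{i,j}\ge\bm{D}^{\star}_{i,j}-8\sqrt{2}\xi\ge\beta-8\sqrt{2}\xi>\delta_1\ge\frac{C}{\alpha_1}|\widetilde{\bm{W}}^{\mathcal{T}}_{i,j}|$, so the entry is thresholded to zero; this proves the no-false-positive claim $\mathsf{Supp}(\bm{W}^{\mathcal{T}})\subseteq\{(i,j):\mathcal{G}(i)=\mathcal{G}(j)\}=\mathsf{Supp}(\bm{W}^\star)$ of (a). For (b), the additional within-group estimate $\bm{D}^{\mathcal{T}}_{i,j}\le\bm{D}^{\star}_{i,j}+8\sqrt{2}\xi=8\sqrt{2}\xi<\delta_2\le\frac{C}{\alpha_1}|\widetilde{\bm{W}}^{\mathcal{T}}_{i,j}|$ shows every within-group entry survives the threshold, ruling out false negatives; together with (a) this yields $\mathsf{Supp}(\bm{W}^{\mathcal{T}})=\mathsf{Supp}(\bm{W}^\star)$. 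The side condition $\frac{\sqrt{2}}{32}\beta>\xi$ is exactly $16\sqrt{2}\xi<\beta$, i.e. it guarantees that the admissible window $(8\sqrt{2}\xi,\,\beta-8\sqrt{2}\xi)$ for $\delta_1,\delta_2$ is nonempty, so the hypotheses are consistent.
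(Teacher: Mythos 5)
Your proposal is correct and follows essentially the same route as the paper: the soft-thresholding activation rule (the paper's Lemma~\ref{lem:sets}), the a priori bounds $\norm{\bm{W}^{\mathcal{T}-1}}_F \le C_0$ and $\norm{\widetilde{\bm{W}}^{\mathcal{T}}}_\infty \le \kappa_0$, the ideal-distance computation $\bm{D}^\star_{i,j} \in \{0\} \cup [\beta,\infty)$, the Davis--Kahan $\sin\Theta$ bound on the projector difference yielding the $8\sqrt{2}\xi$ envelope, and the final sandwich argument. The only cosmetic discrepancy is that the factor $\sqrt{d+T}+\sqrt{2}$ actually arises from the Lipschitz bound $\norm{\LGbi^{\mathcal{T}}-\LGbi^{\star}}_F \le (\sqrt{d+T}+\sqrt{2})\norm{\bm{W}^{\mathcal{T}}-\bm{W}^{\star}}_F$ rather than from the passage to entrywise differences (which contributes the factor $4$), but this does not affect the argument.
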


\begin{rem} We have the following remarks for the theorem:
\begin{enumerate}
\item[(a)] The assumption that one can find a $\bm{W}^\star \in \mathcal{H}_{C_0}$ consistent with the GT structure is always achievable, since if  $\bm{W}^\star \notin \mathcal{H}_{C_0}$, one can pick $\bm{W}' = C_0 \cdot \frac{\bm{W}^\star}{||\bm{W}^\star||_F}$ instead that locates within the F-norm ball with the same support set.
\item[(b)] Thm. \ref{thm:group} states that if the $k$-th eigengap of $\LGbi$ exists, the hyperparameters are chosen as $\alpha_2 = o\left(\frac{(d+T)\cdot \epsilon_{\mathcal{T}-1}}{\beta^2 \cdot \lambda_{k+1}(\LGbi^\mathcal{T})^2}\right), \alpha_1 = \mathcal{O}(C\kappa_0)$ and the inequality of $\xi,\delta_1$ is ensured, we can reach the no-false-positive grouping, where $|W^\mathcal{T}_{ij}|$ is activated as nonzero only if feature $i$ and task $j$ belong to the same group in GT. Moreover, when we have extra assumptions on the intermediate variable $|\widetilde{W}^\mathcal{T}_{ij}|$, with $\alpha_1 = \mathcal{O}(C\cdot(\delta_0 \vee \kappa_0)), ~ \alpha_2 = o\left(\frac{(d+T)\cdot \epsilon_{\mathcal{T}-1}}{\beta^2 \cdot \lambda_{k+1}(\LGbi^\mathcal{T})^2}\right)$ , we can hopefully reach a correct grouping, where $|W^\mathcal{T}_{ij}|$ is activated as nonzero if and only if feature $i$ and task $j$ belong to the same group in GT.
\end{enumerate}
\end{rem}

% \end{mdframed}

% {\color{blue}
\subsection{Discussion}\label{sec:disscus}
To end this section, we provide a discussion on the relationship between our base model and the work of \cite{lublock} and \cite{comtl}.
Similar to \cite{comtl}, we adopt the major assumption that features and tasks should be grouped into different clusters. However, our model differs significantly from this work. In \cite{comtl}, co-clustering is realized by the k-means assumption, without an explicit guarantee for leveraging the block-diagonal structure. Inspired by \cite{lublock}, we provide an explicit regularizer from the spectral graph theory for the clustering problem. In our work, we generalize the regularization in \cite{lublock} to a bipartite graph and apply it to the MTL problem. More importantly, we also provide a generalized closed-form solution for the variational form of truncated eigenvalue sum problem as shown in Thm.\ref{eigsol}.  As an important property, it finds a specific global solution of the original problem (which is not strongly convex) as the unique solution for the strongly convex problem \eqref{eq:usub}. This not only makes $\bm{V}_k\bm{V}_k^\top$ identifiable but also leads to the global convergence result which is missing in \cite{lublock} and \cite{comtl}. Last but not least, compared with \cite{lublock}, we also adopt a different optimization method. This could make the optimization procedure simpler. Moreover, it offers us a chance to figure out a close connection with the optimal transport problem (OT), which suggests that the $\bm{W}$ subproblem approximates the OT problem with the distances between spectral embeddings as the transportation cost. Moreover, this also leads to our analysis of the grouping effect with practical considerations as shown in Thm.\ref{thm:group} and Thm.\ref{thm:group_app}, which is also new compared with the existing studies.

\section{Personalized Attribute Prediction}\label{sec:per}
\indent So far, we have developed a novel multi-task learning method based on the desire of task-feature collaborative learning. In this section, we extend this method to a specific application problem which we call personalized attribute prediction. In this problem, we are given a set of personal annotations on visual attributes (e.g., \emph{smile} for human faces, \emph{comfortable} for shoes) for a variety of images, which are collected on the crowdsourcing platforms. Our goal here is to predict the user-specific annotations for unknown images, so that the results cater for personalized demands which often span a wide spectrum. This is a problem that greatly matches the multi-task learning scenario, since each user typically annotates only a limited amount of images. 
\subsection{Extended Model}\label{sec:app_model}
To model the personalized annotation process, we regard each user's annotation prediction as a single task. For a given attribute, we assume that there are $T$ users who take part in the annotation. Further, we assume that the $i$-th user labeled $n_i$ images with $n_{+,i}$ positive labels and $n_{-,i}$ negative labels. In this setting $\boldsymbol{X}^{(i)} \in \mathbb{R}^{n_{i} \times d}$ becomes the input features for images that the $i$-th user labeled, whereas $\boldsymbol{y^{(i)}} \in \{-1,1\}^{n_{i}}$ becomes the corresponding label vector. If $y^{(i)}_k =1 $, then the user thinks that the given attribute presents in the $s$-th image, otherwise we have $y^{(i)}_k =-1 $. Moreover, we denote $\mathcal{S}_{+,i}= \{k \ |  \ y^{(i)}_k = 1\}$  and $\mathcal{S}_{-,i}= \{k \ |  \ y^{(i)}_k = -1\}$. The diversity in personalized annotations allows us to employ different models for different users. In the spirit of this, we adopt a linear learner  $\boldsymbol{g}^{(i)}(\boldsymbol{x})= \boldsymbol{W}^{(i)^\top}\boldsymbol{x}$ for each task (user) $i$. 

A naive way to solve this problem is to learn user-specific models separately. However, adopting completely independent models might lead to disastrous over-fitting due to the limited amount of annotations from each user. To prevent this issue, we apply a coarse-to-fine decomposition for $\boldsymbol{W}$:
\begin{equation}
\boldsymbol{W}^{(i)} = \boldsymbol{\varTheta}_{c} +\boldsymbol{\varTheta}^{(i)}_g + \boldsymbol{\varTheta}^{(i)}_p.
\end{equation}
Here, the coarse-grained component $\boldsymbol{\varTheta}_{c}$ captures the consensus pattern shared across all users. This pattern typically consists of common sense and the superficial semantic information that are easy to be accepted by almost all the users. The finer-grained component $\boldsymbol{\varTheta}_{g}= [\boldsymbol{\varTheta}^{(1)}_{g}, \cdots, \boldsymbol{\varTheta}^{(T)}_{g}]$ captures the grouping pattern where our TFCL method works. Specifically, it interprets the majority of the diversity in the results, where different groups of users (tasks) tend to favor different results based on different priorities of the features. Considering the negative transfer issue, sharing information across dissimilar users and features clearly lead to over-fitting. Our block-diagonal regualarizer then come into play to restrict the structure of  $\boldsymbol{\varTheta}_{g}$ against negative transfer. The finest-grained component $\boldsymbol{\varTheta}_{p}= [\boldsymbol{\varTheta}^{(1)}_{p}, \cdots, \boldsymbol{\varTheta}^{(T)}_{p}]$ captures the 
personalized patterns that are completely user-specific. $\boldsymbol{\varTheta}_{p}$ is not available for all users. Rather, it is only activated for the hard tasks corresponding to the extremely personalized users and malicious users. In this way,  $\boldsymbol{\varTheta}_{p}$ offers us a chance to separate the abnormal tasks from the co-grouping factor, which keeps the model away from negative transfer from hard tasks.  To sum up, we have a concluding remark on how this decomposition scheme increases the flexibility of our base model.
\begin{rem}
In the extended model, there are two extra terms: $\boldsymbol{\varTheta}_{p}$ and $\boldsymbol{\varTheta}_{c}$. As a task-wise sparse parameter,  $\boldsymbol{\varTheta}_{p}$ serves as a detector for hard tasks. This is beneficial to suppress negative transfer. In fact, the negative effect might come from transfer knowledge from hard tasks (with poor performance) to easy tasks (with good performance), and having non-zero $\boldsymbol{\varTheta}_{p}$ helps to remove the hard task from grouping with easy tasks.  Meanwhile, $\boldsymbol{\varTheta}_{c}$ is a common factor that allows different groups to share overlapping features. With these  two extra components, we then reach a comprehensive model with sharing, grouping and the effect of hard task considered. 
\end{rem}
With the decomposition scheme, we then provide an objective function for the proposed model. To realize the functionality of the three components, we provide different regularization based on their characterizations. For the common factor $\boldsymbol{\varTheta}_c$, we simply adopt the most widely-used $\ell_2$ regularization. For $\boldsymbol{\varTheta}_{g}$, we employ our task-feature  collaborative learning framework. To do this, we reformulate $\Gbi$ with the users, features and $\boldsymbol{\varTheta}_{g}$. Moreover, the graph Laplacian is defined as $\bm{\varTheta}_{\mathcal{G}}$  which is obtained by replacing $\bm{W}$ in the original $\LGbi$ with $\g$.  For $\boldsymbol{\varTheta}_{p}$, we adopt the $\ell_{1,2}$-norm to induce column-wise (user-wise) sparsity. 
Finally, the empirical loss for task $i$ is denoted as $\ell_i$. As the standard preference learning paradigm, for a given user, we expect that the positive labeled instances could always have a higher rank than negative ones, so that the instances having the top predicted score always hit the user's comprehension about the attribute. This motivates us to optimize the Area Under roc Curve (AUC) metric in our model. Specifically, we adopt the squared surrogate loss for AUC \cite{onepass}: 
\begin{equation*}
\begin{split}
&\mathcal{J}(\c,\g,\p) = \sum_{i=1}^T \ell_i,\\
&\ell_i = \sum\limits_{x_p \in \mathcal{S}_{+,i}}\sum\limits_{x_q \in \mathcal{S}_{-,i}} \frac{s\Big(\boldsymbol{g}^{(i)}(\boldsymbol{x}_p) -\boldsymbol{g}^{(i)}(\boldsymbol{x}_q) \Big)}{n_{+,i}n_{-,i}}.\\
\end{split}
\end{equation*}
where $s(t) = (1-t)^2$. Note that the reasons for choosing the squared surrogate loss are two-fold. Theoretically, it is proved in the previous literature \cite{consis} that square loss results in a Bayesian optimal classier that is consistent with the true 0-1 AUC loss 
    \[\sum_{i}\sum_{x_p \in \mathcal{X_+}}\sum_{x_q \in \mathcal{X}_-} \frac{1}{n_+n_-} \cdot I\left[\bm{g}^{(i)}(\bm{x}_p) > \bm{g}^{(i)}(\bm{x}_q)\right],\] in an asymptotic sense. Practically, as discussed in the next subsection, we can easily accelerate the calculation of AUC loss. With all the above-mentioned settings, our objective function could be written in the form:
\begin{equation*}\label{genform}\small
\begin{split}
(\boldsymbol{Q}) \min_{\boldsymbol{\varTheta}, \boldsymbol{U} \in \Gamma} & ~ \left\{\begin{split}
&\mathcal{J}(\c,\g,\p)+\frac{\alpha_1}{2}\norm{\c}_2^2+  \alpha_2 \left<\bm{\varTheta}_{\mathcal{G}}, \boldsymbol{U}\right> \\ & +\frac{\alpha_3}{2} \norm{\g}_{F}^2 + \alpha_4 \norm{\p}_{1,2} +  \iota_{\Gamma}(U)
\end{split}\right\} 
\end{split}.
\end{equation*}
% \begin{mdframed}[hidealllines=true,backgroundcolor=red!5,innerleftmargin=3pt,innerrightmargin=3pt,leftmargin=-3pt,rightmargin=-3pt]
% {\color{blue}

% }
% \end{mdframed}

% \begin{mdframed}[hidealllines=true,backgroundcolor=red!5,innerleftmargin=3pt,innerrightmargin=3pt,leftmargin=-3pt,rightmargin=-3pt]
% {\color{blue}

\subsection{Extended Optimization}
% }
% \end{mdframed}
% {\color{blue}
In this subsection, we will provide a fast extended algorithm to optimize $(\bm{Q})$. We first define the AUC comparison graph. Then we provide acceleration methods to speed-up loss and gradient evaluation. Finally, we provide an extended optimization method to solve problem $(\bm{Q})$.

\noindent \textbf{AUC comparison graph}. To begin with, we provide an AUC comparison graph to represent the sparse comparisons to calculate AUC. For each user $i$, the graph is defined as $\mathcal{G}_{AUC}^{(i)} = (\mathcal{V}^{(i)}, \mathcal{E}^{(i)},\mathcal{W}^{(i)})$. Here $\mathcal{V}^{(i)}$ denotes the set of vertices consist of all the samples that user $i$ labeled. Similarly, $\mathcal{E}^{(i)}$ represents the edge set $\{(j,k): y^{(i)}_j \neq  y^{(i)}_k  \}$. Moreover, for all edges $(j,k) \in \mathcal{E}^{(i)}$, we have a weight matrix $\mathcal{W}^{(i)} $  such that $\mathcal{W}^{(i)}_{j,k} = \frac{1}{n_{+,i}n_{-,i}}$.  Given $\mathcal{W}^{(i)}$,  the Laplacian matrix  $\mathcal{L}_{AUC}^{(i)}$ of $\mathcal{G}_{AUC}^{(i)}$ 
could be expressed as: $\mathcal{L}_{AUC}^{(i)} = diag(\mathcal{W}^{(i)}\boldsymbol{1}) - \mathcal{W}^{(i)}.$\\
\textbf{Loss Evaluation}. With the definition of  $\mathcal{L}_{AUC}^{(i)}$, we could reformulate the empirical loss $\mathcal{J}$ as:
\begin{equation*}
\begin{split}
\ell_i& = \sum\limits_{x_p \in \mathcal{S}_{+,i}}\sum\limits_{x_q \in \mathcal{S}_{-,i}} \frac{s\Big(\boldsymbol{g}^{(i)}(\boldsymbol{x}_p) -\boldsymbol{g}^{(i)}(\boldsymbol{x}_q) \Big)}{n_{+,i}n_{-,i}}\\
&=\frac{1}{2} (\boldsymbol{\widetilde{y}}^{(i)}-\hat{\boldsymbol{y}}^{(i)})^\top \mathcal{L}_{AUC}^{(i)}(\boldsymbol{\widetilde{y}}^{(i)}-\hat{\boldsymbol{y}}^{(i)})
\end{split}
\end{equation*}
\begin{equation*}
\mathcal{J}(\c,\g,\p)  =  \sum_{i=1}^{n_u} \ell_i,
\end{equation*}
where $\boldsymbol{\widetilde{y}}^{(i)} = \frac{1+ \boldsymbol{y}^{(i)}}{2}$, $\boldsymbol{\hat{y}}^{(i)} = \boldsymbol{X}^{(i)}(\boldsymbol{\varTheta}_c + \boldsymbol{\varTheta}^{(i)}_g + \boldsymbol{\varTheta}^{(t)}_p )$.\\
\noindent \textbf{Gradient Computation}: According to the Quadratic formulation of the AUC loss, we can calculate the gradients $\nabla_{\boldsymbol{\varTheta}_c}\mathcal{J}$,
 $\nabla_{\boldsymbol{\varTheta}_g}\mathcal{J}$,  $\nabla_{\boldsymbol{\varTheta}_p}\mathcal{J}$, as follows:
\[\begin{array}{ll}
\nabla_{\boldsymbol{\varTheta}_c}\mathcal{J} &=  \sum_{i} \boldsymbol{X}^{(i)\top}\mathcal{L}_{AUC}^{(i)}\left(\boldsymbol{X}^{(i)}\boldsymbol{W}^{(i)}  - \bm{Y}^{(i)} \right),\\ 
\nabla_{\boldsymbol{\varTheta}^{(i)}_g}\mathcal{J} &= \boldsymbol{X}^{(i)\top}\mathcal{L}_{AUC}^{(i)}\left(\boldsymbol{X}^{(i)}\boldsymbol{W}^{(i)}  - \bm{Y}^{(i)} \right),\\ 
\nabla_{\boldsymbol{\varTheta}^{(i)}_p}\mathcal{J} &= \boldsymbol{X}^{(i)\top}\mathcal{L}_{AUC}^{(i)}\left(\boldsymbol{X}^{(i)}\boldsymbol{W}^{(i)}  - \bm{Y}^{(i)} \right). 
\end{array}\]
% }

% {\color{blue}
\noindent \textbf{Efficient Computation}. According to the definition of $\mathcal{G}_{AUC}^{(i)} $ and $\mathcal{E}^{(i)}$, the affinity matrix of $\mathcal{G}_{AUC}^{(i)} $ could be written as: 
 \begin{equation*}
 \mathcal{W}^{(i)} = \frac{1}{n_+n_-}[\boldsymbol{\tilde{y}^{(i)}}(\boldsymbol{1}-\boldsymbol{\tilde{y}^{(i)}})^\top + (1-\boldsymbol{\tilde{y}^{(i)}})(\boldsymbol{\tilde{y}^{(i)}})^\top ]. 
 \end{equation*}
 Correspondingly, $\mathcal{L}_{AUC}^{(i)}$ could be simplified as:
 \begin{equation}\label{key}
 \mathcal{L}_{AUC}^{(i)} = diag\left(\frac{\boldsymbol{\tilde{y}^{(i)}}}{n_{+,i}}+ \frac{\boldsymbol{1} - \boldsymbol{\tilde{y}^{(i)}}}{n_{-,i}}\right) -\mathcal{W}^{(i)}
\end{equation}
Denote $\bm{R}^{(i)} = \bm{X}^{(i)}\bm{W}^{(i)} - \bm{Y}^{(i)}$, now we are ready to speed-up the loss evaluation $\sum_i \bm{R}^{(i)^\top}  \mathcal{L}_{AUC}^{(i)} \bm{R}^{(i)}$.
We have:
\begin{equation}\label{eq:loss_eve}
\begin{split}
\bm{R}^{(i)^\top}  \mathcal{L}_{AUC}^{(i)}\bm{R}^{(i)}   = &\bm{R}^{(i)^\top}  \Bigg(diag\left(\frac{\boldsymbol{\tilde{y}^{(i)}}}{n_{+,i}}+ \frac{\boldsymbol{1} -
	 \boldsymbol{\tilde{y}^{(i)}}}{n_{-,i}}\right)\Bigg) \bm{R}^{(i)}  \\
	  &- {\bm{R} }^{(i)}_+{{\bm{R} }^{(i)}_-} - {{\bm{R} }^{(i)}_-}{{\bm{R} }^{(i)}_+},
\end{split}
\end{equation}
where 
\begin{equation*}
\begin{split}
{{\bm{R}}_+} =\frac{1}{n_{+,i}} \bm{R}^{(i)^\top} \boldsymbol{\tilde{y}}^{(i)},\  \ {{\bm{R}}_-} =\frac{1}{n_{-,i}} \bm{R}^{(i)^\top}(\boldsymbol{1} -\boldsymbol{\tilde{y}}^{(i)}).\\
\end{split}
\end{equation*}
Similarly, we have the following simplification for the gradients:
\begin{equation}\label{eq:grad_eve}
\begin{split}
\bm{X}^{(i)^\top}\mathcal{L}_{AUC}^{(i)}\bm{R}^{(i)} = &\bm{X}^{(i)^\top} \Bigg(diag\left(\frac{\boldsymbol{\tilde{y}^{(i)}}}{n_{+,i}}+ \frac{\boldsymbol{1} -
	 \boldsymbol{\tilde{y}^{(i)}}}{n_{-,i}}\right)\Bigg)\\  &- {\bm{X}_+^{(i)}}\bm{R}_-^{(i)} - {\bm{X}_-^{(i)}}\bm{R}_+^{(i)}.
\end{split}
\end{equation}
where
\begin{equation*}
\begin{split}
{\bm{X}_+^{(i)}} =\frac{1}{n_{+,i}} \bm{X}^{(i)^\top} \boldsymbol{\tilde{y}}^{(i)},\  \ {\bm{X}_-^{(i)}} =\frac{1}{n_{-,i}} \bm{X}^{(i)^\top} (\boldsymbol{1} -\boldsymbol{\tilde{y}}^{(i)}).\\
\end{split}
\end{equation*}
From \refeq{eq:loss_eve} and \refeq{eq:grad_eve}, we know that the complexity for computing loss and gradient could be reduced from at most $\mathcal{O}(\sum_i n_i^2)$ and $\mathcal{O}(\sum_i n_i^2\cdot d)$ respectively to $\mathcal{O}(\sum_i n_i \cdot d)$and $\mathcal{O}(n_i\cdot d)$ respectively. Applying this rule, we can compute the loss function and its gradients with a linear time w.r.t. the sample size.\\ 
% }
Next, we extend the optimization method proposed in the last section to solve the problem.

\noindent \textbf{Optimization}. It could be proved that the empirical loss $\mathcal{J} = \sum_{i=1}^T\ell_i$ has Lipschitz continuous gradients with respect to $\boldsymbol{\Theta}= [\boldsymbol{\Theta}_c, vec(\boldsymbol{\Theta}_g), vec(\boldsymbol{\Theta}_p)]$ with bounded input $\boldsymbol{X}$. We denote the Lipschitz constant as $\varrho_{\varTheta}$ (see our appendix).
Picking $C > \varrho_{\varTheta} $ at each iteration of $t$, we could solve the parameters with the following subproblems:
\begin{equation}
(\boldsymbol{Prox_g})\ \argmin_{\boldsymbol{\Theta}_g,\boldsymbol{U} \in \Gamma}  
\left\{
\begin{split}
 &\dfrac{1}{2} \left\norm{\boldsymbol{\varTheta}_g-\widetilde{\boldsymbol{\varTheta}}_g^t \right}_F^2 + \frac{\alpha_1}{C} \left<\boldsymbol{\varTheta}_{\mathcal{G}},\boldsymbol{U}\right> \\
&+ \frac{\alpha_2}{2C} \norm{\g}_F^2\\
\end{split}
\right\},
\label{p2}
\end{equation}

\begin{equation}
\begin{split}
(\boldsymbol{Prox_c})& \ \argmin_{\boldsymbol{\Theta}_c}  \dfrac{1}{2} \left\norm{\boldsymbol{\varTheta}_c -\widetilde{\boldsymbol{\varTheta}_c^t} \right}_2^2 + \frac{\alpha_3}{2C} \norm{\boldsymbol{\varTheta}_c}_2^2 \label{p1},
\end{split}
\end{equation}	
	
\begin{equation}
\begin{split}
  (\boldsymbol{Prox_p})& \ \argmin_{\boldsymbol{\Theta}_p}
\dfrac{1}{2} \left\norm{\boldsymbol{\varTheta}_p -\widetilde{\boldsymbol{\varTheta}_p^t} \right}_F^2 + \frac{\alpha_4}{C} \norm{\boldsymbol{\varTheta}_p}_{1,2} \label{p3},
\end{split}
\end{equation}	
where
\begin{equation}\label{eq:tg}
\boldsymbol{\widetilde{\varTheta}}_g^t= \boldsymbol{\varTheta}_g^{t-1} - \dfrac{1}{C}\nabla_{\boldsymbol{\varTheta}_g}\mathcal{J}(\boldsymbol{\varTheta}^{t-1}),
\end{equation}

\begin{equation}\label{eq:tc}
\boldsymbol{\widetilde{\varTheta}}_c^t= \boldsymbol{\varTheta}_c^{t-1} - \dfrac{1}{C}\nabla_{\boldsymbol{\varTheta}_c}\mathcal{J}(\boldsymbol{\varTheta}^{t-1}),
\end{equation}

\begin{equation}\label{eq:tp}
\boldsymbol{\widetilde{\varTheta}}_g^t= \boldsymbol{\varTheta}_p^{t-1} - \dfrac{1}{C}\nabla_{\boldsymbol{\varTheta}_p}\mathcal{J}(\boldsymbol{\varTheta}^{t-1}).
\end{equation}

{Similar to Alg.\ref{alg:opt}, we adopt Alg.\ref{alg:opt_app} to optimize the parameters.\\

\indent In the end of this section, we show that Alg. \ref{alg:opt_app}    inherits the theoretical merits of Alg. \ref{alg:opt}.  

\begin{thm} \label{thm:conv_app} Denote by  \[
	 \widetilde{\mathcal{F}} =\left\{\begin{split}
&\mathcal{J}({\c},\g,\p)+  \alpha_1 \left<\bm{\varTheta}_{\mathcal{G}}, \boldsymbol{U}\right>  + \frac{\alpha_2}{2} \norm{\g}_{F}^2\\ & +\frac{\alpha_3}{2}\norm{{\c}}_2^2+   \alpha_4 \norm{\p}_{1,2} + \iota_{\Gamma}(\bm{U})
	\end{split}\right\} \] the loss function and denote by $(\ct,\gt,\pt,\boldsymbol{U}^t)$ the parameter obtained at iteration $t$. If  the data is bounded in the sense that:
	$\forall i, ~\norm{\boldsymbol{X}^{(i)}}_2 =\vartheta_{X_i} < \infty, ~n_{+,i} \ge 1, ~n_{-,i} \ge 1$, then pick  $C > \varrho_{\Theta}$, where $\varrho_{\varTheta}  
	=3T\sqrt{(2T+1)}\max_{i} \left\{\dfrac{n_i\vartheta^2_{X_i}}{n_{+,i}n_{-,i}}\right\}$, the following properties hold for Alg. \ref{alg:opt_app}:
	\begin{enumerate}[itemindent=0pt, leftmargin =15pt,label={(\arabic*)}]
	
	\item The parameter sequence $(\ct,\gt,\pt, \boldsymbol{U}^t)$ converges to a critical point $(\c^\star,\g^\star,\p^\star,\boldsymbol{U}^\star)$ of the problem $\bm{Q}$.
	\item The loss sequence $\{\widetilde{\mathcal{F}}_t\}_t$ converges to a critical point $\widetilde{\mathcal{F}}^\star$ of the problem $\bm{Q}$.
	\item For all $t \in \mathbb{N}$, there exists a subgradient  $\bm{g}_t$, such that when $T \rightarrow +\infty$, $\dfrac{1}{T} ({\sum\limits_{t=1}^T\norm{\bm{g}_t}^2}) \rightarrow 0$ with rate $\mathcal{O}(\frac{1}{T})$.
	\end{enumerate}
\end{thm}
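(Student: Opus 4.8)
The plan is to reduce Thm.~\ref{thm:conv_app} to the general convergence machinery already developed for the base model in Thm.~\ref{thm:glob_star} and Thm.~\ref{thm:global}. The key observation is that $(\bm{Q})$ shares the same structural skeleton as $(\bm{P}^\star)$: a smooth coupling term $\mathcal{J}$ with (as I will show) a globally Lipschitz gradient, a bilinear block-diagonal regularizer $\alpha_1\langle\bm{\varTheta}_{\mathcal{G}},\bm{U}\rangle$ coupling $\g$ and $\bm{U}$, a strongly convex $\ell_2$ term $\tfrac{\alpha_2}{2}\|\g\|_F^2$ that plays exactly the role of the $\bm{U}$-regularizer in the base model, plus two additional \emph{separable} blocks -- an $\ell_2$-regularized consensus block $\c$ and an $\ell_{1,2}$-regularized sparse block $\p$ -- together with the indicator $\iota_\Gamma(\bm{U})$. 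Because the extra blocks enter only through their own proximable regularizers and through the smooth loss, the whole problem still falls within the proximal-alternating/PGD template of the base model, so once its hypotheses are verified for $(\bm{Q})$, conclusions (1)--(3) follow from precisely the same Kurdyka--\L ojasiewicz (KL) descent argument.

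First I would establish the Lipschitz continuity of $\nabla_{\bm{\varTheta}}\mathcal{J}$ and derive the stated constant $\varrho_{\varTheta}$. Using the Laplacian reformulation $\ell_i=\tfrac12(\widetilde{\bm{y}}^{(i)}-\hat{\bm{y}}^{(i)})^\top\mathcal{L}_{AUC}^{(i)}(\widetilde{\bm{y}}^{(i)}-\hat{\bm{y}}^{(i)})$ with $\hat{\bm{y}}^{(i)}$ linear in $\bm{\varTheta}$, each $\ell_i$ is a convex quadratic, so its gradient is affine and hence globally Lipschitz; the only work is to bound the operator norm of the joint Hessian over the stacked variable $\bm{\varTheta}=[\c, vec(\g), vec(\p)]$. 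The bound combines $\|\bm{X}^{(i)}\|_2=\vartheta_{X_i}$ with a spectral estimate of $\mathcal{L}_{AUC}^{(i)}$ in terms of $n_i/(n_{+,i}n_{-,i})$, which equals the diagonal magnitude $\tfrac{1}{n_{+,i}}+\tfrac{1}{n_{-,i}}$ read off from \eqref{key}. The prefactor $3T\sqrt{2T+1}$ reflects the three-block decomposition $\bm{W}^{(i)}=\c+\g^{(i)}+\p^{(i)}$ (the $3$), the $2T+1$ total parameter blocks $\c,\g^{(1:T)},\p^{(1:T)}$ and the all-task coupling induced by the shared consensus block $\c$ (the $\sqrt{2T+1}$ and the summation over $T$); a norm-splitting estimate on the resulting non-block-diagonal coupled Hessian yields the claimed constant.

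Next I would verify that each proximal subproblem is well-posed and that the three conditions driving the KL convergence theorem hold. The $(\bm{Prox_c})$ step is a strongly convex ridge problem with the obvious closed-form shrinkage; $(\bm{Prox_p})$ is the proximal operator of the $\ell_{1,2}$ norm, solved in closed form by column-wise group soft-thresholding; and $(\bm{Prox_g})$ together with the $\bm{U}$-update is structurally identical to the base model, so the closed-form and \emph{identifiable} $\bm{U}^\star$ of Thm.~\ref{eigsol} and the elastic-net mapping of Prop.~\ref{prop:sol} apply verbatim, provided the strong-convexity parameter $\alpha_2$ lies in the admissible range $[0,\,2C\min_t\breve{\delta}(\bm{\varTheta}_{\mathcal{G}}^t))$ that keeps $\bm{U}^\star$ unique along the iterates. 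With $C>\varrho_{\varTheta}$, the descent lemma for the Lipschitz smooth part plus the strong convexity of every proximal term delivers the \emph{sufficient-decrease} inequality $\widetilde{\mathcal{F}}_t-\widetilde{\mathcal{F}}_{t+1}\ge a\|\bm{z}^{t+1}-\bm{z}^{t}\|^2$ (with $\bm{z}^t=(\ct,\gt,\pt,\bm{U}^t)$); the first-order optimality conditions of the three subproblems, combined with the Lipschitz bound, give a \emph{subgradient bound} $\mathrm{dist}(\bm{0},\partial\widetilde{\mathcal{F}}_{t+1})\le b\|\bm{z}^{t+1}-\bm{z}^{t}\|$; and since $\mathcal{J}$ is polynomial while $\|\cdot\|_2$, $\|\cdot\|_{1,2}$ and the semi-algebraic set $\Gamma$ are all definable, $\widetilde{\mathcal{F}}$ is a definable (hence KL) function. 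Assembling these exactly as in the proof of Thm.~\ref{thm:glob_star} yields single-point convergence of $(\ct,\gt,\pt,\bm{U}^t)$ to a critical point of $(\bm{Q})$, convergence of the loss sequence, and the $\mathcal{O}(1/T)$ averaged-subgradient rate, establishing (1)--(3).

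The hardest part will be twofold. The first difficulty is the Lipschitz-constant bookkeeping: because the consensus block $\c$ couples all $T$ tasks while $\g$ and $\p$ couple task-wise, the joint Hessian is not block-diagonal, and controlling its cross-block operator norm to recover the precise $3T\sqrt{2T+1}$ prefactor requires a careful norm-splitting argument rather than a routine bound. The second, and more conceptual, difficulty is ensuring that the \emph{identifiability} and well-posedness of the $\bm{U}$-subproblem -- the crucial ingredient (via Thm.~\ref{eigsol}) that let the base-model proof obtain global rather than merely subsequential convergence -- survive intact now that the $\bm{U}$-coupled variable is $\g$ rather than the full $\bm{W}$; this amounts to checking that the eigengap condition $\alpha_2<2C\min_t\breve{\delta}(\bm{\varTheta}_{\mathcal{G}}^t)$ can be maintained along the trajectory, so that the generated $\bm{U}^t$ sequence stays stable and the sufficient-descent inequality is never violated.
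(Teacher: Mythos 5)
Your overall route is the same one the paper takes: its proof of this theorem is literally a pointer to a lemma that verifies, for the extended objective, exactly the checklist you describe (sufficient decrease, square-summability, continuity at limit points, definability/KL, and a relative-error subgradient bound), together with the Lipschitz constant $\varrho_{\varTheta}$ obtained by the same norm-splitting over the $2T+1$ blocks $\c,\g^{(1:T)},\p^{(1:T)}$ and the spectral bound $\norm{\mathcal{L}_{AUC}^{(i)}}_2 = n_i/(n_{+,i}n_{-,i})$; the conclusion is then imported from the proofs of Thm.~\ref{thm:glob_star} and Thm.~\ref{thm:global}. Your identification of the definability ingredients and of the separable, proximable structure of the $\c$ and $\p$ blocks is also correct.

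There is, however, one concrete slip that would break the argument as written. You attribute the strong convexity and identifiability of the $\bm{U}$-subproblem to the term $\frac{\alpha_2}{2}\norm{\g}_F^2$, calling it ``the $\bm{U}$-regularizer'' and attaching the eigengap condition $\alpha_2 < 2C\min_t\breve{\delta}(\bm{\varTheta}^t_{\mathcal{G}})$ to it. That term is strongly convex in $\g$ only; it is the analogue of $\frac{\alpha_2}{2}\norm{\bm{W}}_F^2$ in $(\bm{P})$, not of $\frac{\alpha_3}{2}\norm{\bm{U}}_F^2$ in $(\bm{P}^\star)$. With $\g$ fixed, the $\bm{U}$-subproblem of $(\bm{Q})$ is linear over $\Gamma$, so it is neither strongly convex nor identifiable, the sufficient-decrease inequality acquires no $\norm{\Delta\bm{U}^t}^2$ term, and the KL descent argument does not close --- precisely the failure mode the paper warns about for the base model. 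The fix, which the paper carries out, is to introduce a surrogate problem $(\bm{Q}^\star)$ with an \emph{additional} term $\frac{\alpha_5}{2}\norm{\bm{U}}_F^2$ whose coefficient is a free parameter not appearing in $(\bm{Q})$, require $0<\alpha_5<2C\min_t\breve{\delta}(\bm{\varTheta}^t_{\mathcal{G}})$ so that Thm.~\ref{eigsol} gives the unique, identifiable closed-form $\bm{U}^t$, prove (1)--(3) for $(\bm{Q}^\star)$, and then transfer criticality back to $(\bm{Q})$ exactly as Thm.~\ref{thm:global} does for $(\bm{P})$. You clearly understand that this identifiability step is the crux (your closing paragraph says as much), so the repair is local: make the surrogate construction explicit and attach the eigengap condition to the $\norm{\bm{U}}_F^2$ coefficient rather than to the $\norm{\g}_F^2$ one.
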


\begin{thm}\label{thm:group_app}
Assume that Alg.~\ref{alg:opt_app} terminates at the $\mathcal{T}$-th iteration with $\tilde{\mathcal{F}}_{\mathcal{T}-1} \le \epsilon^\mathcal{A}_{\mathcal{T}-1}$, where $\tilde{\mathcal{F}}_{\mathcal{T}-1}$ is the objective function at the $\mathcal{T}-1$ iteration. Furthermore, assume that there is a matrix $\bm{\varTheta_g}^\star \in \mathcal{H}_{C^\mathcal{A}_0}$, where the corresponding bipartite graph $\mathcal{G}^\star$ has k connected components with a Graph Laplacian matrix $\bm{\varTheta}_{\mathcal{G}}^\mathcal{T}$ giving the ground-truth grouping. Moreover, denote $n_i$ as the number of nodes in the $i$-th group of the graph, and $n^\uparrow_1 = \max_{i} n_i$, $n^\uparrow_2 = \max_{j, n_j \le n^\uparrow_1} n_i$. With the following notations:
\begin{equation*}
 C^\mathcal{A}_0 = \left(2 \cdot \dfrac{ \epsilon^\mathcal{A}_{\mathcal{T}-1}}{\alpha_2}\right)^{1/2}, \kappa^\mathcal{A}_0 =  C^\mathcal{A}_0 +  \frac{\varkappa(\xi_c,\xi_g,\xi_p)}{C}, \delta^\mathcal{A}_1 = \frac{C}{\alpha_1} \kappa^\mathcal{A}_0
\end{equation*}

\begin{equation*}
\delta^\mathcal{A}_2 = \frac{C}{\alpha_1} \delta^\mathcal{A}_0 , ~~\xi^\mathcal{A}= (\sqrt{d+T} + \sqrt{2}) \cdot \frac{C^\mathcal{A}_0}{\lambda_{k+1}(\bm{\varTheta}^\mathcal{T}_\mathcal{G})}
\end{equation*}

\begin{equation*}
\begin{split}
&\xi_c =\left(2 \cdot \frac{ \epsilon^\mathcal{A}_{\mathcal{T}-1} }{\alpha_3}\right)^{1/2}, \  \xi_g = C^\mathcal{A}_0, \ \xi_p = {\frac{ \epsilon^\mathcal{A}_{\mathcal{T}-1} }{\alpha_4}}, ~\beta^\mathcal{A}=  \dfrac{1}{n^\uparrow_1} +  \dfrac{1}{n^\uparrow_2}, \\
& \varkappa(\xi_c,\xi_g,\xi_p)
    = \dfrac{n_i\vartheta_{X_i}}{\sqrt{n_{+,i}}n_{-.i}}  \sum_{i=1}^T \left((\xi_c+\xi_g+\xi_p)\dfrac{\vartheta_{X_i}}{\sqrt{n_{+,i}}}+ 1 \right),
\end{split}
\end{equation*}

\noindent the following facts hold for the grouping effect of $\bm{\varTheta}_g$ in Alg.\ref{alg:opt_app}~:
	\begin{enumerate}
	\item[(a)] \textbf{(no-false-positive-grouping)} If $\lambda_{k+1}(\bm{\varTheta}_{\mathcal{G}}^\mathcal{T}) > \lambda_{k}(\bm{\varTheta}_{\mathcal{G}}^\mathcal{T}) \ge 0$, $\frac{\sqrt{2}}{32}\cdot \beta^\mathcal{A} > \xi^\mathcal{A} $, and $ 8\sqrt{2}\xi^\mathcal{A} < \delta^\mathcal{A}_1 < \beta^\mathcal{A}- 8\sqrt{2} \xi^\mathcal{A} $, we have:

	\begin{equation*}\label{eq:g1}
   \mathsf{Supp}(\bm{\varTheta}_g^\mathcal{T}) \subseteq \big\{(i,j): \mathcal{G}(i) = \mathcal{G}(j) \big\} =  \mathsf{Supp}(\bm{\varTheta}_g^\star) ,
	\end{equation*}
	where  $\mathcal{G}(i)$ is the corresponding connected component in the bipartite graph $\mathcal{G}^\star$ that i belongs to.
    \item[(b)] \textbf{(correct-grouping) }If we further assume that $\min_{(i,j)}|\widetilde{\boldsymbol{\varTheta}}^{\mathcal{T}}_{i,j}| \ge \delta^\mathcal{A}_{0}> 0$, and $ 8\sqrt{2}\xi^\mathcal{A} < \min \left\{\delta^\mathcal{A}_1,\delta^\mathcal{A}_2 \right\} \le \max \left\{\delta^\mathcal{A}_1,\delta^\mathcal{A}_2 \right\}   < \beta^\mathcal{A}- 8\sqrt{2} \xi^\mathcal{A} $,  we get that :
    
      \begin{equation*}\label{eq:g2}
       \mathsf{Supp}(\bm{\varTheta}_g^\mathcal{T}) =  \mathsf{Supp}(\bm{\varTheta}_g^\star).
      \end{equation*}
    	\end{enumerate}
\end{thm}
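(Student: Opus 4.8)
The plan is to reduce the claim to the base-model grouping result, Thm.~\ref{thm:group}, by exploiting the fact that the $\bm{\varTheta}_g$-subproblem $(\boldsymbol{Prox_g})$ is structurally identical to the $\bm{W}$-subproblem $(\boldsymbol{Prox})$ of the base model, with $\bm{\varTheta}_{\mathcal{G}}^{\mathcal{T}}$ playing the role of $\LGbi^{\mathcal{T}}$ and $\widetilde{\boldsymbol{\varTheta}}_g^{\mathcal{T}}$ that of $\widetilde{\bm{W}}^{\mathcal{T}}$. Consequently Prop.~\ref{prop:sol} applies verbatim: $\bm{\varTheta}_g^{\mathcal{T}}$ is obtained from $\widetilde{\boldsymbol{\varTheta}}_g^{\mathcal{T}}$ by the same elementwise soft-thresholding, in which the entry $(i,j)$ is activated iff $|\widetilde{\boldsymbol{\varTheta}}^{\mathcal{T}}_{g,ij}| > \frac{\alpha_1}{C} D_{ij}$, where $D_{ij} = \norm{\bm{f}_i - \bm{f}_{d+j}}^2$ is built from the bottom-$k$ spectral embedding of $\bm{\varTheta}_{\mathcal{G}}^{\mathcal{T}}$ (Thm.~\ref{eigsol}). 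Thus the entire argument of Thm.~\ref{thm:group} carries over once its two driving quantities are re-derived for the decomposed model: an entrywise bound $\kappa^\mathcal{A}_0$ on the proximal reference point and a spectral-perturbation bound $\xi^\mathcal{A}$ controlling how close the embedding is to the ideal indicator vectors of Thm.~\ref{thm:graph}.

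First I would extract the per-component norm bounds from the termination hypothesis $\tilde{\mathcal{F}}_{\mathcal{T}-1} \le \epsilon^\mathcal{A}_{\mathcal{T}-1}$. Since every regularizer in $\widetilde{\mathcal{F}}$ is nonnegative, dominating $\widetilde{\mathcal{F}}$ by $\epsilon^\mathcal{A}_{\mathcal{T}-1}$ and isolating each penalty gives $\frac{\alpha_3}{2}\norm{\bm{\varTheta}_c}_2^2 \le \epsilon^\mathcal{A}_{\mathcal{T}-1}$, $\frac{\alpha_2}{2}\norm{\bm{\varTheta}_g}_F^2 \le \epsilon^\mathcal{A}_{\mathcal{T}-1}$ and $\alpha_4\norm{\bm{\varTheta}_p}_{1,2} \le \epsilon^\mathcal{A}_{\mathcal{T}-1}$, which are exactly the definitions of $\xi_c$, $\xi_g = C^\mathcal{A}_0$ and $\xi_p$. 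In particular $\norm{\bm{\varTheta}_g^{\mathcal{T}-1}}_F \le C^\mathcal{A}_0$, the analog of the bound $\norm{\bm{W}^{\mathcal{T}-1}}_F \le C_0$ used in Thm.~\ref{thm:group}.

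Next I would bound the gradient $\nabla_{\bm{\varTheta}_g}\mathcal{J}$, the step that departs most from the base model because the AUC loss couples all three components through $\bm{W}^{(i)} = \bm{\varTheta}_c + \bm{\varTheta}_g^{(i)} + \bm{\varTheta}_p^{(i)}$. Substituting $\norm{\bm{X}^{(i)}}_2 \le \vartheta_{X_i}$ and the simplified Laplacian \eqref{key} into $\nabla_{\bm{\varTheta}^{(i)}_g}\mathcal{J} = \bm{X}^{(i)\top}\mathcal{L}_{AUC}^{(i)}(\bm{X}^{(i)}\bm{W}^{(i)} - \bm{Y}^{(i)})$ and applying the triangle inequality to $\bm{W}^{(i)}$, the factor $\norm{\bm{X}^{(i)}\bm{W}^{(i)}}$ splits into three contributions bounded by $\xi_c,\xi_g,\xi_p$ and a label residual; collecting the per-task operator-norm factors $\frac{n_i\vartheta_{X_i}}{\sqrt{n_{+,i}}n_{-,i}}$ yields precisely the stated $\varkappa(\xi_c,\xi_g,\xi_p)$, with the $(\xi_c+\xi_g+\xi_p)$ term reflecting the three blocks and the $+1$ the normalized labels. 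Combining with the gradient step $\widetilde{\boldsymbol{\varTheta}}_g^{\mathcal{T}} = \bm{\varTheta}_g^{\mathcal{T}-1} - \frac{1}{C}\nabla_{\bm{\varTheta}_g}\mathcal{J}$ and using $|a_{ij}| \le \norm{\bm{A}}_F$ for the first summand together with the entrywise gradient bound for the second gives $\max_{ij}|\widetilde{\boldsymbol{\varTheta}}^{\mathcal{T}}_{g,ij}| \le C^\mathcal{A}_0 + \varkappa/C = \kappa^\mathcal{A}_0$, so that $\kappa^\mathcal{A}_0$ and $\xi^\mathcal{A}$ take over the roles of $\kappa_0$ and $\xi$.

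With these two ingredients the remainder mirrors Thm.~\ref{thm:group}. By Thm.~\ref{thm:graph} and the same eigenvector-perturbation argument, the closeness of $\bm{\varTheta}_{\mathcal{G}}^{\mathcal{T}}$ to a $k$-connected Laplacian (quantified by $\xi^\mathcal{A}$, through $C^\mathcal{A}_0/\lambda_{k+1}(\bm{\varTheta}^\mathcal{T}_\mathcal{G})$ and the node count $d+T$) separates the embedding distance $D_{ij}$: same-group pairs cluster near $0$ and different-group pairs near $\beta^\mathcal{A}$, up to a perturbation of order $8\sqrt{2}\xi^\mathcal{A}$. The thresholding rule then activates $(\bm{\varTheta}_g^{\mathcal{T}})_{ij}$ only when $|\widetilde{\boldsymbol{\varTheta}}^{\mathcal{T}}_{g,ij}| > \frac{\alpha_1}{C}D_{ij}$: the condition $8\sqrt{2}\xi^\mathcal{A} < \delta^\mathcal{A}_1 < \beta^\mathcal{A} - 8\sqrt{2}\xi^\mathcal{A}$ places $\delta^\mathcal{A}_1 = \frac{C}{\alpha_1}\kappa^\mathcal{A}_0$ strictly inside the gap, killing every different-group entry and proving (a); the additional hypothesis $\min_{(i,j)}|\widetilde{\boldsymbol{\varTheta}}^{\mathcal{T}}_{i,j}| \ge \delta^\mathcal{A}_0$ together with the two-sided bound on $\{\delta^\mathcal{A}_1,\delta^\mathcal{A}_2\}$ keeps every same-group entry alive, proving (b). I expect the main obstacle to be the derivation of $\varkappa(\xi_c,\xi_g,\xi_p)$: one must route the coupled AUC gradient through exactly the right operator norms so that the three-component split produces the stated constant rather than a looser bound, since any slack there would violate the tight threshold inequalities $8\sqrt{2}\xi^\mathcal{A} < \delta^\mathcal{A}_1 < \beta^\mathcal{A} - 8\sqrt{2}\xi^\mathcal{A}$.
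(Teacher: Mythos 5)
Your proposal is correct and follows essentially the same route as the paper: the paper's own proof is a one-line reduction that invokes the gradient bound $\varkappa(\xi_c,\xi_g,\xi_p)$ from Lemma~\ref{lem:conti}-(2) together with the per-component norm bounds $\norm{\c}_F \le \sqrt{2\epsilon^\mathcal{A}_{\mathcal{T}-1}/\alpha_3}$, $\norm{\g}_F \le C^\mathcal{A}_0$, $\norm{\p}_F \le \norm{\p}_{1,2} \le \epsilon^\mathcal{A}_{\mathcal{T}-1}/\alpha_4$ extracted from the objective value, and then repeats the argument of Thm.~\ref{thm:group} verbatim on the $\bm{\varTheta}_g$-subproblem. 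Your write-up simply makes explicit the steps the paper leaves implicit (the soft-thresholding activation rule, the passage from the Frobenius gradient bound to the entrywise bound on $\widetilde{\bm{\varTheta}}_g^{\mathcal{T}}$, and the spectral perturbation step), all of which match the paper's intended argument.
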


\begin{algorithm}[!h]
  \caption{TFCL for $(\bm{Q})$}
  \label{alg:opt_app}
  \begin{algorithmic}
    \STATE {\bfseries Input:} Dataset $\mathcal{S}$, $\alpha_1$, $\alpha_2$, $\alpha_3$, $\alpha_4$, $k$, $C  (C> \varrho_\varTheta)$.
    \STATE {\bfseries Output:} Solution $\c$, $\g$, $\p$, $\boldsymbol{U}$.
    \STATE Initialize  $\boldsymbol{\varTheta}_c^0$,   $\boldsymbol{\varTheta}_g^0$, $\boldsymbol{\varTheta}_p^0$,  $\boldsymbol{U}^0 \in \Gamma$, $t=1$.
    \REPEAT
    \STATE Calculate $\boldsymbol{\widetilde{\varTheta}}_c^t$, $\boldsymbol{\widetilde{\varTheta}}_g^t$, and $\boldsymbol{\widetilde{\varTheta}}_g^t$, respectively from Eq.(\ref{eq:tc})-Eq.(\ref{eq:tp}).
    \STATE Solve $\boldsymbol{\varTheta}^t_c$ from (\ref{p1}).
    \STATE Solve $\boldsymbol{\varTheta}^t_p$ from (\ref{p3}).
    \STATE Invoke Alg.\ref{alg:opt} with  $\mathcal{S}$, $\alpha_1 = \alpha_2$, $\alpha_2 = \alpha_3$, $k$, $C$, return $
    \g^{t}, \bm{U}^t$.
    \STATE  $t = t + 1$ .
    \UNTIL{Convergence}
    \STATE $\boldsymbol{\varTheta}_c = \boldsymbol{\varTheta}_c^{t-1}$, $\boldsymbol{\varTheta}_g = \boldsymbol{\varTheta}_g^{t-1}$,$\boldsymbol{\varTheta}_p = \boldsymbol{\varTheta}_p^{t-1}$, $\boldsymbol{U} = \boldsymbol{U}^{t-1}$.
  \end{algorithmic}
\end{algorithm}
\section{Experiments}
In this section, we explore the performance of
our algorithm on both synthetic and real data. In Section \ref{sec:set} - Section \ref{sec:comp},  we
first elaborate the settings and competitors adopted in our experiments.
Then, in Section \ref{sec:sim}, we investigate the performance of our proposed algorithm on a simulated dataset. Subsequently,  in Section \ref{sec:shoes} - Section \ref{sec:sun}, we present experimental results showing how
our method performs on real-world personalized annotation datasets.
% {\color{blue}
\subsection{Experimental Settings}\label{sec:set}
We adopt the average of user-wise AUC score as our evaluation method. For all the experiments, hyper-parameters are tuned based on the training and validation set (account for 85\% of the total instances), and the results on the test set are recorded. The experiments are done with 15 repetitions for each involved algorithm. For the competitors, given the prediction $[\hat{\bm{y}}^{(1)}, \cdots, \hat{\bm{y}}^{(T)}]$, we adopt the instance-wise squared loss: 
\[
 \sum_i \frac{1}{2} \cdot ||\bm{y}^{(i)} - \hat{\bm{y}}^{(i)} ||_2^2
\]  
as the loss function. For our proposed algorithm, we adopt the squared AUC loss as the final loss function to improve the performance. Meanwhile, we also record how our proposed method works with instance-wise squared loss function as an intermediate result for fairness.

\subsection{Competitors}\label{sec:comp}
Now we briefly introduce competitors adopted in this paper.
\begin{itemize}
	\item \textbf{LASSO} \cite{lasso} where each task learner is regularized with an $\ell_1$-norm constraint.
	\item \textbf{rMTFL} \cite{rMTFL} assumes that the model $\boldsymbol{W}$ can be decomposed into two components: a consensus component and a group-sparse component.
	\item \textbf{RAMUSA} \cite{RAMUSA} adopts a capped trace norm regularizer to minimize only the singular values smaller than an adaptively tuned threshold.
	\item \textbf{CoCMTL} \cite{comtl} realizes the task-specific co-clustering via minimizing the truncated sum-of-squares of the singular values of the task matrix.
	\item \textbf{NC-CMTL} \cite{tclog} explores shared information among different tasks with a non-convex low-rank spectral regularizer and a robust re-weighting scheme.
	\item \textbf{VSTGMTL} \cite{vstg} implements simultaneous variable selection and learning with a low-rank decomposition.
	\item \textbf{AMTL} \cite{amtl} provides asymmetric transfer between tasks with a sparse selection on the asymmetric transfer matrix.
\end{itemize}

\begin{figure}[h]
 \centering     
      \includegraphics[width=0.9\columnwidth]{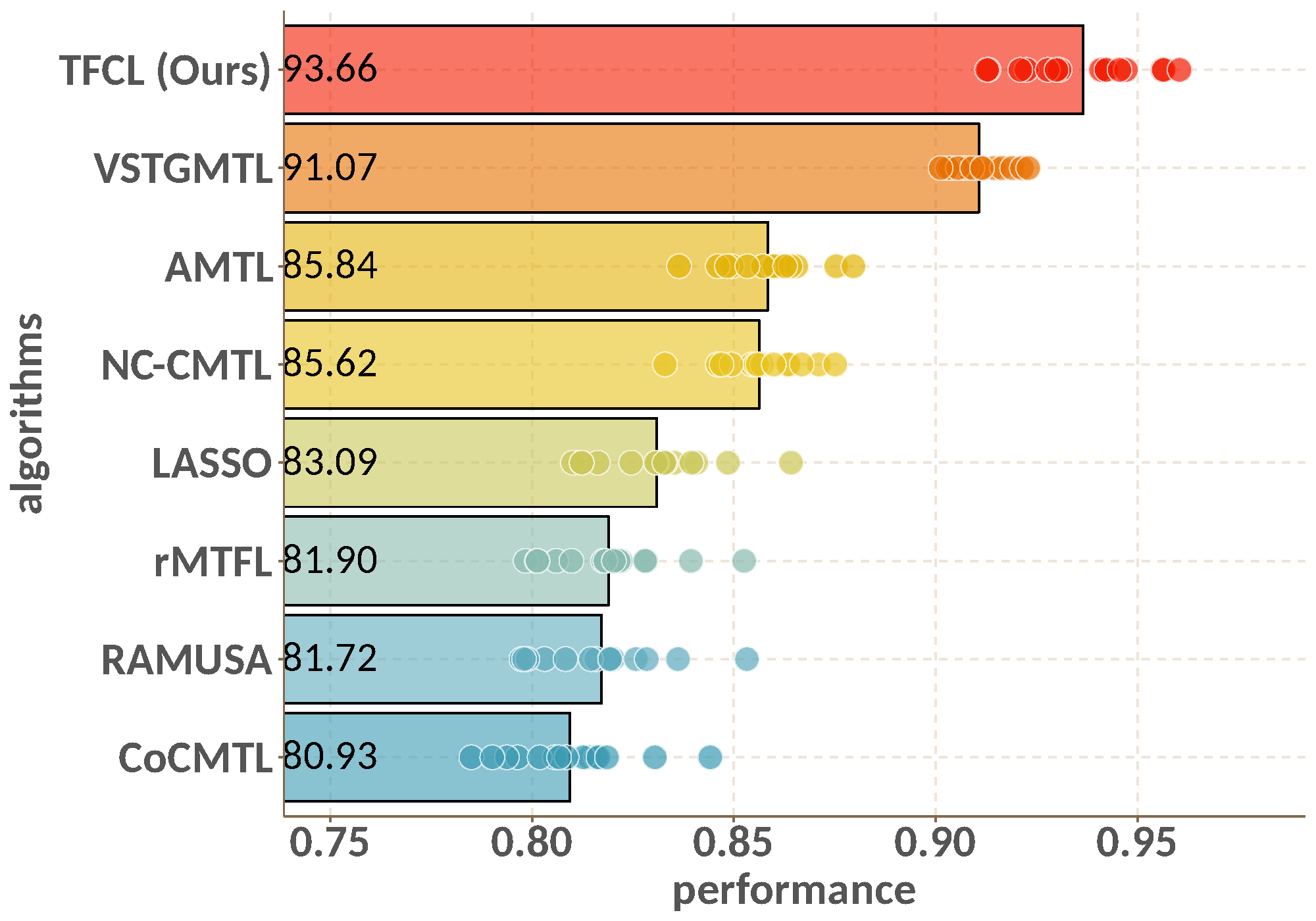}    
  \caption{AUC ($\uparrow$) comparison on the Simulated Dataset }\label{fig:box}
\end{figure}

\begin{figure}[h]
  \begin{center}
    \subfigure[obj]{\label{fig:conv:obj}
      \includegraphics[width=0.3\columnwidth]{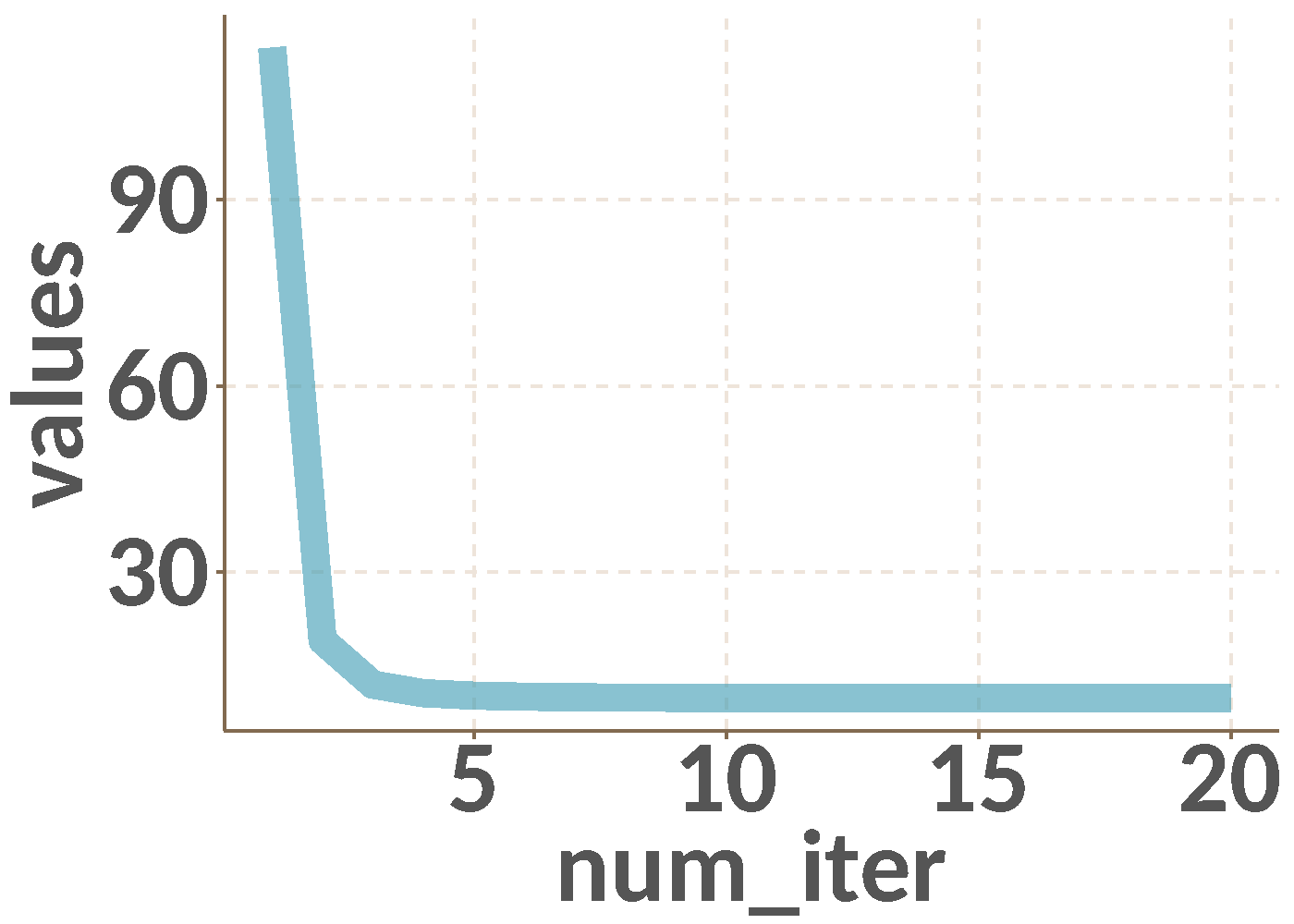}  
    }
    \subfigure[$||\bm{W}^t - \bm{W}^{t-1}||$]{\label{fig:conv:dw}
      \includegraphics[width=0.3\columnwidth]{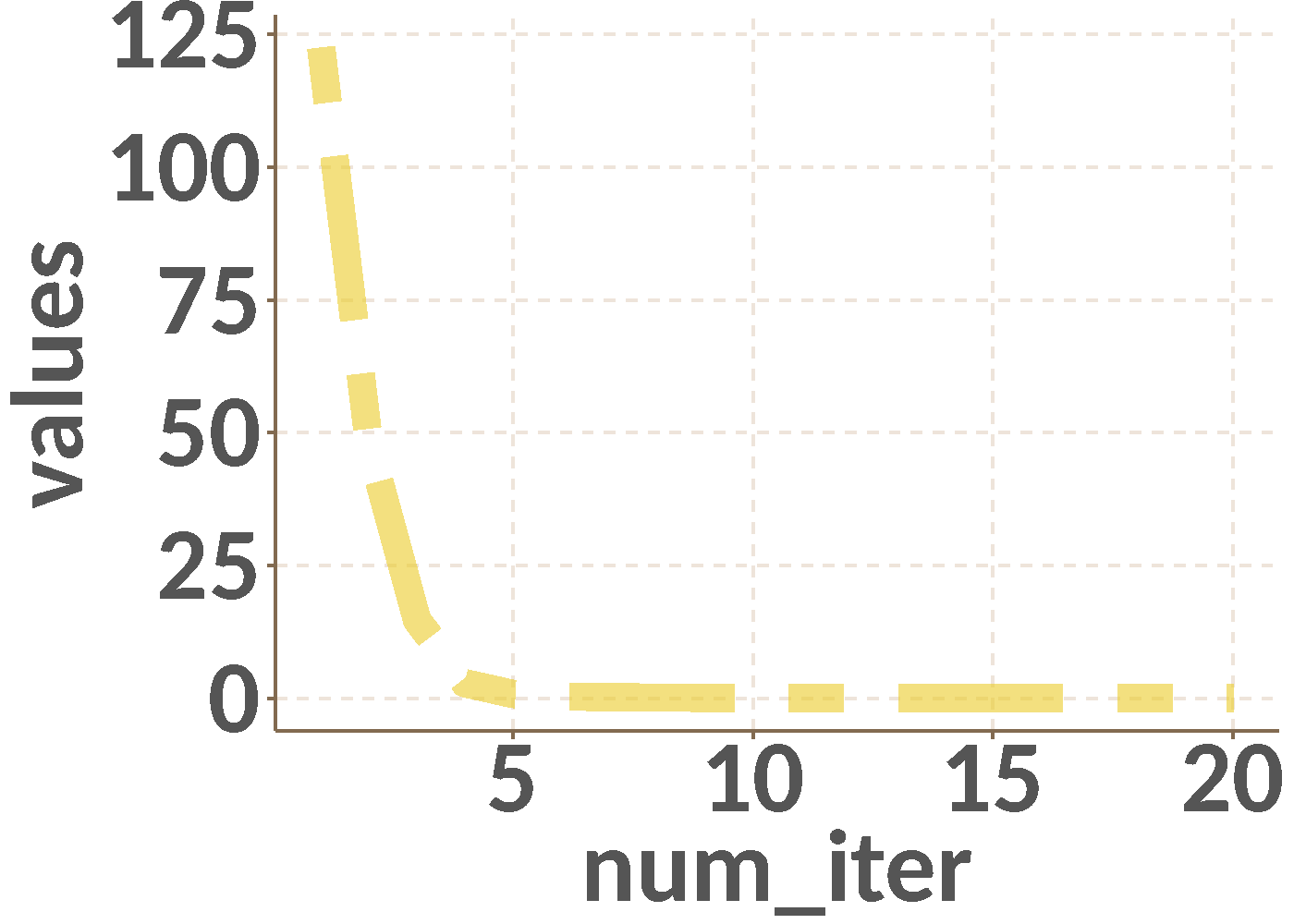} 
    }
    \subfigure[$||\bm{U}^t - \bm{U}^{t-1}||$]{\label{fig:conv:du}
      \includegraphics[width=0.3\columnwidth]{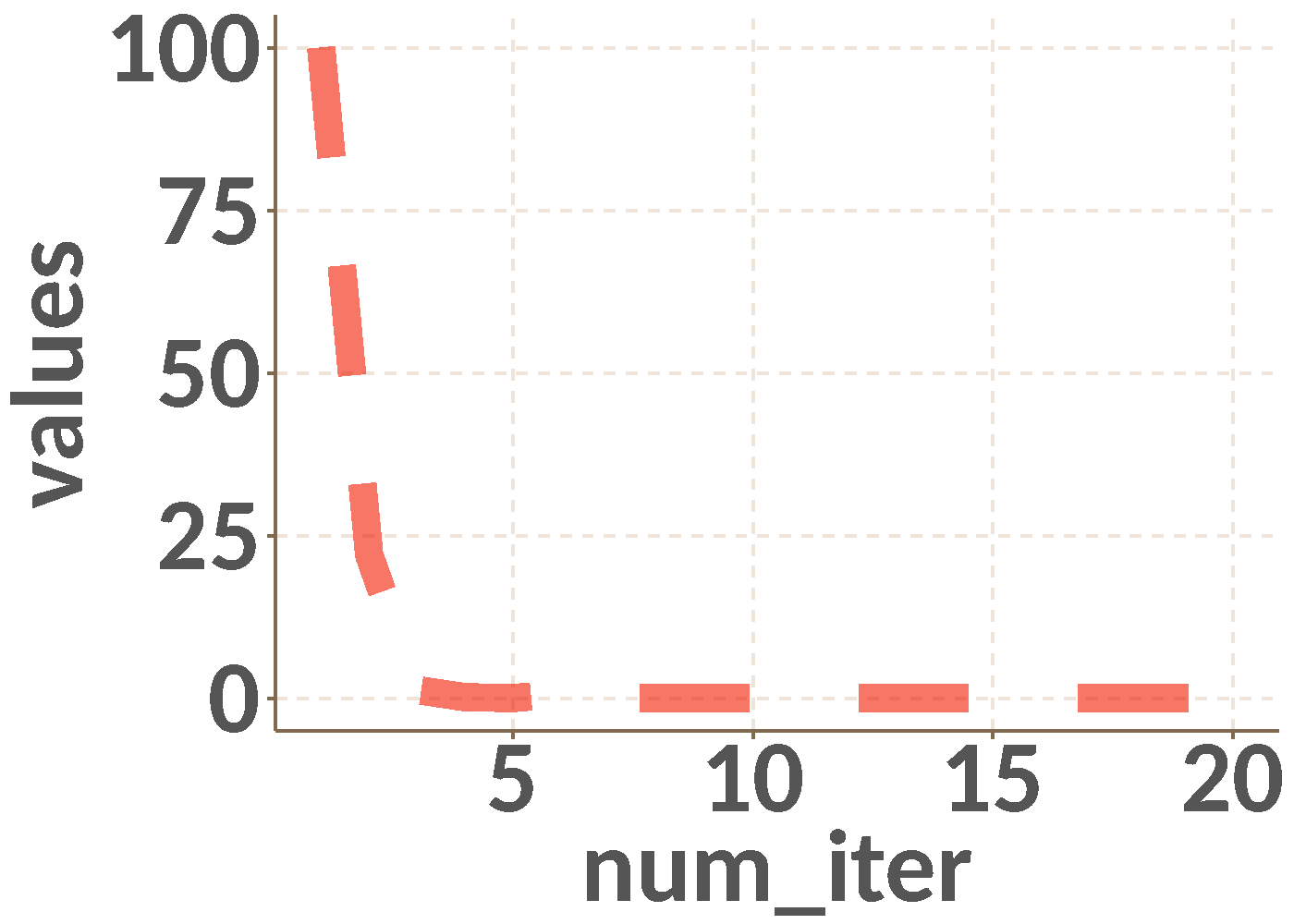} 
    }\  
  \end{center}
  \caption{\label{fig:conv}Convergence curves for (a) loss function, (b) parameter $\bm{W}$ in terms of the difference between two successive iterations  $||\bm{W}^t - \bm{W}^{t-1}||$, (c) $||\bm{U}^t - \bm{U}^{t-1}||$. }
\end{figure}

\begin{table}[htbp]
  \centering
  \setlength{\belowcaptionskip}{10pt}%
  \caption{Ablation Study for simulated dataset}
    \begin{tabular}{c|cc}
    \multicolumn{1}{c|}{\multirow{2}[1]{*}{Algorithm}} & \multicolumn{1}{c}{TFCL} & \multicolumn{1}{c}{TFCL} \\
          & \multicolumn{1}{c}{ours} & \multicolumn{1}{c}{w/o AUC loss} \\
    \midrule
    AUC   & \textbf{93.66} & 92.46 \\
    \end{tabular}%
  \label{tab:abl}%
  % }
\end{table}%

\begin{figure}[ht]  
\begin{centering}
\subfigure[iter 0]{
  \includegraphics[width=0.3\columnwidth]{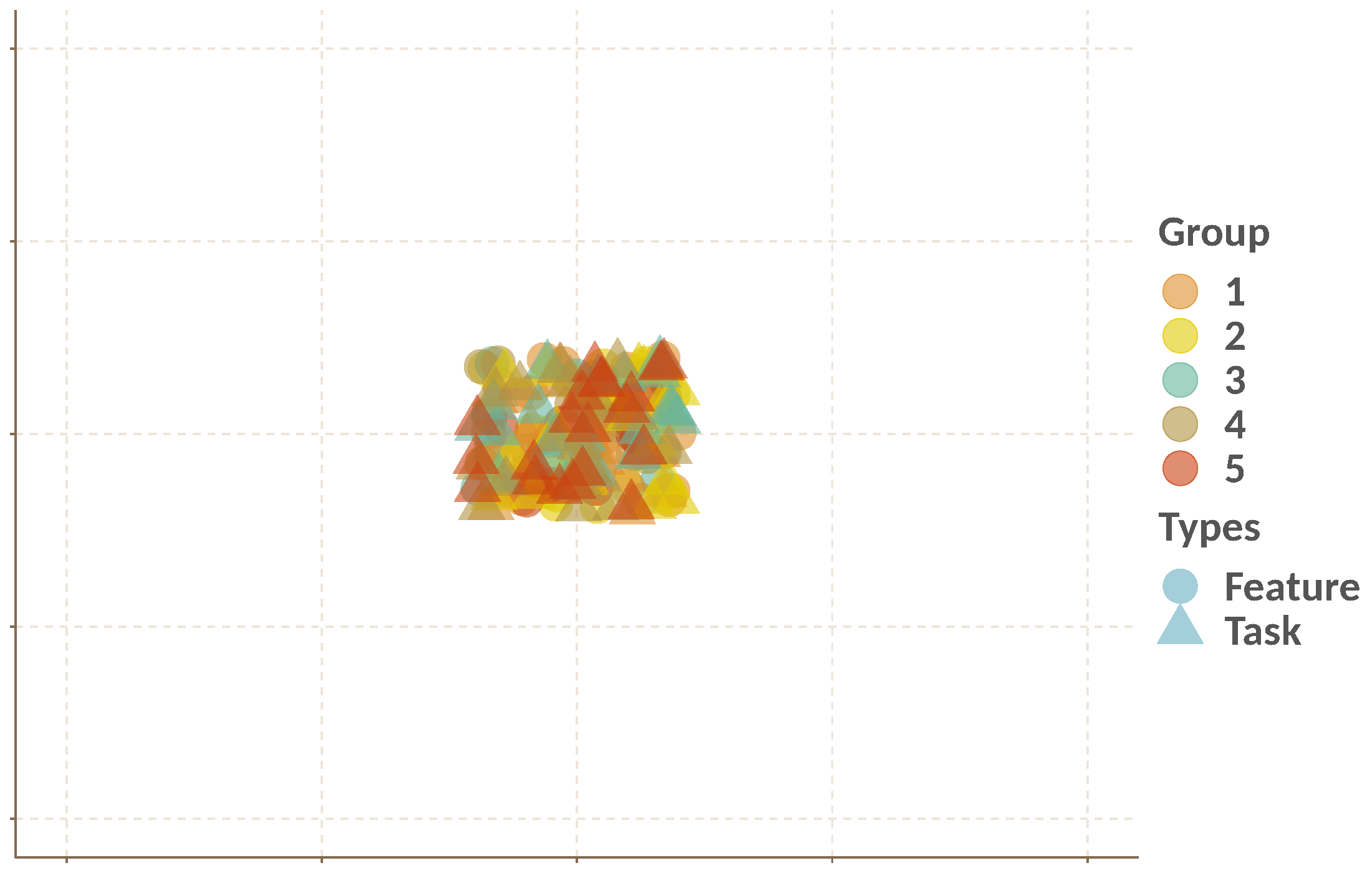} 
}
\subfigure[iter 1]{
  \includegraphics[width=0.3\columnwidth]{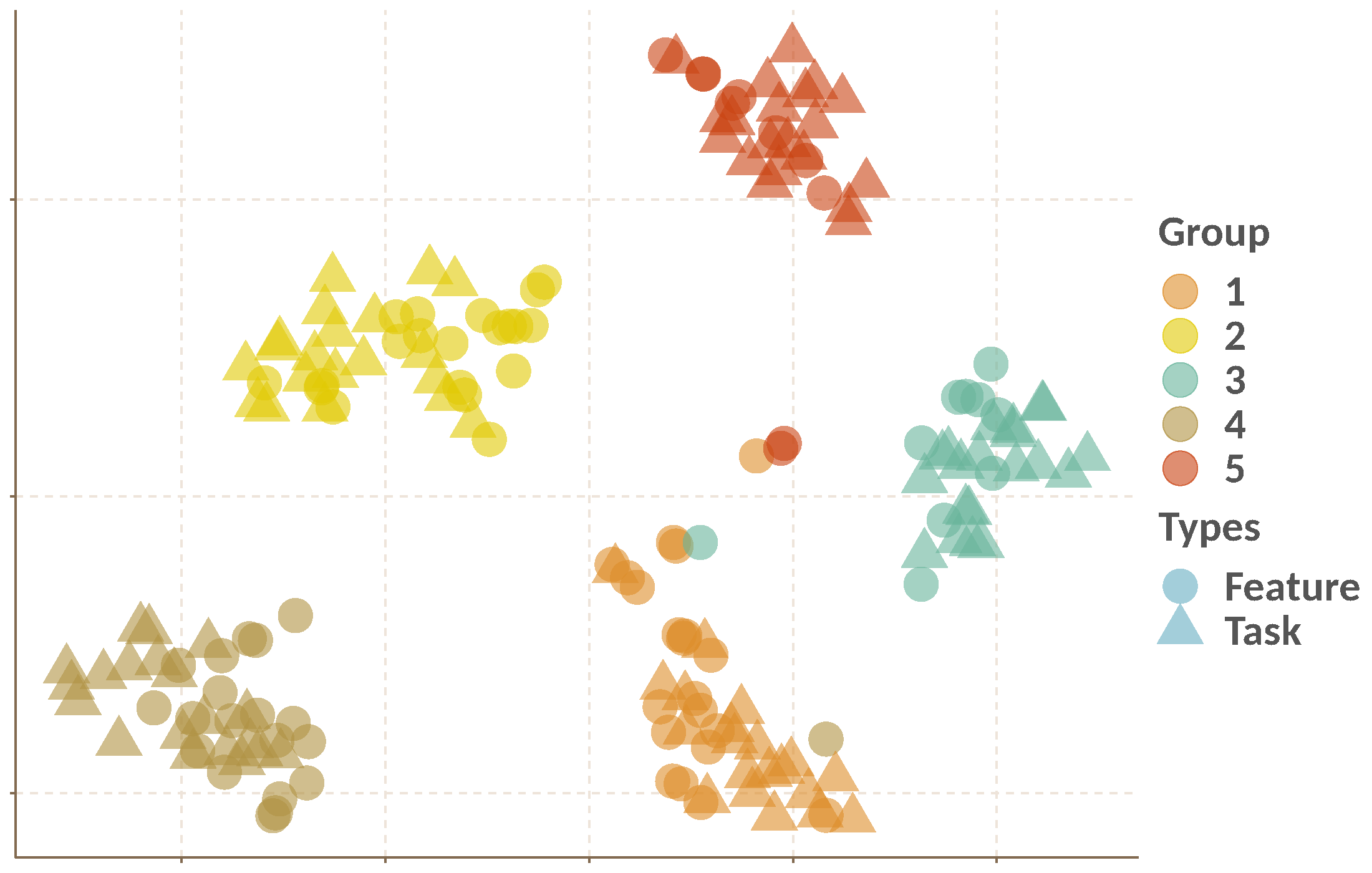} 
}
\subfigure[iter 2]{
  \includegraphics[width=0.3\columnwidth]{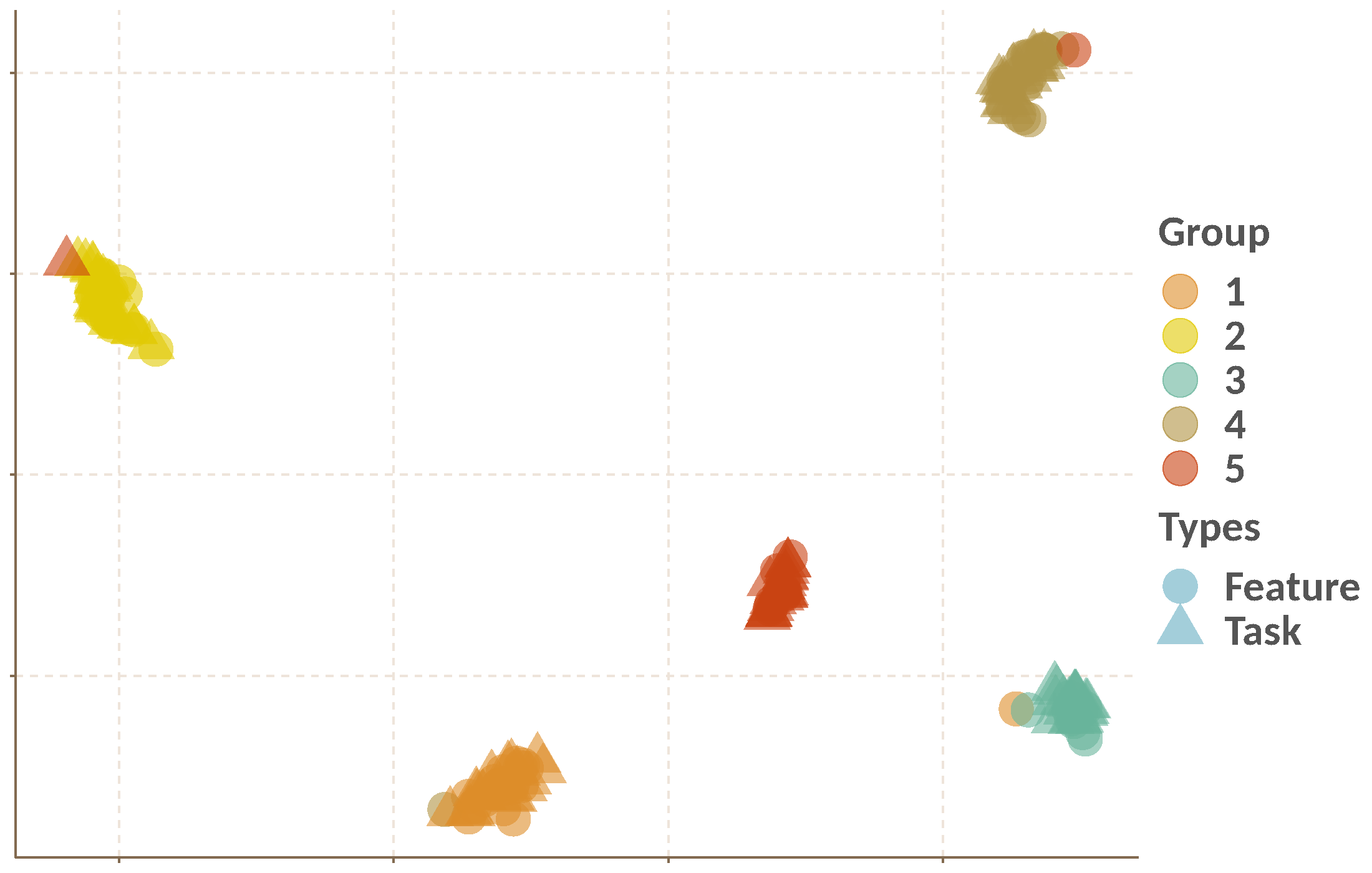} 
}\

\subfigure[iter 3]{
  \includegraphics[width=0.3\columnwidth]{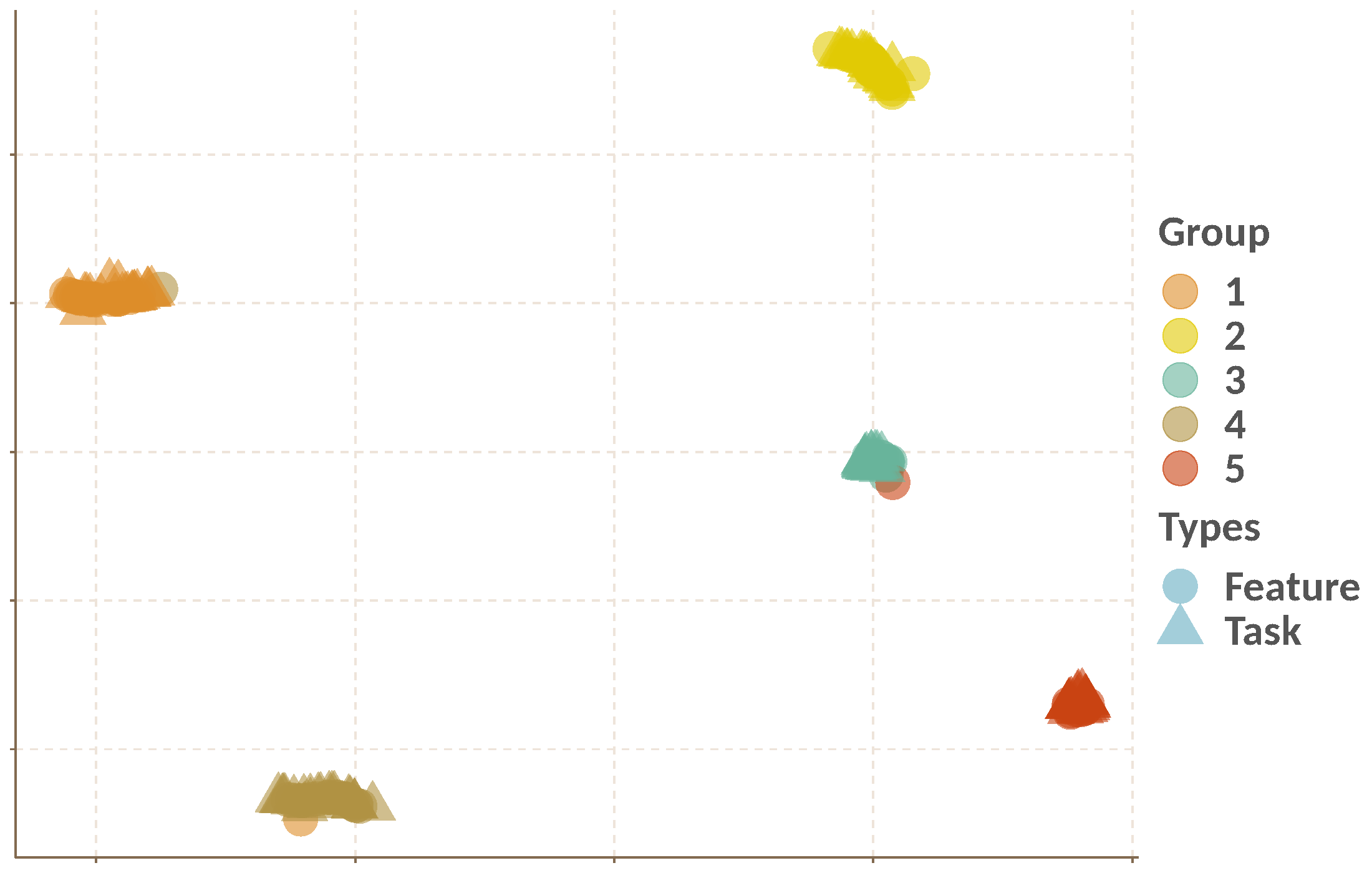} 
}
\subfigure[iter 4]{
  \includegraphics[width=0.3\columnwidth]{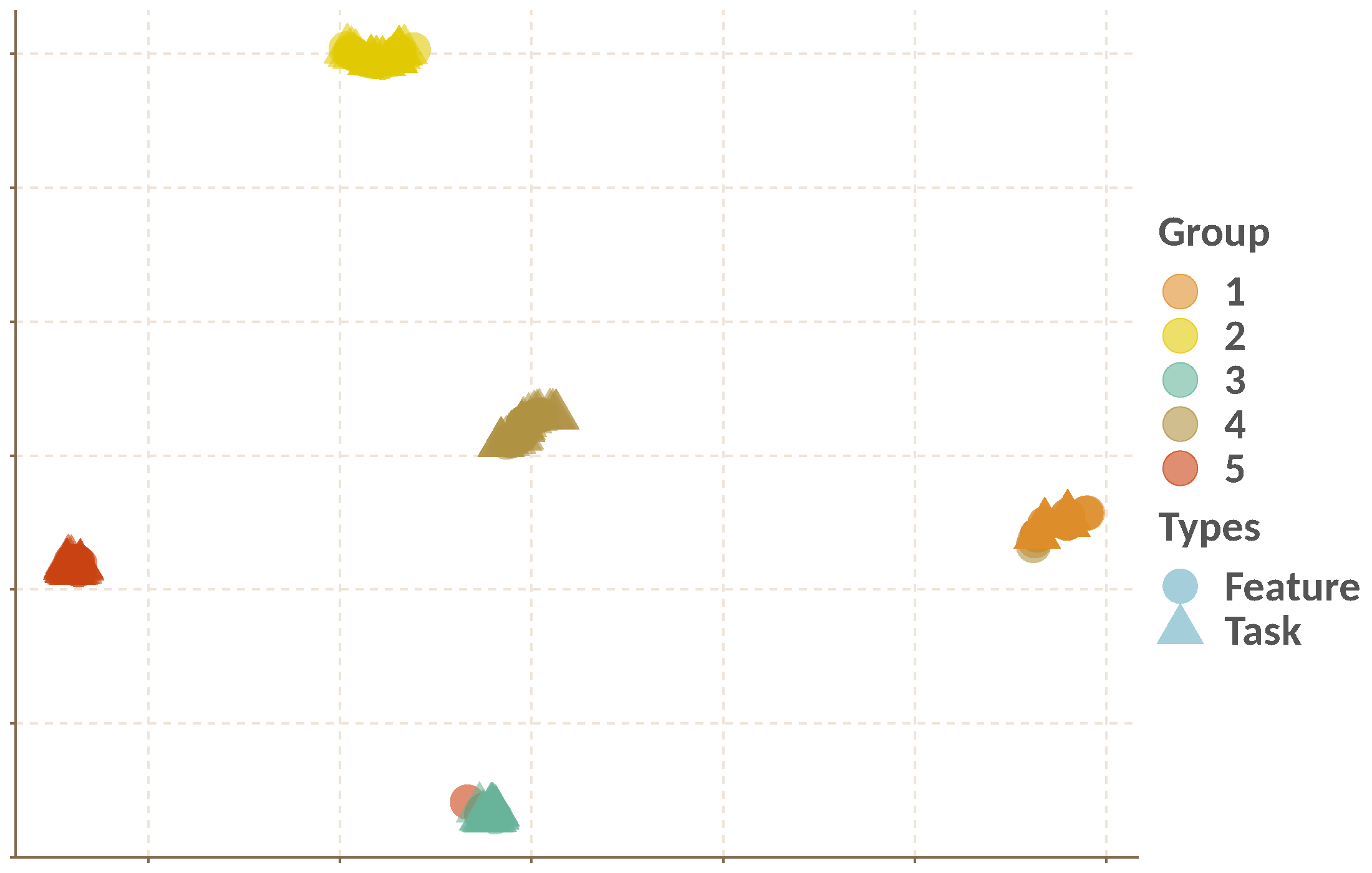} 
}
\subfigure[iter 5]{
  \includegraphics[width=0.3\columnwidth]{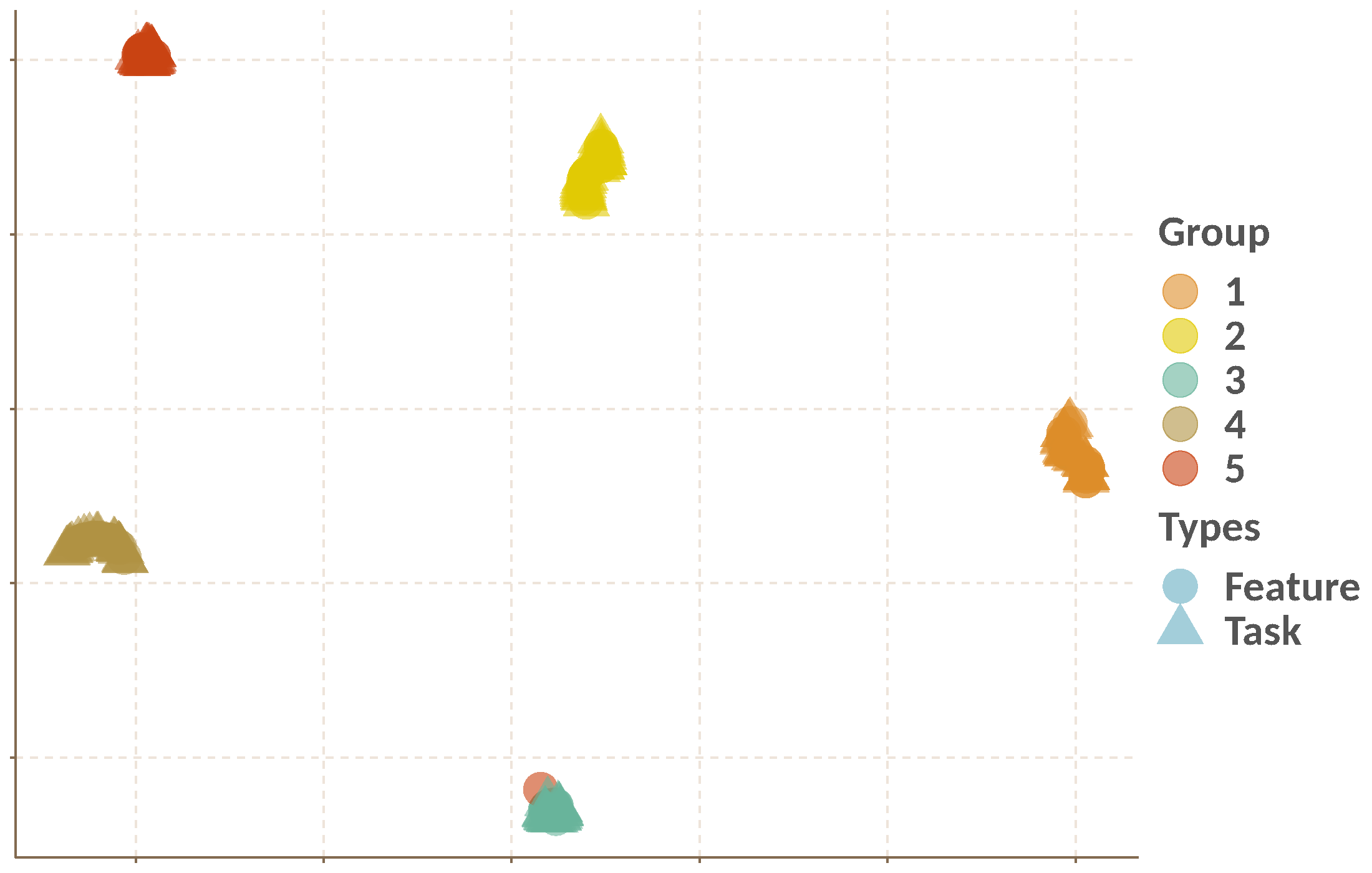} 
}
\end{centering}

\caption{\label{fig:embed}\textbf{Evolution of Spectral Embeddings}. We plot the corresponding embeddings $\bm{f}_1 \cdots, \bm{f}_{d+T}$  in the first five iterations in this group of figures. The results suggest that spectral embeddings rapidly form stable and clear clusters after the second iteration.}
\end{figure}

\begin{figure}[h]
  \centering
     \subfigure[CoCMTL]{
      \includegraphics[width=0.3\columnwidth]{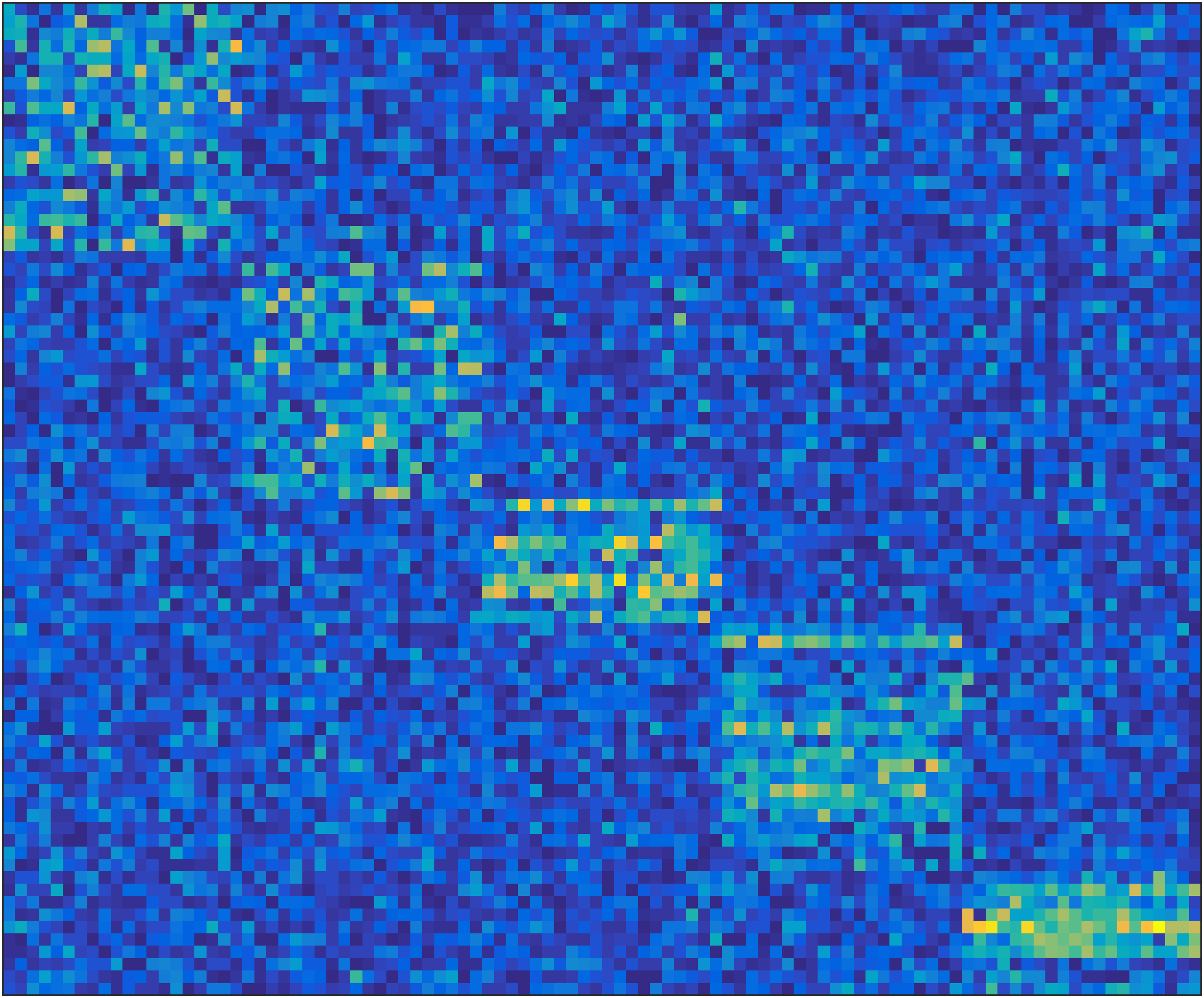}  
     }
     \subfigure[RAMUSA]{
      \includegraphics[width=0.3\columnwidth]{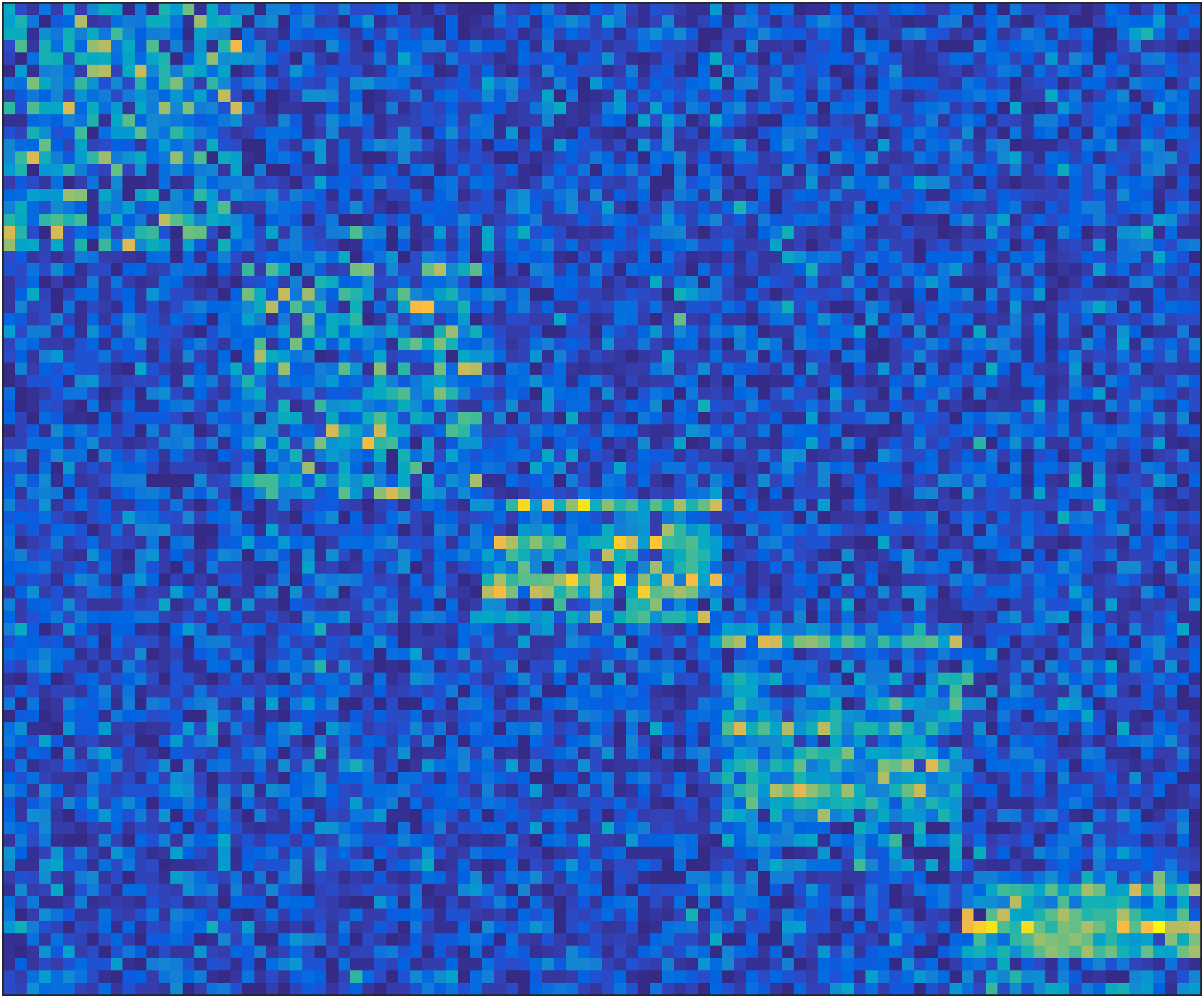} 
     }
     \subfigure[rFTML]{
      \includegraphics[width=0.3\columnwidth]{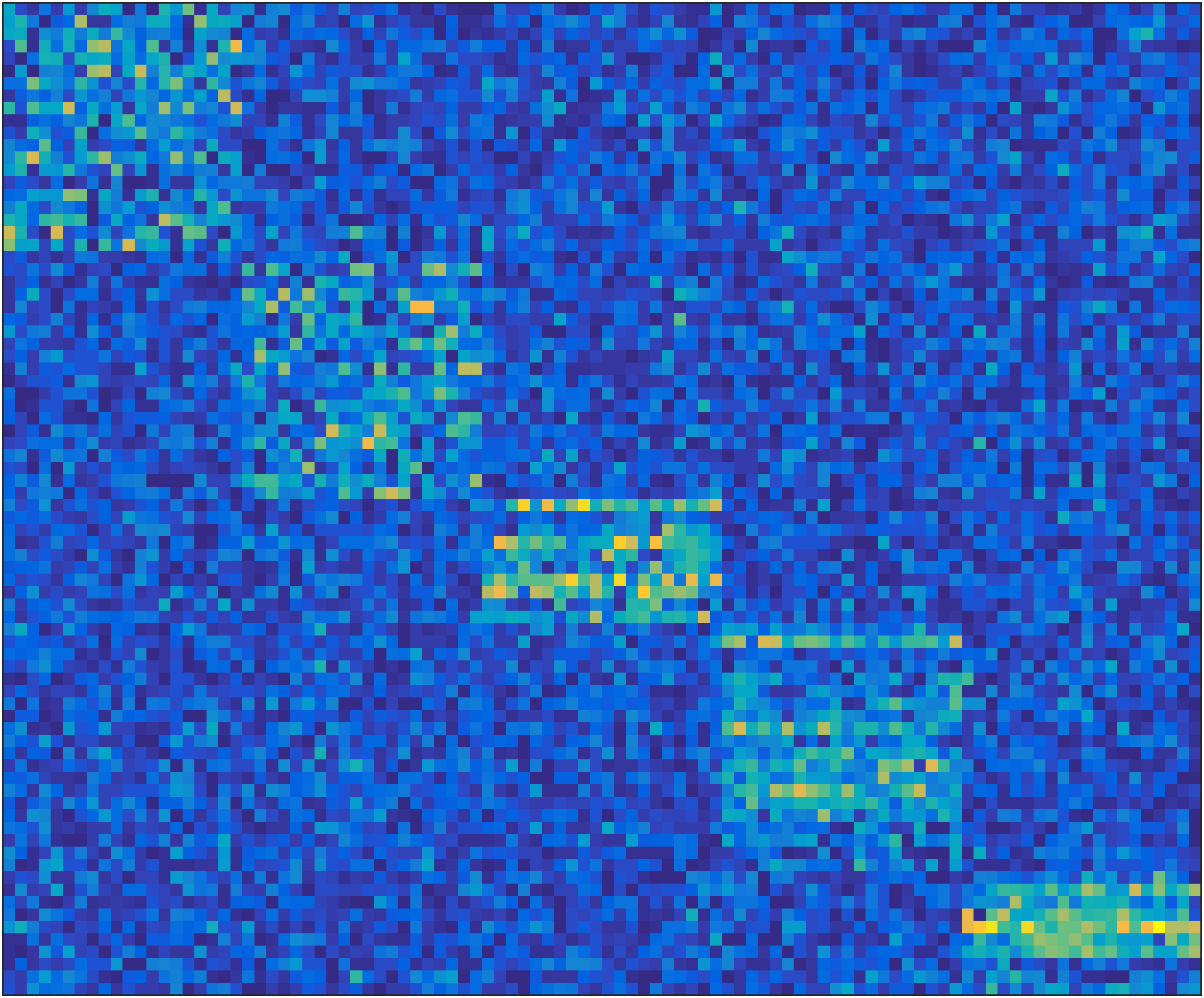}  
     }
  
  \subfigure[LASSO]{
    \includegraphics[width=0.3\columnwidth]{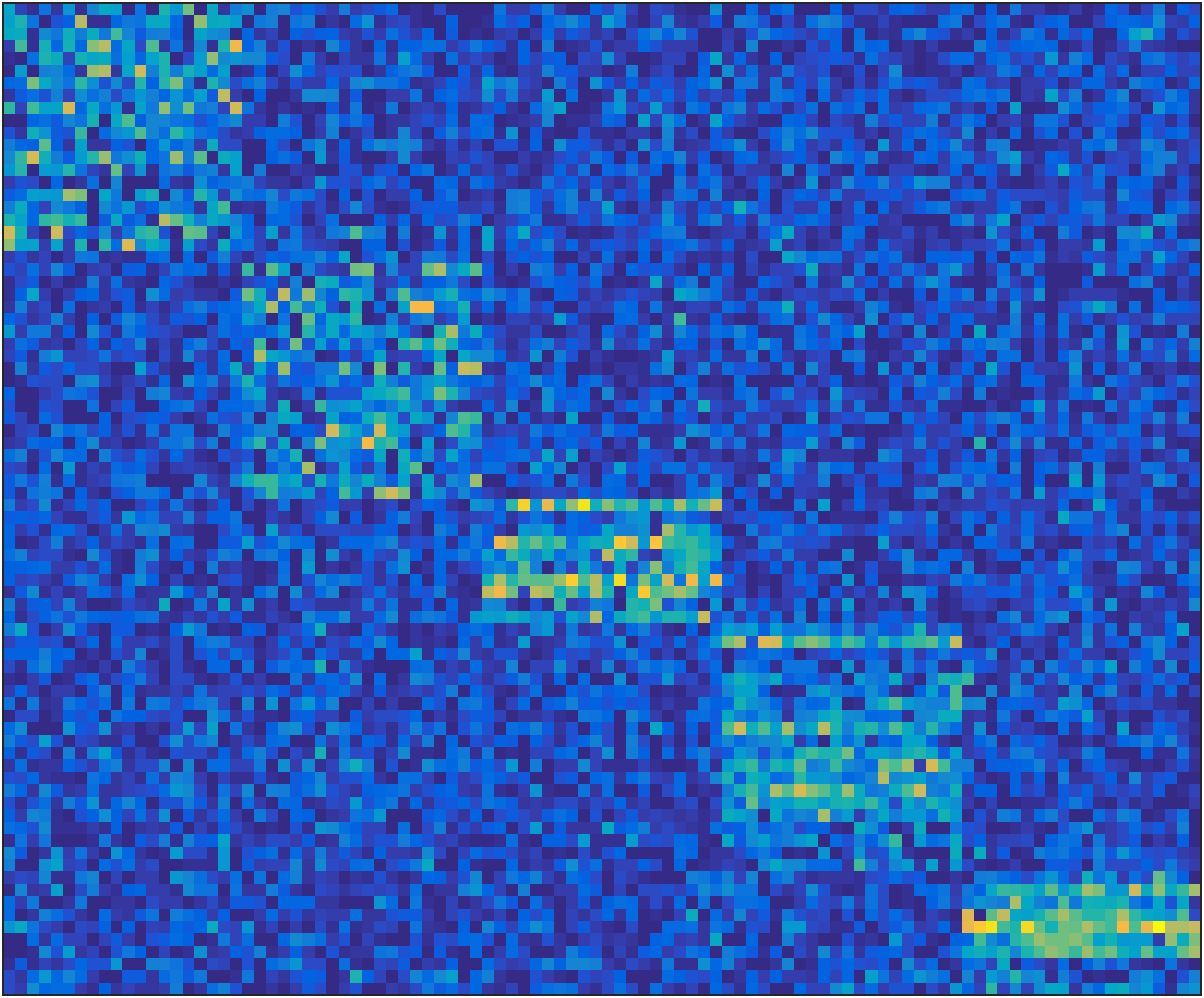}  
  }
  \subfigure[NC-CMTL]{
    \includegraphics[width=0.3\columnwidth]{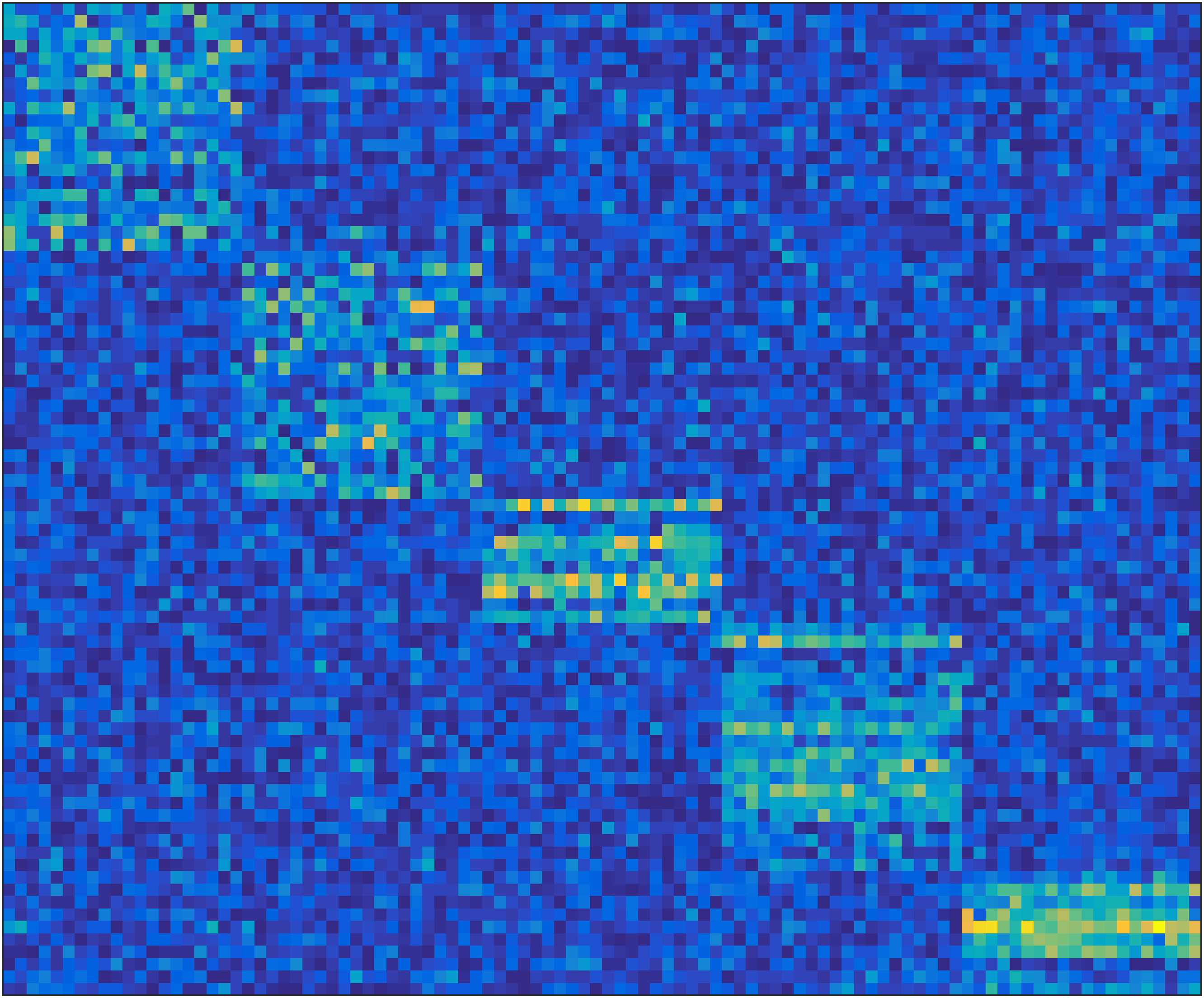}  
  }
  \subfigure[AMTL]{
    \includegraphics[width=0.3\columnwidth]{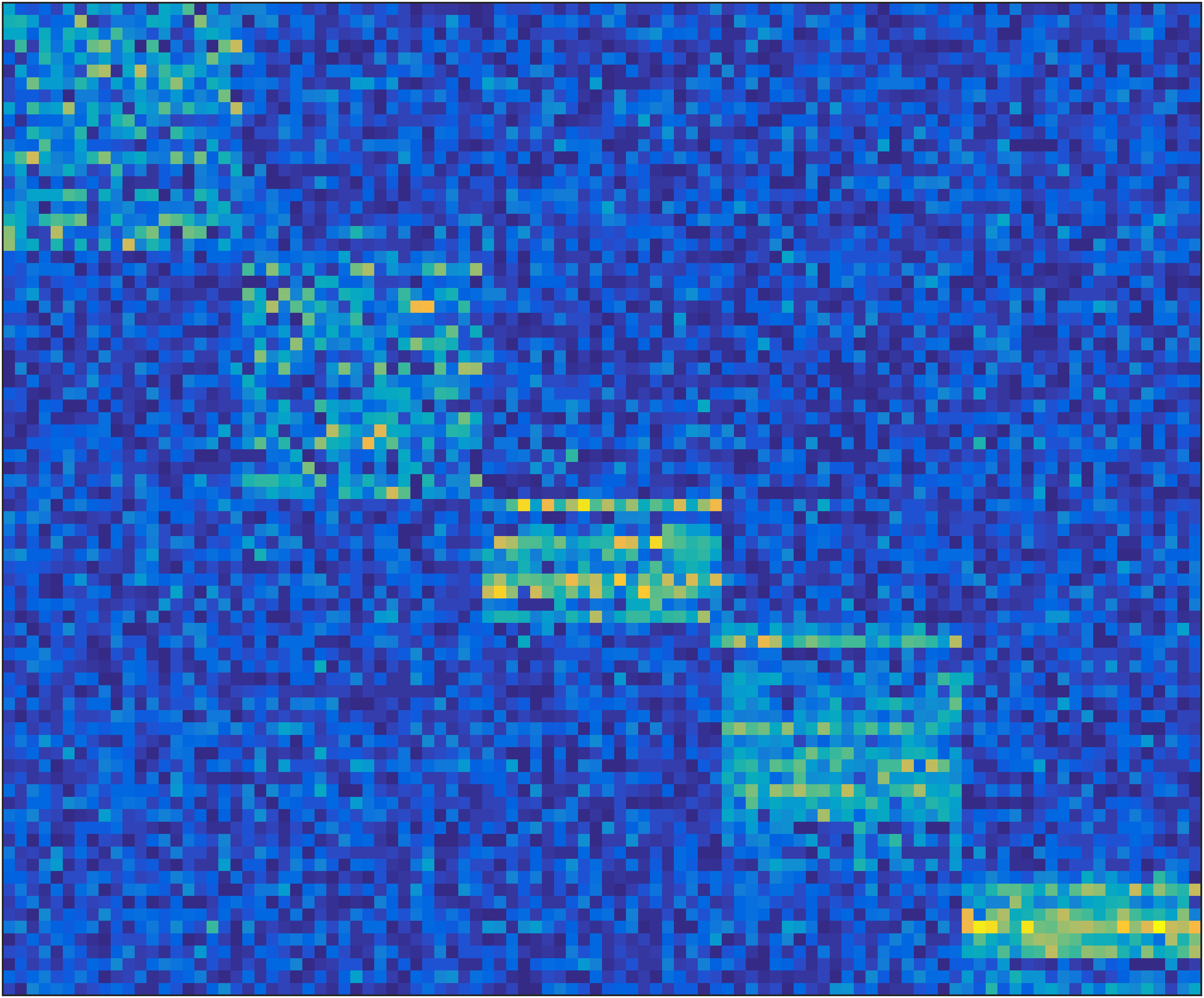} 
  }
  
  \subfigure[VSTGML]{
      \includegraphics[width=0.3\columnwidth]{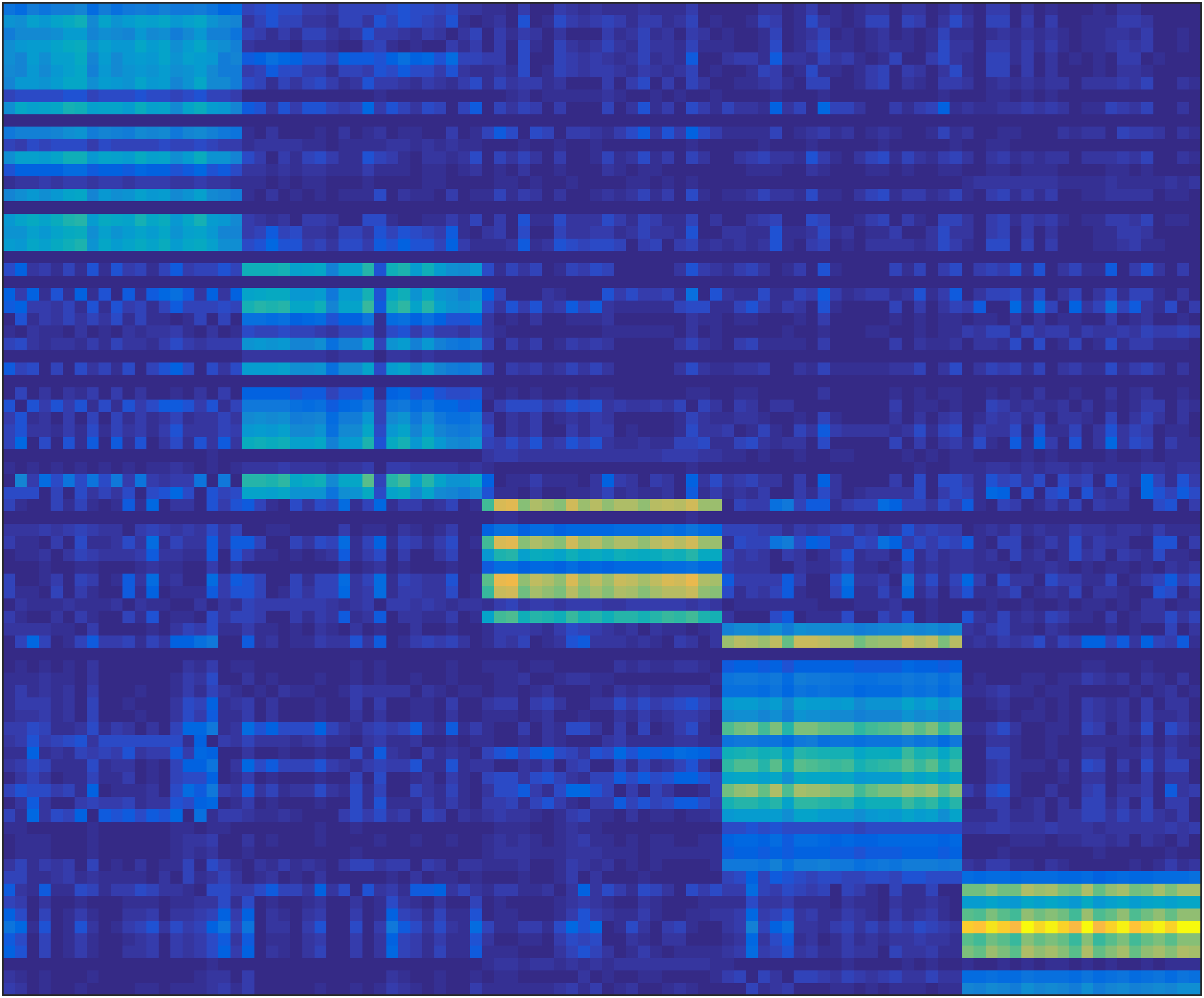} 
    }   
  \subfigure[TFCL (Ours)]{
    \includegraphics[width=0.3\columnwidth]{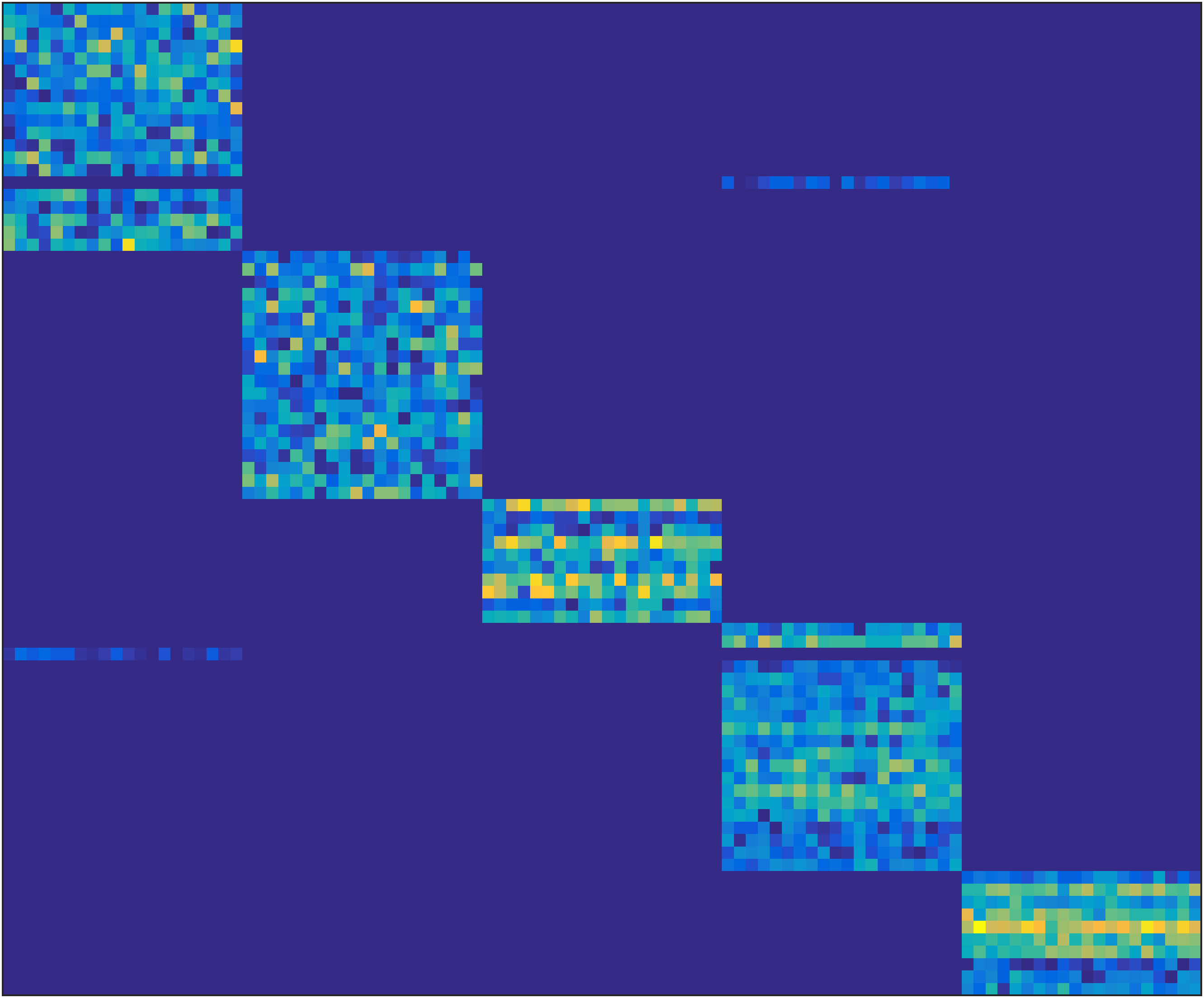} 
  }
  \subfigure[Ground Truth]{
    \includegraphics[width=0.3\columnwidth]{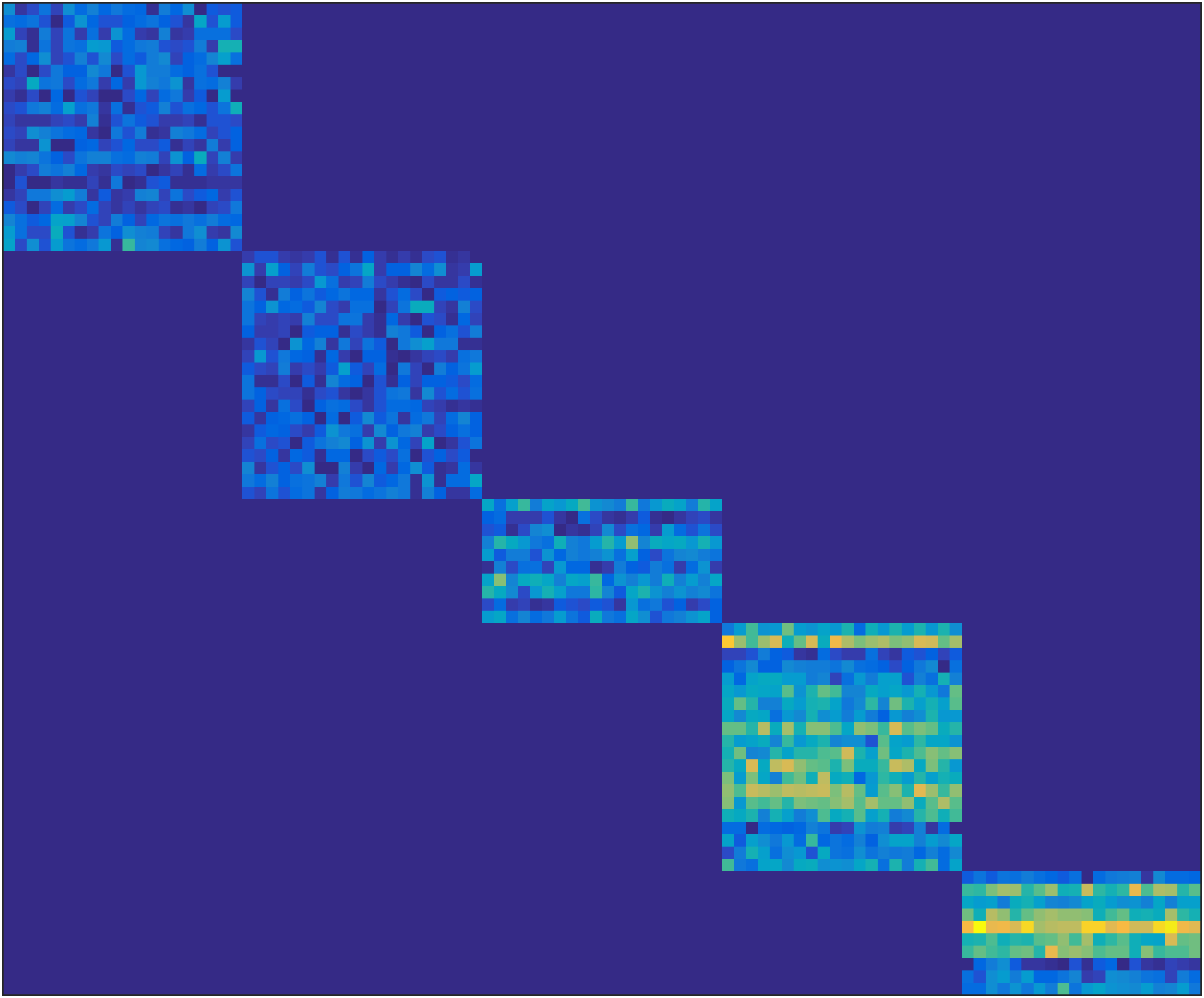} 
  }
  \caption{\textbf{Structural Recovery on Simulation Dataset.} The $x$-axis represents the users, the $y$-axis represents the feature. Compared with the competitors, TFCL could leverage a clearer block diagonal structure as the ground-truth.  }
  \label{fig:struc}
\end{figure}

\begin{figure*}[!h] 
  \begin{center}
\subfigure[Shoes BR]{\label{fig:perf_shoes_brown} 
      \includegraphics[width=0.3\textwidth]{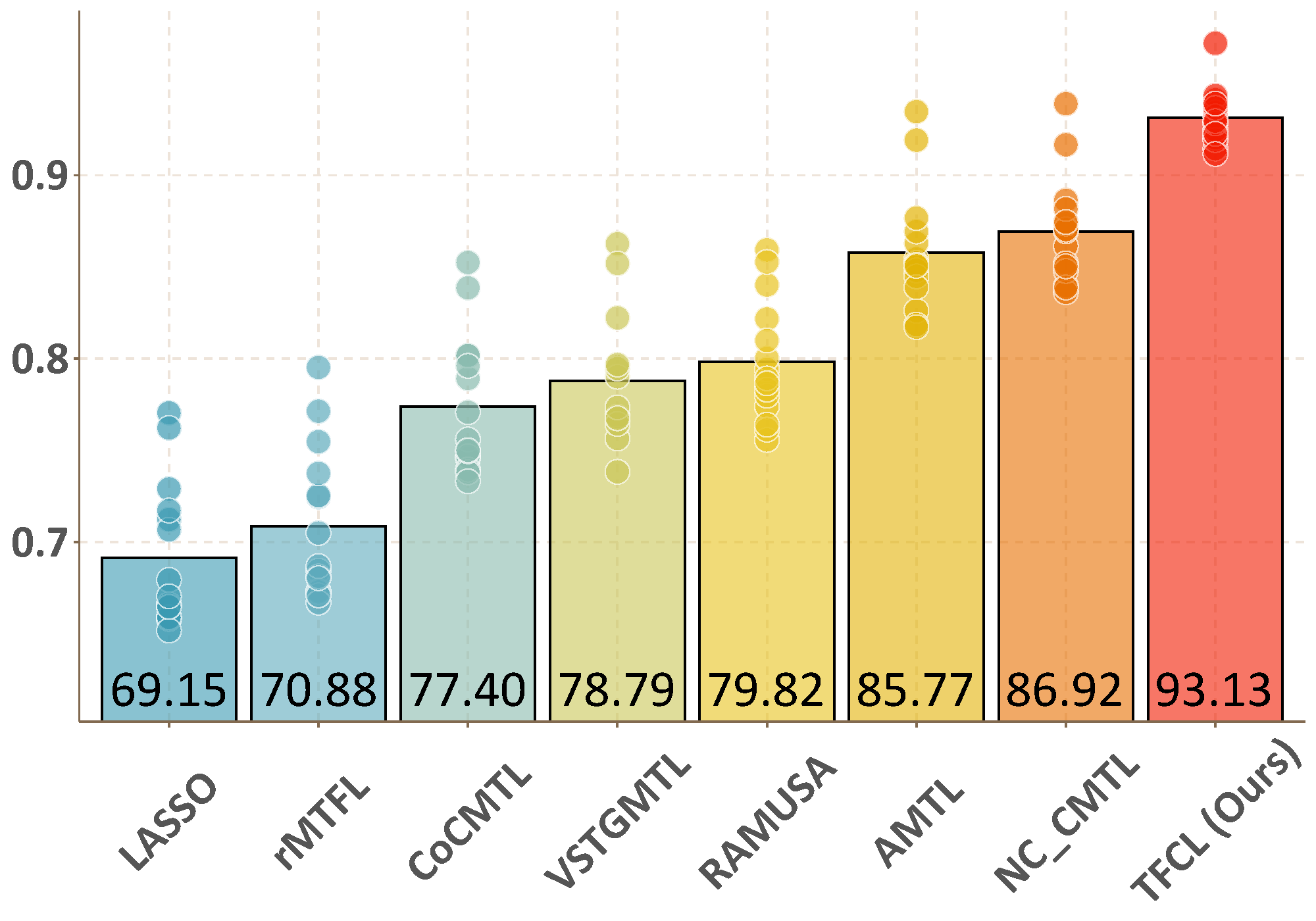} 
    } 
  \subfigure[Shoes CM]{\label{fig:perf_shoes_comf} 
      \includegraphics[width=0.3\textwidth]{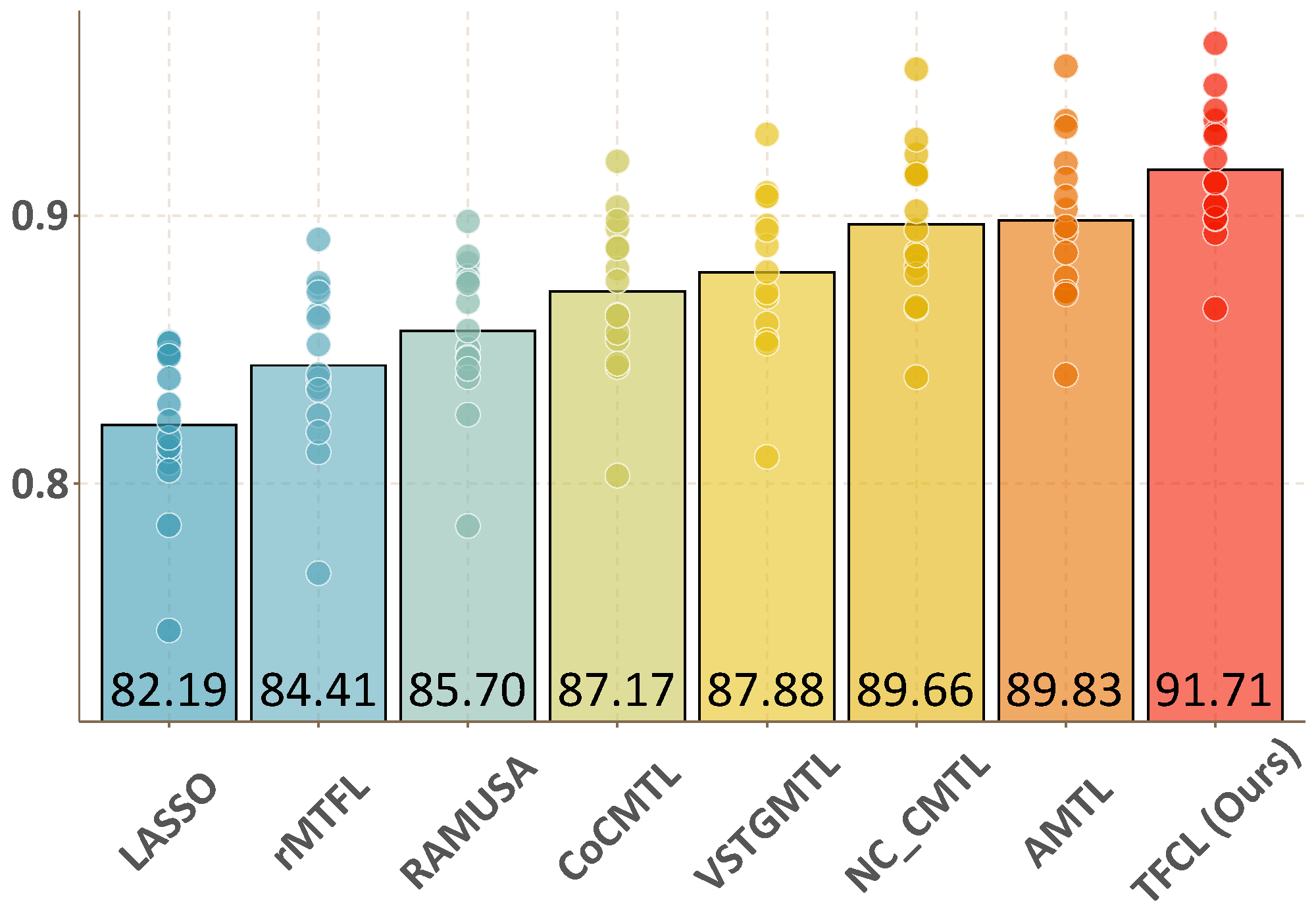} 
    } 
  \subfigure[Shoes FA]{\label{fig:perf_shoes_fash} 
      \includegraphics[width=0.3\textwidth]{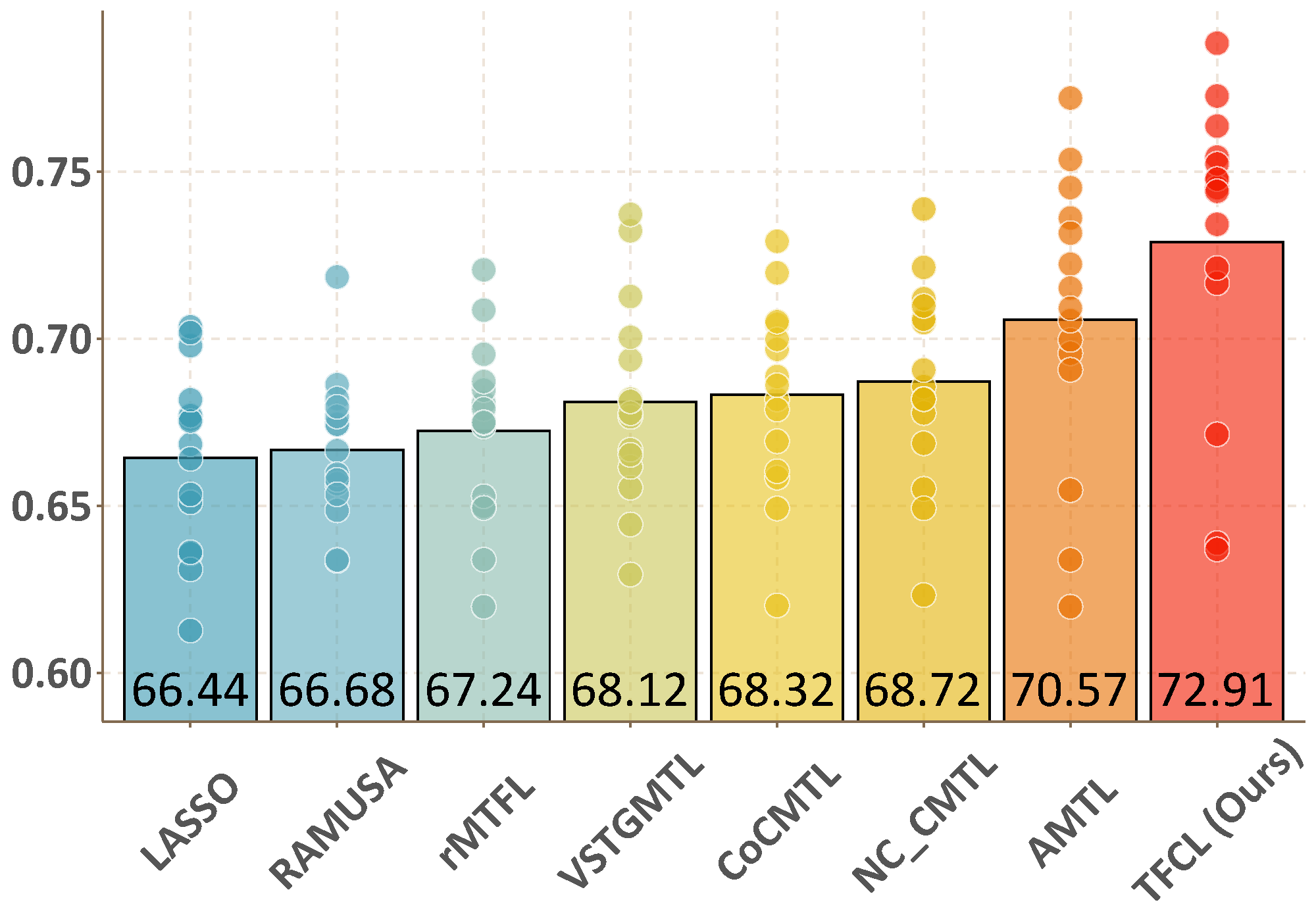} 
    } 

  \subfigure[Shoes FM]{\label{fig:perf_shoes_fm} 
      \includegraphics[width=0.3\textwidth]{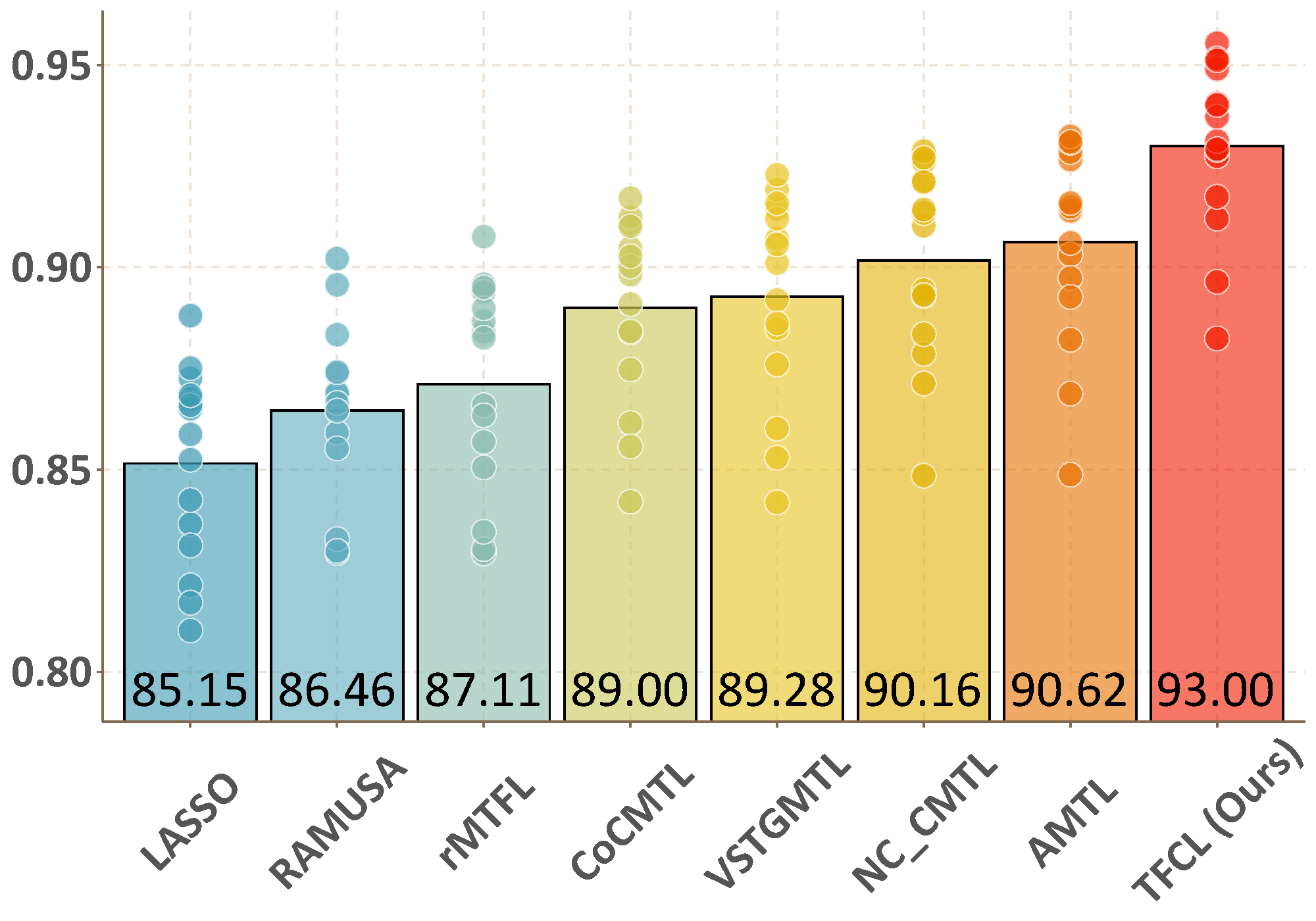} 
    } 
  \subfigure[Shoes OP]{\label{fig:perf_shoes_op} 
      \includegraphics[width=0.3\textwidth]{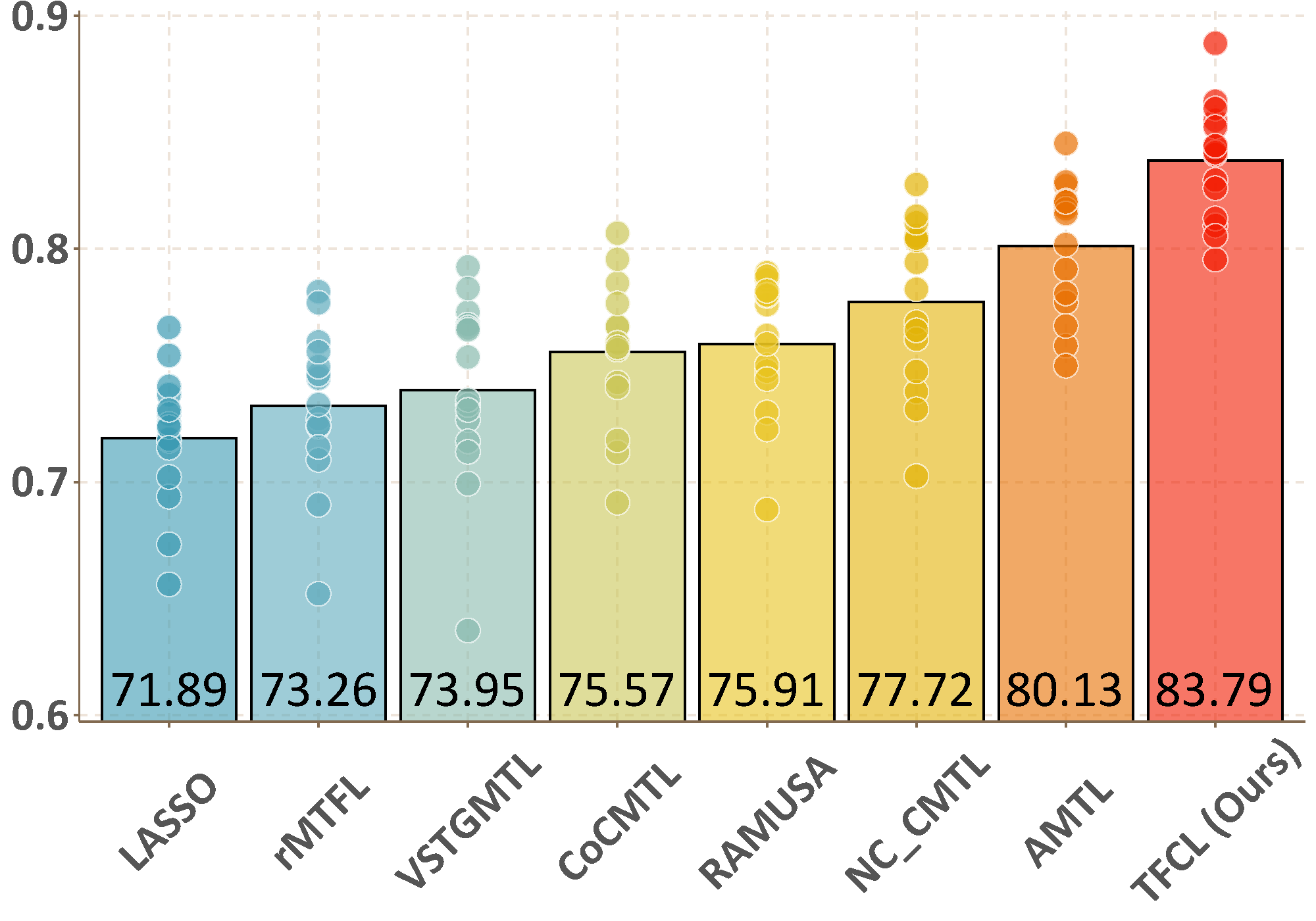} 
    } 
  \subfigure[Shoes OR]{\label{fig:perf_shoes_br} 
      \includegraphics[width=0.3\textwidth]{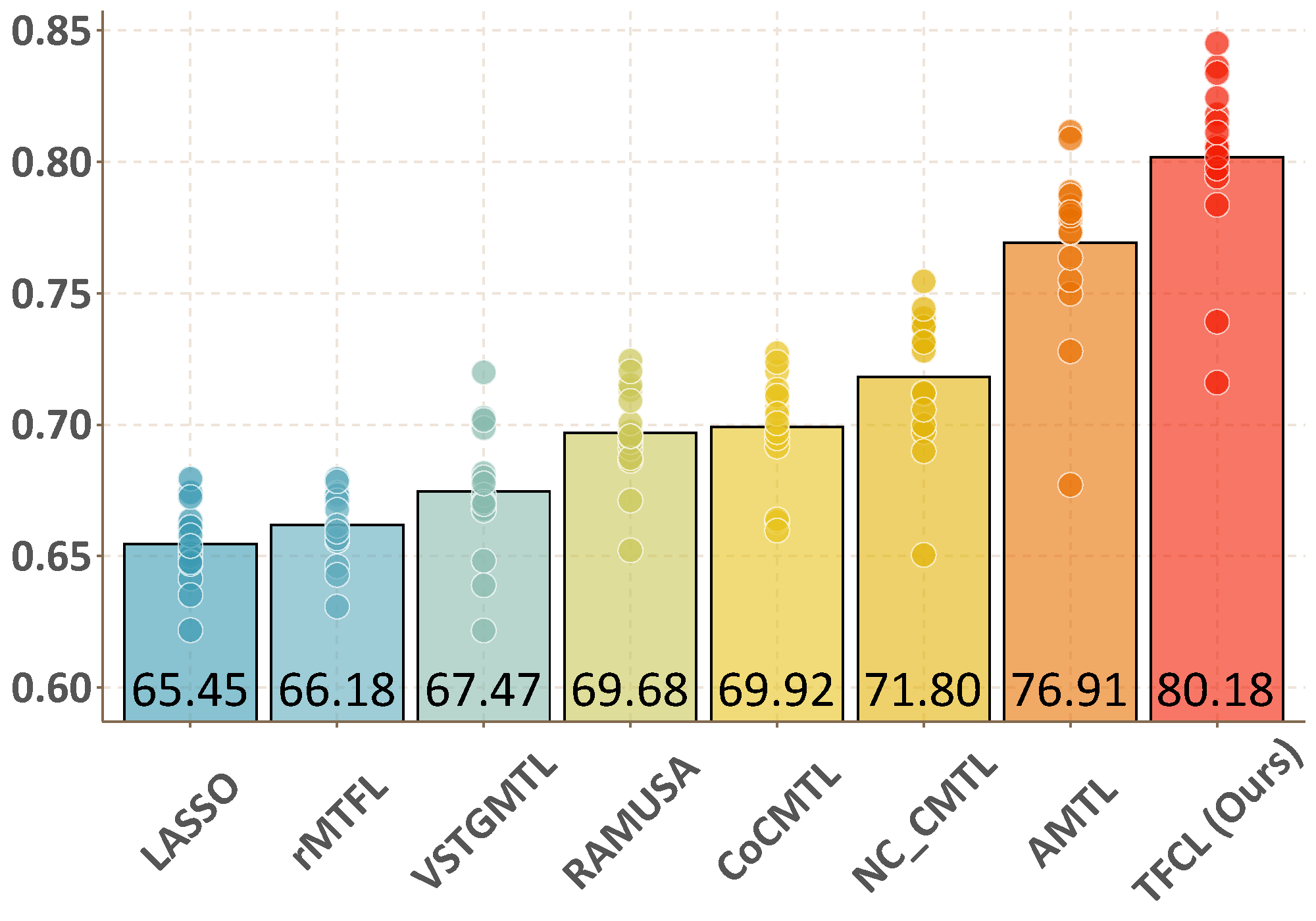} 
    } 
  
  \subfigure[Shoes PT]{\label{fig:perf_shoes_pt} 
      \includegraphics[width=0.3\textwidth]{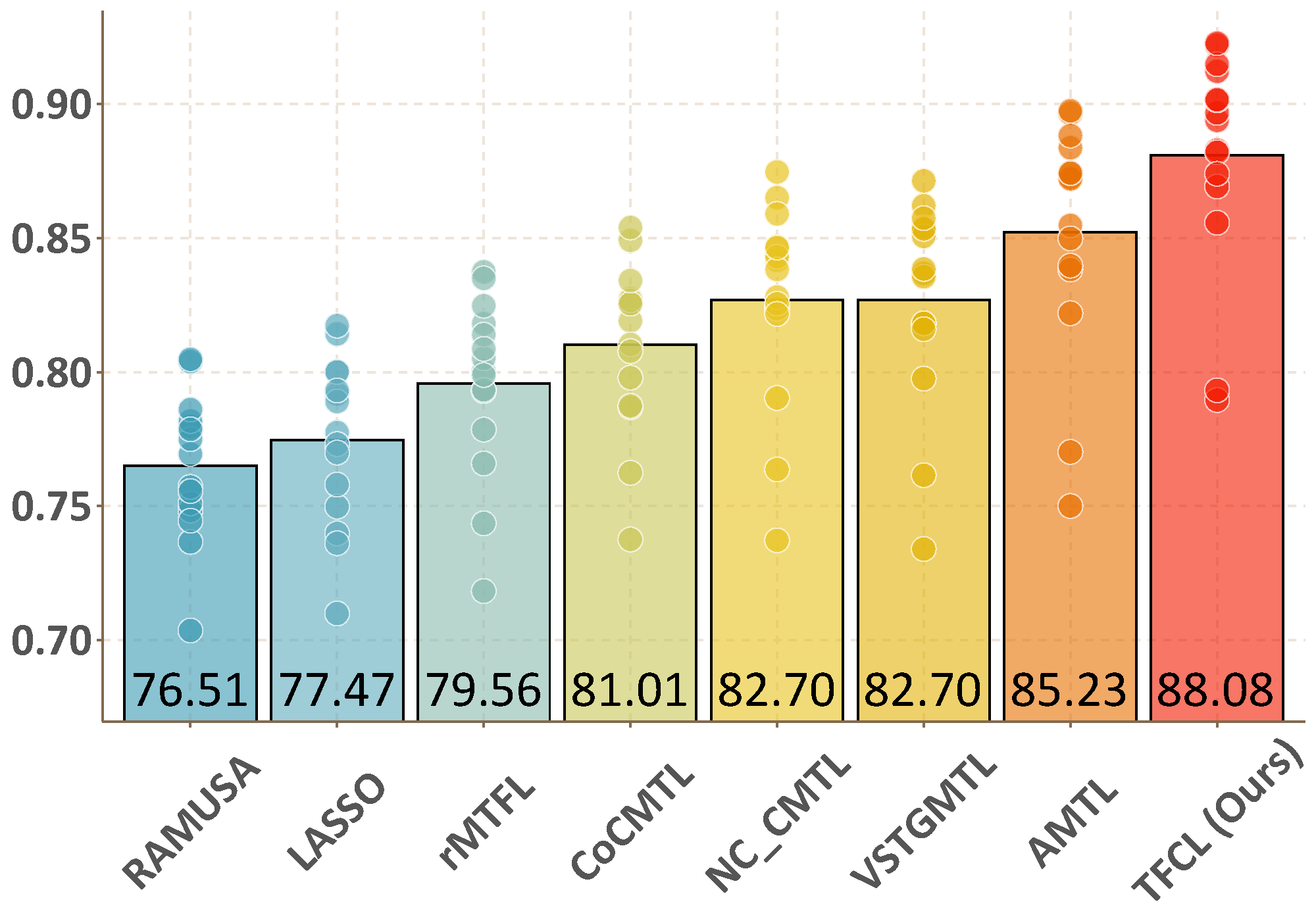} 
    } 
\subfigure[SUN CL]{\label{fig:perf_sun_cl} 
      \includegraphics[width=0.3\textwidth]{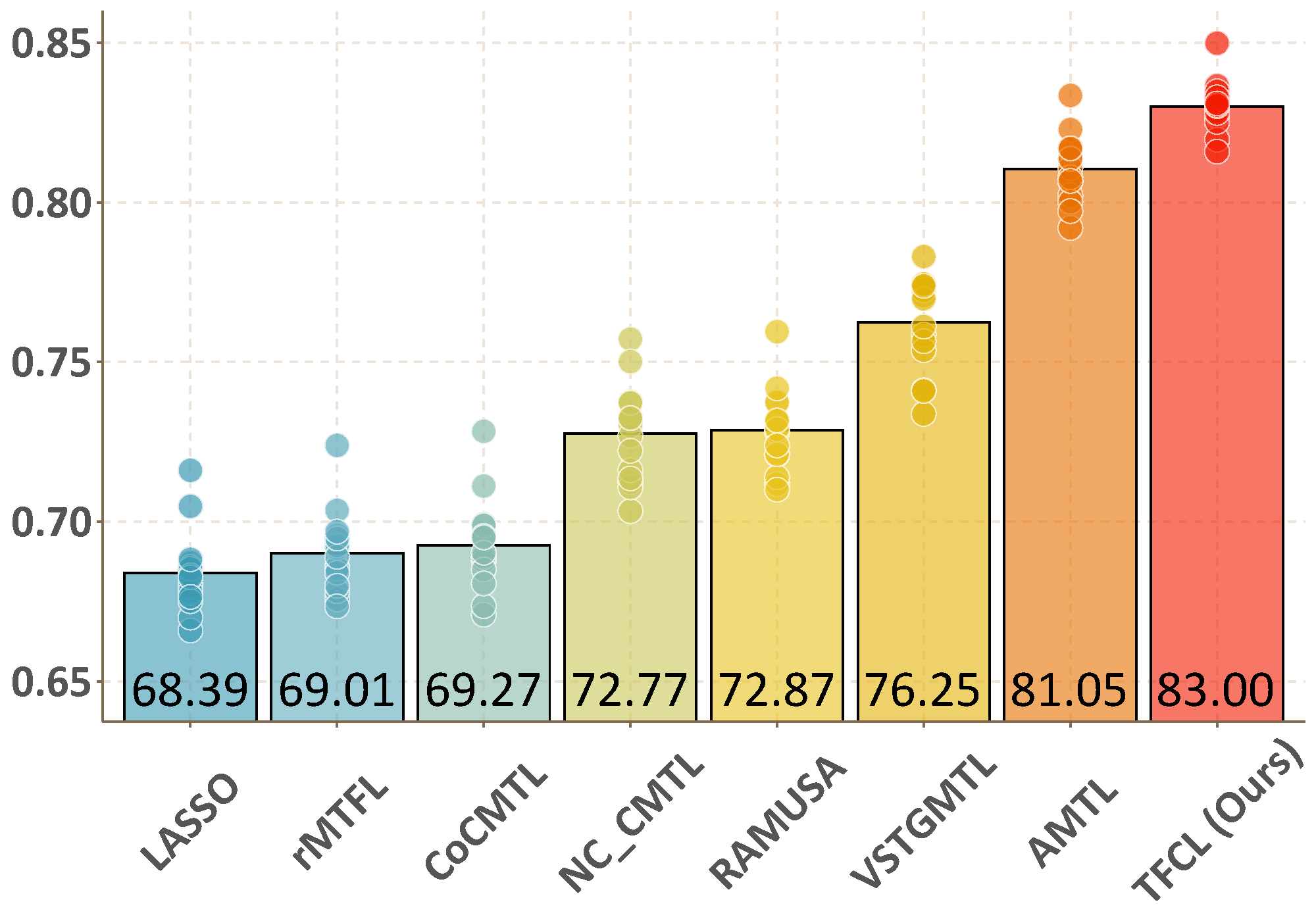}  
    }
\subfigure[SUN MO]{\label{fig:perf_sun_mo} 
      \includegraphics[width=0.3\textwidth]{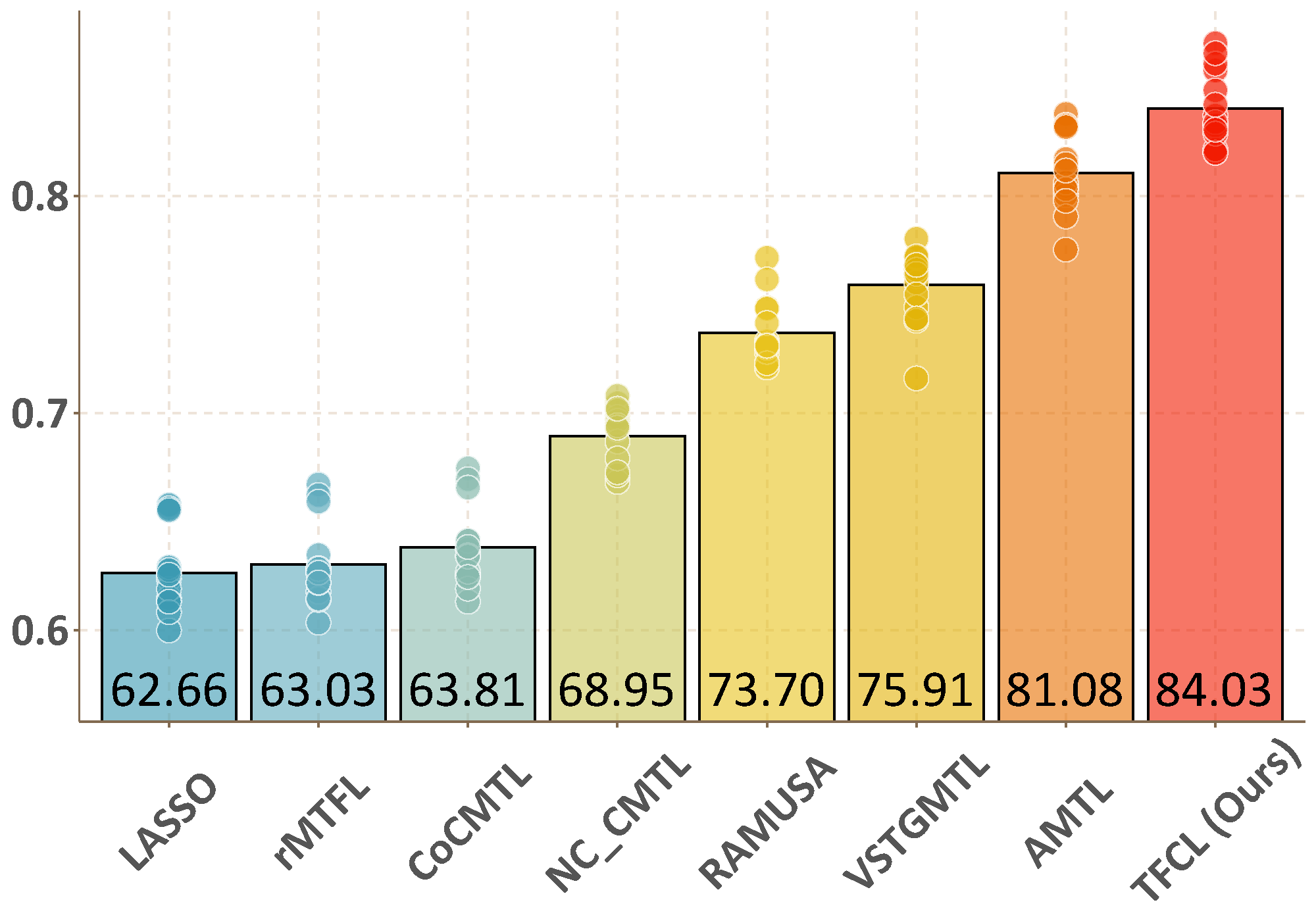}  
    }

\subfigure[SUN OA]{\label{fig:perf_sun_oa} 
      \includegraphics[width=0.3\textwidth]{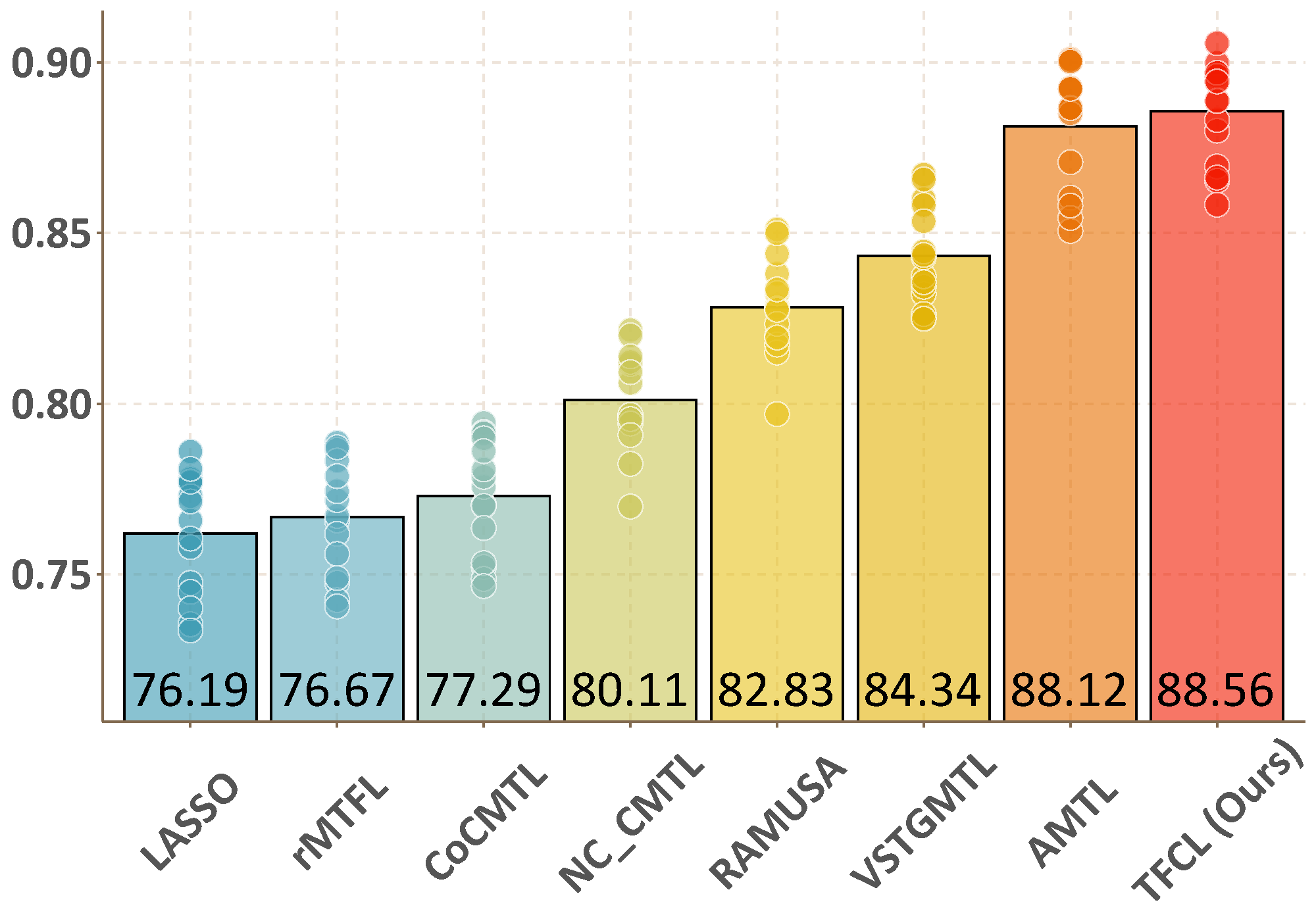}  
    }
\subfigure[SUN RU]{\label{fig:perf_sun_ru} 
      \includegraphics[width=0.3\textwidth]{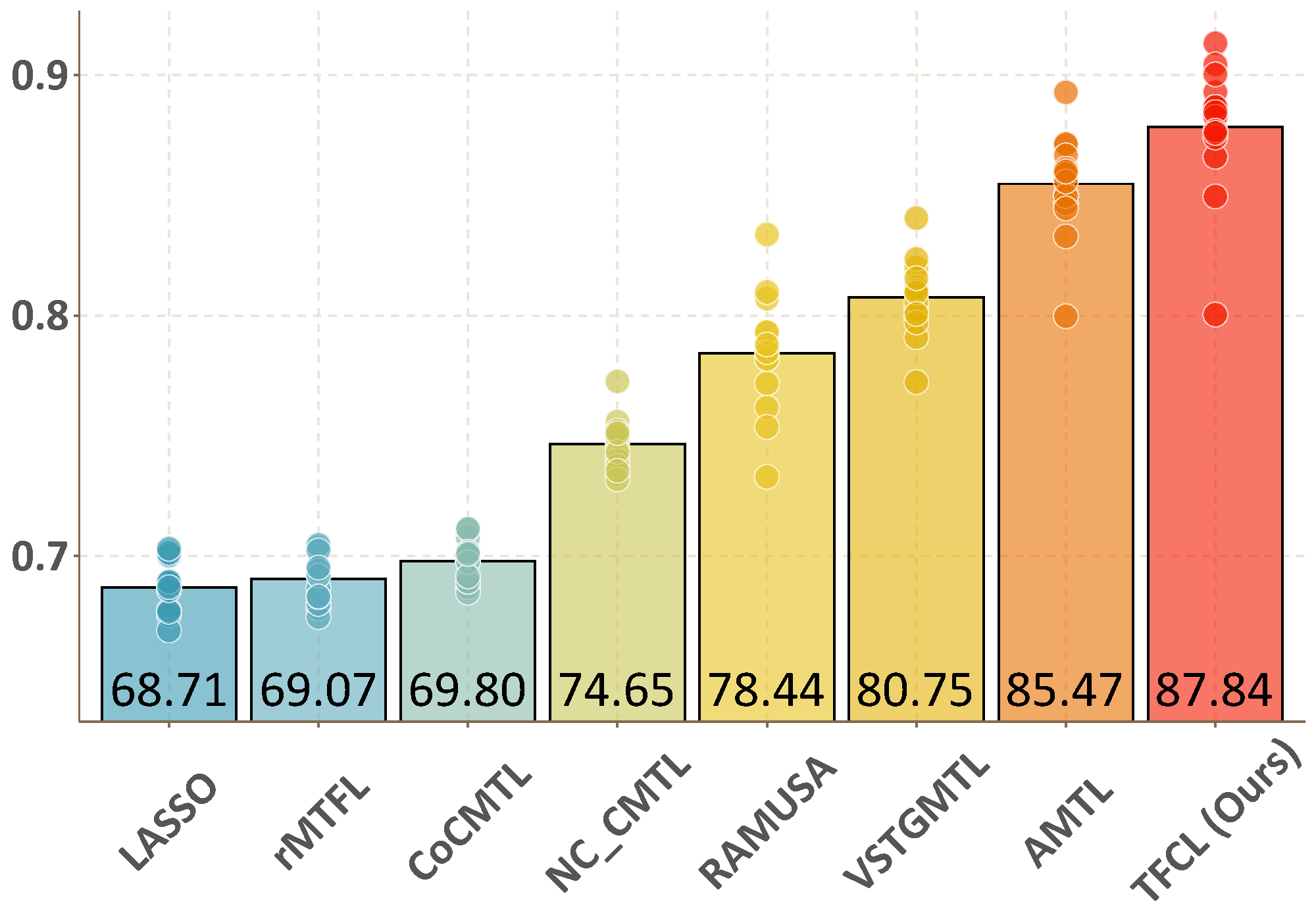}  
    }
\subfigure[SUN SO]{\label{fig:perf_sun_sm} 
      \includegraphics[width=0.3\textwidth]{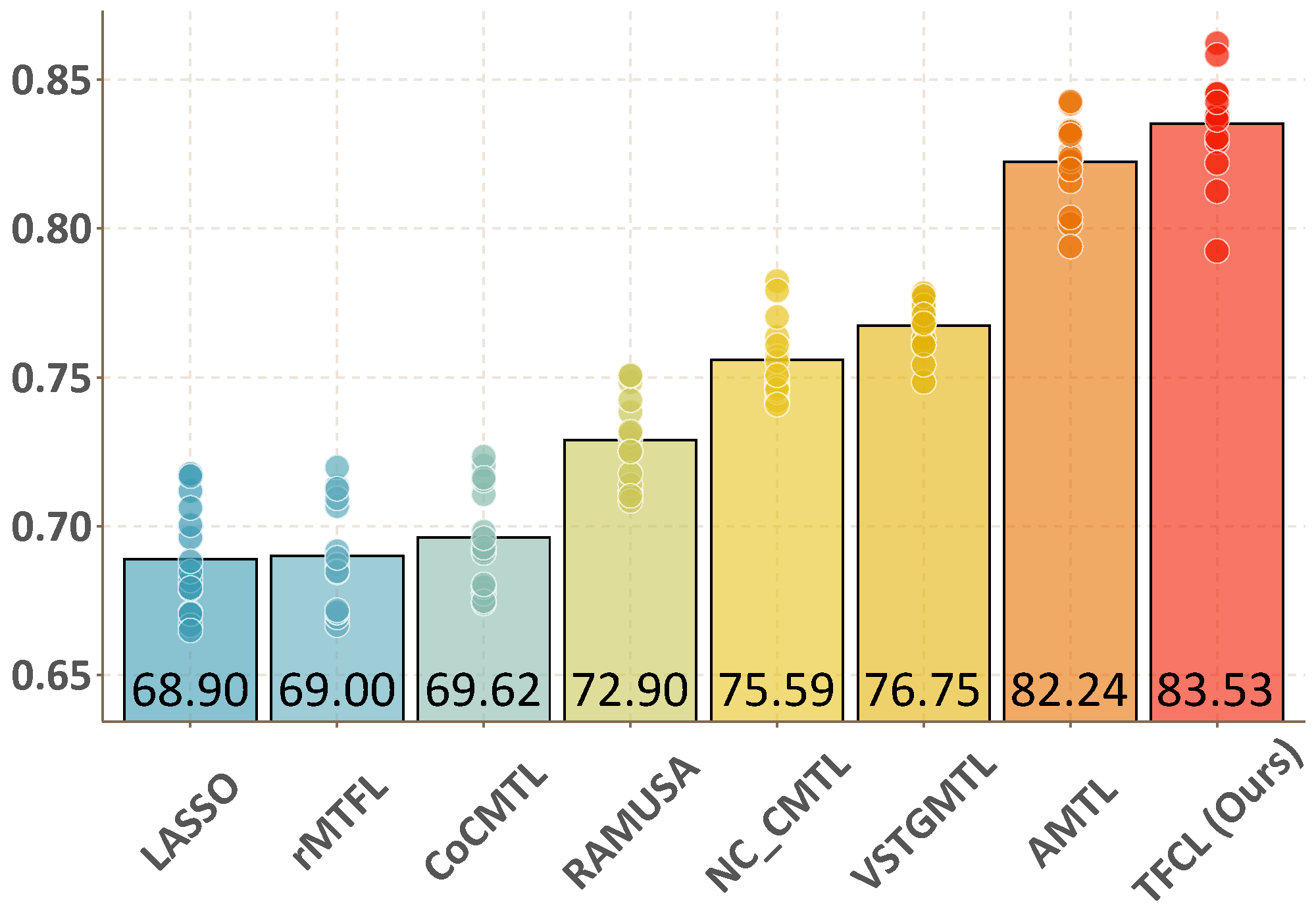}  
    }  
  \end{center}
    \caption{ \label{fig:bar}\textbf{Overall AUC comparisons with Boxplot.}  Here the scatters represent all the results coming from all the attributes for  Shoes Dataset and  Sun Dataset each with 15 repetitions. To show the statistical trends, we plot boxplots for the two datasets, respectively. Here the scatters are the 15 repetitions over the data splits, while the height of the bar represents the mean performance. }
\end{figure*}

\begin{figure}[h]\label{fig:ablation}
  \centering
     \subfigure[Shoes BR]{\label{fig:ab:br}
      \includegraphics[width=0.3\columnwidth]{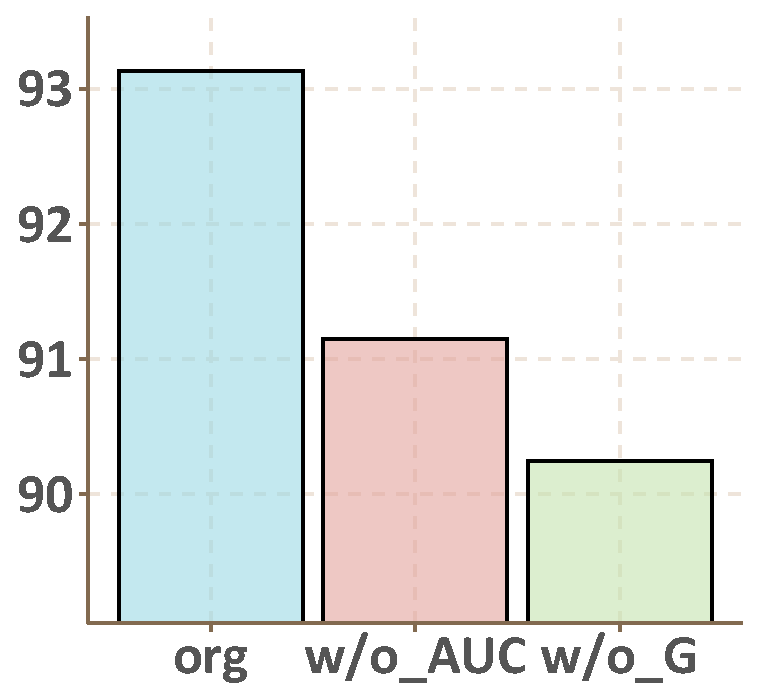}  
     }
     \subfigure[Shoes CM]{
      \includegraphics[width=0.3\columnwidth]{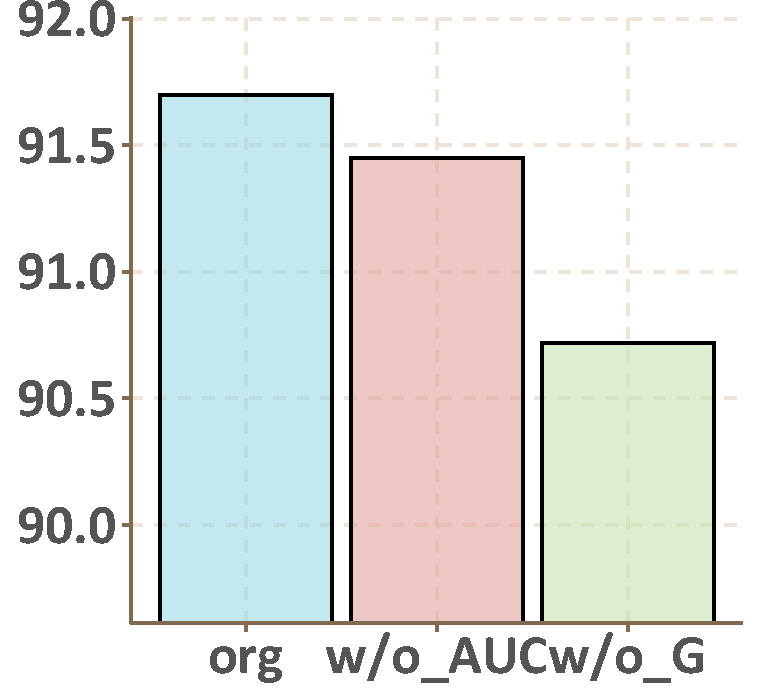}  
     } 
     \subfigure[Shoes FA]{
      \includegraphics[width=0.3\columnwidth]{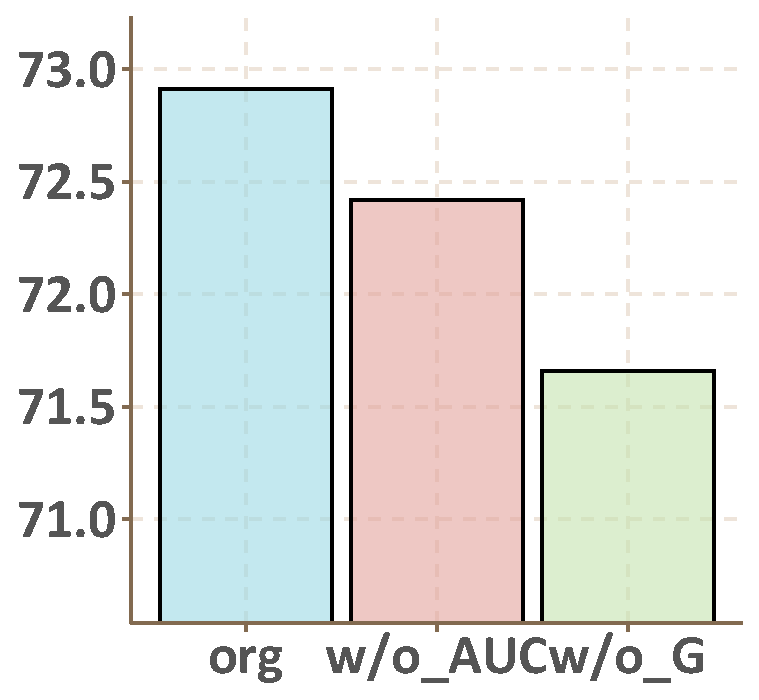}  
     }

    \subfigure[Shoes FM]{
      \includegraphics[width=0.3\columnwidth]{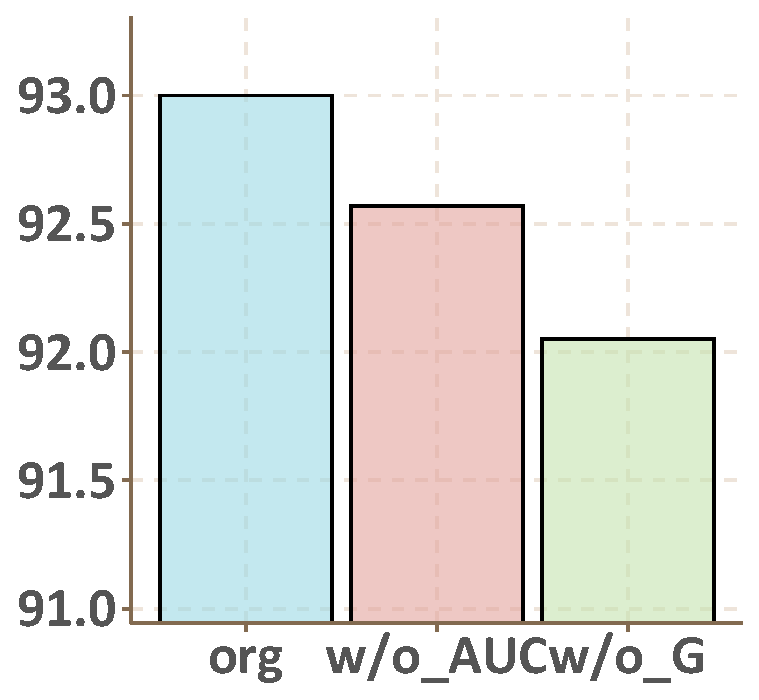}  
     }
     \subfigure[Shoes OP]{
      \includegraphics[width=0.3\columnwidth]{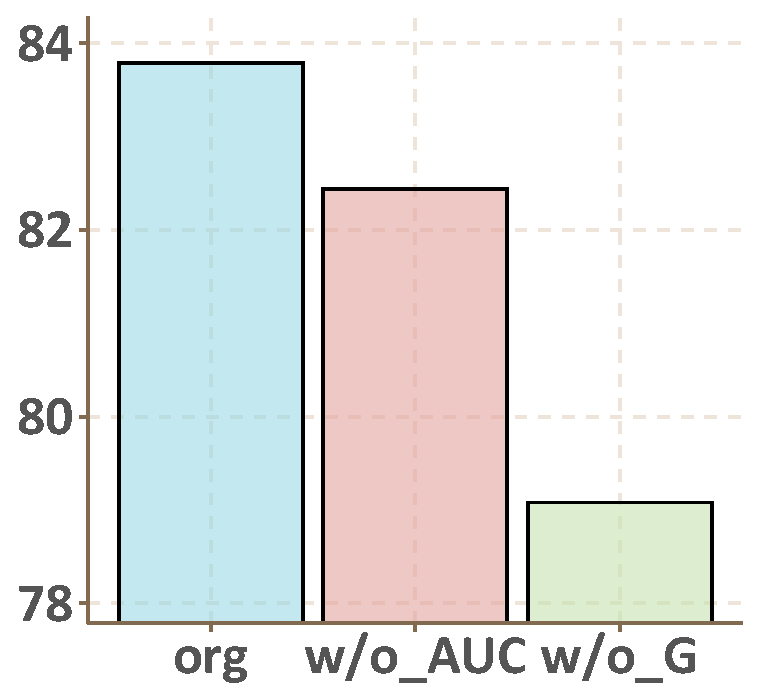}  
     }
     \subfigure[Shoes OR]{
      \includegraphics[width=0.3\columnwidth]{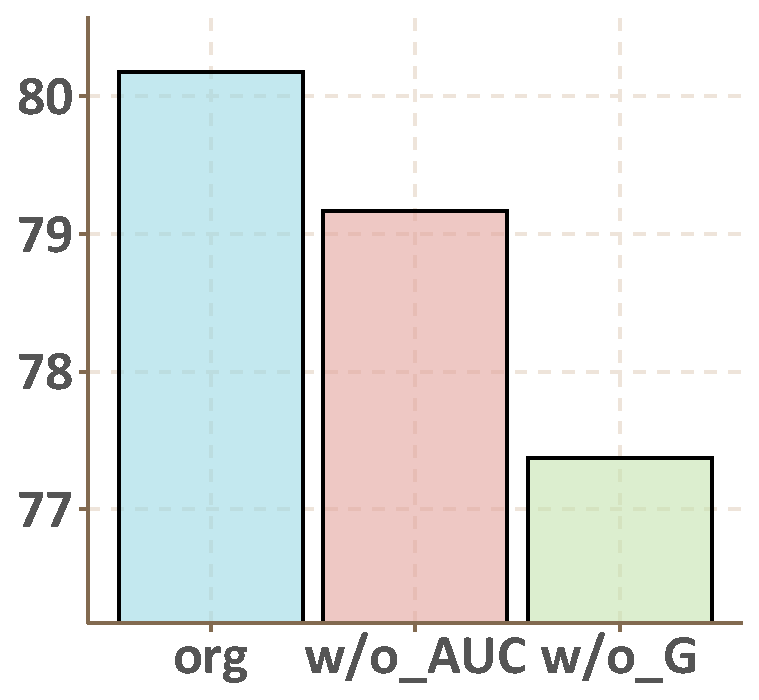}  
     }

    \subfigure[Shoes PT]{{\label{fig:ab:PT}}
      \includegraphics[width=0.3\columnwidth]{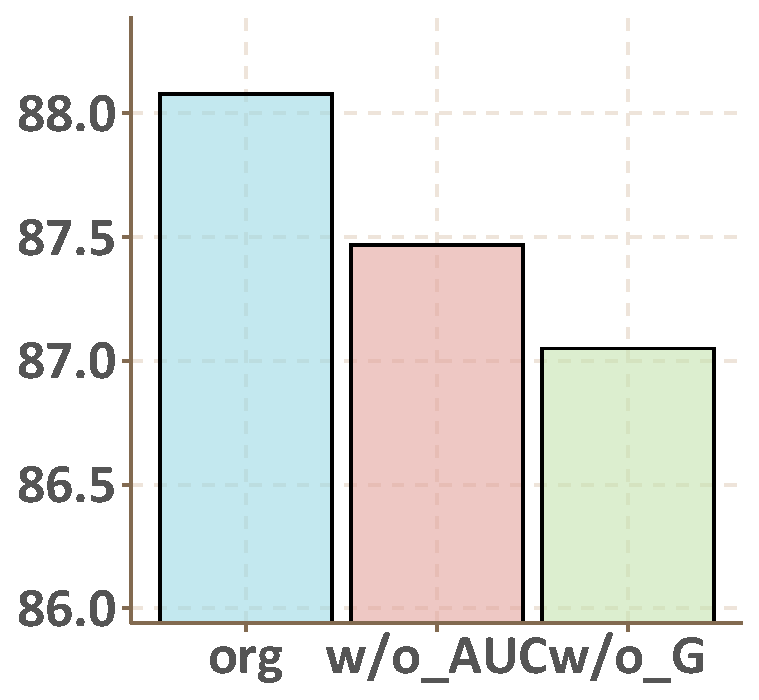}  
     }
     \subfigure[Sun CL]{\label{fig:ab:CL}
      \includegraphics[width=0.3\columnwidth]{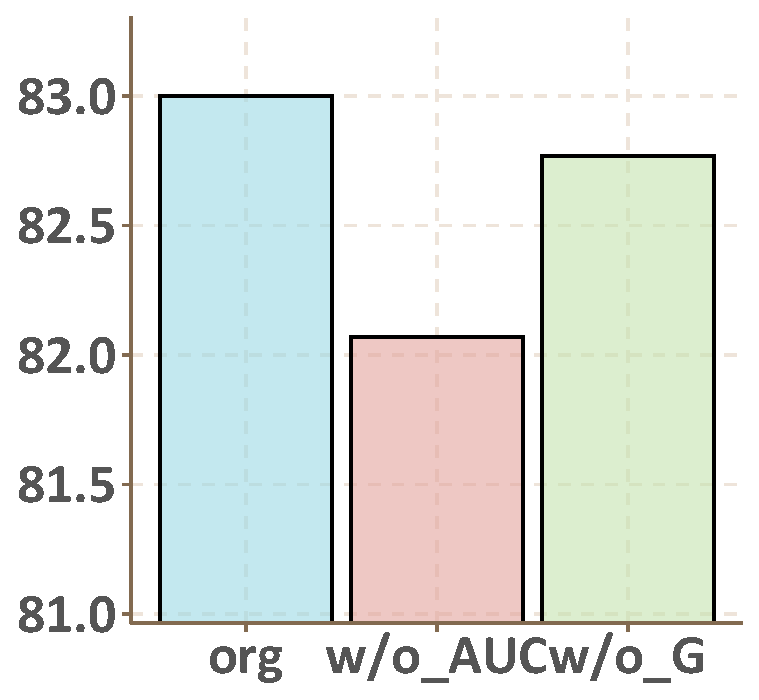}  
     }
     \subfigure[Sun MO]{
      \includegraphics[width=0.3\columnwidth]{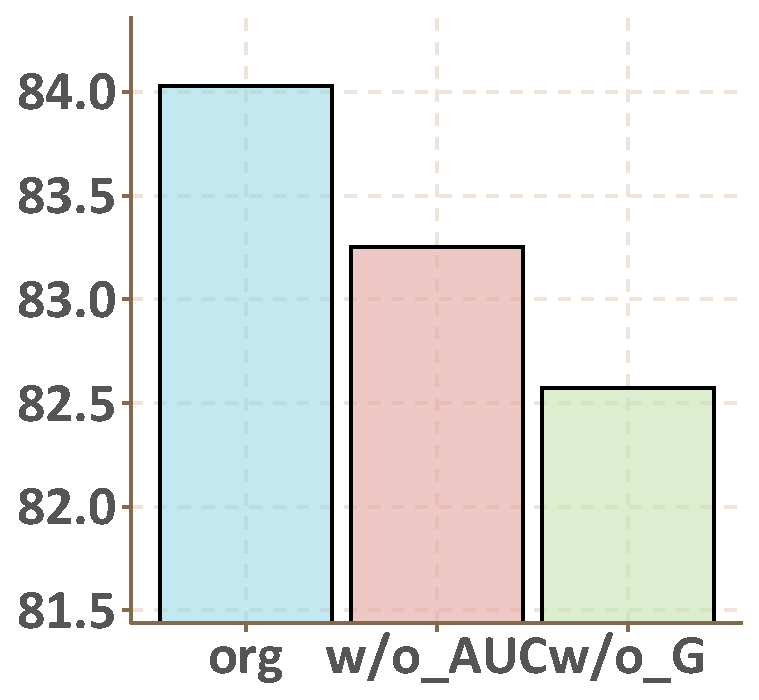}  
     }

   \subfigure[Shoes OA]{
      \includegraphics[width=0.3\columnwidth]{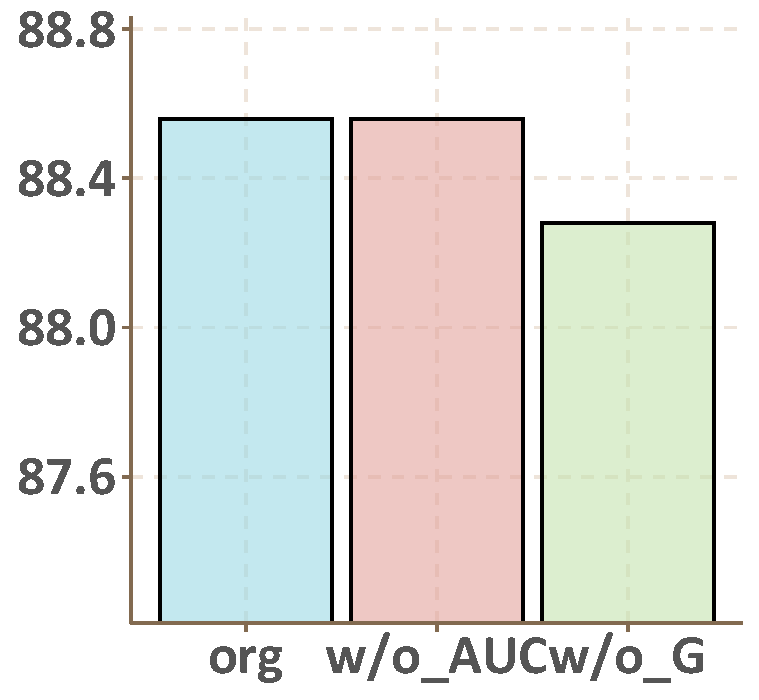}  
     }
     \subfigure[Sun RU]{
      \includegraphics[width=0.3\columnwidth]{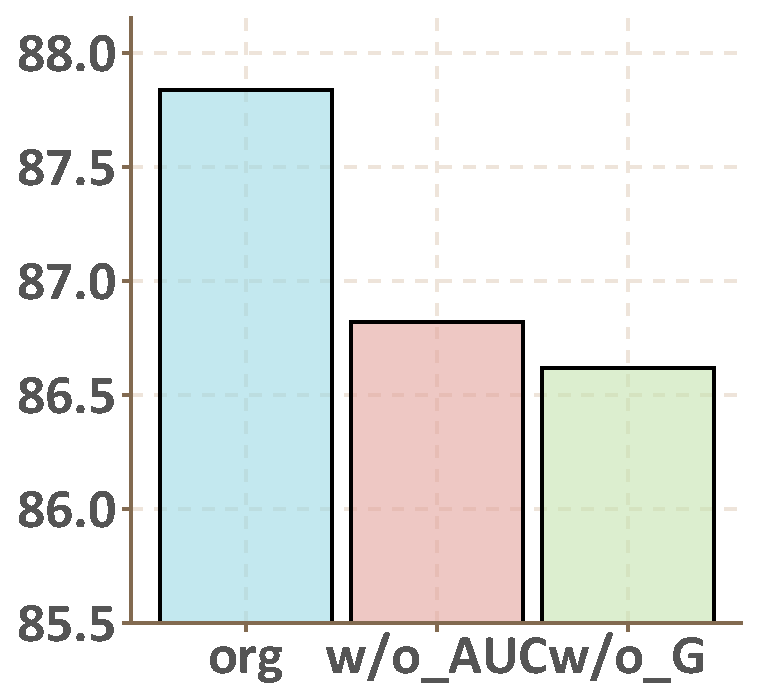}  
     }
     \subfigure[Sun SO]{\label{fig:ab:SO}
      \includegraphics[width=0.3\columnwidth]{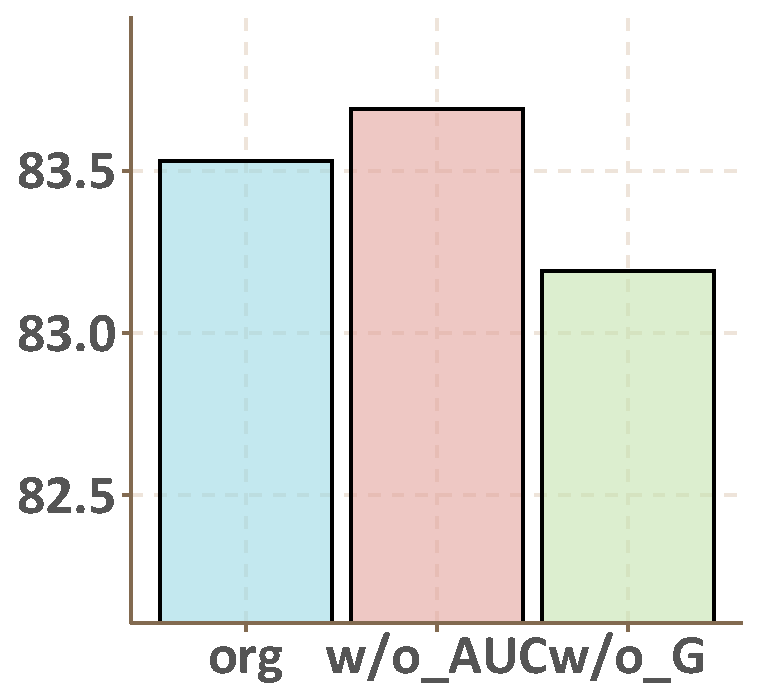}  
     }

  \caption{\textbf{Ablation Results (\uppercase\expandafter{\romannumeral 1})} The $y$-axis represents the average AUC score on the test set, and the $x$-axis represents different algorithms: \textbf{Org} shows the performance of our original TFCL algorithm; \textbf{w/o\_AUC} shows the performance of our algorithm when the AUC loss is replaced with the squared loss; \textbf{w/o\_G} shows the performance when our proposed co-grouping factor is removed from the model.  }
  \label{fig:ab1}

\end{figure}

\begin{figure}[h]
  \centering
     \subfigure[Shoes BR]{\label{fig:ab2:br}
      \includegraphics[width=0.3\columnwidth]{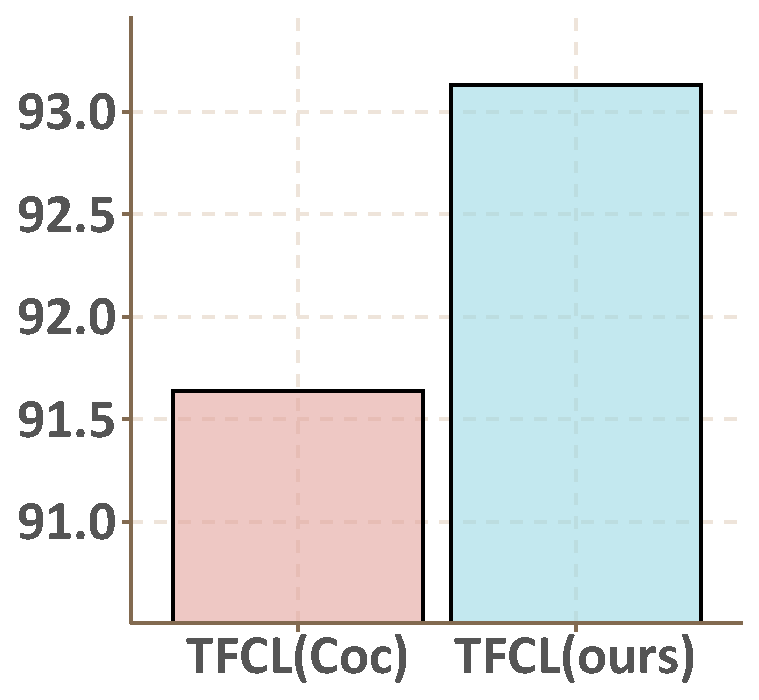}  
     }
     \subfigure[Shoes CM]{
      \includegraphics[width=0.3\columnwidth]{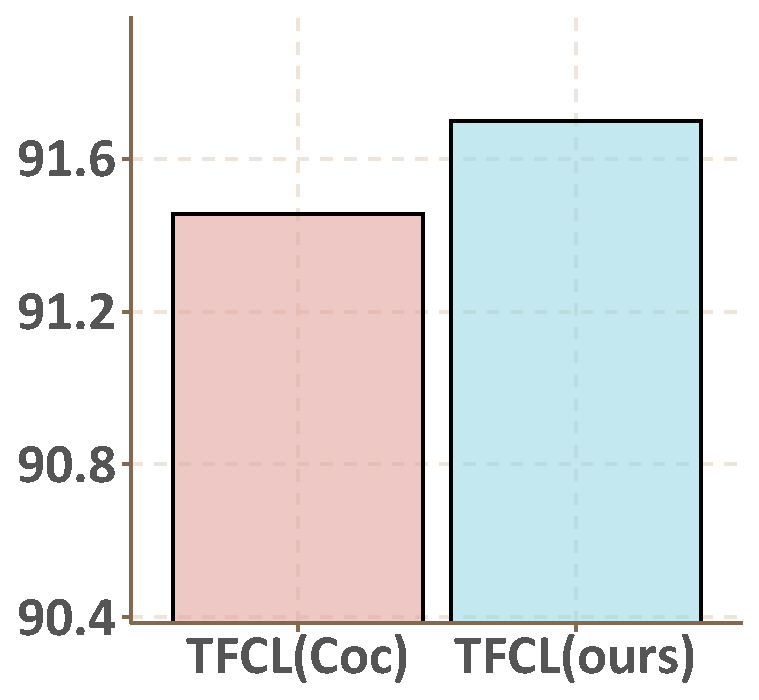}  
     } 
     \subfigure[Shoes FA]{
      \includegraphics[width=0.3\columnwidth]{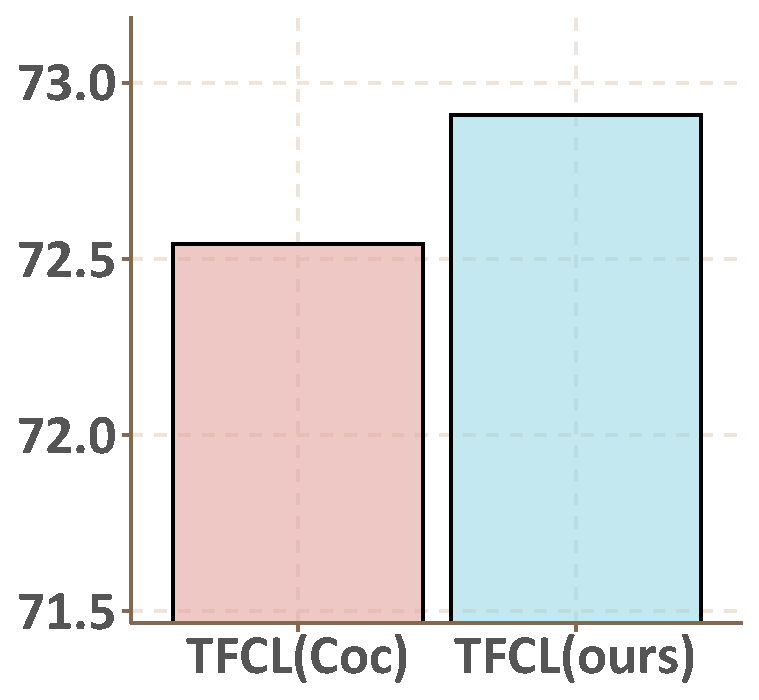}  
     }

    \subfigure[Shoes FM]{
      \includegraphics[width=0.3\columnwidth]{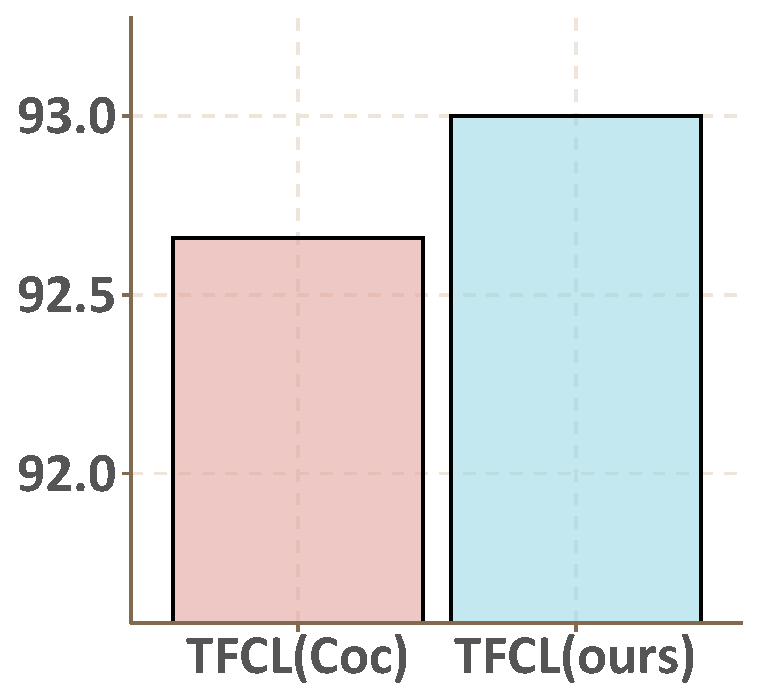}  
     }
     \subfigure[Shoes OP]{
      \includegraphics[width=0.3\columnwidth]{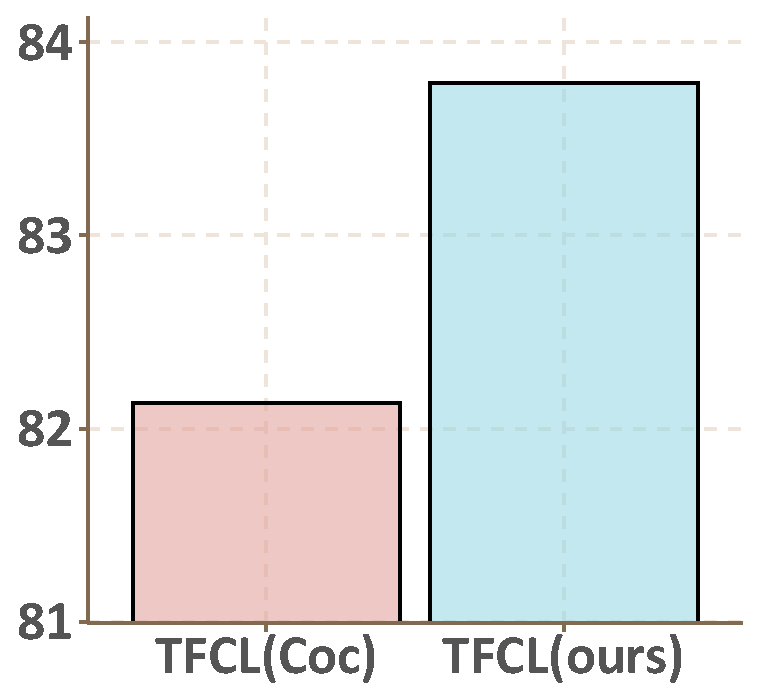}  
     }
     \subfigure[Shoes OR]{
      \includegraphics[width=0.3\columnwidth]{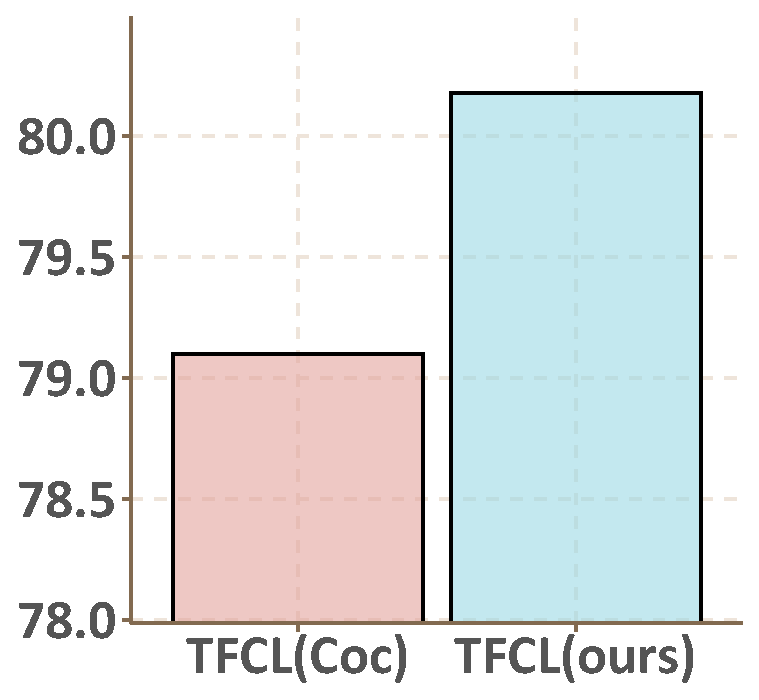}  
     }

    \subfigure[Shoes PT]{{\label{fig:ab2:PT}}
      \includegraphics[width=0.3\columnwidth]{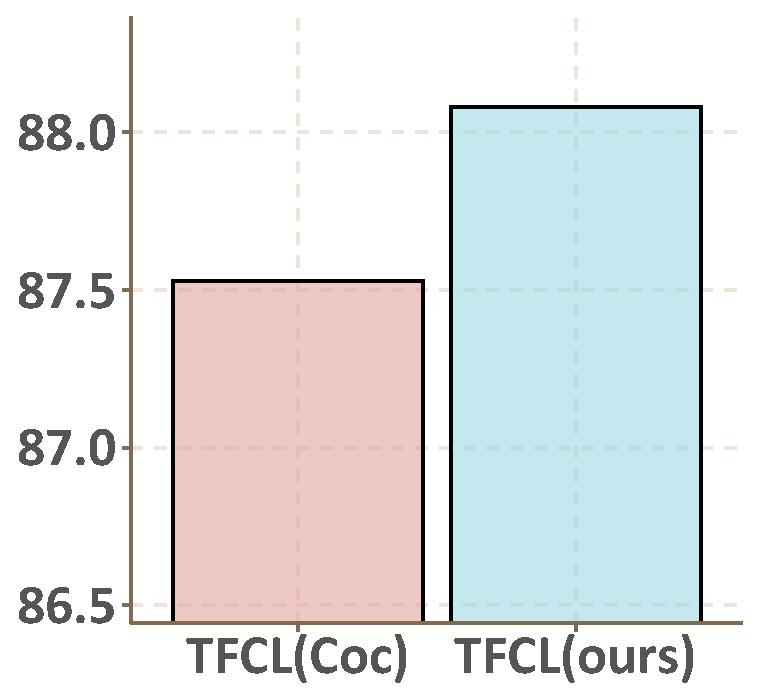}  
     }
     \subfigure[Sun CL]{\label{fig:ab2:CL}
      \includegraphics[width=0.3\columnwidth]{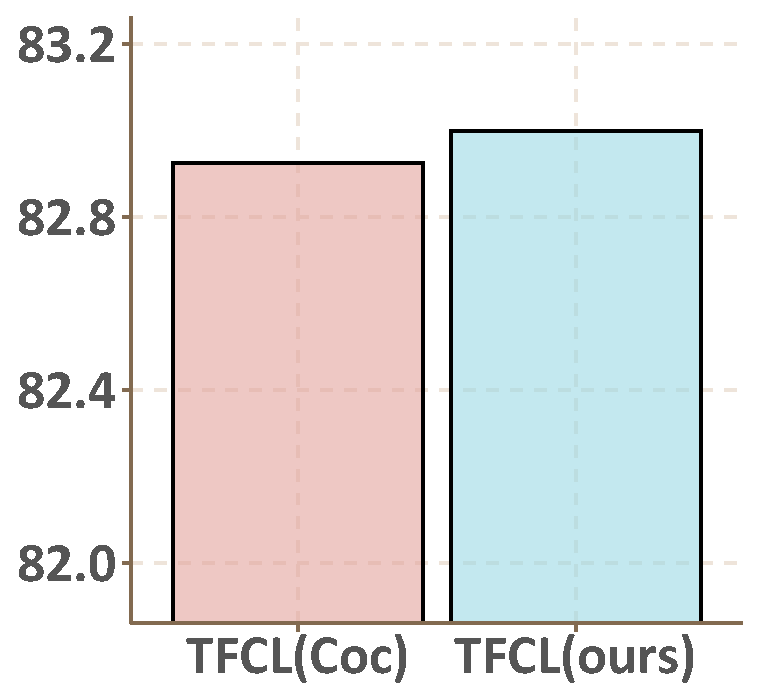}  
     }
     \subfigure[Sun MO]{
      \includegraphics[width=0.3\columnwidth]{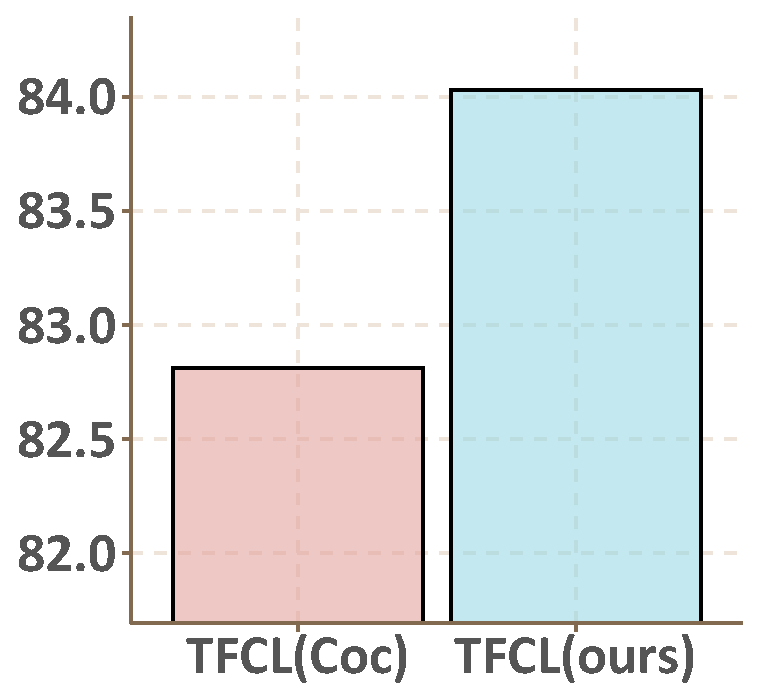}  
     }

   \subfigure[Shoes OA]{
      \includegraphics[width=0.3\columnwidth]{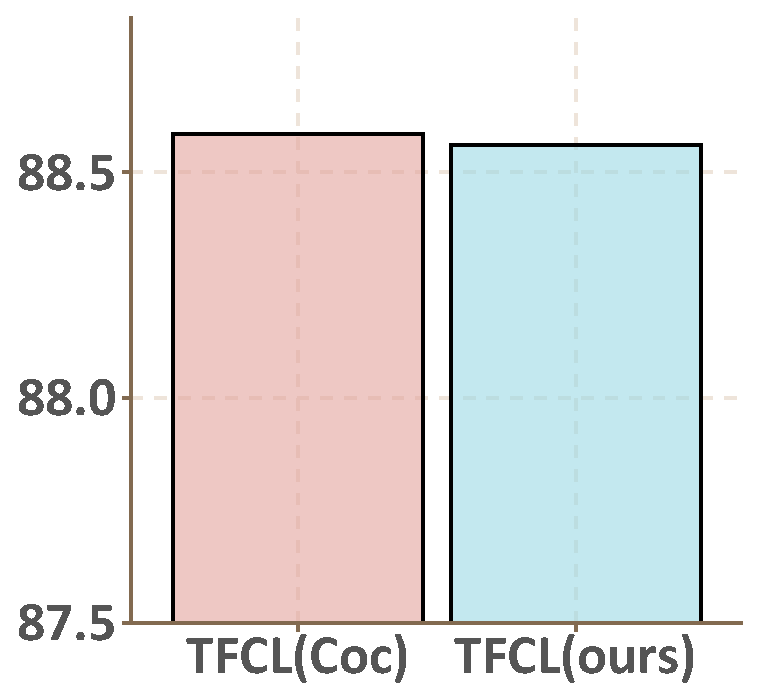}  
     }
     \subfigure[Sun RU]{
      \includegraphics[width=0.3\columnwidth]{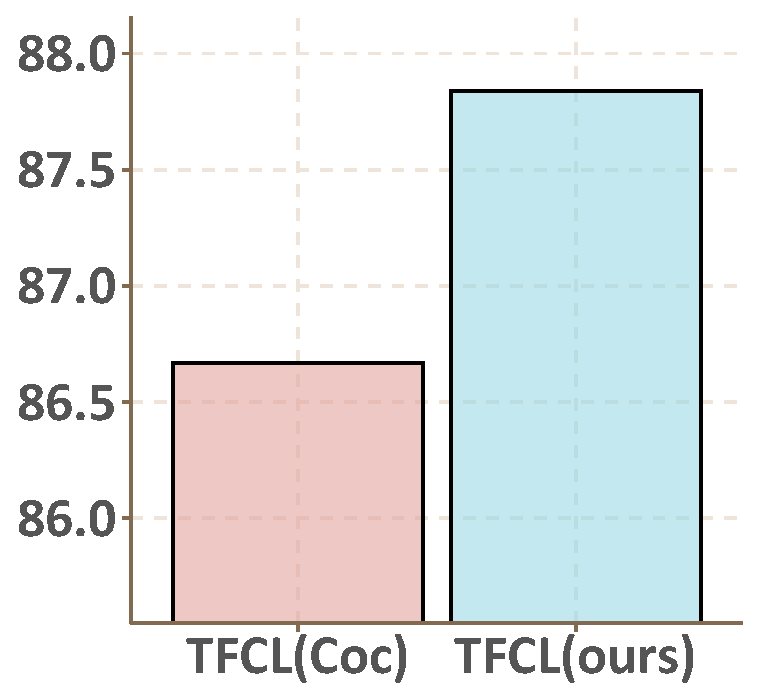}  
     }
     \subfigure[Sun SO]{\label{fig:ab2:SO}
      \includegraphics[width=0.3\columnwidth]{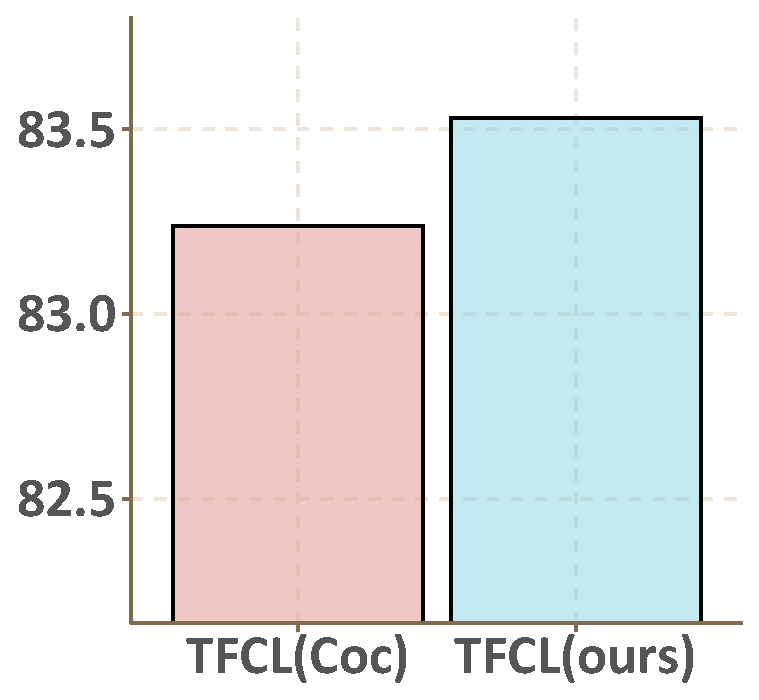}  
     }
  \caption{{\textbf{Ablation Results (\uppercase\expandafter{\romannumeral 2})} The $y$-axis represents the average AUC score on the test set, and the $x$-axis represents different algorithms: TFCL(Coc) shows the performance of TFCL algorithm with our co-grouping regularizer replaced by the corresponding regularizer in CocMTL; TFCL(ours)  shows the performance of our original algorithm.}  }
  \label{fig:ab2}
\end{figure}

\begin{figure}[!h]
\centering
    \subfigure[Shoes Brown]{\label{fig:dist_shoes_brown} 
      \includegraphics[width=0.45\columnwidth]{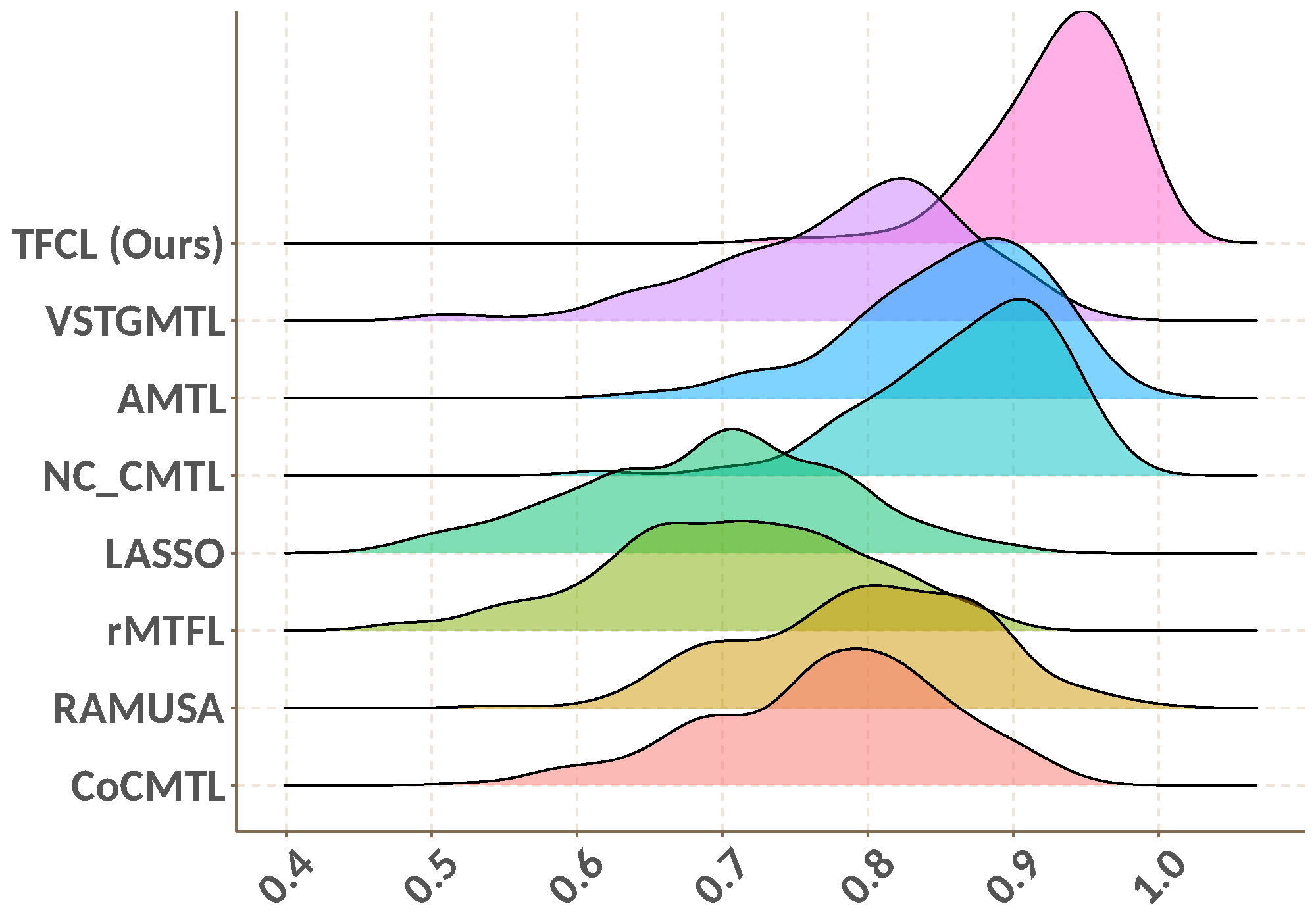} 
    } 
\subfigure[SUN Open Area]{\label{fig:dist_sun_oa} 
      \includegraphics[width=0.45\columnwidth]{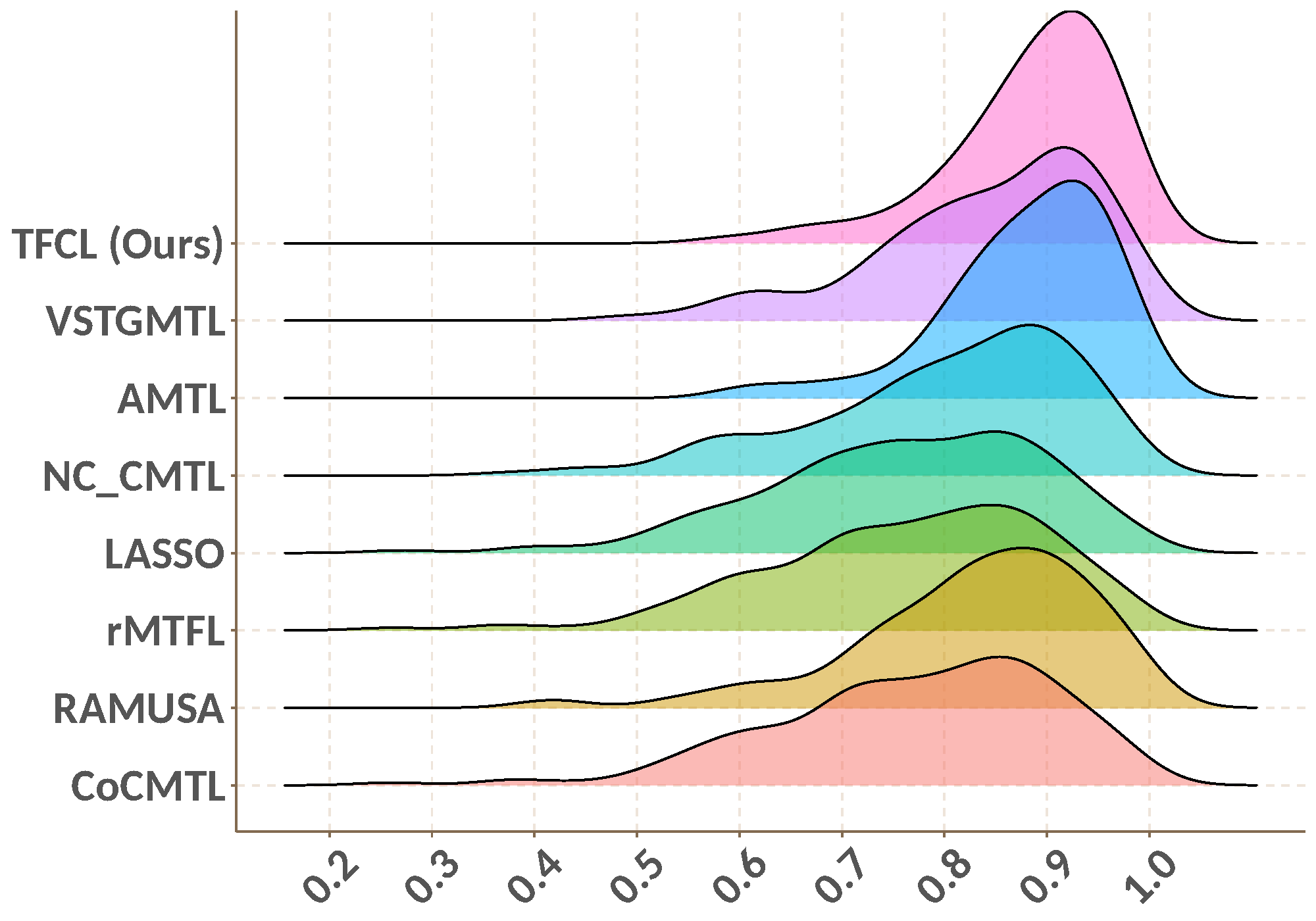}  
    }

    \caption{\textbf{Fine-grained comparison based on User AUC Score Distributions}. In this figure, we plot the user-specific performance distribution produced by all the involved algorithms for (a) the Brown attribute of shoes Dataset and (b) the Open Area attribute for Sun Dataset. In this figure, we investigate whether TFCL could benefit the performance distribution over users. The results show that TFCL tends to leverage more compact performance distribution. }
 % \end{mdframed}
\end{figure}

\subsection{Simulated Dataset}\label{sec:sim}
To test the effectiveness of the basic TFCL framework, we generate a simple simulated annotation dataset with 100 simulated users, where the features and AUC scores are produced according to linear models with a block-diagonal task matrix. For each user, the 200 samples are generated as $\boldsymbol{X}^{(i)} \in \mathbb{R}^{200\times80}$ and $\boldsymbol{x}^{(i)}_k \sim \mathbb{N}(0,\boldsymbol{I}_{80})$. We generate a \textit{block-diagonal} task matrix $\boldsymbol{W}$ in a manner as the following.  Specifically, we create 5 blocks for $\boldsymbol{W}$, in a way that $\bm{W} = \bigoplus_{i=1}^5\boldsymbol{W}_i$ where $\boldsymbol{W}_1 \in \mathbb{R}^{20 \times 20}$, $\boldsymbol{W}_2 \in \mathbb{R}^{20 \times 20}$, $\boldsymbol{W}_3 \in \mathbb{R}^{10 \times 20}$, $\boldsymbol{W}_4 \in \mathbb{R}^{20 \times 20}$, $\boldsymbol{W}_5 \in \mathbb{R}^{10 \times 20}$. For each of the block, the elements are generated from the distribution $\mathbb{N}(C_i, 2.5^2)$ (generated via element-wise sampling) where $C_i \sim \mathbb{U}(0,K_i)$ is the centroid for the corresponding cluster, $K_1= 5, K_2 =5, K_3 =10, K_4 = 15, K_5=20$. For each user, the scoring functions are generated as $\boldsymbol{s}^{(i)}= \boldsymbol{X}^{(i)}(\boldsymbol{W}^{(i)}+\epsilon^{(i)})$, where $\epsilon^{(i)} \in \mathbb{R}^{200 \times 1}$, and $\epsilon^{(i)} \sim \mathbb{N}(0,0.1^2\boldsymbol{I}_{200})$. To generate the labels $\boldsymbol{Y}^{(i)}$ for each $i$, the top 50 instances with the highest scores are labeled as 1, while the remaining instances are labeled as -1.\\
% {\color{blue}
\noindent{\textbf{Performance Comparison.}} The performance of all the involved algorithms on the simulated dataset is recorded in Fig.\ref{fig:box}. The corresponding results show that our proposed algorithm consistently outperforms other competitors. 
In particular, over the average results for 15 repetitions, our method achieves an AUC score of 93.66, while the second-best method obtains a score of 91.07. This leads to a 2.59 AUC improvement with respect to the second-best algorithm.\\
\noindent \textbf{Ablation Study.} Next, we carry out an ablation study to see how the single effect of AUC loss and the grouping factor work. Specifically, the corresponding results are shown in Tab.\ref{tab:abl}, where we compare our original performance with a baseline where the AUC loss is replaced with the instance-wise weighted squared loss. The results show that: (1) the original result outperforms this baseline, which suggests that adopting an AUC optimization is effective; (2) the baseline outperforms the other competitors, which suggests that the grouping factor is more effective than other competitors under the simulated dataset.\\
\noindent\textbf{Convergence analysis.} As shown in Fig.\ref{fig:conv:obj}-Fig.\ref{fig:conv:du}, the proposed method enjoys good convergence property on both the objective function and parameter sequence, which coincides with the theoretical results. \\
\noindent \textbf{Visualization of the Spectral Embeddings.} To show how the spectral embeddings evolve through the algorithm iterations, we plot the corresponding embeddings in the first five iterations in Fig.\ref{fig:embed}. The results suggest that spectral embeddings rapidly form stable and clear clusters after the second iteration. This fact validates our theoretical analysis concerning the grouping power of spectral embeddings. From Fig.\ref{fig:conv} and Fig.\ref{fig:embed}, one can find a close connection between the iteration curve and the evolution of the embedding space. To see this, recall the details of Alg.1, the spectral embeddings are optimized along with $\bm{V}$ in one of our subproblem, it thus contributes to the reduction of the loss function. Practically, this could be validated by Fig.\ref{fig:conv}  and Fig.\ref{fig:embed}. In Fig.\ref{fig:conv}, we see that the loss reduces fast and reaches convergence after 5 iterations. In Fig.\ref{fig:embed}, one can see that the embeddings also converge to their corresponding clusters within 5 iterations.        \\
\textbf{Structure Recovery.} Besides generalized performance, we could also empirically verify the ability of our algorithm to recover the expected structures on parameters $\boldsymbol{W}$. With the same simulated dataset, we compare the parameters $\boldsymbol{W}$ learned from the involved algorithms and the Ground Truth in Fig.\ref{fig:struc}. The results show that our proposed method could recover a much clearer structure than other competitors. Meanwhile, we see that all the competitors could roughly recover a block-diagonal outline. However, different methods suffer from different degrees of off-diagonal noises. This could be understood from an algebraic analysis. For linear models, the predictive function is defined as $\hat{\bm{y}}(\bm{X}) = \bm{X}\bm{W}$. Moreover, we note that the true parameters are generated by a linear function $\bm{X}\bm{W}^\star$. If $\bm{X}$ is not fully ranked, we have $\bm{X}\bm{W} = \bm{X}\bm{W}^\star$ whenever $\bm{W} = \bm{W}^\star + \bm{W}'$ and $\bm{W}' \in \mathsf{Null}(\bm{X})$. Generally, $\bm{W}'$ has off-diagonal elements and naturally leads to the observed noise in Fig.\ref{fig:struc}. Obviously, without a block-diagonal regularizer, it is hard to avoid $\bm{W}'$ even for a global optimal solution. Moreover, since $\bm{W}'$ is only related to the observed data $\bm{X}$, there is a risk of over-fitting, especially in our case when off-diagonal noise is not compatible with the simulated dataset. By eliminating the off-diagonal noise, our model shows a significant performance improvement shown in Fig.\ref{fig:box}.

\subsection{Shoes Dataset}\label{sec:shoes}
\textbf{Dataset Description.} The Shoes Dataset \cite{user2} is a popular attribute prediction benchmark, which consists of 14,658 online shopping shoe images with 7 attributes (BR: brown, CM: comfortable, FA: fashionable, FM: formal, OP: open, ON: ornate, PT: pointy). In this dataset, annotators with various knowledge are invited to judge whether a specific attribute is present in an image. Specifically, each user is randomly assigned with 50 images, and there are at least 190 users for each attribute who take part in the process, which results in a total volume of 90,000 annotations.\\
\noindent \textbf{Pre-processing.} For input features, we  adopt the GIST and color histogram provided in \cite{shoes} as input features.  Then we perform PCA to reduce the redundancy of these features before training. Meanwhile, we notice that users who extremely prefer to provide merely one class of labels may lead to large biases. To eliminate such effect, we manually remove users who give less than 8 annotations for the minority class.\\
\noindent \textbf{Performance comparison.}
The average performances of 15 repetitions for Shoes dataset are shown in the left side of Fig.\ref{fig:bar} where the scatters show the 15 observations over different dataset splits and the bar plots show the average performance over 15 repetitions. Then we could make the following observations: 1) Our proposed algorithm consistently outperforms all the competitors significantly for all the attributes on Shoes dataset. It is worth mentioning that, for Shoes dataset, our method outperforms the second best method by the AUC score of 6.22, 1.88, 2.34, 2.38, 3.66, 3.27, 2.85 in terms of \emph{BR}, \emph{CM}, \emph{FA}, \emph{FM}, \emph{OP}, \emph{OR}, \emph{PT}, respectively. 2) The models using low-rank constraints achieve much higher scores than those using sparsity constraints (LASSO and rMTFL). This is because there exist obvious correlations among users' annotations and the low-rank assumption could model these task (user) correlations much better. 3) AMTL outperforms the other low-rank constrained methods on all the datasets, as it explicitly models and reduces the influence of negative transfer via asymmetric learning. 4) The superiority of our method over the other feature-task correlation learning approaches (CoCMTL and VSTGMTL) justifies the TFCL framework. 5) Our proposed method outperforms AMTL over most attributes on both datasets. One possible reason is that our framework reasonably models the user annotation behaviors and thus is more suitable to the personalized attribute prediction problem. Moreover, our method provides extra effort to prevent the negative transfer across features and tasks.\\
% {\color{blue}
\noindent \textbf{Ablation Study.} (\uppercase\expandafter{\romannumeral 1}) Our proposed algorithm has two major components: one is the grouping factor with its regularizer and the other is the surrogate AUC loss. Now we see how these two components contribute to the improvements respectively. Specifically, we remove these two parts respectively from our original model, and show the corresponding results in Fig.\ref{fig:ab:br}-Fig.\ref{fig:ab:PT}.  In these figures, \textbf{Org} shows the performance of our original TFCL algorithm, \textbf{w/o\_AUC} shows the performance when the AUC loss is replaced with the instance-wise squared loss, and \textbf{w/o\_G} shows the performance when our proposed co-grouping factor is removed from the model. From the experimental results, we have the following observations: 1) Our original algorithm outperforms both competitors, which shows that the joint effect of the two components is better than the single effects. 2) In most cases, we find that removing the grouping factor shows a more significant performance reduction than replacing the AUC loss. This implies that the grouping effect tends to have a more effective impact on performance.  (\uppercase\expandafter{\romannumeral 2}) Note the CocMTL algorithm also developed a co-grouping regularizer, we further record the performance when our regularizer is replaced with the corresponding term in CoCMTL. This is shown in Fig.\ref{fig:ab2:br}-Fig.\ref{fig:ab2:PT}. The results suggest that our proposed regularizer could outperform the co-grouping regularizer in CoCMTL.\\
% }\\
\noindent \textbf{Fine-grained comparison.} After showing the effectiveness of our proposed method in coarse-grained comparison, we then visualize the personalized predictions to investigate fine-grained conclusions at user level. Take the \emph{brown} attribute on Shoes dataset as an example, we visualize the testing AUC score distributions over users for all the methods in Fig.\ref{fig:dist_shoes_brown}. As shown in this figure, our model achieves a higher mean and lower performance variance than the competitors. On the contrary, traditional approaches suffer from an obvious long-tail problem. The reason for such long-tail effect might be two-fold: 1) The traditional methods are more sensitive toward hard tasks, 2) the negative transfer issue is not sufficiently addressed. It is thus indicated that our method indeed promotes the collaborative learning to improve the performance of those hard tasks.\\

\subsection{Sun Dataset}\label{sec:sun}

\noindent \textbf{Dataset Description.} The Sun Dataset \cite{user2} contains 14,340 scene images from SUN Attribute Database \cite{sundata}, with personalized annotations over 5 attributes (CL: Cluttered, MO: Modern, OP: Opening Area, RU: Rustic, SO: Soothe). With a similar annotating procedure, 64,900 annotations are obtained in this dataset.\\
\noindent \textbf{Pre-processing.} For input features, we deploy the 2048-dim feature vectors extracted by the Inception-V3 \cite{inception} network for Sun's data. {The reason leading us to different feature extraction strategies lies in that the images in Shoes dataset are photographed on a white background, while images in Sun dataset usually suffer from much more complicated backgrounds.} The rest of the pre-processing follows that for the Shoes dataset.\\
\noindent \textbf{Performance comparison.}
The average performances of 15 repetitions for Sun dataset are shown in the left side of Fig.\ref{fig:bar} where the scatters show the 15 observations over different dataset splits and the bar plots show the average performance over 15 repetitions}. Similar to the Shoes dataset, we could make the following observations: 1) Our proposed algorithm consistently outperforms all the competitors significantly for all the attributes. Our method outperforms the second best method by the AUC score of 1.95, 2.95, 0.44, 2.37, 1.29 in terms of \emph{CL}, \emph{MO}, \emph{OA}, \emph{RU}, and \emph{SO}, respectively.  2) Moreover, we have a similar observation to the 2)-5) for the shoes dataset.\\
% {\color{blue}
\noindent \textbf{Ablation Study.} Similar to the Shoes Dataset, we show the corresponding ablation results for Sun Dataset in Fig.\ref{fig:ab:CL}-Fig.\ref{fig:ab:SO} and  Fig.\ref{fig:ab2:CL}-Fig.\ref{fig:ab2:SO}. From the experimental results, we have the following observations: 1) Our original algorithm outperforms both competitors for all attributes except Soothing. 2) Removing the grouping factor shows a relatively significant effect on performance reduction than replacing the square AUC loss. 3) Our proposed regularizer could outperform the co-grouping regularizer in CoCMTL.
% }
\\ \noindent \textbf{Fine-grained comparison.} We then visualize the user-specific performance distribution for the sun dataset. Take the \emph{Open Area} attribute as an example, we visualize the testing AUC score distributions over users for all the methods in Fig.\ref{fig:dist_sun_oa}. As shown in this figure, our model achieves a more compact distribution than other competitors. This again shows our method could alleviate the negative transfer issue.\\

\section{Conclusion}
In this paper, we develop a novel multi-task learning method called TFCL, which prevents negative transfer simultaneously at the feature and task level via a co-grouping regularization. An  optimization method is then proposed to solve the model parameters, which iteratively solves convex subproblems. Moreover, we provide a novel closed-form solution for one of the subproblems, which paves the way to our proof of the global convergence property.  Meanwhile, the solution produced by the optimization method shows a close connection between our method and the optimal transport problem, which brings new insight into how negative transfer could be prevented across features and tasks. We further extend the TFCL method to the problem of personalized attribute learning via a hierarchical model decomposition scheme. In order to validate the proposed methods, we perform systematic experiments on a simulated dataset and two real-world datasets. Results on the simulated dataset show that TFCL could indeed recover the correct co-grouping structure with good performance, and results on the real-world datasets further verify the effectiveness of our proposed model on the problem of personalized attribute prediction.
\bibliographystyle{abbrv}
\bibliography{cite}
\ifCLASSOPTIONcaptionsoff
\newpage
\fi

\vspace{0cm}\begin{IEEEbiography}[{\includegraphics[width=1in,height=1.25in,clip,keepaspectratio]{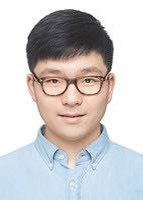}}]{\textbf{Zhiyong Yang} received the M.E. degree in computer
science and technology from University of Science
and Technology Beijing (USTB) in 2017. He is currently pursuing the Ph.D. degree with University of
Chinese Academy of Sciences. His research interests
lie in theoretical and algorithmic aspects of machine
learning, with special focus on multi-task learning,
meta-learning, and learning with non-decomposable
metrics. He has authored or coauthored several academic papers in top-tier international conferences and journals
including NeurIPS/CVPR/AAAI/T-PAMI/T-IP. He served as a reviewer for several top-tier conferences such as ICML, NeurIPS and AAAI.
}
\end{IEEEbiography}
\vspace{0cm}\begin{IEEEbiography}[{\includegraphics[width=1in,height=1.25in,clip,keepaspectratio]{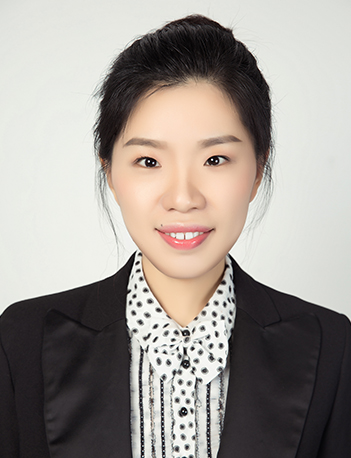}}]{\textbf{Qianqian Xu} received the B.S. degree in computer
		science from China University of Mining
		and Technology in 2007 and the Ph.D. degree
		in computer science from University of Chinese
		Academy of Sciences in 2013. She is currently
		an Associate Professor with the Institute of Computing
		Technology, Chinese Academy of Sciences,
		Beijing, China. Her research interests include
		statistical machine learning, with applications
		in multimedia and computer vision. She has
		authored or coauthored 30+ academic papers in
		prestigious international journals and conferences, including T-PAMI/T-IP/T-KDE/ICML/NeurIPS/CVPR/AAAI, {etc}. She served as a reviewer for several top-tier conferences such as ICML, NeurIPS, ICLR, CVPR, ECCV, AAAI, IJCAI, and ACM MM, etc.}
\end{IEEEbiography}
\vspace{0cm}\begin{IEEEbiography}[{\includegraphics[width=1in,height=1.25in,clip,keepaspectratio]{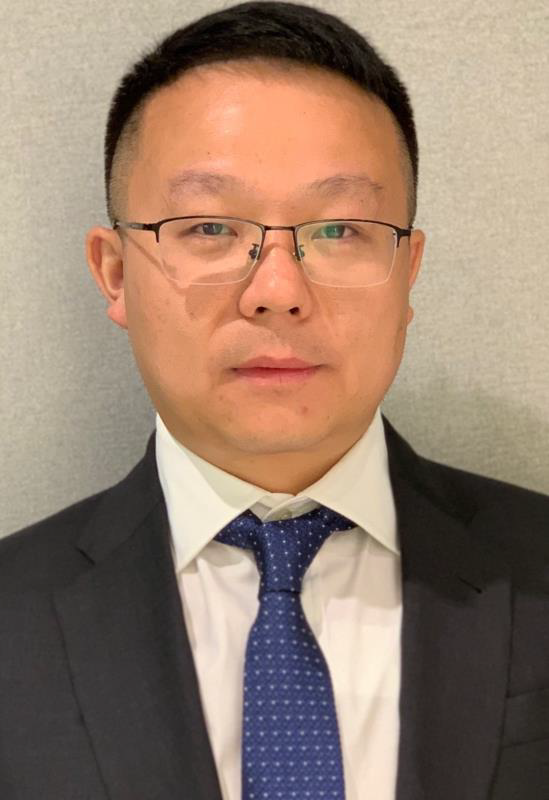}}]{\textbf{Xiaochun Cao}, Professor of the Institute of Information Engineering, Chinese Academy of Sciences. He received the B.E. and M.E. degrees both in computer science from Beihang University (BUAA), China, and the Ph.D. degree in computer science from the University of Central Florida, USA, with his dissertation nominated for the university level Outstanding Dissertation Award. After graduation, he spent about three years at ObjectVideo Inc. as a Research Scientist. From 2008 to 2012, he was a professor at Tianjin University. He has authored and coauthored over 100 journal and conference papers. In 2004 and 2010, he was the recipients of the Piero Zamperoni best student paper award at the International Conference on Pattern Recognition. He is a fellow of IET and a Senior Member of IEEE. He is an associate editor of IEEE Transactions on Image Processing, IEEE Transactions on Circuits and Systems for Video Technology and IEEE Transactions on Multimedia.}
\end{IEEEbiography}

\vspace{0cm}\begin{IEEEbiography}[{\includegraphics[width=1in,height=1.25in,clip,keepaspectratio]{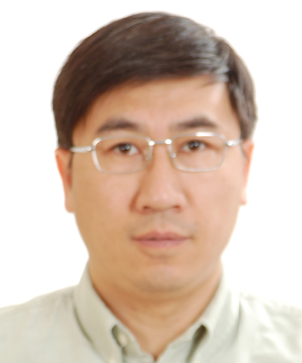}}]{\textbf{Qingming Huang} is a chair professor in the University of Chinese Academy of Sciences and an adjunct research professor in the Institute of Computing Technology, Chinese Academy of Sciences. He graduated with a Bachelor degree in Computer Science in 1988 and Ph.D. degree in Computer Engineering in 1994, both from Harbin Institute of Technology, China. His research areas include multimedia computing, image processing, computer vision and pattern recognition. He has authored or coauthored more than 400 academic papers in prestigious international journals and top-level international conferences. He is the associate editor of IEEE Trans. on CSVT and Acta Automatica Sinica, and the reviewer of various international journals including IEEE Trans. on PAMI, IEEE Trans. on Image Processing, IEEE Trans. on Multimedia, etc. He is a Fellow of IEEE and has served as general chair, program chair, track chair and TPC member for various conferences, including ACM Multimedia, CVPR, ICCV, ICME, ICMR, PCM, BigMM, PSIVT, etc.}
\end{IEEEbiography}

\newpage \onecolumn
\section*{\centering{Supplementary Materials}}

\appendices

\section{Proof of Theorem  \ref{eigsol}}
	\begin{proof}{\color{white} dsdads}
 \\ Denote $\alpha = \frac{\alpha_3}{2C}$, we have:\\
	\textbf{1) $\bm{\breve{\delta}(\LGbi) > \alpha > 0}$}.
	
		According to Sec.5.9.2  in \cite{cvx}, let $\Omega_1, \Omega_2 \in \mathbb{R}^{(d+T) \times (d+T)}$ be the Lagrangian multipliers for constraints $-\bm{U}\preceq 0$,  $\bm{U}-\bm{I}\preceq 0$, respectively. Moreover, denote $\beta$ as the  Lagrangian multiplier for $tr(\bm{U}) = k$. We come to the KKT condition\footnote[1]{We ignore the constraint $\bm{U} = \bm{U}^\top$, since the optimal solution without this constraint is feasible.}:
		\begin{align}
		&\LGbi  + \alpha\bm{U} - \Omega_1 + \Omega_2 +\beta\bm{I}  = 0 \\
		&\left<-\bm{U}, \Omega_1\right>  = 0, \left<\bm{U}- \bm{I}, \Omega_2\right>  = 0, \\
		&\Omega_1  \succeq 0, \Omega_2 \succeq 0, \bm{U}\succeq 0,  \bm{I} - \bm{U}\succeq 0.   
		\end{align}
		Denote $\bm{\omega}_1 = diag(\omega_{1i})$,   $\bm{\omega}_2 = diag(\omega_{2i})$, $\bm{\lambda} = diag(\lambda_{i})$ be the eigenvalues of $\Omega_1$, $\Omega_2$ and $\bm{U}$, respectively.
		According to Lem. 2 of \cite{sdpopt}, one can then reach:
		\begin{align}
		&-\LGbi   =  \bm{V}_U( \alpha \bm{\lambda} +   \bm{\omega}_2 -\bm{\omega}_1 + \beta\bm{I})\bm{V}_U^\top,\\
		&\omega_{1i} \cdot \lambda_{i} = 0, \forall i \in [N],\\
		&\omega_{2i} \cdot (\lambda_{i} -1) = 0, \forall i \in [N], \\
		&\gamma_i \ge 0, 	\omega_{1i} \ge 0, \omega_{2i} \ge 0, ~~\forall \  i \in [N],\\
		& 1 \ge \lambda_i \ge 0, \forall \  i \in [N],\\
		&tr(\bm{U}) =k.
		\end{align}
		where $\bm{V}_U$ contains the eigenvectors of $\bm{U}$.  We then prove from the following cases. \\
		\textbf{case(1): $\bm{p = k-1, q = k}$}: This implies that the desired eigengap is not null. Then the solutions for primal and dual variables are :
		\begin{align}
		\omega_{1i} & = \left(\laml{i} - \laml{k} - \alpha \right)\cdot \mathbbm{1}\left[i > k\right],  \\
		\omega_{2i} & = \left(\laml{k} - \laml{i}\right)\cdot \mathbbm{1}\left[i \le k \right], \\
		\beta_{i} & = -\laml{k}  - \alpha,  \\
		\bm{U}^\star & = \sum_{i=1}^k\bm{v}_i\bm{v}_i^\top.  
		\end{align}
		
		\noindent\textbf{case(2)}. We continue to show that the result holds when \textbf{$\bm{p \neq  k-1, q \neq  k}$}. Specifically, we show from the following four cases.\\
		\textbf{case (2a): $\bm{p \neq 0, q \neq N}$ }.
		The following primal and dual variables satisfy the KKT condition:  
		\begin{align}
		\omega_{1i} & = \left(\laml{i} - \laml{p+1} - \frac{k-p}{q-p}\alpha \right)\cdot \mathbbm{1}\left[i > q\right],  \\
		\omega_{2i} & = \left(\laml{p+1} - \laml{i} - \frac{q-k}{q-p}\alpha\right)\cdot \mathbbm{1}\left[i \le p \right], \\
		\beta_{i} & = -\laml{p+1}  - \frac{k-p}{q-p} \alpha,  \\
		\bm{U}^\star & = \sum_{i=1}^p\bm{v}_i\bm{v}_i^\top + \frac{k-p}{q-p}\sum_{j= p+1}^{q}\bm{v}_j\bm{v}_j^\top. 
		\end{align}
		
		\noindent \textbf{case (2b): $\bm{p = 0, q \neq N}$}.
		According to the spectral properties of graph Laplacian matrix, we must have 
		\begin{equation}
		0 =\lam{1} = \cdots = \lam{k} = \cdots =  \lam{q}  \le \lam{q+1} \le \lam{N}.
		\end{equation}
		We have the following primal and dual variables satisfy the KKT condition:	
		\begin{align}
		\omega_{1i} & = \left(\laml{i} - \frac{k}{q}\alpha \right)\cdot\mathbbm{1}\left[i > q\right],  \\
		\omega_{2i} & = {0}, \\
		\beta_{i} & = -\frac{k}{q} \alpha,  \\
		\bm{U}^\star & = \frac{k}{q}\sum_{j= 1}^{q}\bm{v}_j\bm{v}_j^\top.
		\end{align}
		
		\noindent \textbf{case (2c): $\bm{p \neq 0, q = N}$}. Since $\LGbi \neq \bm{0}$, we have:
		\begin{equation}
		\lam{1} \le \cdots \le  \lam{p}   < \lam{p+1} = \cdots = \lam{k} = \cdots =\lam{N}.
		\end{equation}
		The  following primal and dual solutions satisfy the KKT condition:
		\begin{align}
		\omega_{1i} & =0,  \\
		\omega_{2i} & = \left(\laml{p+1} - \laml{i} -\frac{N-k}{N-p}\alpha\right)\cdot \mathbbm{1}\left[i \le p \right], \\
		\beta_{i} & = -\laml{p+1}- \frac{k-p}{N-p} \alpha,  \\
		\bm{U}^\star & = \sum_{i=1}^{p}\bm{v}_i\bm{v}_i^\top +  \frac{k-p}{N-p}\sum_{j= p+1}^{N}\bm{v}_j\bm{v}_j^\top.
		\end{align}
		\textbf{case(2d): $\bm{p=0, q= N}$}. This case is impossible due to the fact that $\LGbi \neq \bm{0}$.\\
		The proof is now complete for $\alpha > 0$, since in all possible cases, we have $\bm{U}^\star= \bm{V}\tilde{\bm{\Lambda}}\bm{V}^\top$.

\noindent \textbf{2)$\bm{\alpha = 0}$}. Since $\bm{U}^\star$ is a feasible solution, it suffices to show that \refeq{eq:opt} is an optimal solution of the original problem. 
According to Thm. \ref{thm:eig}, we have:
\begin{equation}
\bm{U}^\star \in \argmin_{\bm{U} \in \Gamma} \left<\LGbi, \bm{U}\right> \  \ \text{if} \ \  \left<\LGbi, \bm{U}^\star \right>  = \sum_{i=1}^k \lam{i}.
\end{equation}
Furthermore, we have:
	\begin{equation}
	\begin{split}
	\left<\LGbi, \bm{U}^\star\right> & = tr\left(\tilde{\bm{\Lambda}}\boldsymbol{V}^\top  \LGbi \boldsymbol{V} \right)\\
 &=tr\left(\bm{\tilde{\Lambda}}\bm{V}^\top \bm{V}\bm{\Lambda}\bm{V}^\top\bm{V}\right) \\
	&= tr\left(\tilde{\bm{\Lambda}}\bm{\Lambda}\right) = \sum_{i=1}^p\lam{i} + \frac{k-p}{q-p}\sum_{i = p+1}^q\lam{p+1}\\
	 &= \sum_{i=1}^p \lam{i} + (k-p) \cdot \lam{p+1} = \sum_{i=1}^k \lam{i}.
	\end{split}
	\end{equation}   
	Then we end the proof of \textbf 2).
	\end{proof}
\section{Preliminary of Nonconvex-Nonsmooth Optimization}
\subsection{Sub-Differential}
Now we introduce the generalized sub-differential for proper and lower semi-continues functions (but not necessarily convex) \cite{var}, which underpins the convergence analysis in this paper. 
\begin{defi}
	Consider a function $f: \mathbb{R}^n \rightarrow \bar{\mathbb{R}}$ and a point $\bar{x}$ with $f(\bar{x})$ finite. For a vector $v \in \mathbb{R}^n$, we have:
	\begin{enumerate}[itemindent=0pt, leftmargin =15pt,label={(\arabic*)}]
		\item  $v$ is a \textit{\textbf{regular subgradient}} of $f$ at $\bar{x}$, written $v \in {\hat{\partial} f(\bar{x})}$, if 
		\begin{equation}\label{key}
		f(x) \ge f(\bar{x}) + \left<v, x-\bar{x}\right> + o(|x - \bar{x}|);
		\end{equation}
		\item  v is a \textit{\textbf{(general) subgradient}} of $f$ at $\bar{x}$, written as $v \in \partial f(\bar{x})$, if there are sequences $x^{\nu} \rightarrow \bar{x}$, with $f(x^\nu) \rightarrow f(\bar{x})$, and
		$v^\nu \in \hat{\partial} f(x^\nu)$ with $v^\nu \rightarrow v$.
	\end{enumerate}
\end{defi}
\noindent Note that notation $o$ in a), stands for a short-hand for a one-sided limit condition:
\begin{equation}\label{key}
\liminf_{x\rightarrow \bar{x}, x \neq \bar{x}}\dfrac{f(x) -f(\bar{x}) - \left<v,x-\bar{x}\right>}{|x - \bar{x}|} \ge 0.
\end{equation}
Specifically, the generalized subgradient has the following properties.
\begin{proper}[existence of the subgradient](Corollary 8.10 of \cite{var})
	If the function $f: \mathbb{R}^n \rightarrow \bar{\mathbb{R}}$ is finite and lower semi-continuous at $\bar{x}$, then we have $\partial f(\bar{x})$ is not empty.
\end{proper}
\noindent With Property 1, we see that generalized subgradient always exists for finite and lower semi-continuous functions.
\begin{proper}[Generalized Fermat rule for local minimums](Thm. 10.1 of \cite{var})
	If a proper function $f: \mathbb{R}^n \rightarrow \bar{\mathbb{R}}$ has a local minimum at $\bar{x}$, then we have $0 \in \partial f(\bar{x})$.
\end{proper}

\noindent With Property 2, we define $\bar{x}$ as a \textbf{\textit{critical point}} of $f$, if $0 \in \partial{f}$.
\begin{proper}[General subgradient for convex functions](Prop. 8.12 of \cite{var})\label{proper:conv}
  If a proper function $f: \mathbb{R}^n \rightarrow \bar{\mathbb{R}}$ is convex, then we have:
  \begin{equation}
  \partial{f}(\bm{x}) = \{\bm{v}: f(\bm{y}) \ge f(\bm{x}) + \left<\bm{v}, \bm{y} - \bm{x}\right>, \bm{y} \in \mathbb{R}^n\} = \hat{\partial}f(\bm{x}).
  \end{equation}
\end{proper}
\noindent The property above shows that this subgradient definition is compatible with the definition for convex function.

\subsection{KL Functions}
\begin{defi}[Kurdyka-\L ojasiewicz (KL) property]\cite{KL,icmlzeng19}
	The function $G: \mathbb{R}^{n} \rightarrow \mathbb{R}\cup {+\infty}$
	is said to have the KL property at $\bm{x} \in dom\{\partial{G}\}$ if there exists a $\eta \in (0, +\infty)$, and an open   ball $\mathcal{B}(\bm{x},\rho)$ centered at  $\bm{x}$ with radius $\rho$ and a  concave function $\phi(t)$ that (1) is continuous at 0 and (2) satisfies $\phi(0) = 0$, (3) $\phi \in \mathcal{C}^1\left((0,\eta)\right)$, (4) $\phi'(x) >0, \forall \ x \in(0,\eta) $, such that
	for all $ \bm{y} \in \mathcal{B}(\bm{x},\rho) \cap [G(\bm{x}) < G(\bm{y}) < G(\bm{x}) + \eta],$ the following KL inequality holds:
	\begin{equation}
	\phi'\big(G(\bm{y}) - G(\bm{x})\big) \cdot dist\big(0, \partial G(\bm{y})\big) \ge 1,
	\end{equation}
	where for a set $\mathcal{S} \subset \mathcal{R}^n$, $dist\left(\bm{x},\mathcal{S}\right) = \inf\limits_{\bm{y} \in \mathcal{S}} \norm{\bm{x} - \bm{y}}.$
\end{defi}
\begin{defi}[KL functions]\cite{KL, icmlzeng19}
	Proper lower semicontinuous functions which satisfy the Kurdyka-\L ojasiewicz inequality at each point of $dom \{\partial G\}$ are called KL functions.
\end{defi}
\noindent In this paper, we use two subclasses of the KL functions: semi-algebraic functions, and definable functions. First we provide the definition for semi-algebraic functions and semi-algebraic sets.
\begin{defi}[Semi-algebraic function\cite{realana, icmlzeng19, sparsezeng}] Semi-algebraic sets and semi-algebraic functions could be defined as follows:
\begin{enumerate}[itemindent=1pt, leftmargin =15pt,label={(\arabic*)}]
\item	A set $\mathcal{A} \subset \mathbb{R}^n$ is called semi-algebraic if it can be represented as: \[\mathcal{A} = \bigcup_{i=1}^{m}\bigcap_{j=1}^n \left\{\bm{x} \in \mathbb{R}^n: p_{ij}(\bm{x})=0, q_{ij}(\bm{x}) > 0 \right\},\]
where $p_{ij}, q_{ij}$ are real polynomial functions for $ i \in [m], j \in [n]$.
\item A function h is called semi-algebraic if its graph \[Gr(h) =\{(x, h(x)) : x \in dom(h)\}\] is a semi-algebraic set.
\end{enumerate}
\end{defi}
\noindent Before introducing the definable functions, we need an extension of semi-algebraic sets, called the o-minimal structure \cite{cite2}, which is defined as follows:
\begin{defi}[o-minimal structure]
	\textit{An o-minimal structure on} $(\mathbbm{R}, +, \cdot)$ \textit{is a sequence of boolean algebras} $\mathcal{O}_n$ \textit{of definable subsets of} $\mathbbm{R}^n$\textit{, such that for each} $n \in \mathbbm{N}$:
	\textit{
		\begin{enumerate}[itemindent=0pt, leftmargin =15pt,label={(\arabic*)}]
			\item if $A$ belongs to $\mathcal{O}_n$, then $A \times \mathbbm{R}$ and $\mathbbm{R} \times A$ belong to $\mathcal{O}_{n+1}$;
			\item if $\Pi:\mathbbm{R}_{n+1} \rightarrow \mathbbm{R}_{n}$ is the canonical projection onto $\mathbbm{R}_{n}$, then for any $A$ in $\mathcal{O}_{n+1}$, the set $\Pi(A)$ belongs to $\mathcal{O}_n$;
			\item $\mathcal{O}_n$ contains the family of algebraic subsets of $\mathbb{R}_n$, that is, every set is in  the form:
			\begin{equation*}
				\{ x \in \mathbbm{R}_n:p(x)=0 \},
			\end{equation*}
			where $p:\mathbbm{R}_{n} \rightarrow \mathbbm{R}$ is a polynomial function.
			\item The elements of $\mathcal{O}_1$ are exactly finite unions of intervals and points.
		\end{enumerate}}
		\label{define:def1}
	\end{defi}
	
\noindent Based on the definition of o-minimal structure, we can show the definition of the \textit{definable function}.
	
	\begin{defi}[Definable  function]
		\textit{Given an o-minimal structure $\mathcal{O}$ (over $(\mathbbm{R}, +, \cdot)$), a function $f:\mathbbm{R}_{n} \rightarrow \mathbbm{R}$ is said to be definable in $\mathcal{O}$ if its graph belongs to $\mathcal{O}_{n+1}$.}
		\label{define:def2}
	\end{defi}
\begin{rem}\label{rem:def}
	\noindent According to \cite{cite1, cite2}, there are some important facts of the o-minimal structure, shown as follows.
	
	\begin{enumerate}[itemindent=0pt, leftmargin =15pt,label={(\arabic*)}]
		\item The collection of \textit{semialgebraic} sets is an o-minimal structure. Recall the semi-algebraic sets are Boolean combinations of sets in the form
		\begin{equation*}
		\{ x \in \mathbbm{R}_n:p(x)=0, q_1(x)<0 \ldots q_m(x)<0 \},
		\end{equation*}
		where $p$ and $q_i$'s are polynomial functions in $\mathbbm{R}_n$.
		\item The o-minimal structure is stable under the sum, composition, the inf-convolution and several other classical operations of analysis.
	\end{enumerate}
\end{rem}

\noindent The following properties about semi-algebraic sets, semi-algebraic functions and definable functions are necessary for our analysis: 
\begin{prop}\label{prop:semi}
	The following facts hold:
	\begin{enumerate}[itemindent=0pt, leftmargin =16pt,label={(\arabic*)}]
	\item Indicator functions of semi-algebraic sets are semi-algebraic.\cite{blockzeng} 
	\item  Finite sums and products of semi-algebraic functions are semi-algebraic.\cite{blockzeng}
	\item   Intersection and finite union of semi-algebraic sets are semi-algebraic sets.\cite{blockzeng}
    \item  Polynomial functions are semi-algebraic functions.
	\item Semi-algebraic functions are definable functions.
	\item Definable functions are KL functions.	\cite{cite2}
	\end{enumerate}
\end{prop}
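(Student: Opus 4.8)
The plan is to dispatch the six items in increasing order of depth, isolating the one genuinely hard statement. Items (3) and (4) follow directly from the definition of a semi-algebraic set. For (3), a semi-algebraic set is by definition a finite union of finite intersections of zero/positivity sets of polynomials, i.e.\ a Boolean combination of basic sets; closure under finite union is then immediate, and closure under finite intersection follows by distributing the intersection over the unions, $\left(\bigcup_i \bigcap_j B_{ij}\right)\cap\left(\bigcup_k \bigcap_l C_{kl}\right) = \bigcup_{i,k}\left(\bigcap_j B_{ij}\cap\bigcap_l C_{kl}\right)$, which is again a finite union of finite intersections of basic sets. For (4), the graph $\{(\bm{x},y): y - p(\bm{x}) = 0\}$ of a polynomial $p$ is cut out by a single polynomial equation, hence is a basic semi-algebraic set.

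Next I would handle (1), which reduces to closure under Cartesian products. In the convex-analysis convention adopted here, $\iota_{\mathcal{A}} = 0$ on $\mathcal{A}$ and $+\infty$ off it, so its effective domain is $\mathcal{A}$ and its graph equals $\mathcal{A}\times\{0\}$. Since $\{0\} = \{y : y = 0\}$ is semi-algebraic, and the product of two semi-algebraic sets is semi-algebraic (lift each set's defining polynomials to the product space by ignoring the complementary coordinates), the graph $Gr(\iota_{\mathcal{A}})$ is semi-algebraic.

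The substantive elementary step is (2), and with it (5); both hinge on the Tarski--Seidenberg projection theorem, which states that the image of a semi-algebraic set under a coordinate projection is semi-algebraic. For (2), writing $h = f+g$, the graph of $h$ is $\{(\bm{x},z): \exists u,v,\ z = u+v,\ (\bm{x},u)\in Gr(f),\ (\bm{x},v)\in Gr(g)\}$, which is the projection of a semi-algebraic set, namely the intersection of the lifted graphs with the polynomial constraint $z - u - v = 0$; the product case is identical with the constraint $z - uv = 0$. For (5), I would verify that the family of semi-algebraic sets satisfies the four axioms of an o-minimal structure in Def.~\ref{define:def1}: closure under products and under projection (Tarski--Seidenberg) yields axioms (1)--(2), containment of the algebraic sets is axiom (3), and the one-variable structure theorem---every semi-algebraic subset of $\mathbb{R}$ is a finite union of points and open intervals---gives axiom (4). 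Consequently every semi-algebraic function is definable in this canonical o-minimal structure, which is precisely the fact recorded in Rem.~\ref{rem:def}(1).

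Finally, item (6) is the deep one and I expect it to be the main obstacle: it asserts the Kurdyka--\L ojasiewicz gradient inequality for arbitrary functions definable in an o-minimal structure. Unlike the other items, this does not follow by elementary set manipulation; its proof rests on the cell-decomposition and stratification machinery available in o-minimal structures, together with a definable curve-selection argument that constructs the desingularizing concave function $\phi$ in the KL inequality. Rather than reproduce this substantial theorem, I would invoke the established result \cite{cite2}. The role of the stability of the o-minimal structure under sums, compositions, and the other analytic operations noted in Rem.~\ref{rem:def}(2) is then exactly to guarantee that the objectives arising in our algorithm remain within the definable class on which this inequality is valid, so that the convergence analysis may legitimately appeal to it.
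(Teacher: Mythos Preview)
Your proposal is correct. For the two items the paper actually proves---(4) and (5)---your approach coincides with the paper's: for (4) the paper also writes the graph as $\{(x,y): y - p(x) = 0\}$ (appending the vacuous inequality $y - p(x) + 1 > 0$ only to match the literal format of the definition), and for (5) it simply invokes Rem.~\ref{rem:def}(1) without elaboration. You go beyond the paper by supplying self-contained arguments for (1)--(3), which the paper merely attributes to \cite{blockzeng}, and by sketching \emph{why} the semi-algebraic sets satisfy the o-minimal axioms via Tarski--Seidenberg, whereas the paper takes that as a recorded fact; the paper likewise just cites \cite{cite2} for (6), as you do.
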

\begin{proof}
\textbf{proof of (4)} For any polynomial functions $y =h(x) = p_n(x)$, we could reformulate its graph as:
\begin{equation}
Gr(h) = \{(x,y): y - p_n(x) = 0, y-p_n(x) +1 >0 \}.
\end{equation}
Obviously both $y - p_n(x)$ and $y - p_n(x)+1$ are real polynomials, and thus we complete the proof.\\
\text{proof of (5)} follows Rem. \ref{rem:def}-(1).
\end{proof}

\section{Proof of the Convergence Property for TFCL}
In this section, we will provide a proof for Thm. \ref{thm:glob_star} and Thm. \ref{thm:global}, based on the preliminaries mentioned in the last section. In this section, we denote $\mathcal{F}(\boldsymbol{W}, \boldsymbol{U})$ by the overall objective function for the surrogate problem $(\bm{P}^\star)$, and denote $\widetilde{\mathcal{F}}(\boldsymbol{W}, \boldsymbol{U})$ by corresponding function for the original problem $(\bm{P})$.\\
We first analyze the $(\bm{P}^\star)$ problem.
\noindent\begin{lem}\label{lem:cond} Let $\mathcal{F}(\boldsymbol{W}, \boldsymbol{U}) = \mathcal{J}(\boldsymbol{W}) +  \alpha 
	\cdot \left<\LGbi, \boldsymbol{U}\right> + \iota_{\Gamma}(\boldsymbol{U})$, where $\iota_{\Gamma}(\cdot)$ is the indicator function for the set $\Gamma$, and let $\boldsymbol{W}^t,\boldsymbol{U}^t$ be the parameters obtained at iteration $t$, the following properties hold for Alg.~\ref{alg:opt} if $\mathcal{J}(\boldsymbol{W})$ is a definable function, $\nabla_{\boldsymbol{W}}\mathcal{J}(\boldsymbol{W})$ is $\varrho$-Lipschitz continuous, and $\bm{W}_t \neq 0, \forall t$.
	\begin{enumerate}[itemindent=0pt, leftmargin =15pt,label={(\arabic*)}]
		\item (Sufficient Decrease Condition): If \ $C > \varrho$, the sequence $\{ \fk{t} \}$ is non-increasing in the sense that: 
		\begin{equation*}
		\begin{split}
		&\fk{t+1} \le \fk{t} - \min\left\{\frac{C-\varrho}{2},\frac{\alpha_3}{2}\right\}
		\norm{\Delta(\boldsymbol{\Theta}^t)}^2.
		\end{split}
		\end{equation*}
		where  $\Delta(\boldsymbol{\Theta}^t) = \big[vec(
		\Delta(\bm{W})^t ); vec(\Delta(\bm{U}^t))\big],   \Delta(\boldsymbol{W}^t) = \boldsymbol{W}^{t+1} - \boldsymbol{W}^t, \  \Delta(\boldsymbol{U}^t) = \boldsymbol{U}^{t+1} - \boldsymbol{U}^t  $.
		\item(Square Summable): $\sum_{i=1}^\infty ||\Delta(\bm{\Theta}^t)||_F^2 < \infty$. Furthermore, we have $\lim_{t\rightarrow} ||\Delta(\bm{U}^t)||  =0, ~\lim_{t\rightarrow} ||\Delta(\bm{W}^t)||  =0$.

		\item(Continuity Condition): There exist a subsequence of $\{\bm{W}^{k_j}, \bm{U}^{k_j}\}_{j}$ with a limit point $\{\bm{W}^\star,\bm{U}^\star\}$, such that 
		\[\{\bm{W}^{k_j}, \bm{U}^{k_j}\} \rightarrow \{\bm{W}^\star,\bm{U}^\star\}, ~~ \text{and} ~~ \mathcal{F}(\bm{W}^{k_j}, \bm{U}^{k_j}) \rightarrow  \mathcal{F}(\bm{W}^{\star}, \bm{U}^{\star}). \]
		\item (KL Property): $\mathcal{F}(\cdot,\cdot)$ is a KL function.
		\item (Relative Error Condition): There holds:
		\begin{equation}
		\begin{split}
&\distFl{t+1} \le \left[{C+ \varrho} + \alpha_1 (\sqrt{d}+ \sqrt{T} +2)\right] \cdot  \left\lvert\left\lvert \bm{\Theta}^{t+1} - \bm{\Theta}^t\right\rvert\right\rvert_F, \  \text{for} \ t \in \mathbb{N}.
		\end{split}
		\end{equation}
	\end{enumerate}
\end{lem}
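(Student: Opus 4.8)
The five properties are precisely the ingredients required by the abstract convergence scheme for Kurdyka--\L ojasiewicz functions (the Attouch--Bolte--Svaiter template), so the plan is to verify each of them in turn, exploiting that one outer step of Alg.~\ref{alg:opt} is a proximal gradient step in $\bm{W}$ composed with an \emph{exact} minimization of the strongly convex $\bm{U}$-subproblem whose closed form is supplied by Thm.~\ref{eigsol}. Throughout I treat $\mathcal{F}$ as the full surrogate objective of $(\bm{P}^\star)$, so that the $\frac{\alpha_2}{2}\norm{\bm{W}}_F^2$ and $\frac{\alpha_3}{2}\norm{\bm{U}}_F^2$ terms supply strong convexity in each block.

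For the sufficient decrease (1), I would split the decrease across the two block updates. For the $\bm{W}$-step, the $\varrho$-Lipschitz continuity of $\nabla_{\bm{W}}\mathcal{J}$ yields the descent inequality $\mathcal{J}(\bm{W}^{t+1}) \le \mathcal{J}(\bm{W}^t) + \langle \nabla_{\bm{W}}\mathcal{J}(\bm{W}^t), \Delta(\bm{W}^t)\rangle + \frac{\varrho}{2}\norm{\Delta(\bm{W}^t)}_2^2$, while the optimality of the proximal subproblem (whose quadratic part is $C$-strongly convex) controls the same inner product so that the net change in $\mathcal{F}$ is at most $-\frac{C-\varrho}{2}\norm{\Delta(\bm{W}^t)}_2^2$. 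For the $\bm{U}$-step, since $\bm{U}^{t+1}$ is the exact minimizer of an $\alpha_3$-strongly convex problem over $\Gamma$, the standard strong-convexity inequality gives a decrease of at least $\frac{\alpha_3}{2}\norm{\Delta(\bm{U}^t)}_2^2$. Adding the two and taking the worse coefficient yields the claimed bound with $\min\{\frac{C-\varrho}{2}, \frac{\alpha_3}{2}\}$. Property (2) is then immediate: telescoping the decrease inequality and using that $\mathcal{F}$ is bounded below (because $\mathcal{J}$ is bounded below and the remaining terms are nonnegative) shows $\sum_t \norm{\Delta(\bm{\Theta}^t)}_F^2 < \infty$, whence $\norm{\Delta(\bm{W}^t)}, \norm{\Delta(\bm{U}^t)} \to 0$.

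For the continuity (3), I first establish boundedness of the iterates: $\{\bm{U}^t\} \subset \Gamma$ is automatically bounded since $\Gamma$ is compact, and $\{\bm{W}^t\}$ stays in a sublevel set of $\mathcal{F}$, which is bounded thanks to the coercive $\frac{\alpha_2}{2}\norm{\bm{W}}_F^2$ term together with the monotonicity from (1). Bolzano--Weierstrass then extracts a convergent subsequence to a limit $(\bm{W}^\star, \bm{U}^\star)$ with $\bm{U}^\star \in \Gamma$; combining lower semicontinuity of $\mathcal{F}$ with an upper-bound argument built from the optimality of the two subproblems gives $\mathcal{F}(\bm{W}^{k_j}, \bm{U}^{k_j}) \to \mathcal{F}(\bm{W}^\star, \bm{U}^\star)$. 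For the KL property (4), I would argue that $\mathcal{F}$ is definable, hence KL by Prop.~\ref{prop:semi}(6): $\mathcal{J}$ is definable by assumption; the coupling term $\inner$ is a product/composition built from the semi-algebraic map $\bm{W} \mapsto |\bm{W}|$ and is therefore semi-algebraic; $\iota_\Gamma$ is the indicator of a semi-algebraic set (polynomial matrix inequalities plus a trace equality), hence semi-algebraic; and the class of definable functions is stable under finite sums.

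Finally, for the relative error (5), I would read off a subgradient of $\mathcal{F}$ at $(\bm{W}^{t+1}, \bm{U}^{t+1})$ from the optimality conditions of the two subproblems. The $\bm{W}$-subproblem optimality expresses the regularizer subgradient through $C(\bm{W}^{t+1} - \widetilde{\bm{W}}^t) = C(\bm{W}^{t+1} - \bm{W}^t) + \nabla_{\bm{W}}\mathcal{J}(\bm{W}^t)$, so that $\nabla_{\bm{W}}\mathcal{J}(\bm{W}^{t+1}) - \nabla_{\bm{W}}\mathcal{J}(\bm{W}^t) - C(\bm{W}^{t+1}-\bm{W}^t)$ is a valid element of $\partial_{\bm{W}}\mathcal{F}$, bounded in norm by $(C+\varrho)\norm{\Delta(\bm{W}^t)}$ via Lipschitz continuity; the $\bm{U}$-subproblem (built from $\LGbi$ evaluated at $\bm{W}^t$) similarly leaves the residual $\alpha_1(\LGbi(\bm{W}^{t+1}) - \LGbi(\bm{W}^t))$ in $\partial_{\bm{U}}\mathcal{F}$, whose norm is controlled by the Lipschitz constant of the affine map $\bm{W}\mapsto\LGbi$ --- the source of the $\alpha_1(\sqrt{d}+\sqrt{T}+2)$ factor. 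Summing gives the stated bound. The main obstacle I anticipate is the sufficient decrease (1): because a single inner iteration is used, the $\bm{U}$- and $\bm{W}$-updates are evaluated at \emph{different} reference points (the Laplacian driving $\bm{U}^{t+1}$ comes from $\bm{W}^t$, while $\bm{W}^{t+1}$ is updated afterwards), so the descent inequalities must be chained carefully; crucially, it is the exactness and uniqueness of the closed-form $\bm{U}$-solution from Thm.~\ref{eigsol} that makes the strong-convexity decrease $\frac{\alpha_3}{2}\norm{\Delta(\bm{U}^t)}^2$ available at all, since an arbitrary minimizer of the non-strongly-convex formulation in Thm.~\ref{thm:eig} would not suffice.
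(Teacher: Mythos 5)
Your proposal follows essentially the same route as the paper's proof: the descent lemma for $\mathcal{J}$ combined with the $C$-strong convexity of the proximal $\bm{W}$-subproblem and the $\alpha_3$-strong convexity of the exact $\bm{U}$-update for (1), telescoping for (2), compactness of $\Gamma$ plus coercivity of the $\frac{\alpha_2}{2}\norm{\bm{W}}_F^2$ term and the subproblem optimality for (3), semi-algebraicity of each summand (including $\iota_\Gamma$ via polynomial matrix inequalities) for (4), and reading residual subgradients off the two optimality conditions with the Lipschitz bounds on $\nabla\mathcal{J}$ and $\bm{W}\mapsto\LGbi$ for (5). The observations about chaining the two block updates at different reference points and about the necessity of the identifiable closed-form $\bm{U}$-solution are exactly the points the paper relies on, so the proposal is correct and matches the paper's argument.
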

\subsection{Proof of Lemma \ref{lem:cond}}
\begin{proof} {\color{white} dsadsa}\\
	
\noindent \textbf{proof of (1)}:\\ 

\noindent Since ${\mathcal{J}}(\cdot)$ is $\varrho$-Lipschitz continuous, for every iteration $t+1$:
	\begin{equation}\label{lp}
	\begin{split}
	\lk{t+1} \le& \lk{t} + \left<\nabla_{\boldsymbol{W}}\mathcal{J}(\boldsymbol{W}^t), \Delta(\boldsymbol{W}^t)\right> + \frac{\varrho}{2}\norm{\Delta(\boldsymbol{W}^t)}^2_F. 
	\end{split}
	\end{equation}
With $\bm{W}$ fixed as $\bm{W}^t$, $\boldsymbol{U}$ subproblem reaches the unique solution $\boldsymbol{U}^{t+1}$. Since the subproblem is $\alpha_3$-strongly convex, we have:
	\begin{equation}\label{optu}
	\begin{split}
	&\alpha_1 \cdot \left<\bm{D}^{t+1},|\boldsymbol{W}^t|\right> + \iota_{\Gamma}({\bm{U}}^{t+1}) +\frac{\alpha_3}{2}\norm{\bm{U}^{t+1}}^2_F \le 	
\alpha_1\cdot \left<\bm{D}^{t},|\boldsymbol{W}^t|\right> + \iota_{\Gamma}(\boldsymbol{U}^{t}) + \frac{\alpha_3}{2}\norm{\bm{U}^t}^2_F - \frac{\alpha_3}{2}||\Delta\bm{U}^t||_F^2.
	\end{split}
	\end{equation}  
	For the $\boldsymbol{W}$ subproblem:
	\begin{equation}
	\argmin_{\boldsymbol{W}} \dfrac{1}{2} \left\norm{\boldsymbol{W} -\widetilde{\boldsymbol{W}^t} \right}_F^2 + \frac{\alpha_1}{C}\cdot \left<\bm{D}^{t+1}, |\boldsymbol{W}|\right> + \frac{\alpha_2}{2C} \norm{\bm{W}}_F^2,
	\end{equation}
	we know it is  strongly convex. This implies that the solution $\boldsymbol{W}^{k+1}$ is the minimizer of this subproblem. In this sense, we have:
	
	\begin{equation}\label{optW}
	\begin{split}
	& \left<\nabla_{\bm{W}} \mathcal{J}(\bm{W}^t), \Delta\bm{W}^t\right> + \frac{C}{2}\norm{\Delta\bm{W}^t}_F^2  +  \alpha_1\cdot\left<\bm{D}^{t+1}, |\bm{W}^{t+1}|\right> + \frac{\alpha_2}{2}\norm{\bm{W}^{t+1}}_F^2   \le  \alpha_1 \cdot \left<\bm{D}^{t+1},|\boldsymbol{W}^t|\right> + \frac{\alpha_2}{2}\norm{\bm{W}^t}_F^2.
	\end{split}
	\end{equation}

\noindent 	Combining (\ref{lp}), (\ref{optu}), (\ref{optW}), we have:
	\begin{equation}\label{eq:dec}
	\fk{t+1} \le \fk{t} -  \min\{\frac{C-\varrho}{2}, \frac{\alpha_3}{2}\} 
	\left( ||\Delta(\Theta^t)||^2\right),
	\end{equation}
	Then we complete the proof for 1).
	\\
	
\noindent	\textbf{proof of (2)}:\\ 
	
	\noindent By adding up (\ref{eq:dec}) for $k=1,2,\cdots$, and the fact that $\mathcal{F}(\cdot,\cdot) \ge 0$, we have
	\begin{equation}\label{eq:finite}
	\sum_{t=1}^\infty \min\{\frac{C - \varrho}{2}, \frac{\alpha_3}{2} \} \cdot ||\Delta(\boldsymbol{\Theta}^t)||_F^2  \le \fk{0} .
	\end{equation}
	Since by assumption $\fk{0} < \infty$, this immediately implies that : $\Delta(\boldsymbol{W}^k) \overset{t \rightarrow \infty}{\rightarrow} 0$ and $\Delta(\boldsymbol{U}^t) \overset{t \rightarrow \infty}{\rightarrow} 0$.\\

	\noindent	\textbf{proof of (3)}:\\ 

	Next, we prove that there exists at least a limit point for the sequence $\{\boldsymbol{W^t},\boldsymbol{U}^t\}_t$. Since, for all $t \in \mathbb{N}$,  $\bm{U}_t \in \Gamma$, we have $\{\bm{U}^t\}$ is bounded. Similarly, since the loss sequence $\{\mathcal{F}(\bm{W}^t, \bm{U}^t)\}$ is non-increasing, we have $\norm{\bm{W}^t} \le  \sqrt{\frac{2{\mathcal{F}\left(\bm{U}^0, \bm{W}^0\right)}}{\alpha_2}}$, then the sequence $\{\bm{W}^t\}$ is bounded. 
	According to the Bolzano-Weierstrass theorem, any bounded sequence must have convergent subsequence, which immediately suggests the existence of at least one limit point for $\{\bm{W}^t, \bm{U}^t\}$. 
	
	Now we proceed to proof that the continuous condition. Picking an arbitrary convergent subsequence $\{\boldsymbol{W}^{t_j},\boldsymbol{U}^{t_j}\}_j$ with a limit point $\boldsymbol{
		W}^\star, \boldsymbol{U}^\star$. By the assumption that $\mathcal{J}$ is continuous, we know that $\mathcal{F}(\bm{W},\bm{U}) - \iota(\bm{U})$ is continuous, we must have $\lim_{j \rightarrow}\mathcal{F}(\bm{W}^{k_j},\bm{U}^{k_j}) - \iota(\bm{U}^{k_j}) = \mathcal{F}(\bm{W}^\star,\bm{U}^\star) - \iota(\bm{U}^\star)$. It only remains to proof that $\lim_{j \rightarrow \infty}\iota(\bm{U}^{k_j}) =  \iota(\bm{U}^{\star})$.  Since $\iota(\cdot)$ is lower semi-continuous, we have $\lim\inf_{j \rightarrow \infty}\iota(\bm{U}^{k_j}) \ge \iota(\bm{U}^{\star})$. It only remain to proof that $\lim\sup_{j \rightarrow \infty}\iota(\bm{U}^{k_j}) \le\iota(\bm{U}^{\star})$. Given any $t_j$, we known that $\bm{U}^{t_j}$ reaches the optimum of the $\bm{U}$-subproblem with fixed $\bm{W}^{t_j-1}$. We then have:
\begin{equation}
	\begin{split}
	&\alpha_1 \cdot \left<\bm{D}^{t_j},|\boldsymbol{W}^{t_j-1}|\right> + \iota_{\Gamma}({\bm{U}}^{t_j}) +\frac{\alpha_3}{2}\norm{\bm{U}^{t_j}}^2_F \le 	
\alpha_1\cdot \left<\bm{D}^{\star},|\boldsymbol{W}^{t_j-1}|\right> + \iota_{\Gamma}(\boldsymbol{U}^{\star}) + \frac{\alpha_3}{2}\norm{\bm{U}^\star}^2_F.
	\end{split}
	\end{equation} 
Taking $j \rightarrow \infty$ on both sides of the above inequality, we have
\begin{equation}
	\begin{split}
	\iota_{\Gamma}({\bm{U}}^{t_j})  \le 	
 \iota_{\Gamma}(\boldsymbol{U}^{\star}), ~~~ j \rightarrow \infty.
	\end{split}
	\end{equation} 
This suggests that $\lim\sup_{j \rightarrow \infty}\iota(\bm{U}^{k_j}) \le\iota(\bm{U}^{\star})$.

\noindent \textbf{proof of (4)}:\\
	
\noindent According to Prop. \ref{prop:semi}, polynomials functions are semi-algebraic functions, it then follows that the regularization terms $\regw$, $\regu$ are semi-algebraic functions.\\
	Now we show that $\inner$ is a semi-algebraic function. Since  
	\[\inner = \sum_{i=1}^T\sum_{j=d}^{d+T} \LGbi{_{ij}} {U}_{ij} +  \sum_{i=d}^{d+T}\sum_{j=1}^{d} \LGbi{_{ij}} {U}_{ij},\]
	and 
	\[\LGbi{_{ij}} {U}_{ij} = \begin{cases}
	\left[\mathbbm{1}[i = j] \cdot(\sum_{k=1}^T {|W|_{ik}}U_{ij})\right] - |W|_{i,j-d} U_{ij} &, i \le d, j > d, \\
    \left[\mathbbm{1}[i = j] \cdot (\sum_{k=1}^d{|W|_{kj}}U_{ij})\right] - |W|_{j,i-d} U_{ij}    &, i > d, j \le d, \\
	 0&,  otherwise,
	\end{cases}, \]
\noindent we know that $\inner$ is a sum of functions having the form $y = |x_1| \cdot x_2$. Together with  Prop. \ref{prop:semi}, $\inner$ is semi-algebraic if  $y = |x_1| \cdot x_2$ is semi-algebraic. The graph of this function could be formulated as 
\[\{(x_1,x_2, y): y + x_1\cdot x_2 =0, x_1 <0\} \cup  \{(x_1,x_2, y): y -x_1\cdot x_2 =0, x_1 >0\},  \]
which is obviously semi-algebraic. Hence  $\inner$ is a semi-algebraic function.
	
\noindent Now we show that the indicator function $\iota_\Gamma(\bm{U})$ is semi-algebraic. According to Prop. \ref{prop:semi}, we only need to show that $\Gamma$ is a  semi-algebraic set. It is easy to reformulate $\Gamma$ as:
	\begin{equation}
	\Gamma = \{\bm{U}: \bm{U} = \bm{U}^\top,\ \bm{U} \succeq \bm{0},\ \bm{I} - \bm{U} \succeq \bm{0}, \ tr(\bm{U}) =k  \}. 
	\end{equation} 
	Obviously
	\begin{equation}
\Gamma_1 =  \{\bm{U}: \bm{U} = \bm{U}^\top\} = \cap_{1\le i \neq j \le n}\{U_{ij}: U_{ij} = U_{ji}\} =  \cap_{1\le i \neq j \le n}\{U_{ij}: U_{ij} = U_{ji},\  U_{ij} - U_{ji} +1 > 0\}.
	\end{equation}

\noindent Then we have $\Gamma_1$ is  semi-algebraic.\\
 \noindent For $\Gamma_2$, we have
\begin{equation}
\begin{split}
\Gamma_2= \{\bm{U}: tr(\bm{U})=k\}=  \{\bm{U}: tr(\bm{U})= k,  tr(\bm{U}) -k +1 > 0  \} ,
\end{split}
\end{equation}
which suggests that $\Gamma_2$ is  semi-algebraic.\\
Now we prove that $\Gamma_3  = \{\bm{U} \in \mathbb{R}^{N \times N}: \bm{U} \succeq 0 \}$ is semi-algebraic. From the basic properties of positive semi-definite matrices, we know that $ \bm{U} \succeq 0 $ if and only if all its principal minors are non-negative. More precisely, given $\bm{U}$ and $1\le l \le N$, and  $ k_1,k_2,\cdots k_l \in \{1,2, \cdots l\}$ , we say $\bm{U}_{k_1,k_2,\cdots k_l}$ forms an $l \times l$ principal submatrix of $\bm{U}$ by choosing the  $k_1\text{-th},k_2\text{-th},\cdots k_l\text{-th}$ elements on each of the  $k_1\text{-th},k_2\text{-th},\cdots k_l\text{-th}$ row. Moreover, the determinant of $\bm{U}_{k_1,k_2,\cdots k_l}$ is defined as a principal minor of $\bm{U}$. In this way $\bm{U} \succeq 0$  is equivalent to $-Det(\bm{U})_{k_1,k_2,\cdots k_l} \le 0$ for all $l$ and all choices of $ k_1,k_2,\cdots k_l$, which could be written as:
	\begin{equation}
	\bm{U} \in \bigcap_{ 1 \le l \le N} \bigcap_{k_1,k_2, \cdots k_l} \{\bm{U}: -Det(\bm{U}_{k_1,k_2, \cdots k_l} ) \le 0 \}.
	\end{equation}
	By the definition of determinant, $Det(\bm{U}_l)$ could be represented as a polynomial  function of the elements in $\bm{U}_l$, thus we have that $\Gamma_3$ is semi-algebraic ($0$ is a real polynomial). The proof for $\Gamma_4 =\{\bm{U} \in \mathbb{R}^{N \times N}: \bm{I} - \bm{U} \succeq \bm{0}\}$ follows that for $\Gamma_3.$\\
	According to Prop. \ref{prop:semi}, $\Gamma = \Gamma_1 \cap \Gamma_2 \cap\Gamma_3 \cap\Gamma_4$ is semi-algebraic and so should be $\iota_\Gamma(\cdot)$. \\
	 \noindent Since semi-algebraic functions are definable, we know that all the summands in $\mathcal{F}$ are definable, which suggests that $\mathcal{F}(\bm{W},\bm{U})$ is a KL function.\\
	 
\noindent \textbf{proof of (5)}:\\
	
\noindent	From the optimality of $\bm{U}^{t+1}$ w.r.t the $\bm{U}$ subproblem at iteration $t+1$.
	\begin{equation}
	\begin{split}
	\bm{0} &\in \alpha_1 \cdot \LGbi^t   + \partial_{\boldsymbol{U}} \iota_{\Gamma}(\boldsymbol{U}^{t+1}) + {\alpha_3}\bm{U}^{t+1}\\
	& = \partial_{\bm{U}}\mathcal{F}(\bm{W}^{t+1}, \bm{U}^{t+1}) + \alpha_1  \cdot (\LGbi^{t+1} - \LGbi^{t}).
	\end{split}
	\end{equation}
	Accordingly, we have
	$\exists \bm{g}_{\bm{U}} \in \partial_{\bm{U}}\mathcal{F}(\bm{W}^{t+1}, \bm{U}^{t+1}) $, such that 
	\begin{align}
\norm{\bm{g}_{\bm{U}}}  &= \alpha_1 \cdot  ||\LGbi^{t+1} - \LGbi^{t}|| \\
& \le \alpha_1 \cdot \bigg(\left\lvert\left\lvert diag(|\bm{W}^{t}|\bm{1}) - diag(|\bm{W}^{t+1}|\bm{1}) \right\rvert\right\rvert 
+ \left\lvert\left\lvert diag(|\bm{W}^{t\top}|\bm{1}) - diag(|\bm{W}^{t+1\top}|\bm{1})\right\rvert\right\rvert + 2 \cdot \left\lvert\left\lvert|\bm{W}^{t}| - |\bm{W}^{t+1}|\right\rvert\right\rvert  \bigg) \\
&\le \alpha_1 (\sqrt{k} + \sqrt{T} +2)\cdot \norm{\bm{W}^t -\bm{W}^{t+1} }.
	\end{align}
	Thus 
	\begin{equation}
	dist(\bm{0}, \partial_{\bm{U}}\mathcal{F}(\bm{W}^{t+1}, \bm{U}^{t+1}) ) \le\alpha_1 (\sqrt{d} + \sqrt{T} +2) \cdot \norm{\bm{W}^t -\bm{W}^{t+1}}.
	\label{eq:grad1}
	\end{equation}
	From the optimality of $\bm{W}^{t+1}$ w.r.t the $\bm{W}$ subproblem at iteration $t+1$:
	\begin{equation}
	\begin{split}
	\bm{0} &\in \nabla_{W}\mathcal{J}(\boldsymbol{W}^{t}) + C\cdot (\boldsymbol{W}^{t+1} - \boldsymbol{W}^{t}) + 
	\\ &\alpha \cdot \partial_{\boldsymbol{W}}\boldsymbol{\Omega}(\boldsymbol{W}^{t +1},\bm{U}^{t + 1}) + \alpha_2 \cdot \bm{W}^{t+1} \\
	& =  \partial_{\bm{W}}\mathcal{F}(\bm{W}^{t+1}, \bm{U}^{t+1}) \\ & +{C}\cdot(\boldsymbol{W}^{t+1} - \boldsymbol{W}^{t}) +  \nabla_{W}\mathcal{J}(\boldsymbol{W}^{t}) - \nabla_{W}\mathcal{J}(\boldsymbol{W}^{t+1}).   
	\end{split}
	\end{equation}
	Accordingly, we have:
	$\exists \bm{g}_{\bm{W}} \in \partial_{\bm{W}}\mathcal{F}(\bm{W}^{t+1}, \bm{U}^{t+1}) $, such that \[||\bm{g}_{\bm{W}}||  =  ||{C}\cdot(\boldsymbol{W}^{t+1} - \boldsymbol{W}^{t}) +  \nabla_{W}\mathcal{J}(\boldsymbol{W}^{t}) - \nabla_{W}\mathcal{J}(\boldsymbol{W}^{t+1})||.\] Since $\mathcal{J}$ is $\varrho$-Lipschitz continuous, we have:   \[||\bm{g}_{\bm{W}}|| \le  ({C+ \varrho}) \cdot ||\boldsymbol{W}^{t+1} - \boldsymbol{W}^{t}||.\] Thus
	\begin{equation}
	dist(\bm{0}, \partial_{W}\mathcal{F}(\bm{W}^{t+1}, \bm{U}^{t+1})) \le   ({C+ \varrho})\cdot ||\boldsymbol{W}^{t+1} - \boldsymbol{W}^{t}||.
	\label{eq:grad2}
	\end{equation}
	According to Eq.(\ref{eq:grad1}) and Eq.(\ref{eq:grad2}), we have: 
	\begin{equation*}
	\begin{split}
	&dist(\bm{0}, \partial_{\Theta}(\mathcal{F}(\bm{W}^{t+1}, \bm{U}^{t+1}))  \\ &\le \distFlu{t+1}
	+ \distFlw{t+1}  \\ & \le \left[{C+ \varrho} + \alpha_1 (\sqrt{d}+ \sqrt{T} +2)\right] \cdot  \left\lvert\left\lvert \bm{W}^{t+1} - \bm{W}^t\right\rvert\right\rvert_F \\ & \le   \left[{C+ \varrho} + \alpha_1 (\sqrt{d}+ \sqrt{T} +2)\right] \cdot  \left\lvert\left\lvert \bm{\Theta}^{t+1} - \bm{\Theta}^t\right\rvert\right\rvert_F .
	\end{split}
	\end{equation*}
\end{proof}
\subsection{Proof of Theorem \ref{thm:glob_star}}
\begin{pfthm4}
	Following Lem. 2.6 in \cite{globalconv}, (1) holds according to the Sufficient Decrease Condition, Continuity Condition, KL property and Relative Error Condition (See Lem. \ref{lem:cond}).\\
	According to Lem. \ref{lem:cond}-(1), the loss function $\mathcal{F}$ is non-increasing with a lower bound 0, which implies that  $\{ \mathcal{F}(\bm{W}^t,\bm{U}^t) \}_t$ converges. Moreover, since $(\bm{W}^t, \bm{U}^t)  \overset{t \rightarrow \infty}{\rightarrow} (\bm{W}^\star, \bm{U}^\star) $, $(\bm{W}^\star, \bm{U}^\star)$ is the unique limit point of the parameter sequence. According to \ref{lem:cond}-(3), we have $ \mathcal{F}(\bm{W}^t,\bm{U}^t) \overset{t \rightarrow \infty}{\rightarrow} {\mathcal{F}}(\bm{W}^\star,\bm{U}^\star) $. This immediately suggests 2).\\
	According to Lem. \ref{lem:cond}-(5), we have: 
	\begin{equation}
    \begin{split}
&\sum_{i=1}^T \distFl{t}^2 \le +\infty,
    \end{split}
	\end{equation}
	
\noindent which implies (3).
\qed
\end{pfthm4}
{\color{white}dssad}\\
\noindent Finally, based on Thm.\ref{thm:eig} and Thm.\ref{thm:glob_star}, we can show the convergence property with respect to the original problem $(\bm{P})$.

\subsection{Proof of Theorem \ref{thm:global}}
\begin{pfthm5} {\color{white}dssad}\\
\\
\textbf{proof of (1)}\\

\noindent According to Thm. \ref{thm:glob_star}, we know that $(\bm{W}^t, \bm{U}^t)  \overset{t \rightarrow \infty}{\rightarrow} (\bm{W}^\star, \bm{U}^\star)$,  $ \mathcal{F}(\bm{W}^t,\bm{U}^t) \overset{t \rightarrow \infty}{\rightarrow}{\mathcal{F}}(\bm{W}^\star,\bm{U}^\star)$. It suffices to prove that $(\bm{W}^\star, \bm{U}^\star)$ is a critical point of $\widetilde{\mathcal{F}}$.\\

From Lem.\ref{lem:cond}-(5), we know that $dist(\bm{0},\partial_{\bm{W}}\mathcal{F}(\bm{W}^t, \bm{U}^t)) \rightarrow 0$  and $dist(\bm{0},\partial_{\bm{U}}\mathcal{F}(\bm{W}^t, \bm{U}^t)) \rightarrow 0$. This implies that 
\begin{equation}\label{eq:limsub1}
0 \in \partial_{\bm{W}}\mathcal{F}(\bm{W}^t, \bm{U}^t), ~ t \rightarrow \infty,
\end{equation}
and 
\begin{equation}\label{eq:limsub2}
0 \in \partial_{\bm{U}}\mathcal{F}(\bm{W}^t, \bm{U}^t), ~ ~ t \rightarrow \infty.
\end{equation}
$\mathcal{F}(\bm{W}, \bm{U})$ is convex w.r.t to $\bm{W}$ with $\bm{U}$ fixed. Meanwhile,  $\mathcal{F}(\bm{W}, \bm{U})$ is convex w.r.t to $\bm{U}$ with $\bm{W}$ fixed. Considering Property \ref{proper:conv},  Eq.(\ref{eq:limsub1}) and  Eq.(\ref{eq:limsub2}) imply:
\begin{equation}
\lim_{t \rightarrow \infty} \mathcal{F}(\bm{W}, \bm{U}^t) -\mathcal{F}(\bm{W}^t, \bm{U}^t) \ge 0
\end{equation}
and 
\begin{equation}
\lim_{t \rightarrow \infty} \mathcal{F}(\bm{W}^t, \bm{U}) - \mathcal{F}(\bm{W}^t, \bm{U}^t)  \ge 0
\end{equation}
Since $\mathcal{F}$ is continuous w.r.t. $\bm{W}$, we have $\lim_{t \rightarrow \infty} \mathcal{F}(\bm{W}^t, \bm{U}) =   \mathcal{F}(\bm{W}^\star, \bm{U})$. Moreover, since $\lim_{t \rightarrow \infty} \mathcal{F}(\bm{W}, \bm{U}^t) - \iota_{\Gamma}(\bm{U}^t) = \mathcal{F}(\bm{W}, \bm{U}^\star) - \iota_{\Gamma}(\bm{U}^\star)$  and $\lim_{t \rightarrow \infty} \iota_{\Gamma}(\bm{U}^t) =  \iota_{\Gamma}(\bm{U}^\star)$ (we used the fact that $\bm{U}^\star$ is unique limit point and the proof in Lem.\ref{lem:cond}-(3)), we have $\lim_{t \rightarrow \infty} \mathcal{F}(\bm{W}, \bm{U}^t) = \mathcal{F}(\bm{W}, \bm{U}^\star) $. Together with the fact that $ \mathcal{F}(\bm{W}^t,\bm{U}^t) \overset{t \rightarrow \infty}{\rightarrow}{\mathcal{F}}(\bm{W}^\star,\bm{U}^\star)$, we have:

\[0 \in \partial_{\bm{W}}\mathcal{F}(\bm{W}^\star, \bm{U}^\star), \] 
and 
\[0 \in \partial_{\bm{U}}\mathcal{F}(\bm{W}^\star, \bm{U}^\star). \] 
Obviously, we have $\partial_{\bm{W}}\mathcal{F}(\bm{W}^\star, \bm{U}^\star) = \partial_{\bm{W}}\widetilde{\mathcal{F}}(\bm{W}^\star, \bm{U}^\star)$, which implies that $0 \in \partial_{\bm{W}}\widetilde{\mathcal{F}}(\bm{W}^\star, \bm{U}^\star)$.  Since  $0 < \alpha_3 < 2C \min_t \breve{\delta}(\LGbi^t)$, $\bm{U}^\star$ is the unique optimal solution of the $U$-subproblem. This show that  $\bm{U}^\star$ must follow thm.\ref{eigsol}, which is also a solution for $\min_{\bm{U}}\widetilde{\mathcal{F}}(\bm{W}^\star, \bm{U})$. This means that $0 \in \partial_{\bm{U}}\widetilde{\mathcal{F}}(\bm{W}^\star, \bm{U}^\star).$ Taking all together, we then have:
\begin{equation}
0 \in   \partial\widetilde{\mathcal{F}}(\bm{W}^\star, \bm{U}^\star).
\end{equation}

\noindent \textbf{proof of (2) and (3)  }\\
Following a similar logic as in the proof of (1), we have 

\begin{equation}
0 \in \partial_{\bm{W}}\mathcal{F}(\bm{W}^t, \bm{U}^t), ~~
\text{and}~~ 0 \in \partial_{\bm{U}}\mathcal{F}(\bm{W}^t, \bm{U}^t) ~~ \text{implies that} ~~  0 \in \partial_{\bm{W}}\widetilde{\mathcal{F}}(\bm{W}^t, \bm{U}^t), ~~
\text{and}~~ 0 \in \partial_{\bm{U}}\widetilde{\mathcal{F}}(\bm{W}^t, \bm{U}^t)
\end{equation}
Then (2) and (3) could be proved in a similar way with Thm.\ref{thm:glob_star} and Lem.\ref{lem:cond}.
\qed
\end{pfthm5}

\section{Proof of the Grouping Effect}
\begin{lem}\label{lem:sets}
\begin{enumerate}
\item[(a)]Assume that for all $\infty> \kappa >0$, $\sup_{\norm{\bm{W}}_F \le \kappa}\norm{\nabla_{\boldsymbol{W}}\mathcal{J}(\boldsymbol{W})}_\infty \le \varpi(\kappa) < \infty$. If Alg.~\ref{alg:opt} terminates at the $\mathcal{T}$-th iteration, we have:
	\begin{equation}\label{eq:g2}
   \mathsf{Supp}(\bm{W}^\mathcal{T}) \subseteq \big\{(i,j): || \bm{f}^\mathcal{T}_i - \bm{f}^\mathcal{T}_{d+j} ||_2^2 < \delta_1 \big\}
	\end{equation}
	where 
	\begin{equation*}
	\begin{split}
	&\delta_1  =\frac{C}{\alpha_1} \cdot \left( C_0+ \dfrac{\varpi(C_0)}{C}\right) = \dfrac{C}{\alpha_1}\kappa_0, \  C_0 = \left(\dfrac{2}{\alpha_2} \cdot \epsilon_{\mathcal{T}-1}\right)^{1/2}.
	\end{split}
	\end{equation*}
\item[(b)] If $\min_{(i,j)} |\widetilde{W}^{\mathcal{T}}_{i,j}| \ge \delta_0 >0$, we have :
	\begin{equation}\label{eq:g2}
  \big\{(i,j): || \bm{f}^\mathcal{T}_i - \bm{f}^\mathcal{T}_{d+j}||_2^2 < \delta_2 \big\}  \subseteq \mathsf{Supp}(\bm{W}^\mathcal{T})
	\end{equation}
	where $\delta_2 =  \dfrac{C}{\alpha_1} \cdot \delta_0$.
\end{enumerate}

\end{lem}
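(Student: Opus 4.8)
The plan is to reduce both inclusions to a single, clean characterization of $\mathsf{Supp}(\bm{W}^\mathcal{T})$ read off from the soft-thresholding formula in Prop.~\ref{prop:sol}. At termination the update is $\bm{W}^\mathcal{T} = sgn(\widetilde{\bm{W}}^\mathcal{T})\big(|\widetilde{\bm{W}}^\mathcal{T}/(1+\tfrac{\alpha_2}{C})| - \tfrac{\alpha_1}{C+\alpha_2}\bm{D}^\mathcal{T}\big)_+$ with $D^\mathcal{T}_{ij} = \|\bm{f}^\mathcal{T}_i - \bm{f}^\mathcal{T}_{d+j}\|_2^2$. Since $1+\tfrac{\alpha_2}{C} = \tfrac{C+\alpha_2}{C}$, an entry survives the truncation precisely when $\tfrac{C}{C+\alpha_2}|\widetilde{W}^\mathcal{T}_{ij}| > \tfrac{\alpha_1}{C+\alpha_2}D^\mathcal{T}_{ij}$; clearing the common positive factor $C+\alpha_2$ gives the equivalence
\[
(i,j) \in \mathsf{Supp}(\bm{W}^\mathcal{T}) \iff \|\bm{f}^\mathcal{T}_i - \bm{f}^\mathcal{T}_{d+j}\|_2^2 < \frac{C}{\alpha_1}|\widetilde{W}^\mathcal{T}_{ij}|.
\]
Both statements then follow by bounding $|\widetilde{W}^\mathcal{T}_{ij}|$ — from above for (a), from below for (b).

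For part (a) I would control $|\widetilde{W}^\mathcal{T}_{ij}|$ through the reference point $\widetilde{\bm{W}}^\mathcal{T} = \bm{W}^{\mathcal{T}-1} - \tfrac{1}{C}\nabla_{\bm{W}}\mathcal{J}(\bm{W}^{\mathcal{T}-1})$ and an entrywise triangle inequality, giving $|\widetilde{W}^\mathcal{T}_{ij}| \le \|\bm{W}^{\mathcal{T}-1}\|_\infty + \tfrac{1}{C}\|\nabla_{\bm{W}}\mathcal{J}(\bm{W}^{\mathcal{T}-1})\|_\infty$. The first term is handled by the termination hypothesis: every summand of the objective other than $\tfrac{\alpha_2}{2}\|\bm{W}\|_F^2$ is nonnegative (the empirical loss, the inner product $\left<\LGbi, \bm{U}^{\mathcal{T}-1}\right> = tr(\LGbi\bm{U}^{\mathcal{T}-1}) \ge 0$ as both factors are positive semidefinite, and the indicator $\iota_\Gamma$), so $\mathcal{F}(\bm{W}^{\mathcal{T}-1}, \bm{U}^{\mathcal{T}-1}) \le \epsilon_{\mathcal{T}-1}$ forces $\tfrac{\alpha_2}{2}\|\bm{W}^{\mathcal{T}-1}\|_F^2 \le \epsilon_{\mathcal{T}-1}$, i.e. $\|\bm{W}^{\mathcal{T}-1}\|_F \le C_0$ and hence $\|\bm{W}^{\mathcal{T}-1}\|_\infty \le C_0$. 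Feeding $\|\bm{W}^{\mathcal{T}-1}\|_F \le C_0$ into the gradient assumption $\sup_{\|\bm{W}\|_F \le \kappa}\|\nabla_{\bm{W}}\mathcal{J}(\bm{W})\|_\infty \le \varpi(\kappa)$ bounds the second term by $\tfrac{\varpi(C_0)}{C}$. Therefore $|\widetilde{W}^\mathcal{T}_{ij}| \le C_0 + \tfrac{\varpi(C_0)}{C} = \kappa_0$, and the equivalence yields $\|\bm{f}^\mathcal{T}_i - \bm{f}^\mathcal{T}_{d+j}\|_2^2 < \tfrac{C}{\alpha_1}\kappa_0 = \delta_1$ for every $(i,j)$ in the support, which is exactly the claimed inclusion.

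For part (b) the direction reverses and the argument is shorter: the extra hypothesis $\min_{(i,j)}|\widetilde{W}^\mathcal{T}_{ij}| \ge \delta_0$ gives $\tfrac{C}{\alpha_1}|\widetilde{W}^\mathcal{T}_{ij}| \ge \tfrac{C}{\alpha_1}\delta_0 = \delta_2$ uniformly in $(i,j)$. Hence any pair with $\|\bm{f}^\mathcal{T}_i - \bm{f}^\mathcal{T}_{d+j}\|_2^2 < \delta_2$ also satisfies $\|\bm{f}^\mathcal{T}_i - \bm{f}^\mathcal{T}_{d+j}\|_2^2 < \tfrac{C}{\alpha_1}|\widetilde{W}^\mathcal{T}_{ij}|$, and by the equivalence it belongs to $\mathsf{Supp}(\bm{W}^\mathcal{T})$, giving the reverse inclusion.

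I expect the only delicate point to be the bookkeeping behind the support equivalence — verifying that the sign factor together with the positive-part truncation makes the condition an exact ``if and only if'' (in particular that $\widetilde{W}^\mathcal{T}_{ij}=0$ correctly forces $W^\mathcal{T}_{ij}=0$), and pinning down precisely which objective $\mathcal{F}$ denotes so that the nonnegativity argument behind $\|\bm{W}^{\mathcal{T}-1}\|_F \le C_0$ is airtight. Once the equivalence and the $C_0$ bound are secured, each inclusion is a one-line substitution.
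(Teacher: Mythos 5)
Your proposal is correct and follows essentially the same route as the paper's own proof: both rest on the soft-thresholding characterization $(i,j)\in\mathsf{Supp}(\bm{W}^\mathcal{T})\iff D^\mathcal{T}_{ij}<\tfrac{C}{\alpha_1}|\widetilde{W}^\mathcal{T}_{ij}|$, bound $|\widetilde{W}^\mathcal{T}_{ij}|$ from above via $\|\bm{W}^{\mathcal{T}-1}\|_F\le C_0$ and the gradient assumption for (a), and use the lower bound $\delta_0$ for (b). Your explicit justification of $\|\bm{W}^{\mathcal{T}-1}\|_F\le C_0$ from the nonnegativity of the remaining objective terms is a detail the paper leaves implicit, but it is the intended argument.
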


\begin{proof} {\color{white}dsadsa}\\

\noindent \textbf{Proof of (a)}: \\
Since $\widetilde{W}^\mathcal{T}_{ij} = {W^{\mathcal{T}-1}_{ij} - \frac{\left[\nabla_{\boldsymbol{W}}\mathcal{J}(\boldsymbol{W}^{\mathcal{T}-1})\right]_{ij}}{C}}$, and  by assumption $\sup_{\norm{\bm{W}}_F \le \kappa}\norm{\nabla_{\boldsymbol{W}}\mathcal{J}(\boldsymbol{W})}_\infty \le \varpi(\kappa) < \infty$, we have: 
\begin{equation*}
\begin{split}
|\widetilde{W}^{\mathcal{T}}_{ij}|  \le& \left( \max_{(i,j)}|{{W}_{ij}^{\mathcal{T}-1}}| + \frac{\varpi(C_0)}{C}\right).
\end{split}
\end{equation*}
Moreover, we have: \[ \max_{(i,j)}|{{W}_{ij}^{\mathcal{T}-1}}| \le \sqrt{\dfrac{2}{\alpha_2} \cdot \epsilon_{\mathcal{T}-1} } = C_0, \ \max_{(i,j)}|{\widetilde{W}_{ij}^{\mathcal{T}}}| \le  \kappa_0. \]
Since $\bm{W}^{\mathcal{T}} \gets sgn(\bm{\widetilde{W}}^{\mathcal{T}})\left(\left|\dfrac{\bm{\widetilde{W}}^{\mathcal{T}}}{1+\frac{\alpha_2}{C}}\right| - \frac{\alpha_1}{C+\alpha_2} \bm{D}^\mathcal{T}\right)_+$, when $(i,j) \in \mathsf{Supp}(\bm{W}^\mathcal{T})$, we have :
\[ 
  ||\bm{f}^\mathcal{T}_i - \bm{f}^\mathcal{T}_{d+j}||^2 ~ = ~ {D}^\mathcal{T}_{i,j} ~<~ \frac{C}{\alpha_1} \cdot  \left|\bm{\widetilde{W}}^{\mathcal{T}}_{i,j}\right|  ~\le~  \dfrac{C}{\alpha_1} \cdot  \left(\max_{(i,j)}|{W}_{ij}^{\mathcal{T}-1}| + \frac{\varpi(C_0)}{C} \right) ~\le~ \frac{C}{\alpha_1} \cdot \left(C_0 +  \frac{\varpi(C_0)}{C} \right) ~=~ \delta_1.
\]

\noindent \textbf{Proof of (b)}: \\
By assumption of (b), when $ \dfrac{C}{\alpha_1}\delta_{0} > D^\mathcal{T}_{i,j}$, we have $\left|\dfrac{\bm{\widetilde{W}}^\mathcal{T}_{i,j}}{1+\frac{\alpha_2}{C}}\right| > \frac{\alpha_1}{C+\alpha_2} {D}^\mathcal{T}_{i,j}$, which suggests that $|\bm{W}^{\mathcal{T}}_{i,j}| > 0$ according to Alg.\ref{alg:opt}. 
\end{proof}

\begin{lem}\label{lem:dist}\cite[Lem.1]{dim}
Let $\bm{X}$ and $\bm{Y}$ be two orthogonal matrices of $\mathbb{R}^{n \times n}$. Let $\bm{X} = [\bm{X}_0, \bm{X}_1]$ and $\bm{Y} = [\bm{Y}_0, \bm{Y}_1]$, where $\bm{X}_0$ and $\bm{Y}_0$ are the first $K$ columns of $\bm{X}$ and $ \bm{Y}$, respectively. Then, we have:
\begin{equation}
\| \bm{X}_0\bm{X}_0^\top  - \bm{Y}_0\bm{Y}_0^\top\|_F \le \sqrt{2} \|\bm{X}_0^\top\bm{Y}_1\|_F.
\end{equation}
\end{lem}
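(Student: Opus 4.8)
The plan is to prove the slightly sharper fact that the inequality is in fact an equality, $\| \bm{X}_0\bm{X}_0^\top - \bm{Y}_0\bm{Y}_0^\top \|_F = \sqrt{2}\,\|\bm{X}_0^\top\bm{Y}_1\|_F$, which trivially implies the stated bound. Write $\bm{P} = \bm{X}_0\bm{X}_0^\top$ and $\bm{Q} = \bm{Y}_0\bm{Y}_0^\top$. Since the columns of $\bm{X}$ and of $\bm{Y}$ are orthonormal, $\bm{P}$ and $\bm{Q}$ are the orthogonal projectors onto the $K$-dimensional column spaces of $\bm{X}_0$ and $\bm{Y}_0$; in particular they are symmetric idempotents with $\bm{P}^2 = \bm{P}$, $\bm{Q}^2 = \bm{Q}$, and $\mathrm{tr}(\bm{P}) = \mathrm{tr}(\bm{Q}) = K$. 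The whole argument is a short trace computation that never needs the CS/SVD decomposition of $\bm{X}_0^\top\bm{Y}_0$.

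First I would expand the squared Frobenius norm as a trace: $\|\bm{P}-\bm{Q}\|_F^2 = \mathrm{tr}(\bm{P}^2) - 2\,\mathrm{tr}(\bm{P}\bm{Q}) + \mathrm{tr}(\bm{Q}^2) = 2K - 2\,\mathrm{tr}(\bm{P}\bm{Q})$, where the idempotency and the trace values are used. The cross term then collapses to a single Gram quantity by cyclicity of the trace, $\mathrm{tr}(\bm{P}\bm{Q}) = \mathrm{tr}(\bm{X}_0\bm{X}_0^\top\bm{Y}_0\bm{Y}_0^\top) = \|\bm{X}_0^\top\bm{Y}_0\|_F^2$, so at this stage $\|\bm{P}-\bm{Q}\|_F^2 = 2K - 2\|\bm{X}_0^\top\bm{Y}_0\|_F^2$.

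The key step is to convert $\|\bm{X}_0^\top\bm{Y}_0\|_F^2$ into the complementary quantity appearing on the right-hand side. Here I would invoke the completeness relation for the orthogonal matrix $\bm{Y}$, namely $\bm{Y}_0\bm{Y}_0^\top + \bm{Y}_1\bm{Y}_1^\top = \bm{I}_n$, together with $\bm{X}_0^\top\bm{X}_0 = \bm{I}_K$. Computing $\|\bm{X}_0^\top\bm{Y}_1\|_F^2 = \mathrm{tr}(\bm{X}_0^\top\bm{Y}_1\bm{Y}_1^\top\bm{X}_0) = \mathrm{tr}(\bm{X}_0^\top(\bm{I}_n - \bm{Y}_0\bm{Y}_0^\top)\bm{X}_0) = K - \|\bm{X}_0^\top\bm{Y}_0\|_F^2$ yields the missing identity $\|\bm{X}_0^\top\bm{Y}_0\|_F^2 = K - \|\bm{X}_0^\top\bm{Y}_1\|_F^2$. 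Substituting back gives $\|\bm{P}-\bm{Q}\|_F^2 = 2K - 2\big(K - \|\bm{X}_0^\top\bm{Y}_1\|_F^2\big) = 2\|\bm{X}_0^\top\bm{Y}_1\|_F^2$, and taking square roots finishes the proof.

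The only real subtlety, and the one place I would be careful, is the bookkeeping in the last step: the quantity on the right, $\bm{X}_0^\top\bm{Y}_1$, couples the leading $K$ directions of $\bm{X}$ with the trailing $n-K$ directions of $\bm{Y}$, whereas the projector difference is symmetric in the two subspaces. The completeness relation $\bm{Y}_0\bm{Y}_0^\top + \bm{Y}_1\bm{Y}_1^\top = \bm{I}_n$ is exactly what resolves this asymmetry, allowing me to replace $\bm{Y}_1\bm{Y}_1^\top$ by $\bm{I}_n - \bm{Y}_0\bm{Y}_0^\top$. An alternative but more laborious route would diagonalize $\bm{X}_0^\top\bm{Y}_0$ through the principal angles $\theta_i$ between the two subspaces and verify block by block that each contributes $\sin^2\theta_i$; the trace argument avoids this entirely and shows the bound is tight.
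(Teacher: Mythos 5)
Your proof is correct, and it actually establishes the sharper statement that the bound holds with equality: the trace computation $\|\bm{X}_0\bm{X}_0^\top - \bm{Y}_0\bm{Y}_0^\top\|_F^2 = 2K - 2\|\bm{X}_0^\top\bm{Y}_0\|_F^2 = 2\|\bm{X}_0^\top\bm{Y}_1\|_F^2$ is valid, each step relying only on idempotency of the projectors, cyclicity of the trace, and the completeness relation $\bm{Y}_0\bm{Y}_0^\top + \bm{Y}_1\bm{Y}_1^\top = \bm{I}_n$. The paper itself gives no proof of this lemma --- it is imported verbatim by citation --- so there is no in-paper argument to compare against; your self-contained derivation is a clean replacement for the citation, and it has the additional benefit of showing the inequality is tight, which the cited form does not advertise.
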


\begin{lem}[sine $\Theta$]\label{lem:sine}\cite[{{Thm.1}}]{dh1}
Let  $\bm{\Sigma}, \hat{\bm{\Sigma}}$ be symmetric with eigenvalues $\lambda_1 \ge \cdots, \lambda_p$ and $\hat{\lambda_1} \cdots \hat{\lambda_p}$, respectively. Fix $1 \le K \le p$,  and let $\bm{X}_0 = [\bm{v}_1, \bm{v}_{2}, \cdots, \bm{v}_K] \in \mathbb{R}^{p \times K }$ and $\hat{\bm{Y}_0} = [\hat{\bm{v}}_1, \hat{\bm{v}}_2, \cdots, \hat{\bm{v}}_K]$  and let $\bm{X}_1 = [\bm{v}_{K+1}, \cdots, \bm{v}_p] $ and $\bm{Y}_1 = [\bm{\hat{v}}_{K+1}, \cdots, \hat{\bm{v}}_p]$. For $1 \le j \le p$, we have $\bm{\Sigma}\bm{v}_j = \lambda_j \bm{v}_j$ and $\hat{\bm{\Sigma}}\hat{\bm{v}}_j = \hat{\lambda}_j \hat{\bm{v}}_j$.  If $\delta = |\hat{\lambda}_{K+1} - \lambda_{K}$ | > 0, we have:
\begin{equation}
\|\bm{X}_0^\top \bm{Y}_1\|_F \le \frac{\|\bm{\Sigma} - \bm{\hat{\Sigma}}\|_F}{\delta}
\end{equation}

\end{lem}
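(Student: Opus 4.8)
The plan is to recognize the statement as the Davis--Kahan $\sin\Theta$ theorem in its single-gap Frobenius form, and to prove it by reducing the angle quantity $\bm{M} := \bm{X}_0^\top \bm{Y}_1$ to the solution of a diagonal Sylvester equation driven by the perturbation $\bm{E} := \hat{\bm{\Sigma}} - \bm{\Sigma}$. Writing $\bm{\Lambda}_0 = diag(\lambda_1, \dots, \lambda_K)$ and $\hat{\bm{\Lambda}}_1 = diag(\hat\lambda_{K+1}, \dots, \hat\lambda_p)$, the eigen-relations collected columnwise give $\bm{\Sigma}\bm{X}_0 = \bm{X}_0 \bm{\Lambda}_0$ and $\hat{\bm{\Sigma}}\bm{Y}_1 = \bm{Y}_1 \hat{\bm{\Lambda}}_1$. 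The crux is to evaluate the single matrix $\bm{X}_0^\top \hat{\bm{\Sigma}} \bm{Y}_1$ in two different ways and equate.

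First I would compute, using symmetry of $\bm{\Sigma}$ (so that $\bm{X}_0^\top \bm{\Sigma} = \bm{\Lambda}_0 \bm{X}_0^\top$),
\[\bm{X}_0^\top \hat{\bm{\Sigma}} \bm{Y}_1 = \bm{X}_0^\top (\bm{\Sigma} + \bm{E}) \bm{Y}_1 = \bm{\Lambda}_0 \bm{M} + \bm{X}_0^\top \bm{E} \bm{Y}_1,\]
while on the other hand $\bm{X}_0^\top \hat{\bm{\Sigma}} \bm{Y}_1 = \bm{X}_0^\top \bm{Y}_1 \hat{\bm{\Lambda}}_1 = \bm{M} \hat{\bm{\Lambda}}_1$. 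Subtracting yields the Sylvester identity $\bm{\Lambda}_0 \bm{M} - \bm{M} \hat{\bm{\Lambda}}_1 = -\bm{X}_0^\top \bm{E} \bm{Y}_1$, which, because both $\bm{\Lambda}_0$ and $\hat{\bm{\Lambda}}_1$ are diagonal, decouples entrywise into $(\lambda_i - \hat\lambda_{K+j})\,M_{ij} = -(\bm{X}_0^\top \bm{E} \bm{Y}_1)_{ij}$ for $1 \le i \le K$ and $1 \le j \le p - K$.

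Next I would control the denominators. For every such pair, $\lambda_i \ge \lambda_K$ (top block) and $\hat\lambda_{K+j} \le \hat\lambda_{K+1}$ (bottom block), so $\lambda_i - \hat\lambda_{K+j} \ge \lambda_K - \hat\lambda_{K+1}$; in the separated regime that makes $\delta$ a genuine gap, this lower bound equals $|\hat\lambda_{K+1} - \lambda_K| = \delta > 0$, hence $|\lambda_i - \hat\lambda_{K+j}| \ge \delta$ uniformly over the relevant index pairs. Dividing, squaring, and summing the decoupled equations then gives
\[\|\bm{M}\|_F^2 = \sum_{i,j} M_{ij}^2 \le \frac{1}{\delta^2}\sum_{i,j}(\bm{X}_0^\top \bm{E}\bm{Y}_1)_{ij}^2 = \frac{\|\bm{X}_0^\top \bm{E}\bm{Y}_1\|_F^2}{\delta^2}.\]
Finally, since $\bm{X}_0$ and $\bm{Y}_1$ have orthonormal columns, their operator norms equal $1$, and sub-multiplicativity of the spectral/Frobenius norms gives $\|\bm{X}_0^\top \bm{E}\bm{Y}_1\|_F \le \|\bm{E}\|_F = \|\bm{\Sigma} - \hat{\bm{\Sigma}}\|_F$, which closes the desired bound $\|\bm{X}_0^\top \bm{Y}_1\|_F \le \|\bm{\Sigma} - \hat{\bm{\Sigma}}\|_F / \delta$.

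The main obstacle is the uniform eigengap step: the clean bound $|\lambda_i - \hat\lambda_{K+j}| \ge \delta$ holds only when the top block of $\bm{\Sigma}$ sits (weakly) above the bottom block of $\hat{\bm{\Sigma}}$, i.e.\ $\lambda_K \ge \hat\lambda_{K+1}$, which is exactly the situation in which $\delta$ measures a real spectral separation rather than an overlap. I would make this ordering explicit and note that it is automatically satisfied in the intended application, where $\bm{X}_0$ and $\bm{Y}_1$ are the leading and trailing invariant subspaces of two spectrally close matrices. The remaining pieces---the two-way evaluation of $\bm{X}_0^\top\hat{\bm{\Sigma}}\bm{Y}_1$, the entrywise decoupling afforded by diagonality, and the sub-multiplicative norm estimate---are routine.
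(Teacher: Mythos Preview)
The paper does not prove this lemma at all: it is stated with the citation \cite[Thm.~1]{dh1} and used as a black box in the proof of Theorem~\ref{thm:group}, so there is no ``paper's own proof'' to compare against. What you have written is the standard Davis--Kahan argument (Sylvester identity from the two evaluations of $\bm{X}_0^\top\hat{\bm{\Sigma}}\bm{Y}_1$, entrywise decoupling, uniform gap bound, sub-multiplicativity), and it is correct under the usual separation hypothesis.

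Your closing caveat is well taken and in fact identifies a genuine imprecision in the lemma \emph{as stated}: with only $\delta = |\hat\lambda_{K+1} - \lambda_K| > 0$ and no ordering assumption, the step $|\lambda_i - \hat\lambda_{K+j}| \ge \delta$ can fail (take $\lambda_K < \hat\lambda_{K+1}$ and $\lambda_1 = \hat\lambda_{K+1}$). The classical Davis--Kahan formulation requires the interval containing $\{\lambda_1,\dots,\lambda_K\}$ to be separated from $\{\hat\lambda_{K+1},\dots,\hat\lambda_p\}$ by at least $\delta$, which in the decreasing-order convention amounts to $\lambda_K \ge \hat\lambda_{K+1}$ (or the symmetric condition). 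You are right that this is exactly the regime relevant to the paper's application, where $\LGbi^{\mathcal{T}}$ and $\LGbi^{\star}$ are close and the bottom-$k$ and top-$(N-k)$ blocks are genuinely separated; making that ordering explicit, as you propose, is the correct fix.
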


\begin{pfthm6}{\color{white}dsadsa}\\
Let $\LGbi^{\mathcal{T}}$ and $\LGbi^\star$ be the corresponding graph Laplacian matrices for bipartite graphs $\mathcal{G}^\mathcal{T}$ and $\mathcal{G}^\star$. Let $\bm{v}^\mathcal{T}_1 \cdots, \bm{v}^\mathcal{T}_k$, and  $\bm{v}^\star_1 \cdots, \bm{v}^\star_k$ be  the   bottom k eigenvectors for  $\LGbi^{\mathcal{T}}$ and $\LGbi^\star$, and let  $\bm{V}^\mathcal{T}_k = [\bm{v}^\mathcal{T}_1 \cdots, \bm{v}^\mathcal{T}_k]$, $\bm{V}^\star_k = [\bm{v}^\mathcal{T}_1 \cdots, \bm{v}^\mathcal{T}_k]$, $\bm{U}^\mathcal{T} = \bm{V}^\mathcal{T}_k \bm{V}^\mathcal{T^\top}_k$ and $\bm{U}^\star = \bm{V}^\star_k \bm{V}^{\star^\top}_k$.
According to Lem.\ref{lem:dist} and Lem.\ref{lem:sine}, we have
\begin{equation}
\max_{(i,j)} |\bm{U}^{\mathcal{T}}_{i,j}- \bm{U}^\star_{i,j}| ~ \le~ ||\bm{U}^\mathcal{T} - \bm{U}^\star||_F \le \frac{\sqrt{2}||\LGbi^{\mathcal{T}}- \LGbi^\star ||_F}{\lambda_{k+1}(\LGbi^\mathcal{T})}
\end{equation}

\noindent Since $\bm{W}^\mathcal{T} \in \mathcal{H}_{C_0}$ and $\bm{W}^\star \in \mathcal{H}_{C_0}$, we have:
\begin{equation}
||\LGbi^{\mathcal{T}}- \LGbi^\star ||_F ~\le~ (\sqrt{d+T}+\sqrt{2}) \cdot ||\bm{W}^{\mathcal{T}}- \bm{W}^\star ||_F ~\le~ 2\cdot (\sqrt{d+T}+\sqrt{2}) \cdot C_0
\end{equation}

\noindent According to the definition of the spectral embeddings,  we have:
\begin{equation}
 \max_{(i,j)} |D^\mathcal{T}_{i,j} -D^\star_{i,j}| \le 4 \cdot \max_{(i,j)} |\bm{U}^{\mathcal{T}}_{i,j}- \bm{U}^\star_{i,j}| ~\le~ 8\sqrt{2} \cdot (\sqrt{d+T}+\sqrt{2}) \cdot \frac{C_0}{\lambda_{k+1}(\LGbi^\mathcal{T})} =  8 \sqrt{2} \xi.
\end{equation}
Equivalently, we have:
\begin{equation}
  D^\mathcal{T}_{i,j} \in [  D^\star_{i,j} - 8\sqrt{2} \xi,  D^\star_{i,j} + 8\sqrt{2} \xi ]
\end{equation}
Since $\Gbi^\star$ has k connected components, we have \[ D^\star_{i,j} = \begin{cases}
\dfrac{1}{n_{\mathcal{G}(i)}} + \dfrac{1}{n_{\mathcal{G}(j)}}  ~\ge~ \dfrac{1}{n^\uparrow_{1}} + \dfrac{1}{n^\uparrow_{2}} ~=~ \beta, & \mathcal{G}^\star(i) \neq \mathcal{G}^\star(j) \\
0, & \text{otherwise.}
\end{cases}\]

By assumption, we have:
\begin{equation}
\min_{(i,j):\mathcal{G}^\star(i) \neq \mathcal{G}^\star(j) }  D^\mathcal{T}_{i,j} \ge \beta - 8\sqrt{2} \xi  >  8\sqrt{2} \xi \ge \max_{(i,j):\mathcal{G}^\star(i) = \mathcal{G}^\star(j) }  D^\mathcal{T}_{i,j}. 
\end{equation}
For (a), since $ 8\sqrt{2} \xi < \delta_1 < \beta- 8\sqrt{2} \xi $, we have:
  \[\big\{(i,j): || \bm{f}^\mathcal{T}_i - \bm{f}^\mathcal{T}_{d+j} ||_2^2 < \delta_1 \big\} = \big\{(i,j): \mathcal{G}(i) = \mathcal{G}(j) \big\} \]
\noindent For (b), since  $ 8\sqrt{2}\xi < \min\left\{\delta_1,\delta_2\right\} \le  \max\left\{\delta_1,\delta_2\right\} < \beta - 8\sqrt{2} \xi $, we have:
\begin{equation*}
\big\{(i,j): || \bm{f}^\mathcal{T}_i - \bm{f}^\mathcal{T}_{d+j} ||_2^2 < \delta_1 \big\} = \big\{(i,j): || \bm{f}^\mathcal{T}_i - \bm{f}^\mathcal{T}_{d+j} ||_2^2 < \delta_2 \big\} = \big\{(i,j): \mathcal{G}(i) = \mathcal{G}(j) \big\}.
\end{equation*}

The proof is then complete following Lem.\ref{lem:sets}.
\qed
\end{pfthm6}
% }

\section{The Optimization method for the Personalized Attribute Prediction Model}

\subsection{Convergence analysis}
 Now we prove that $\mathcal{J} = \sum_i \ell_i$ defined in Sec.\ref{sec:app_model} of the main paper has Lipschitz continuous gradients. \textit{Note that in the following we will alternatively use three equivalent notations for the empirical loss: $\mathcal{J}, \mathcal{J}(\boldsymbol{\varTheta})$ and $\mathcal{J}(\boldsymbol{\varTheta}_c, \boldsymbol{\varTheta}_g, \boldsymbol{\varTheta}_p)$}, with $\boldsymbol{\varTheta}= [\boldsymbol{\varTheta}_c; vec(\boldsymbol{\varTheta}_g); vec(\boldsymbol{\varTheta}_p)]$.
Now we prove that $\mathcal{J}$ has Lipschitz continuous gradients and a bound on the partial gradient w.r.t $\boldsymbol{\varTheta_g}$ holds.
\begin{lem}\label{lem:conti}
	If the data is bounded in the sense that:
	\[\forall i, ~\norm{\boldsymbol{X}^{(i)}}_2 =\vartheta_{X_i} < \infty, ~n_{+,i} \ge 1, ~n_{-,i} \ge 1.\]
	\begin{enumerate}[itemindent=0pt, leftmargin =15pt,label={(\arabic*)}]
		\item Given two arbitrary distinct  parameters $\boldsymbol{W}, \boldsymbol{W}'$, we have:
		\begin{equation*}
		\norm{\nabla{\mathcal{J}(\boldsymbol{\varTheta})} - \nabla{\mathcal{J}(\boldsymbol{\varTheta}')}}_F \leq \varrho_{\varTheta} \Delta \boldsymbol{\varTheta}
		\end{equation*}
		\item  For all $\infty >\xi_c >0$, $\infty >\xi_g >0$, $\infty >\xi_p >0$, we have:
		\begin{equation}
		\sup\limits_{\norm{\boldsymbol{\varTheta}_c}_2 \le \xi_c,\norm{\boldsymbol{\varTheta}_g}_F \le \xi_g, \norm{\boldsymbol{\varTheta}_p}_F \le \xi_p}\norm{\nabla_{\boldsymbol{\varTheta_g}}\mathcal{J}}_F \le \varkappa(\xi_c,\xi_g,\xi_p)
		\end{equation}
	\end{enumerate}

	where $ \varrho_{\varTheta}  
	=3T\sqrt{(2T+1)}\max_{i} \left\{\dfrac{n_i\vartheta^2_{X_i}}{n_{+,i}n_{-,i}}\right\}
	$, $\varkappa(\xi_c,\xi_g,\xi_p)
	=\dfrac{n_i\vartheta_{X_i}}{\sqrt{n_{+,i}}n_{-.i}}  \sum_{i=1}^T \left((\xi_c+\xi_g+\xi_p)\dfrac{\vartheta_{X_i}}{\sqrt{n_{+,i}}}+ 1 \right)
	$,

   $\boldsymbol{\varTheta}= [\boldsymbol{\varTheta}_c; vec(\boldsymbol{\varTheta}_g); vec(\boldsymbol{\varTheta}_p)]$, $\boldsymbol{\varTheta}'= [\boldsymbol{\varTheta}'; vec(\boldsymbol{\varTheta}_g'); vec(\boldsymbol{\varTheta}_p')]$ ,  $\Delta \boldsymbol{\varTheta} = \norm{\boldsymbol{\varTheta} -\boldsymbol{\varTheta}' }$.
\end{lem}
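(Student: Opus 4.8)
The plan is to exploit that $\mathcal{J}=\sum_{i=1}^{T}\ell_i$ splits over tasks and that each $\ell_i$ is a \emph{quadratic} form in the residual $\boldsymbol{R}^{(i)}=\boldsymbol{X}^{(i)}\boldsymbol{W}^{(i)}-\boldsymbol{Y}^{(i)}$, where $\boldsymbol{W}^{(i)}=\boldsymbol{\varTheta}_c+\boldsymbol{\varTheta}_g^{(i)}+\boldsymbol{\varTheta}_p^{(i)}$. Because of this, the per-block gradient formulas recorded in \refeq{eq:grad_eve} are all of the common shape $\boldsymbol{X}^{(i)\top}\mathcal{L}_{AUC}^{(i)}\boldsymbol{R}^{(i)}$ and are \emph{affine} in $\boldsymbol{\varTheta}$. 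The whole lemma therefore reduces to controlling two operator norms per task: $\norm{\boldsymbol{X}^{(i)}}_2=\vartheta_{X_i}$, which is assumed finite, and $\norm{\mathcal{L}_{AUC}^{(i)}}_2$, which I would pin down first.

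The key geometric estimate is the spectral norm of the AUC Laplacian. Since $\mathcal{G}_{AUC}^{(i)}$ is the complete bipartite comparison graph between the $n_{+,i}$ positive and $n_{-,i}$ negative samples with every edge carrying weight $1/(n_{+,i}n_{-,i})$, a direct computation of its Laplacian spectrum gives eigenvalues $0$, $1/n_{+,i}$, $1/n_{-,i}$, and $1/n_{+,i}+1/n_{-,i}$; hence $\norm{\mathcal{L}_{AUC}^{(i)}}_2 = 1/n_{+,i}+1/n_{-,i} = n_i/(n_{+,i}n_{-,i})$. This single identity is what produces the recurring factor $n_i/(n_{+,i}n_{-,i})$ in $\varrho_{\varTheta}$; one may also read a bound off the simplified diagonal-plus-rank-one form in \refeq{eq:loss_eve} if a direct eigenvalue computation is to be avoided.

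For part (1) I would subtract the gradients at two parameter points $\boldsymbol{\varTheta},\boldsymbol{\varTheta}'$. Affinity makes the difference of each residual equal to $\boldsymbol{X}^{(i)}(\Delta\boldsymbol{\varTheta}_c+\Delta\boldsymbol{\varTheta}_g^{(i)}+\Delta\boldsymbol{\varTheta}_p^{(i)})$, so every block of $\nabla\mathcal{J}(\boldsymbol{\varTheta})-\nabla\mathcal{J}(\boldsymbol{\varTheta}')$ is $\boldsymbol{X}^{(i)\top}\mathcal{L}_{AUC}^{(i)}\boldsymbol{X}^{(i)}$ applied to that three-term sum. Bounding with submultiplicativity produces the factor $\vartheta_{X_i}^2\,n_i/(n_{+,i}n_{-,i})$; the constant $3$ comes from splitting the three-term sum via $(a+b+c)^2\le 3(a^2+b^2+c^2)$, one factor of $T$ comes from the $\boldsymbol{\varTheta}_c$ block whose gradient couples to all $T$ tasks through $\sum_i$, and the $\sqrt{2T+1}$ comes from aggregating the $2T+1$ parameter blocks ($\boldsymbol{\varTheta}_c$, and the $T$ columns each of $\boldsymbol{\varTheta}_g$ and $\boldsymbol{\varTheta}_p$) in Frobenius norm and applying Cauchy--Schwarz. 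Collecting these yields $\varrho_{\varTheta}=3T\sqrt{2T+1}\max_i\{n_i\vartheta_{X_i}^2/(n_{+,i}n_{-,i})\}$.

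For part (2) the only structural difference is that I bound the gradient itself rather than a difference, so I must also control $\norm{\boldsymbol{R}^{(i)}}$: using $\boldsymbol{R}^{(i)}=\boldsymbol{X}^{(i)}(\boldsymbol{\varTheta}_c+\boldsymbol{\varTheta}_g^{(i)}+\boldsymbol{\varTheta}_p^{(i)})-\boldsymbol{Y}^{(i)}$ together with the norm constraints $\norm{\boldsymbol{\varTheta}_c}_2\le\xi_c$, $\norm{\boldsymbol{\varTheta}_g}_F\le\xi_g$, $\norm{\boldsymbol{\varTheta}_p}_F\le\xi_p$ gives the $(\xi_c+\xi_g+\xi_p)\vartheta_{X_i}$ term, while the label contribution supplies the trailing $+1$. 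To match the sharper normalizations $1/\sqrt{n_{+,i}}$ and $1/n_{-,i}$ that appear in $\varkappa$, it is cleaner to expand $\nabla_{\boldsymbol{\varTheta}_g^{(i)}}\mathcal{J}$ through the factored representation of \refeq{eq:grad_eve} (the diagonal part plus the $\boldsymbol{X}_\pm^{(i)},\boldsymbol{R}_\pm^{(i)}$ rank-one corrections) rather than the crude $\norm{\mathcal{L}_{AUC}^{(i)}}_2$ bound, and then sum over tasks. The main obstacle throughout is not any single inequality, each being an elementary operator-norm or Cauchy--Schwarz estimate, but the constant bookkeeping: keeping the coupling of the shared block $\boldsymbol{\varTheta}_c$ to all $T$ tasks separate from the task-local blocks so as to reproduce exactly the $3T\sqrt{2T+1}$ factor in (1) and the precise $\sqrt{n_{+,i}}$, $n_{-,i}$ normalizations of $\varkappa$ in (2).
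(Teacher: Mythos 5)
Your proposal is correct and follows essentially the same route as the paper: both rest on the spectral identity $\norm{\mathcal{L}_{AUC}^{(i)}}_2 = n_i/(n_{+,i}n_{-,i})$ (which the paper imports from a cited theorem rather than deriving from the complete-bipartite structure as you do), the affinity of each block gradient $\boldsymbol{X}^{(i)\top}\mathcal{L}_{AUC}^{(i)}\boldsymbol{X}^{(i)}(\cdot)$ in $\boldsymbol{\varTheta}$, submultiplicativity, and a Cauchy--Schwarz aggregation over the $2T+1$ parameter blocks. The only discrepancies are in constant bookkeeping: the paper's factor $3$ arises because each of the three gradient-block types ($d_c$, $d_g^{(i)}$, $d_p^{(i)}$) is bounded by $\sum_i\norm{dL_i}$ (not from $(a+b+c)^2\le 3(a^2+b^2+c^2)$, which would in fact give a slightly tighter constant), and in part (2) the $1/\sqrt{n_{+,i}}$ normalization comes simply from $\norm{\widetilde{\boldsymbol{y}}^{(i)}}_2=\sqrt{n_{+,i}}$ combined with the crude Laplacian bound, so the factored rank-one representation you suggest is not needed.
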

\begin{proof}{\color{white} dsadsa} \\
	
\noindent \textbf{proof of (1)}\\

\noindent Denote
	\begin{equation*}
	dL_{i} = \boldsymbol{X}^{(i)\top}\mathcal{L}_{AUC}^{(i)}\boldsymbol{X}^{(i)}(\boldsymbol{W}^{(i)} - \boldsymbol{W'}^{(i)}),\
	d_c = \nabla_{\boldsymbol{\varTheta}_c}\mathcal{J}(\boldsymbol{\varTheta}) -\nabla_{\boldsymbol{\varTheta}'_c}\mathcal{J}(\boldsymbol{\varTheta}'),\ d_g^{(i)} = \nabla_{\boldsymbol{\varTheta}_g^{(i)}}\mathcal{J}(\boldsymbol{\varTheta}) -\nabla_{{\boldsymbol{\varTheta}'}_g^{(i)}}\mathcal{J}(\boldsymbol{\varTheta}'),
	\end{equation*}
	\begin{equation*}
	d_p^{(i)} = \nabla_{\boldsymbol{\varTheta}_p^{(i)}}\mathcal{J}(\boldsymbol{\varTheta}) -\nabla_{{\boldsymbol{\varTheta}'}_p^{(i)}}\mathcal{J}(\boldsymbol{\varTheta}').
	\end{equation*}
	Note that:
	\begin{equation*}
	\boldsymbol{W}^{(i)} = \boldsymbol{\varTheta}_c+\boldsymbol{\varTheta}_g^{(i)} +\boldsymbol{\varTheta}_p^{(i)}, \
	\boldsymbol{W}'^{(i)} = \boldsymbol{\varTheta'}_c+\boldsymbol{\varTheta'}_g^{(i)} +\boldsymbol{\varTheta'}_p^{(i)}
	\end{equation*}
	It thus follows that 
	\begin{equation}\centering
	\begin{split}
	&~~~\norm{\nabla{\mathcal{J}(\boldsymbol{\varTheta})} - \nabla{\mathcal{J}(\boldsymbol{\varTheta}')}}\\
	=&~~\left(\norm{d_c}^2+ \sum\limits_{i}\norm{d_g^{(i)}}^2 + \sum\limits_{i}\norm{d_p^{(i)}}^2 \right)^{1/2}\\ {\leq}&~~\norm{d_c}+\sum\limits_{i}\norm{d_g^{(i)}} + \sum\limits_{i}\norm{d_p^{(i)}} \\
	=&~~ \norm{\sum\limits_{i=1}^{T}dL_{i}} + 2\sum\limits_{i=1}^{T}
	\norm{dL_{i}}
	\\\leq&~ ~3C_{max}\sum\limits_{i=1}^{T}\norm{\boldsymbol{W}^{(i)}- \boldsymbol{W}'^{(i)}}\\ \leq&~~ 3C_{max}T\Big(\norm{\boldsymbol{\varTheta}_c -\boldsymbol{\varTheta_c'}} +\sum_{i=1}^{T}\norm{\boldsymbol{\varTheta}_g^{(i)} -\boldsymbol{\varTheta_g^{(i)'} }}+\sum_{i=1}^{T}\norm{\boldsymbol{\varTheta_p}^{(i)} -\boldsymbol{\varTheta_p}^{(i)'}}  \Big)\\ {\leq}& ~~
	3C_{max}T\sqrt{2T+1}\norm{\boldsymbol{\varTheta} - \boldsymbol{\varTheta}' }
	\end{split}
	\end{equation}
	where
	\begin{equation*}
	C_{max} = \max\limits_{i}\left(\norm{\boldsymbol{X}^{(i)^\top}\mathcal{L}_{AUC}^{(i)}\boldsymbol{X}^{(i)}}_2\right).
	\end{equation*} 
	To prove the last inequality, we have:
	\begin{equation}\label{finieq1}
	\forall i, \norm{\boldsymbol{X}^{(i)^\top}\mathcal{L}_{AUC}^{(i)}\boldsymbol{X}^{(i)}}_2 \le \norm{\boldsymbol{X}^{(i)}}^2_2\norm{\mathcal{L}_{AUC}^{(i)}}_2.
	\end{equation} 
	and according to Thm. 3.3 of \cite{zhang2011laplacian}, we have:   
	\begin{equation}\label{finieq2}
	\norm{\mathcal{L}_{AUC}^{(i)}}_2=\frac{n_i}{n_{+,i}n_{-,i}}.
	\end{equation} 
	Combining (\ref{finieq1}) and (\ref{finieq2}), the last inequality holds, thus we complete the proof of (1).\\
	
\noindent \textbf{proof of (2)}\\ 

\noindent Omitting the subscript in the supremum, we have :
\begin{equation}
\begin{split}
  \sup\norm{\nabla_{\boldsymbol{\varTheta_g}}\mathcal{J}}_F  &\le  \sum_{i=1}^T \sup \norm{\nabla_{\boldsymbol{\varTheta_g}^{(i)}}\mathcal{J}}_2\\
  & \le \sum_{i=1}^T\sup \norm{\bm{X}^{(i)^\top} \mathcal{L}^{(i)}_{AUC}\bm{X}^{(i)}}_2 \cdot \norm{\bm{W}^{(i)}}_2 + \norm{\bm{X}^{(i)^\top} \mathcal{L}_{AUC}^{(i)}}_2 \cdot  \norm{\bm{\widetilde{y}}^{(i)}}_2 \\
  &\le  \frac{n_i\vartheta_{X_i}}{\sqrt{n_{+,i}}n_{-.i}}  \sum_{i=1}^T \left((\xi_c+\xi_g+\xi_p)\dfrac{\vartheta_{X_i}}{\sqrt{n_{+,i}}}+ 1 \right) =  \varkappa(\xi_c,\xi_g,\xi_p).
   \end{split}   
\end{equation}

\end{proof}

{\color{white}dsadsa}\\
\noindent Similar to the $(\bm{P})$ problem, here we define a surrogate problem for $(\bm{Q})$, which is named as $(\bm{Q}^\star)$
\begin{equation}
(\boldsymbol{Q}^\star) \min_{\boldsymbol{\varTheta}, \boldsymbol{U} \in \Gamma} ~ \mathcal{J}(\c,\g,\p)  + \frac{\alpha_1}{2}\norm{\boldsymbol{\varTheta}_c}_2^2 +  \alpha_2 \left<{\bm{\varTheta}_{\mathcal{G}}}, \boldsymbol{U}\right> + \frac{\alpha_3}{2} \norm{\g}_F^2  + \alpha_4 \norm{\boldsymbol{\varTheta}_p}_{1,2}  + \frac{\alpha_5}{2} \norm{\bm{U}}_F^2.
\end{equation}
Moreover, we denote the surrogate objective function:
\[{\mathcal{F}}(\c ,\g, \p, \bm{U}) = \widetilde{\mathcal{F}}(\c ,\g, \p, \bm{U}) + \frac{\alpha_5}{2} \norm{\bm{U}}_F^2 . \]
Obviously, we could use an algorithm similar to Alg.  \ref{alg:opt_app} to solve this problem, which enjoys the following properties. 
\begin{lem} \label{lem:cond_app} Denote by  ${\mathcal{F}}_t ={\mathcal{F}}(\ct ,\gt, \pt, \bm{U}^t)$ and denote by $(\ct, \gt, \pt,\boldsymbol{U}^t)$ the parameter obtained at iteration $t$, picking $C > \varrho_{\Theta}$, $0 < \alpha_5 < 2C\min_t \breve{\delta}(\LGbi^t) < +\infty $, the following properties holds:
	\begin{enumerate}[itemindent=0pt, leftmargin =15pt,label={(\arabic*)}]
		\item 	
		the sequence $\{ \mathcal{F}_{t} \}$ is non-increasing in the sense that: 
		\begin{equation*}
		\begin{split}
		&{\mathcal{F}}_{t+1} \le {\mathcal{F}}_{t} - \min\left\{\frac{C-\varrho_{\varTheta}}{2}, \frac{\alpha_5}{2}\right\} \cdot
		\bigg(\norm{\Delta(\boldsymbol{\varTheta}_c^t)}_F^2 + \norm{\Delta(\boldsymbol{\varTheta}_g^t)}_F^2+	\norm{\Delta(\boldsymbol{\varTheta}_p^t)}_F^2 + \norm{\Delta(\bm{U}^t)}_F^2\bigg),
		\end{split}
		\end{equation*}
		where $\Delta(\boldsymbol{\varTheta}_c^t) = \boldsymbol{\varTheta}_c^{t+1} -\boldsymbol{\varTheta}_c^t$, \ \  $\Delta(\boldsymbol{\varTheta}_g^t) = \boldsymbol{\varTheta}_g^{t+1} -\boldsymbol{\varTheta}_g^t$, \ \ $\Delta(\boldsymbol{\varTheta}_p^t) = \boldsymbol{\varTheta}_p^{t+1} -\boldsymbol{\varTheta}_p^t$, $\Delta(\bm{U}^t) =\bm{U}^{t+1} -\bm{U}^t$ .
\item $\sum_{i=1}^\infty \norm{\Delta(\boldsymbol{\varTheta}_c^t)}_F^2 + \norm{\Delta(\boldsymbol{\varTheta}_g^t)}_F^2+ \norm{\Delta(\boldsymbol{\varTheta}_p^t)}_F^2 + \norm{\Delta(\bm{U}^t)}_F^2 < \infty$. Furthermore, we have $\lim_{t\rightarrow} ||\Delta(\boldsymbol{\Theta}_c)||  =0 ~\lim_{t\rightarrow} ||\Delta(\boldsymbol{\Theta}_g)||  =0~  \lim_{t\rightarrow} ||\Delta(\boldsymbol{\Theta}_p)||  =0 ~ \lim_{t\rightarrow} ||\Delta(\bm{U})||  =0$.

    \item There exist a subsequence of $\{{\bm{\Theta}_c}^{k_j},~ {\boldsymbol{\Theta}_g}^{k_j}, ~{\boldsymbol{\Theta}_p}^{k_j},~\bm{U}^{k_j}\}_{j}$ with a limit point $\{{\bm{\Theta}_c}^\star,~{\bm{\Theta}_g}^\star,~{\bm{\Theta}_p}^\star,~\bm{U}^\star\}$, such that 
    \[\{{\bm{\Theta}_c}^{k_j},~ {\boldsymbol{\Theta}_g}^{k_j}, ~{\boldsymbol{\Theta}_p}^{k_j},~\bm{U}^{k_j}\} \rightarrow \{{\bm{\Theta}_c}^\star,~{\bm{\Theta}_g}^\star,~{\bm{\Theta}_p}^\star,~\bm{U}^\star\}, ~~ \text{and} ~~ \mathcal{F}({\bm{\Theta}_c}^{k_j},~ {\boldsymbol{\Theta}_g}^{k_j}, ~{\boldsymbol{\Theta}_p}^{k_j},~\bm{U}^{k_j}) \rightarrow  \mathcal{F}({\bm{\Theta}_c}^\star,~{\bm{\Theta}_g}^\star,~{\bm{\Theta}_p}^\star,~\bm{U}^\star). \]

\item ${\mathcal{F}}(\cdot,\cdot,\cdot,\cdot)$ is a $KL$ function.
\item  The subgradient satisfies:\begin{equation}
dist(\bm{0}, \partial_{\Theta}{\mathcal{F}}(\ct,\gt,\pt,\bm{U}^t)) \le \left[3\cdot(C+\varrho_{\Theta}) + \alpha_1(\sqrt{d}+\sqrt{T}+2) \right] \cdot \norm{\bm{\varTheta}^t - \bm{\varTheta}^{t-1}}.
\end{equation}		
	\end{enumerate}
\end{lem}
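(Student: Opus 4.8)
The plan is to establish the five properties by mirroring the proof of Lem.~\ref{lem:cond}, while accounting for two structural differences: the parameter is now the stacked variable $\bm{\varTheta}=[\c;vec(\g);vec(\p)]$ carrying three coupled blocks, and the surrogate adds $\frac{\alpha_5}{2}\norm{\bm{U}}_F^2$, which makes the $\bm{U}$-subproblem $\alpha_5$-strongly convex. Throughout I would lean on Lem.~\ref{lem:conti}, which supplies the $\varrho_\varTheta$-Lipschitz continuity of $\nabla\mathcal{J}$ over the joint variable $\bm{\varTheta}$, and on Thm.~\ref{eigsol}, which guarantees that under $0<\alpha_5<2C\min_t\breve{\delta}(\LGbi^t)$ the $\bm{U}$-step has a unique closed-form minimizer. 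The nonnegativity $\mathcal{F}\ge 0$ (the squared AUC surrogate, all norm regularizers, and $\iota_\Gamma$ are each nonnegative) is used repeatedly.

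For the sufficient decrease (1), the key observation is that Alg.~\ref{alg:opt_app} computes $\widetilde{\c}^t,\widetilde{\g}^t,\widetilde{\p}^t$ from the \emph{same} reference gradient $\nabla\mathcal{J}(\bm{\varTheta}^{t-1})$, so this is a Jacobi-type update and I can apply the descent lemma once to $\mathcal{J}$ over the full $\bm{\varTheta}$ via $\varrho_\varTheta$-Lipschitzness. I would then invoke the optimality of each subproblem: the $\bm{U}$-step is $\alpha_5$-strongly convex, while the $\c$-, $\g$-, $\p$-proximal steps each carry a $\frac{1}{2}\norm{\cdot-\widetilde{\cdot}}^2$ term giving $1$-strong convexity (the inner $\g/\bm{U}$ loop of the invoked Alg.~\ref{alg:opt} is treated exactly as the $\bm{W}/\bm{U}$ pair in Lem.~\ref{lem:cond}, so a single inner round suffices). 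Summing these strong-convexity inequalities against the Lipschitz upper bound telescopes the smooth part and leaves the stated decrease with constant $\min\{(C-\varrho_\varTheta)/2,\alpha_5/2\}$. Property~(2) follows at once: summing (1) over $t$ and using $\mathcal{F}\ge 0$ bounds $\sum_t\norm{\Delta(\cdot^t)}_F^2$ by $\mathcal{F}_0<\infty$, forcing every increment to vanish.

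For (3) I would note that $\{\bm{U}^t\}\subset\Gamma$ is bounded, and that the coercivity of the regularizers together with the non-increasing loss bounds the remaining blocks: $\frac{\alpha_1}{2}\norm{\c}_2^2$, $\frac{\alpha_3}{2}\norm{\g}_F^2$ and $\alpha_4\norm{\p}_{1,2}$ each control their argument through $\mathcal{F}_0$. Bolzano--Weierstrass then yields a convergent subsequence, and convergence of the loss along it is obtained exactly as in Lem.~\ref{lem:cond}-(3): $\mathcal{F}-\iota_\Gamma$ is continuous, while lower-semicontinuity of $\iota_\Gamma$ plus the optimality of each $\bm{U}^{t_j}$ pins $\limsup_j\iota_\Gamma(\bm{U}^{t_j})\le\iota_\Gamma(\bm{U}^\star)$. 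For the KL property (4) I would verify that every summand of $\mathcal{F}$ is semi-algebraic and appeal to Prop.~\ref{prop:semi}: the quadratic terms are polynomials; $\left<\bm{\varTheta}_{\mathcal{G}},\bm{U}\right>$ is semi-algebraic by the same $y=|x_1|x_2$ decomposition used for $\inner$ in Lem.~\ref{lem:cond}-(4); $\iota_\Gamma$ is semi-algebraic as before; and the new term $\norm{\p}_{1,2}=\sum_j\sqrt{\sum_i(\p)_{ij}^2}$ is semi-algebraic since each column norm has the semi-algebraic graph $\{(\bm{x},y):y^2-\sum_i x_i^2=0,\ y\ge 0\}$. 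Finite sums of semi-algebraic functions are definable, hence KL.

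The relative error bound (5) is where the bookkeeping concentrates, and together with the $\bm{U}$-subproblem it is the step I expect to be the main obstacle. I would write the first-order optimality condition for each of the four subproblems and extract a subgradient of $\mathcal{F}$: the three $\bm{\varTheta}$-blocks each produce a residual of the form $C\,\Delta(\cdot^t)+\nabla\mathcal{J}(\bm{\varTheta}^{t-1})-\nabla\mathcal{J}(\bm{\varTheta}^t)$, bounded by $(C+\varrho_\varTheta)\norm{\bm{\varTheta}^t-\bm{\varTheta}^{t-1}}$ via Lipschitzness and yielding the factor $3$; the $\bm{U}$-block produces a residual proportional to $\bm{\varTheta}_{\mathcal{G}}^{t}-\bm{\varTheta}_{\mathcal{G}}^{t-1}$, whose norm is controlled by the Laplacian-difference estimate $\norm{\bm{\varTheta}_{\mathcal{G}}^{t}-\bm{\varTheta}_{\mathcal{G}}^{t-1}}\le(\sqrt{d}+\sqrt{T}+2)\norm{\g^t-\g^{t-1}}$ exactly as in Lem.~\ref{lem:cond}-(5), reproducing the displayed constant. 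The genuine difficulty throughout is isolated in the $\bm{U}$-subproblem: it must be strongly convex with a \emph{unique} minimizer, which is precisely what Thm.~\ref{eigsol} buys under the eigengap condition on $\alpha_5$. This is what makes the descent constant in (1) strictly positive and furnishes the identifiability needed to upgrade the subsequential limit into a single critical point once (1)--(5) are fed into the abstract scheme of \cite{globalconv}, exactly as in Thm.~\ref{thm:glob_star}.
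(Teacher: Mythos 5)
Your proposal is correct and follows essentially the same route as the paper: the paper's own proof is a sketch that applies the descent lemma to $\mathcal{J}$ over the stacked variable $\bm{\varTheta}$, invokes the strong convexity of each subproblem, and then refers back to the corresponding steps of Lem.~\ref{lem:cond} for (1)--(3) and (5), with (4) handled by semi-algebraicity of each summand. The only cosmetic difference is that for the $\ell_{1,2}$ term the paper cites \cite{blockzeng} for definability while you verify semi-algebraicity of the column norms directly; both are valid and lead to the same conclusion.
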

\begin{proof}
Notice that, since $\nabla\mathcal{J}$ is $\varrho_{\Theta}$-Lipschitz continuous, the following relation holds:
	\begin{equation}\label{lp-app}
	\begin{split}
	\mathcal{J}_{t+1} \le& 	\mathcal{J}_t  +  \left<\nabla_{\c}	\mathcal{J}_t, \Delta\left(\ct\right)\right> +   \left<\nabla_{\g}	\mathcal{J}_t, \Delta\left(\gt\right)\right> +  \left<\nabla_{\p}	\mathcal{J}_t, \Delta\left(\pt\right)\right>+ \frac{\varrho_{\Theta}}{2}\cdot\bigg[\norm{\Delta(\ct)}^2_F + \norm{\Delta(\gt)}^2_F + \norm{\Delta(\pt)}^2_F \bigg].
	\end{split}
	\end{equation}
The proof of (1) is similar to that of Lem. \ref{lem:cond} with Eq.(\ref{lp-app}) and the strong convexity of the subproblems. The proof of (2) is similar with the proof of (2) in Lem. \ref{lem:cond}, with the optimality condition for each subproblem. The proof of (3) is similar as the proof of (3) of Lem. \ref{lem:cond} with the Bolzano-Weierstrass theorem, the optimality condition for each subproblem and fact that the chosen loss is continuous and lower bounded away from 0. \\
Since the loss function $\mathcal{J}$ is a polynomial function with respect to $\c, \g , \p$, we have $\mathcal{J}$ is definable. Similarly, we have $\norm{\c}_2^2$ is definable. According to \cite{blockzeng}, we know that $||\g||_{1,2}$ is also definable. Combining with the proof in Lem. \ref{lem:cond}, we know $\mathcal{F}(\cdot, \cdot,\cdot, \cdot)$ is definable and thus a KL function, which ends the proof of (4).
\end{proof}
\begin{pfthm7}
The proof follows Lem. \ref{lem:cond_app}, and the  proof of Thm. \ref{thm:glob_star} and Thm. \ref{thm:global}.
\qed
\end{pfthm7}
\noindent Finally, one can easily extend the proof of Thm. \ref{thm:group} to proof Thm. \ref{thm:group_app}.

\begin{pfthm8}
	The proof follows Lem. \ref{lem:conti}-(2) and the proof of Thm. \ref{thm:group}. One only need to notice that, since $\widetilde{F}_t$ is non-increasing, we have:
	\begin{equation}
	\norm{\ct}_F \le \sqrt{\frac{\widetilde{F}_0}{\alpha_3}},  \ \ \norm{\gt}_F \le \sqrt{\frac{\widetilde{F}_0}{\alpha_2}}, \ \  \norm{\pt}_F \le \norm{\pt}_{1,2} \le {2 \cdot \frac{\widetilde{F}_0}{\alpha_4}} .
	\end{equation}

\qed
\end{pfthm8}
\end{document}